\documentclass[twoside,11pt]{article}

\usepackage{blindtext}
\usepackage[preprint]{arxiv2e}

\usepackage{amsmath}
\usepackage{amsfonts}
\usepackage{amssymb}
\usepackage{booktabs}
\usepackage{tikz}
\usepackage{comment}
\usepackage{silence}
\usepackage[compatibility=false]{caption}
\usepackage{subcaption}
\usepackage{multirow}
\usepackage{enumitem}
\usepackage{mathrsfs}
\usepackage{rotating}
\usepackage[linesnumbered,ruled,vlined]{algorithm2e}
\usepackage{etoolbox}
\makeatletter
\patchcmd{\@algocf@start}{\hbox to\algowidth\bgroup}{\hbox to\dimexpr\algowidth+34pt\relax\bgroup}{}{}
\makeatother
\usepackage{placeins}

\AtBeginDocument{\hfuzz=100pt\vfuzz=100pt\hbadness=10000\vbadness=10000}

\usepackage{lastpage}
\arxivheading{}{2026}{1--\pageref{LastPage}}{}{}{}{Hanxuan Ye and Hongzhe Li} 

\ShortHeadings{Multicalibration Boosting: Theory, Convergence, and Transferability}{Ye and Li}
\firstpageno{1}

\newcommand{\be}{\begin{eqnarray*}}
\newcommand{\ee}{\end{eqnarray*}}
\newtheorem{ass}{Condition}
\newtheorem{defn}{Definition}
\newtheorem{rem}{Remark}

\newcommand{\bbP}{\mathbb{P}}

\newcommand{\bbR}{\mathbb{R}}

\newcommand{\sH}{\mathcal{H}}

\newcommand{\sS}{\mathcal{S}}
\newcommand{\sT}{\mathcal{T}}

\newcommand{\sU}{\mathcal{U}}

\newcommand{\vf}{\mathbf{f}}

\newcommand{\vw}{\mathbf{w}}

\newcommand{\vy}{\mathbf{y}}

\newcommand{\Expect}{\mathbb{E}}

\usepackage{amsmath}
\DeclareMathOperator*{\argmax}{arg\,max}
\DeclareMathOperator*{\argmin}{arg\,min}

\usepackage{accents}

\def\v1{{\mathbf{1}}}
\def\b1{{\boldsymbol{1}}}
\newcommand{\sD}{\mathcal{D}}
\newcommand{\sF}{\mathcal{F}}
\newcommand{\sL}{\mathcal{L}}
\newcommand{\wh}{\widehat}
\newcommand{\wt}{\widetilde}

\newcommand{\Err}{\mathrm{Err}}
\newcommand{\sHa}{\mathcal{H}_{\text{aud}}}
\newcommand{\sHc}{\mathcal{H}_{\text{can}}}

\begin{document}

\title{Multicalibration Boosting: Theory, Convergence, and Transferability}

\author{\name Hanxuan Ye \email hanxuan.ye@pennmedicine.upenn.edu \\
       \addr Department of Biostatistics and Epidemiology
        \\
       University of Pennsylvania \\
       Philadelphia, PA 19104-6021, USA
       \AND
       \name Hongzhe Li \email hongzhe@pennmedicine.upenn.edu \\
       \addr Department of Biostatistics and Epidemiology \\
       University of Pennsylvania \\
       Philadelphia, PA 19104-6021, USA}

\maketitle

\begin{abstract}
Multicalibration extends classical calibration by requiring predictions to be unbiased over a rich collection of functions, encompassing both prediction slices and subpopulations. It has emerged as a powerful framework for fairness, robustness, and reliable prediction, yet the theoretical understanding of multicalibration boosting (MCBoost) remains fragmented and often relies on restrictive assumptions.
In this work, we develop a unified and refined perspective on MCBoost that subsumes existing variants, including multiaccuracy, BatchGCP, and BatchMVP. We uncover several phenomena that provide new insights into its practical behavior: even highly accurate and flexible predictors can remain substantially miscalibrated; enforcing multicalibration introduces a calibration–risk trade-off; and early stopping plays a central role in controlling this trade-off.
On the theoretical side, we establish a general framework for MCBoost under weaker and more realistic conditions. We show that the boosting iterates converge to a Bregman projection of the population-optimal predictor onto the cumulative span generated by the audit class, thereby explicitly characterizing the function space on which multicalibration is achieved. We further derive convergence rates under different smoothness assumptions, finite-sample guarantees, and principled stopping rules that ensure multicalibration at termination. Finally, we extend the theory of universal adaptability under covariate shift, providing more general transfer guarantees and clarifying when multicalibrated predictors generalize across domains. These results provide a more complete theoretical foundation and practical guidance for multicalibration boosting, positioning it as both a unifying framework and a reliable post-processing approach for modern predictive models.

{\noindent \bf Keywords:} algorithm auditing; boosting; convex analysis; distribution shift; early stopping; fairness
\end{abstract}

\section{Introduction}
Multicalibration, introduced by~\cite{hebert2018multicalibration}, asks a predictor to be unbiased not only marginally but also after conditioning on both predicted values and subgroup membership. It therefore strengthens classical calibration, which is already regarded as a central property of probabilistic prediction~\citep{dawid1982well,naeini2015obtaining,guo2017calibration}. In the simplest binary-outcome setting, calibration requires that among units with predicted value $v$, the average outcome is also $v$; multicalibration extends this requirement to a rich family of subpopulations and prediction slices.

Since its introduction, the notion has broadened substantially. ~\cite{jung2021moment} extended multicalibration to real-valued outcomes and higher moments. ~\cite{gupta2021online} developed online and quantile-oriented variants for uncertainty prediction, while~\cite{bastani2022practical,jung2021moment} studied practical quantile calibration with connections to conditional coverage. A particularly useful unification is due to~\citet{deng2023happymap}, who replaced the residual $f(X)-Y$ by a general score function $s(Y,f)$ and thus encompassed a wide range of objectives, from fair uncertainty quantification to fairness constraints such as false-positive rate parity. More recently, the framework has been extended to the settings of vector-valued prediction~\citep{zhang2024fair}.

Beyond fairness, multicalibration has also emerged as a tool for robust learning under data heterogeneity. When training and test distributions differ but remain structurally related, post hoc multicalibration can yield predictors that adapt to new domains without retraining~\citep{kim2022universal}. This target-independent perspective connects multicalibration to domain adaptation and transfer learning. Empirically, boosting-based multicalibration has shown robust behavior across applications ranging from Gender Shades and Adult Income prediction~\citep{buolamwini2018gender,kim2019multiaccuracy} to epidemiological studies such as NHANES and NHIS~\citep{kim2022universal}; see also~\citet{ye2024multicalibration} for a survival-analysis extension based on pseudo-observations. 

The practical appeal of multicalibration boosting~(MCBoost) is clear: it can refine a pre-trained black box predictor through post-processing, regardless of whether it is well-trained or inadvertently chosen; it offers a route to target-independent adaptation when the relevant shifts lie near the learned calibration class, thus achieving universal adaptability~\citep{kim2022universal}; and it is especially attractive in multi-target settings where measuring demographic variables, estimating propensity scores are costly, or repeated access to raw data is difficult due to privacy constraints.

Despite rapid progress in both theory and applications, several core questions remain open. First, existing multicalibration algorithms are often developed for specific settings, and a unified algorithmic treatment remains relatively limited. While~\cite{deng2023happymap} provided a broad infinite-sample analysis, their framework implicitly assumes a black-box optimization oracle and complete knowledge of the underlying distribution, as also discussed by~\cite{gibbs2025conformal}. Their algorithm is presented as a high-level sketch where parameter choice, sample complexity, and explicit sample-based auditing procedures are not the main focus. 
Second, the role of weak learners is not yet well understood: the auditing class clearly shapes the updates, but its effect on the limiting predictor and on the ultimate multicalibrated class has not been characterized systematically. Third, finite-sample guarantees could be improved, especially for convergence rates under varying smoothness and for the stopping rules used in practice. Although some studies provide upper bounds on the number of boosting iterations~\citep{globus2023multicalibration}, a sharper characterization of convergence rates, sample complexity, and the effects of loss smoothness remains lacking. The theoretical justification for the practical stopping criterion, which underpins empirical success, has received limited attention.
Finally, although multicalibration has been linked to target-independent learning~\citep{kim2022universal}, the precise scope of its guarantees under covariate shift deserves a more general treatment.

\subsection{This paper}
Motivated by these questions, we develop a unified framework for understanding multicalibration boosting across objectives such as squared loss and pinball loss. Our main contributions are as follows. 
\begin{itemize}[leftmargin=*]
    \item[I] {\bf Unified formulation of MCBoost.} We propose a generalized multicalibration boosting procedure that subsumes the main variants in the literature, including standard calibration~\citep{roth2022uncertain}, multiaccuracy~\citep{kim2019multiaccuracy,kim2022universal}, BatchGCP~\citep{jung2022batch}, and BatchMVP~\citep{jung2022batch} for equalized coverage.
    \item[II] {\bf Limit characterization and the multicalibrated function class.} 
    Using convex analysis and Bregman geometry~\citep{bauschke1997legendre,bauschke2020correction}, we show that the MCBoost limit is the Bregman projection of the population-optimal predictor onto the cumulative boosting span generated by the audit class, extending the classical $L_2$-projection interpretation known for regression with squared loss. More importantly, this yields an explicit characterization of the function class on which the final predictor is multicalibrated, in contrast to prior analyzes that treated the resulting multicalibrated class as implicit or left it unspecific using an agnostic argument. In general, it is the closure of the cumulative boosting span, which could be substantially richer than the original audit class. We view this identification as a central contribution, because it clarifies what MCBoost can ultimately calibrate and how the choice of auditor shapes the learned predictor.
    \item[III] {\bf Convergence, finite-sample guarantees, and stopping.} 
     We derive explicit convergence rates for the excess risk under different smoothness regimes: sublinear under $c_L$-smoothness and geometric convergence under the Polyak-Łojasiewicz (PL) condition. We also establish finite-sample guarantees, including sample-complexity bounds and an upper bound on the number of iterations before termination.
    The stopping rule used in practical implementations is shown to guaranty multicalibration once the algorithm halts, providing a principled explanation for the validity of practical implementations. The analysis also covers optional projections under milder and more realistic conditions than assumed in prior work~\citep{deng2023happymap, zhang2024fair}, which are relevant for constrained predictors such as those taking values in $[0,1]$.
    \item[IV] {\bf Adaptation under covariate shift.} We extend the theory of universal adaptability~\citep{kim2022universal} under weaker assumptions and obtain more general guarantees. In particular, we provide refined bounds for target-domain error after multicalibration, characterize the transfer of multicalibration from source to nearby target distributions, and extend the framework to multi-source settings. These results clarify when multicalibration improves performance under distribution shift and highlight that its effectiveness depends on how well the relevant density ratios are represented within the learned learned function class. 
\end{itemize}

This work provides a comprehensive theoretical treatment of multicalibration, unifying existing variants under a single boosting-based framework and rigorously identifying their limit behavior, convergence properties, and transferability.

Our numerical study is designed to illustrate practical implications of these ideas. The motivating simulation in Figure~\ref{fig:compare-initial} shows that better global predictive accuracy does not necessarily imply smaller subgroup bias. The early-stopping experiments in Figures~\ref{fig:excess_vs_stepsize-tree} and~\ref{fig:excess_vs_ncalib-tree} provide qualitative evidence for the calibration-risk trade-off and the role of early stopping in controlling overfitting. The comparisons among various auditors give a practical sense of how auditor choice, group structure, and feature information affect MCBoost. The covariate-shift experiments likewise illustrate when performance can improve under structured shifts.

The paper is organized as follows. Section~\ref{sec:mcboost-framework} introduces the generalized MCBoost framework and presents a motivating simulation that illustrates why calibration and accuracy do not necessarily coincide. Section~\ref{sec:theory} develops the main theory, including the limit characterization, the explicit multicalibrated class, convergence and finite-sample guarantees, stopping, and target-independent learning under dsitribution shift. 
Section~\ref{sec:numerics} then studies auditor expressivity, group information, and structured shifts more in practice. Supplementary material contains proofs, additional examples, and additional numerical results.

\subsection{Related work}
Beyond the papers most directly aligned with ours, multicalibration has found use in several related areas. It has been used to promote fairness in ranking systems~\citep{dwork2019learning}, to give an indistinguishability-based interpretation of individual probabilities for non-repeatable events~\citep{dwork2021outcome}, and to extend calibration ideas to censored survival outcomes~\citep{ye2024multicalibration}. The concept is also closely tied to omniprediction~\citep{gopalan2021omnipredictors}, which shows that sufficiently multicalibrated predictors can be transformed to perform nearly optimally across a large collection of downstream losses.
Recent work has also begun to transport multicalibration ideas into modern applications. For example,~\cite{detommaso2404multicalibration} proposed multicalibration-based hallucination scores for large language models using groups induced by embedding clusters or LLM-generated annotations. Related work on conditional coverage in conformal prediction~\citep{romano2020malice,gibbs2025conformal} shares similar goals with multicalibration approaches for equalized quantile coverage~\citep{bastani2022practical,jung2022batch}, underscoring the broader connection between calibration, fairness, and reliable uncertainty quantification.
On the algorithmic side, our work is also connected to the classical boosting literature~\citep{freund1997decision}. Closely related are the function-space analysis of~\citet{buhlmann2003boosting} and~\citet{zhang2005boosting}, which highlights early stopping as a way to control overfitting.

\section{Multicalibration and the MCBoost framework}\label{sec:mcboost-framework} 

We study multicalibration as a general optimization-driven property of a predictor $f: \mathcal{X} \rightarrow \bbR$ relative to a score function $s(Y, u)$, which often arises as a gradient or subgradient of a loss $L(Y, u)$ with respect to its prediction argument $u$. Let $\sH$ be a class of measurable functions, possibly depending on both covariates and predictions.
\begin{defn}\label{def:mc}
    For tolerance $\alpha>0$, a predictor $f$ is $\alpha$-multicalibrated over $\sH$ if 
     \begin{equation}\label{eqn:mc}
           \sup_{h \in \mathcal{H}} 
\big| \, \mathbb{E}\!\left[ h(X, f(X)) \cdot s(Y, f(X))\right] \big| < \alpha.   
     \end{equation}   
\end{defn}
This more general notion can characterize functions obtained from various settings, such as:
(i) regression and classification with squared loss $L(Y, f) = (Y-f)^2/2$; (ii) classification with logistic loss $L(Y, f) = \log(1 + \exp(-fY))$ or exponential loss $L(Y, f) = \exp(-fY)$; and (iii) quantile regression with pinball loss. 
The function $h$ can encode the group structure (e.g., $h(X, f) =  \v1\{X \in G\}$), value buckets (e.g., $h(X, f) = \v1\{ f \in I\}$ for some value range $I$), or interactions $h(X, f) = \v1\{ X \in G, f \in I\} $. In all cases, multicalibration asks that $s(Y, u)$ have vanishing correlation, uniformly over $\sH$. 

A generic way to achieve multicalibration is the boosting scheme. We use boosting to reduce violations $\Expect[s(Y, f(X)) \cdot h ]$ over $h \in \sHa$ for some auditing class $\sHa$. We summarize a generic version in Algorithm~\ref{alg:MCboost-general}; it encompasses earlier variants for classification~\citep{kim2019multiaccuracy, kim2022universal}, as well as regression and quantile calibration~\citep{jung2022batch}. Basically, at each iteration, the algorithm (i) audits the current predictor $f^{(b)}$, (ii) selects the most violated direction $h^{(b)} \in \sHa$ via an auditing oracle $\mathcal{A}$, and (iii) updates $f^{(b)}$ in the opposite direction of violation. 
\begin{algorithm}
\hfuzz=100pt
\caption{Generalized MCBoost 
}
\label{alg:MCboost-general}
\KwData{Initial $f^{(0)}$; accuracy $\alpha$; steps $\{\eta^{(b)}\}$; optional projection $\mathcal P_{\mathcal O}$; 
auditing procedure $\mathcal A$ with associated auditing family $\sHa$; maximal iterations $B_0$; \\
Calibration set $\Xi = \{ ( X_i, Y_i ) \}_{i=1}^{N_1}$. \\
Validation set $V = \{ ( X_i, Y_i ) \}_{i=N_1+1}^{N_1+N_2}$. }
\For{$b=0,1,2,\dots, B_0$}{
    \textbf{Generate candidate directions (auditing):} 
    \begin{enumerate}[leftmargin=2em,label*=\arabic*.]
    \item[\textbf{(a)}] \emph{Dynamic (default):} 
    $\mathcal{A}$ returns one or more $h \in \sHa$ fit against the residual/score given the current predictor $f^{(b)}$:
    $$
        \sHc = \{h(\cdot, f^{(b)}(\cdot)) \} \leftarrow \mathcal{A}(\Xi; f^{(b)}) 
    $$
    \item[\textbf{(b)}] \emph{Static (optional, using initial predictor $f^{(0)}$):} 
    $$
    \sHc  = \{ h(\cdot,f^{(0)}(\cdot)) \} \leftarrow \mathcal{A}(\Xi; f^{(b)}, f^{(0)}),  
    $$
    \end{enumerate}
    which allows bucketization based on $f^{(0)}$. 
    
    (This step may return multiple candidates; the subsequent step selects the worst violation.)
    
    \textbf{Select worst-violated direction:}
    Select $h^{(b)}$ that maximizes correlation
    $$
    h^{(b)} \leftarrow \argmax_{h \in \sHc } \frac{1}{|V|} \big|\sum_{i\in V} h(X_i) \ s(Y_i,f^{(b)}(X_i))\big|.
    $$ 
    Here $h^{(b)}(X)$ is the realized measurable function $X \mapsto h^{(b)}(X, f^{(b)}(X))$. 
    \\
    Compute the normalized empirical violation:
    \[
    \Delta =\frac{\frac{1}{|V|} \big|\sum_{i\in V} h^{(b)} (X_i) \ s(Y_i,f^{(b)}(X_i))\big|}
                      {\sqrt{\frac{1}{|V|}\sum_{i\in V} (h^{(b)  }(X_i) )^2 }}.
    \]
    \If{
    $\Delta \le \alpha$
    }{\Return $\tilde f=f^{(b)}$ 
    }
    Update 
    $\displaystyle f^{(b+1)}=\mathcal P_{\mathcal O}\big(f^{(b)}-\eta^{(b)}\,h^{(b)}\big)$.
}
\end{algorithm} 
In Algorithm~\ref{alg:MCboost-general}, the auditing procedure $\mathcal{A}$ determines the admissible form of $h$. For instance, (i) a linear or Ridge regression could lead to $h = X^\top \beta \v1\{f^{(b)}(X) \in I\}$, representing a local linear correction within a funciton value range; (ii) Groupwise constant fit gives $h = \beta_G \v1\{X \in G\} $, corresponding to subgroup refinements; (iii) tree-based or kernel regressors that fit $s(Y, f^{(b)}(X_i))$ over partitioned regions of $(X, f^{(b)}(X))$ space. The coefficients (e.g., $\beta, \beta_G$) are learned by $\mathcal{A}$ using data $\Xi$. The optional projection operator $\mathcal{P}_O$ enforces a function within some closed space $O$, e.g. $O = \mathcal{Y} = [0, 1]$.

\subsection{A motivating simulation: accuracy is not enough}\label{subsec:motivation}
To motivate why MCBoost can still be useful even when a predictor already achieves good overall accuracy, we work with the numerical design used throughout the paper. We generate responses according to
$$
Y = f_1(X^{(c)}) + f_2(X^{(d)}) + \varepsilon, \quad \varepsilon \sim \mathcal{N}(0, \sigma^2(X^{(c)}, X^{(d)})),
$$
where the covariates are partitioned into continuous and categorical components. Specifically,
$X^{(c)} = (X_1, \dots, X_5) \sim \mathcal{N}(0, I_5)$ and $X^{(d)} = (X_6, X_7) \in \{0, 1\}^2$. Here, for illustrative purposes, let $X_6$ represent gender (male=1, female=0), and $X_7$ represent ethnicity (black=1, white=0). The heteroskedastic noise level is
$
\sigma(X^{(c)}, X^{(d)}) = \sigma \,(1 + 0.2X_1 + 0.3X_6 + 0.25X_7), \sigma = 0.5, 
$
and the regression functions are
$$
f_1(X^{(c)}) = X^{(c)\top} \beta + 0.8\sin(X_1) - 0.5(X_2^2 - 1) + 0.4X_1 X_2, \quad \beta = (1.2, 0.9, 0.6, 0.4, 0.2),
$$
and
$$
f_2(X^{(d)}) = -0.7X_6 + 0.8 X_7 + 0.4X_6 X_7.
$$
This design separates global predictive accuracy from subgroup bias. Standard criteria such as MSE or MAE quantify pointwise accuracy, whereas multicalibration asks whether residual bias remains after conditioning on subgroups and prediction levels. Consequently, a method looks strong under global loss may still exhibits substantial subgroup-level discrepancies. 

That is exactly what Figure~\ref{fig:compare-initial} shows. Flexible models such as random forests often achieve lower MSE, yet they need not exhibit smaller subgroup bias than a simpler linear model. By contrast, a linear model can satisfy global moment conditions such as
$$
X^\top (\vy - \wh{\vf}) = 0,
$$
which enforce calibration along the observed linear features, including group indicators when those features are included. But real groups are often nonlinear, intersected, or prediction-dependent. MCBoost is therefore best viewed as a post-processing layer that retains a good initial fit while explicitly targeting the remaining structured biases.
\begin{figure}[ht!]
    \centering
    \begin{subfigure}[b]{0.8\linewidth}
        \includegraphics[width=\linewidth]{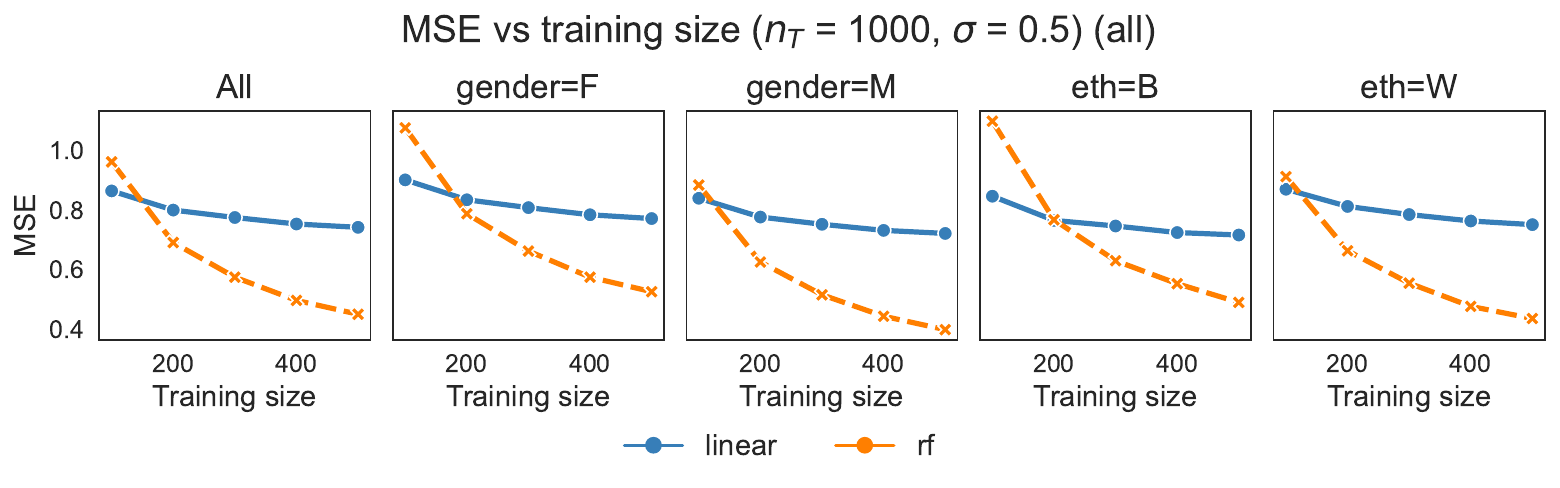}
    \end{subfigure}
    \begin{subfigure}[b]{0.8\linewidth}
        \includegraphics[width=\linewidth]{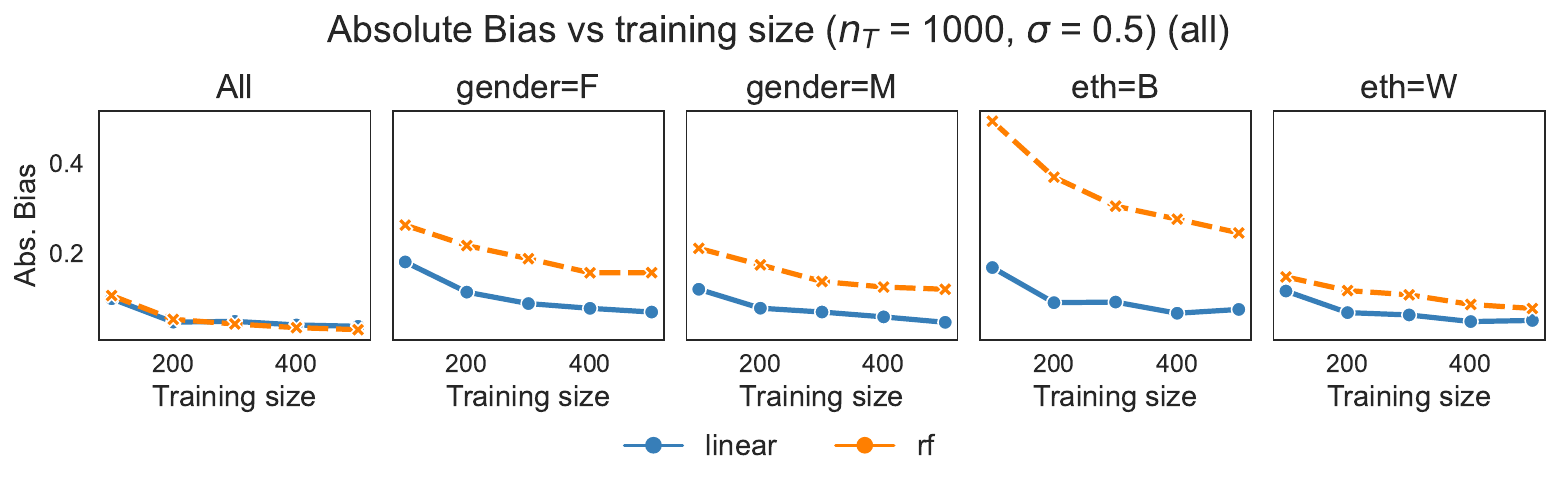}
    \end{subfigure}
    \begin{subfigure}[b]{0.8\linewidth}
        \includegraphics[width=\linewidth]{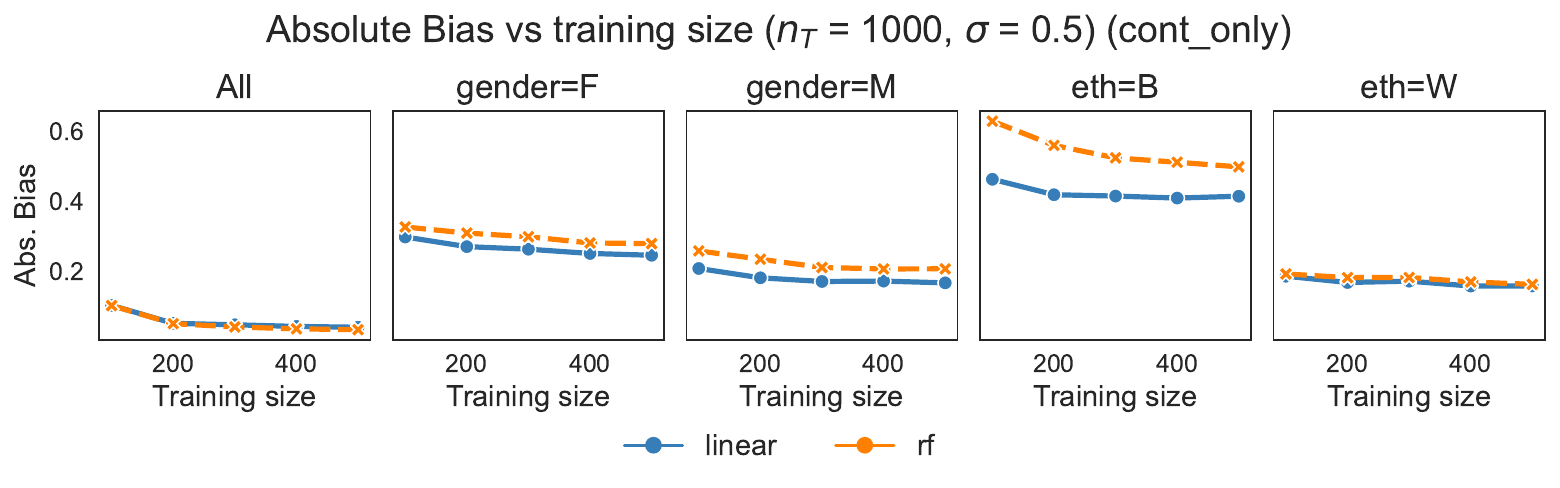}
    \end{subfigure}
    \caption{Test MSE and absolute groupwise bias as functions of the training sample size for two initial predictors, a random forest and a linear model. The random forest is often more accurate globally, but it can still exhibit larger subgroup bias. Results are averaged over $50$ replications.
    The first and second rows use all covariates, including categorical group indicators $(X_6,X_7)$, whereas the third row excludes them from model fitting, leading to larger groupwise biases for the linear model. 
}  \label{fig:compare-initial}
\end{figure} 

\subsection{Specialization and prior algorithms as instances}
To get better understanding of the abstract or formalized languaged Algorithm~\ref{alg:MCboost-general}, we highlight that it includes the procedure of earlier work by~\cite{kim2019multiaccuracy,kim2022universal}, which is modified slightly and is presented in Algorithm~\ref{alg:MCboost}. Their audited learner depends solely on covariates $h = h(X)$, but the algorithm effectively incorporates prediction buckets by conditioning on value ranges of $f^{(b)}(X)$. Each bucket $\{X \in S_l\}$ corresponding to multiplying $h(X)$ by an indicator $\v1\{X \in S_l\}$, thus implicitly creating functions of the form $h(X) \v1\{f^{(b)}(X) \in I_l\}$. More recent formulations~\citep{deng2023happymap,zhang2024fair} makes this structure explicit by allowing the auditor to depend jointly on covariates and predictions, making $h = h(X, f(X))$, as in~\eqref{eqn:mc}. This broader perspective requires the score function $s(Y, f(X))$ to be unbiased not only within subpopulations defined by $X$, but also conditional on the predicted value $f(X)$ itself, thereby extending the classical calibration.
\begin{algorithm}
	\hfuzz=100pt
	\caption{ 
    Modified bucketized MCBoost 
    }
	\label{alg:MCboost} 
	\KwData{Initial estimator: $f^{(0)}(X)$ \\
		Accuracy parameter $\alpha > 0$; step size $\eta$; maximal iterations $B_0$; \\
		Auditing algorithm $\mathcal{A}$. \\
		Calibration set $\Xi = \{ ( X_i, Y_i ) \}_{i=1}^{N_1}$. Validation set $V = \{ ( X_i, Y_i ) \}_{i=N_1+1}^{N_1+N_2}$.
	}
	\DontPrintSemicolon
	\For{$b = 0, \ldots, B_0$}{
        \textbf{Auditing:}
		Compute buckets $\{S_l\}_{l=1}^L$, where 
		$$
		S_l := \left\{ X \in \mathcal{X}: f^{(b)}(X) \in \Big[ C_1 + \tfrac{ (l-1)(C_2-C_1) }{L}, C_1 + \tfrac{ l (C_2 - C_1)}{L} \Big] \right\}, 
		$$
        or, optionally, by the initial predictor $f^{(0)}$:
        $$
        S_l = \left\{X \in \mathcal{X} : f^{(0)}(X) \in \Big[ C_1 + \tfrac{ (l-1)(C_2-C_1) }{L}, C_1 + \tfrac{ l (C_2 - C_1)}{L} \Big] \right\}. 
        $$ \;
        For each bucket $S_l$, fit 
		$ h_{b, S_l}(X) \leftarrow \mathcal{A}\!\left(\Xi, \{ (f^{(b)}(X_i), Y_i) : X_i \in S_\ell \} \right), $ with $h_{b,S_l}\in\sHa$.  \;
        \textbf{Selecting: }
        Let \begin{equation}\label{eqn:violation}
		    S^{*}  \leftarrow \argmax_{S_l} \frac{1}{|V|} \left| \sum_{i \in V} h_{b, S_l}(X_i)  \cdot s(Y_i, f^{(b)}(X_i)) \right|. 
		\end{equation}
        Optionally, define 
        \[
        S^{*}  \leftarrow \argmax_{S_l^{\le}, S_l^{\ge} } \frac{1}{|V|} \left| \sum_{i \in V} h_{b, S_l}(X_i) \cdot s(Y_i, f^{(b)}(X_i)) \right|,
        \]
        where groups are defined by quantile buckets such as $S_l^{\le} = \{ f^{(b)}(X) \le C_1 + \tfrac{ l (C_2 - C_1)}{L}\}$ or $S_l^{\ge} = \{ f^{(b)}(X) \ge C_1 + \tfrac{l (C_2 - C_1)}{L}\}$. 
	       \; 
        Set 
         \begin{enumerate}[leftmargin=2em,label*=\arabic*.]
        \item[\textbf{(a)}] \emph{Local update:}  $h^{(b)}(X) = h_{b, S^* }(X) \v1\{ X \in S_l\}$,   
        \item[\textbf{(b)}] \emph{Global update (optional):} $h^{(b)}(X) = h_{b, S^* }(X) $, 
        \end{enumerate}
         and consider   
        \begin{equation}\label{eqn:Delta}
            \Delta \leftarrow  \frac{1}{|V|} \Big| \sum_{i \in V} h^{(b)}(X_i)  s(Y_i, f^{(b)}(X_i) ) \Big|
            /
            \sqrt{ \frac{1}{|V|} \sum_{i \in V} \{ h^{(b)}(X_i) \}^2 }.
        \end{equation} 
		\If{$\Delta > \alpha$}{ 
			$f^{(b+1)}(X) \leftarrow  \mathcal{P}_{O} \left( f^{(b)}(X) - \eta^{(b)} \cdot h^{(b)}(X) \right).$ 
		}
		\Else{
			\Return{$\widetilde{f}(X) = f^{(b)}(X)$. }
		}
	}
\end{algorithm}
A convenient and widely used construction takes the tensor-product form 
\begin{equation}\label{eqn:product-form}
    \sHa = \{h(X ,f) = h_1(X) h_2 (f(X)): h_1 \in \sH_X, h_2 \in \sH_f \} = \sH_X \otimes \sH_f, 
\end{equation}
Here $\sH_X$ captures the partition of the covariate space, e.g., $\sH_X = \{\beta_G \v1\{X \in G\}: G \in \mathcal{G}\}$ or 
$ \sH_X = \{h(X) \v1\{X \in G\}: G \in \mathcal{G} \} $ for tree-based $h(X)$. The class $\sH_f$ stratifies the sample according to the predicted values, typically
$$
\sH_f = \{\v1\{f(X) = v\}; v \in \mathcal{V} \} = \v1\{X \in f^{-1}(v): v \in \mathcal{V} \}.  
$$
This form~\eqref{eqn:product-form} also appears in~\cite{gopalan2022low} in achieving low-degree multicalibration.
If $f(X)$ is continuous in practice, one replaces exact values by value intervals (buckets):
$$
\v1\{f(X) \in I_l \} = \v1\{X \in f^{-1}(I_l)\} = \v1\{X \in S_l\}, \quad l = 1, \ldots, L.
$$
This explains the bucketized boosting updates in Algorithm~\ref{alg:MCboost},  where the auditor inspects conditional calibration across both feature groups and value ranges of $f$.  

The earlier MCBoost version~\citep{kim2019multiaccuracy, kim2022universal} also offers an option that is based on the initial predictor $f^{(0)}$, corresponding to auditors of the form 
$$
h_1(X) \v1\{f^{(0)}(X) = v\} = h_1(X) \v1\{X \in f^{(0)-1}(v)\}.
$$
In quantile regression, as we shall see in the next subsection, $\sHa = \{ \sum_{G \in \mathcal{G}} \beta_G \v1\{ X \in G\}: G \in \mathcal{G} \}$ in BatchGCP algorithm of~\cite{jung2022batch}, or $\sHa = \{\beta_{G, v} \v1\{X \in G\} \v1\{ q_{\tau}(X) = v\} \}$ in BatchMVP~\citep{jung2022batch}. These examples illustrate the versatility of multicalibration, which covers the subgroup fariness and joint calibration over covariate-prediction slices.

\subsection{Quantile regression as an example}\label{subsubsec:quantile}
For level $\tau \in (0, 1)$, the pinball loss can be written as 
\begin{equation}\label{eqn:qr}
    L_{\tau}(Y, q) = (Y - q )(\tau - \v1\{Y \le q\}), \quad \partial_q L_{\tau} (Y, q) = \{ \v1\{Y \le q\} - \tau,  \v1\{Y < q\} - \tau \}
\end{equation}
We fix the representative subgradient $s(Y, q) = \v1\{Y \le q\} - \tau $, then 
\begin{equation}
\sup_{h \in \mathcal{H}} 
\big| \, \mathbb{E}\!\left[ h(X, q_{\tau}(X)) \cdot s(Y, q_{\tau}(X)) \right] \big| = 
\sup_{h \in \mathcal{H}} 
\big| \, \mathbb{E}\!\left[ h(X, q_{\tau}(X)) \cdot (\mathbf{1}\{Y \le q_{\tau}(X)\} - \tau ) \right] \big| < \alpha,    
\end{equation}
meaning the frequency of $Y \le q_{\tau}(X)$ is $\tau$ up to $\alpha$-error for every slice $h$. In other words, calibration at level $\tau$ holds not just marginally but uniformly over the audited subpopulations included in $\sH$.
\begin{example}[BatchGCP (Algorithm 1 of~\cite{jung2022batch})]\label{ex:batchGCP}
With a single global bucket and choose 
$$
\sHa = \left\{ \sum_{G} \beta_{G} \mathbf{1}\{X \in G \} \mid G \in \mathcal{G} \right\},
$$
a single update 
$$
h^{(0)} = \arg\min_{h \in \sHa } \; \mathbb{E}\!\left[ L_{\tau}(Y; q_{\tau}^{(0)}(X) - h ) \right],  
$$
produces $ q_{\tau}^{(0)}(X) - h^{(0)} $ that corrects groupwise coverage. This can be viewed as a one-step boosting refinement. Suppose the pre-trained function $q_{\tau}^{(0)}(\cdot) \in \mathcal{H}_0 $, the resulting function belongs to $\mathcal{H}_0 + \sHa$. See Section~\ref{apxsec:instances} for details.
\end{example}

\begin{example}[Multicalibration over $(G, v)$ bucket (MultiMVP)]\label{ex:multiMVP}
Discretize $[0, 1]$ and let 
$$
\sHa = \{ \beta_{G, v}\v1\{X \in G, q_{\tau}(X) = v\} \mid G \in \mathcal{G}, v \in [1/L] \}, 
$$
where $[1/L] = \{1/L, 2/L, \ldots, 1\}$. Auditing selects the worst $(G^*, v^*)$ bucket based on the gap $| \bbP(Y \le v | X \in G) - \tau|$ weighted by its mass, and updates that bucket by a constant shift. This is equivalent to MultiMVP  (Algorithm 2 in \citet{jung2022batch}; cf. \citet{roth2022uncertain}). Readers can find the details in Section~\ref{apxsec:instances} of Appendix. 
\end{example}  

\subsection{From audit class to calibrated class}
We are interested in characterizing the function class $\sH$ appearing in~\eqref{eqn:mc}. It is essential to distinguish between \emph{audit class} $\sHa$, the collection explored by the auditor during training, and the \emph{multicalibrated class} $\sH$. Earlier work often assumed $\sH = \sHa$, so eliminating empirical violations over $\sHa$ implied $\alpha$-multicalibration~\citep{deng2023happymap}. Other treatment sidestepped the issue by postulating an unspecified class $\sH$ that is “agnostically learned”~\citep{kim2019multiaccuracy} at finite samples $\#V=\Theta(N)$, i.e., 
$$
\sup_{h \in \sH} \left| \frac{1}{N} \sum_{i \in V} h (X_i) s(Y_i, \wt{f}(X_i)) \right| < \alpha, 
$$ 
leading to $\sup_{h \in \sH } | \Expect h (X) s(Y, \wt{f}(X)) | = O(\alpha)$ via concentration.
In our formulation, the additional normalization introduced in~\eqref{eqn:Delta} unlike~\cite{kim2019multiaccuracy, kim2022universal},  facilitates a shaper theoretical analysis. 
Their assumption is not entirely without basis, since stopping criterion implicitly selects a class $\sH$ for which the empirical multicalibration condition holds. At minimum, following the unnormalized stopping criterion, any $h$ belonging to the convex hull of final audit functions $\{h_{B, S_l}\}_{l=1}^L$ automatically satisfies~\eqref{eqn:mc}. Thus, one can at least conclude $\sH \supset \textrm{Conv}(\{h_{B, S_l}\}_{l = 1, \ldots, L} )$.

\begin{rem}
    {\rm 
    We observe that~\cite{deng2023happymap,zhang2024fair} implicitly assume access to a black-box optimization oracle together with exact knowledge of the underlying distribution, as also discussed by \cite{gibbs2025conformal}. Their procedure requires verifying whether all $h \in \sH$ are multicalibrated, which may only be feasible when $\sH$ is finite (for example, indicators of a fixed collection of sensitive groups). For general infinite classes, such an exhaustive check is generally not computationally tractable. Moreover, the updates presuppose a direct evaluation of expectations $\Expect h \cdot s$, which in practice would require knowing the distribution exactly. The algorithm looks more like sketches: the choice of parameters and sample sizes is not fully specified, and the auditing step is left abstract. If $h$ is unknown, it must be approximated through some auxiliary procedure $\mathcal{A}$, while in the simplest case where $h$ step functions on sensitive groups, the update reduces to constant adjustment within each group. 
    }
\end{rem} 

We make this relationship explicit. Formally, define the sequences of additive expansions 
$$
\mathcal F_k = \left\{\sum_{j=1}^k \eta_j h_j : h_j\in\sHa\right\}, \quad
\mathcal F=\overline{\bigcup_{k\ge1}\mathcal F_k}, 
$$
where the closure is taken in $L_2(\bbP_X)$.
We prove (Theorem~\ref{thm:stationarity}) that the limit predictor $f^{(\infty)}$ satisfies
$$
\langle g^{(\infty)},h\rangle=0, \quad \forall h\in\sF, \qquad g^{(\infty)} = \Expect[s(Y, f^{(\infty)}) | X].
$$

Despite that $|\langle g^{(\infty)}, h\rangle| = 0, \forall h \in \sF$, a practical algorithm halts after finitely many iterations $B$ on finite data. The resulting predictor $\wt{f} = f^{(B)}$ is multicalibrated over some $\sH$ with tolerance $\alpha$, but we cannot claim $\sH$ to be the entire $\sF$; the distinction is subtle but important. While the ideal limit $f^{(\infty)}$ is uncorrelated with all directions in the closure of the boosting span $\sF$, the finite-step predictor merely achieves small, but nonzero, correlation $| \Expect h \cdot s(Y, f^{(B)}) | $. 

Observe that scaling $h$ trivially rescales the correlation: 
\begin{align*}
    &| \Expect a \cdot h \cdot s(Y, f^{(B)}(X) ) | =   |a| | \Expect [ h \cdot s(Y, f^{(B)}(X) )  ] |.
\end{align*} 
Hence, without constraining the the magnitude of $h$, the notion of ``multicalibrated over $\sF$" becomes ill-posed: arbitrary large $L_2$-norms would yield unbounded violations. It is therefore meaningful to assess multicalibration within norm-bounded subset of $\sF$. Concretely, if the algorithm halts at step $B$ with empirical violation below $\alpha$, then 
$$
| \Expect h s(Y, f^{(B)}(X)) | = O(\|h\|_{L_2} \alpha ),  \forall h \in \sF, 
$$ 
and consequently, 
$$
\sup_{f \in \sF, \|h\|_{L_2} \le C_{\sF} }| \Expect h s(Y, f^{(B)}(X)) | = O( C_{\sF} \alpha ). 
$$
This establishes that $f^{(B)}$ is multicalibrated over the norm-bounded subspace of $\sF$, with violation scaling linearly in the $L_2$-radius $C_{\sF}$. Details and formal results appear in the following sections. 

\begin{example}\label{ex:class}
    Each audit direction $h_j$ may depend on $X$, on the current predictor $f^{(j)}$, or occasionally on the initial predictor. Typical examples include:
    \begin{enumerate}
        \item Groupwise refinements: 
        $$
        \sF_k = \left\{ \sum_{j=1}^k \eta_j \beta_{G_j, j} \v1\{ X \in G_j: G_j \in \mathcal{G} \} \right\} = \left\{ \sum_{G \in \mathcal{G}} (\sum_{j: G_j = G} \eta_j \beta_{G_j, j})\v1\{X \in G \}  \right\}.  
        $$
        \item Additive tree ensembles: $ \sF_k = \{ \sum_{j=1}^k \eta_j h_{1, j}(X) \}$, with tree-based $h_{1, j}$. 
        \item Hybrid refinements combining trees and groups: 
        $$
        \sF_k = \{ \sum_{j=1}^k \eta_j h_{1,j}(X) \v1\{X \in G_j: G_j \in \mathcal{G} \}\}.
        $$ 
        \item Fixed-bucket refinements based on $f^{(0)}$:
        $$
        \sF_k = \{ \sum_{j=1}^{k} \eta_j h_{1,j}(X) \v1\{ f^{(0)}(X) \in I_{l_j}; l_j = 1, \ldots, L\} \}.
        $$ 
        \item Dynamic bucket refinements using current predictions:
        \begin{align*}
            \sF_k & = \left\{ \sum_{j=1}^k \eta_j h_{1,j}(X) \v1\{f^{(j)}(X) \in I_l \} \right\} =  \left\{ \sum_{j=1}^k \eta_j h_{1,j}(X) \v1\{ f^{(0)} + \sum_{u<j}\eta_uh_u)(X)\in I_l \} \right\}.
        \end{align*}
    \end{enumerate}
\end{example}
These constructions illustrate that the expressive capacity of MCBoost depends crucially on the auditing algorithms and classes $\sHa$ that serve for different purposes. For example, group indicators mitigate group-wise disparities, bucketization enforces value-level calibration~\citep{roth2022uncertain}.

The preceding discussion clarifies that the true multicalibrated space need not coincide with the original class. Formally, we have $\sHa \subseteq \sF = \overline{\cup_{k \ge 1} \sF_k }$. The set $\sF$ thus represents the closure of all directions reachable by iterative boosting. In special cases, such as space spanned by finite-dimensional bases or partition indicators, do we have equality $\sF = \overline{\sHa} = \sHa$. In general, $\sF$ can be strictly larger: for instance, additive expansions of regression trees or weak classifiers yield rich, infinite-dimensional spaces. Boosting implicitly expands the calibration domain from $\sHa$ to $\sF$, providing a rigorous characterization of the learned multicalibrated class and formalizing what earlier work treated heuristically. 

\section{Theory of MCBoost and its empirical implications}\label{sec:theory} 
We now turn to the formal theoretical analysis of the multicalibration boosting procedure. Our goal is to articulate the statistical and algorithmic principles underlying MCBoost and to clarify several aspects that have remained partially understood in prior work. Specifically, we investigate: i) the limiting behavior of the boosting trajectory: to what function does the predictor sequence converge and at what rate?  ii) for which function family the predictor achieves multicalibration; and iii) the practical validity of finite-step stopping---does the halting rule guarantee multicalibration, and how does the iteration number relate to sample size and stopping threshold? To address these questions, we introduce some conditions and notation.
\begin{ass}\label{ass:l2}
    For every realized predictor $f$ along the boosting path, the map $X \mapsto h(X, f(X))$ is measurable and the corresponding auditing family satisfies
    $$
    \sup_{h \in \sHa} \|h\|_{L_2} \le C_{\sHa} < \infty, 
    $$
    so every admissible audit direction lies in $L_2(\bbP_X)$.  
    
\end{ass}
This holds for all indicator-bucket and bounded tree-based auditors. 

\paragraph{\bf Notation.}
Let 
$$
\mathcal L(f)\ :=\ \mathbb E\big[ L\big(Y,f(X)\big)\big],
$$
denote the population risk. For any $g,h\in L_2(\bbP_X)$, write 
$$
\langle g,h\rangle :=\mathbb E[g(X)h(X)], \quad \|h\|_{L_2} :=\sqrt{\langle h,h\rangle}
$$
If $s$ involves both $X$ and $Y$, we also use the shorthand 
$$
\langle s,h\rangle = \mathbb E[s(Y, f(X))h(X)]. 
$$
The conditional mean score at iteration $b$ is $g^{(b)}(X):=\mathbb E\!\left[s\!\left(Y,f^{(b)}(X)\right)\mid X\right]$, so that 
$ \langle s^{(b)},h\rangle=\langle g^{(b)},h\rangle, $ for all $h(X)$.

The boosting dynamics depend on an auditing class $\sHa \subset L_2(\bbP_X)$. Let 
\begin{equation}\label{eqn:chain}
    \mathcal F_k \ := \left\{\sum_{j=1}^k \eta_j h_j: h_j\in \sHa \right\},
\qquad
\mathcal F\ :=\ \overline{\bigcup_{k \ge 1} \mathcal F_k},
\end{equation}
For instance, if $\mathcal H$ is the class of fixed-depth regression trees, then $\mathcal F$ is the closure of all finite additive expansions of such trees, a rich additive function space. We then define the restricted dual norm
\[
\|g\|_* \ :=\ \sup\{\,\langle g,h\rangle:\ h\in\mathcal F,\ \|h\|_{L_2}\le 1\,\}.
\]

\subsection{The limit of boosting procedure}

We begin with a smoothness condition on the loss. 
\begin{defn}[Lipschitz derivative]\label{def:Lipschitz-deriv}
We say $L$ is $c_L$-smooth in its first argument if its score is $2c_L$-Lipschitz:
$|s(Y, u)-s(Y, u)|\le 2c_L|u-v|$ a.s. in $Y$.
\end{defn}  
This condition yields a standard quadratic lower bound, 
\begin{lemma}\label{lem:Lipshitz}
    Suppose $L$ has $c_L$-smooth,
    the objective function $L$ we consider satisfy 
    $$
    L(Y, f(X)) - L(Y, \wt{f}(X)) \ge \partial_{u} L(Y, u)|_{u = f} (f(X) - \wt{f}(X)) - c_{L} (f(X) - \wt{f}(X))^2,
    $$
    for some $c_L > 0$, where $\partial_{u} L(Y, u)$ is partial derivative/subgradient w.r.t the second argument $u$. 
\end{lemma}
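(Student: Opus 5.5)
The plan is the standard integral form of Taylor's theorem along the segment from $\wt f$ to $f$, combined with the Lipschitz property of the score in Definition~\ref{def:Lipschitz-deriv}. I read that definition as $|s(Y,u)-s(Y,v)|\le 2c_L|u-v|$, with $s(Y,u)=\partial_u L(Y,u)$. Fix $X$ and $Y$, and write $a=f(X)$, $b=\wt f(X)$, $\delta=a-b$. Since the score is Lipschitz, $u\mapsto L(Y,u)$ is continuously differentiable and in particular absolutely continuous in $u$. Hence
\[
L(Y,a)-L(Y,b)=\int_0^1 s\bigl(Y,b+t\delta\bigr)\,\delta\,dt .
\]

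Next I add and subtract the score at the endpoint $a$, which is where the lemma's right-hand side evaluates the derivative:
\[
L(Y,a)-L(Y,b)= s(Y,a)\,\delta-\int_0^1\bigl[s(Y,a)-s(Y,b+t\delta)\bigr]\delta\,dt .
\]
The point $b+t\delta$ sits at distance $(1-t)|\delta|$ from $a$. The Lipschitz bound therefore controls the integrand in absolute value by $2c_L(1-t)\delta^2$. Integrating gives $\int_0^1 2c_L(1-t)\,dt=c_L$, so
\[
L(Y,a)-L(Y,b)\ge s(Y,a)\,\delta-c_L\,\delta^2 .
\]
This is exactly the claimed inequality, with constant $c_L$. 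The factor $2$ in the definition of $c_L$-smoothness is what makes the constant come out as $c_L$ rather than $2c_L$. The bound holds pointwise and almost surely in $Y$. It is measurable in $X$ because $f$ and $\wt f$ are measurable, which is all the downstream expectations need.

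The only delicate point is the phrase ``derivative/subgradient'' in the statement. For a non-smooth loss such as the pinball loss in~\eqref{eqn:qr}, the score is not Lipschitz and the lemma's hypothesis fails, so there is nothing to prove in that case. Under the hypothesis, the subdifferential is a singleton almost everywhere. Since a Lipschitz score has a continuous version, $L$ is $C^1$ in $u$ and the subgradient equals the derivative. For that reason I would first remark that the Lipschitz condition forces differentiability, so that the fundamental theorem of calculus step is justified. Beyond that, no convexity is needed. The argument is just the descent-lemma inequality run with the expansion point taken at $f$ rather than at $\wt f$.
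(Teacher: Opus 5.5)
Your proof is correct and is essentially the paper's argument: the paper also writes $L(Y,f)-L(Y,\wt f)$ by the fundamental theorem of calculus and adds and subtracts $s(Y,f)$. It then bounds the remainder $\int_{\wt f}^{f}\{s(Y,u)-s(Y,f)\}\,du$ below by $-\int_{\wt f}^{f}2c_L(f-u)\,du=-c_L(f-\wt f)^2$, and your $t\in[0,1]$ parametrization just handles both signs of $f-\wt f$ more cleanly. You are also right that the pinball loss fails the pointwise Lipschitz hypothesis, although the paper asserts otherwise; the analogous inequality for it holds only after conditioning on $X$ (e.g.\ when $Y\mid X$ has a bounded density), which affects the paper's examples rather than this proof.
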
 
The $c_L$-smooth condition holds for $L = (Y - f(X))^2/2$ and the pinball loss. 
Next we impose a weak-learning condition. 
\begin{ass}[Weak learner edge]\label{ass:weak-edge}
At iteration $b$, the selected wost-violated direction $h^{(b)} \in \sHa$ satisfies
$$
\frac{\langle g^{(b)},h^{(b)}\rangle}{\|h^{(b)}\|_{L_2}}
\ \ge\
\kappa\,\sup_{h\in\mathcal F}\frac{\langle g^{(b)},h\rangle}{\|h\|_{L_2}}
\ =\ \kappa\,\|g^{(b)}\|_*, \quad g^{(b)}(X):=\mathbb E[s(Y,f^{(b)}(X))\mid X],
$$
for some $\kappa\in(0,1]$, where $\|g\|_*:=\sup_{\|h\|_2\le1,\,h\in\mathcal F}\langle g,h\rangle$.
\end{ass} 
This condition requires that the weak learner consistently identifies a direction that is non-trivially aligned with the gradient (or residual) $g^{(b)}$. In other words, although the optimal update would be the exact maximizer
$
h^* = \argmax_{\|h\|_{L_2}=1, h \in \sF }\langle g^{(b)}, h\rangle,
$
we only ask that the learner returns a hypothesis whose correlation with $g^{(b)}$ is a fixed fraction $\kappa$ of this optimum. This ``edge" assumption guarantees sufficient progress for refinement and convergence, even if the maximizer cannot be computed exactly in practice.
Similar weak-learning hypotheses can been seen in earlier work such as~\cite{zhang2005boosting}.~\cite{alon2021boosting} and~\cite{freund1997decision} assume that each weak learner performs slightly better than chance, and the cumulative effect of such small improvements yields a strong final predictor. Our condition plays the same role here: each audit step only needs to make nontrivial progress in the direction of the current score. 

To accommodate general losses beyond the squared loss, we measure the discrepancy via the generalized Bregman divergence  
$$
D_{\sL} (f_1, f_2; \zeta) = \sL(f_1) - \sL(f_2) - \langle \zeta, f_1 - f_2 \rangle, \quad \zeta \in \partial \sL(f_2), 
$$ 
where $\partial \sL(f_2)$ is the subderivatives at $f_2$. Alongside, we define the associated (possibly set-valued) Bregman projection
$$
\mathcal{P}_{f^{(0)} + \sF}^{\sL} (f_1; \zeta) := \argmin_{f \in f^{(0)} + \sF} D_{\sL}(f, f_1; \zeta), 
$$ 
that project the function $f_1$ onto $f^{(0)} + \sF$. 
\begin{rem}
    {\rm 
    A typical Bregman divergence 
    $$
        D_{\sL} (f_1, f_2) = \sL(f_1) - \sL(f_2) - \langle \nabla \sL(f_2), f_1 - f_2 \rangle, 
    $$
    requires the distance generating function ($\sL$ in our context) to be smooth and strictly (or strongly convex), Under which $D(f_1 , f_2) = 0$ implies $f_1 \equiv f_2 $. For non-smooth convex losses (e.g., pinball loss), we work with subgradients. 
    To speak of projection,
    it often requires  $\mathcal{P}_{f^{(0)} + \sF}^{\sL} (f_1; \zeta)$ to be unique; we extend this notion to allow it to be a set that accommodates multiple projected functions. Check Section~\ref{sec:general-Bregman} in the Appendix for more details.  
    }
\end{rem}

\begin{theorem}[Stationary and projection]\label{thm:stationarity}
Suppose $L$ is $c_L$-smooth and Conditions~\ref{ass:l2}, ~\ref{ass:weak-edge} hold. 
Let the step size at iteration $b$ be 
$\eta^{(b)} = \tfrac{\langle h^{(b)}, s(Y, f^{(b)}) \rangle }{ 2c_L \|h^{(b)}\|_{L_2}^2}$ such that $\{ \| \sum_{b=1}^{k} \eta^{(b)} h^{(b)} \|_{L_2} \}_{k \ge 1} $ forms a Cauchy sequence in $L_2(\bbP_X)$. Then the boosting iterates $\{f^{(b)}\}_{b \ge 0}$ lie in $f^{(0)}+\mathcal F$ and satisfy
$$
\sL(f^{(0)}) \;\ge\; \mathcal L(f^{(1)}) \;\ge\; \cdots \;\downarrow\; \mathcal L(f^{(\infty)}). 
$$
The limit $f^{(\infty)}$ is stationary in the affine span $f^{(0)}+\mathcal F$: 
$$
\ \langle g^{(\infty)},h\rangle\ =\ 0\quad\text{for all }h\in\mathcal F,\qquad g^{(\infty)}(X):=\mathbb E[s(Y,f^{(\infty)}(X))\mid X].\ 
$$
If the global minimizer $f^* \in \arg\min_{f}\mathcal L(f)$ exists, then 
$$
f^{(\infty)} \in \mathcal{P}_{f^{(0)}+\mathcal F}^{\sL} (f^*; 0) = \argmin_{f \in f^{(0)} + \sF} D_{\sL}(f, f^*; 0),  
$$
i.e., the Bregman projection of $f^*$ onto the affine span $f^{(0)} + \sF$. Moreover,
$$
\argmin_{f \in f^{(0)} + \sF} D_{\sL}(f, f^*; 0) = \argmin_{f \in f^{(0)} + \sF} \sL(f)  
$$ 
is invariant of the choice of $f^*$. 
\end{theorem}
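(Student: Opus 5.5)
The plan is the standard functional-gradient descent argument, run in $L_2(\bbP_X)$ on the affine set $f^{(0)}+\sF$.

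\emph{Membership and descent.} By induction, $f^{(b)}=f^{(0)}+\sum_{j<b}\eta^{(j)}h^{(j)}$, which lies in $f^{(0)}+\sF_b\subset f^{(0)}+\sF$. For the descent step I apply Lemma~\ref{lem:Lipshitz} with $\wt f=f^{(b+1)}$ and $f=f^{(b)}$, and take expectations, conditioning on $X$ so that $s$ can be replaced by $g^{(b)}$. This gives
\[
\sL(f^{(b+1)})\le \sL(f^{(b)})-\eta^{(b)}\langle g^{(b)},h^{(b)}\rangle + c_L(\eta^{(b)})^2\|h^{(b)}\|_{L_2}^2 .
\]
Plugging in the prescribed step $\eta^{(b)}=\langle g^{(b)},h^{(b)}\rangle/(2c_L\|h^{(b)}\|^2)$, the right-hand side becomes
\[
\sL(f^{(b)})-\frac{\langle g^{(b)},h^{(b)}\rangle^2}{4c_L\|h^{(b)}\|^2}\le \sL(f^{(b)})-\frac{\kappa^2}{4c_L}\|g^{(b)}\|_*^2 ,
\]
where the last inequality uses Condition~\ref{ass:weak-edge}. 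So the risks are monotonically nonincreasing. If $\sL$ is bounded below (I take this implicitly, e.g.\ $L\ge0$), telescoping gives $\sum_b\|g^{(b)}\|_*^2<\infty$, and hence $\|g^{(b)}\|_*\to0$.

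\emph{Passing to the limit.} The Cauchy assumption gives $f^{(b)}\to f^{(\infty)}$ in $L_2$, with $f^{(\infty)}\in f^{(0)}+\sF$ because $\sF$ is closed. The Lipschitz score gives $\|g^{(b)}-g^{(\infty)}\|_{L_2}\le 2c_L\|f^{(b)}-f^{(\infty)}\|_{L_2}\to0$, by Jensen on the conditional expectation. Hence $|\langle g^{(\infty)},h\rangle|\le |\langle g^{(b)},h\rangle|+o(1)\le \|g^{(b)}\|_*+o(1)\to0$ for every $h\in\sF$ with $\|h\|\le1$; applying this to $\pm h$ handles the sign. For $\sL(f^{(b)})\downarrow\sL(f^{(\infty)})$ I use the quadratic upper bound again:
\[
|\sL(f^{(b)})-\sL(f^{(\infty)})|\le |\langle g^{(\infty)},f^{(b)}-f^{(\infty)}\rangle|+c_L\|f^{(b)}-f^{(\infty)}\|^2\to0 .
\]
Here I need the reverse inequality of Lemma~\ref{lem:Lipshitz} as well, which follows by convexity of $L$.

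\emph{Projection.} Convexity of $L$ in $u$, applied at $f^{(\infty)}$ with the score $g^{(\infty)}$ (a valid subgradient of $\sL$), gives $\sL(f)\ge \sL(f^{(\infty)})+\langle g^{(\infty)},f-f^{(\infty)}\rangle=\sL(f^{(\infty)})$ for all $f\in f^{(0)}+\sF$, since $f-f^{(\infty)}\in\sF$. This uses that $\sF$ is a linear space, or at least symmetric and stable under differences: it is closed under addition because the $\sF_k$ are nested sums, and I treat the $\eta_j$ as arbitrary reals, which I need to state explicitly. So $f^{(\infty)}\in\argmin_{f^{(0)}+\sF}\sL$. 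Finally, if $f^*$ is a global minimizer then $0\in\partial\sL(f^*)$, so
\[
D_{\sL}(f,f^*;0)=\sL(f)-\sL(f^*) .
\]
This differs from $\sL(f)$ by a constant, so the two argmin sets coincide, and the constant $\sL(f^*)=\min\sL$ does not depend on which minimizer is chosen. That gives the claimed invariance.

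\emph{Main obstacle.} I expect the hardest step to be passing to the limit rigorously: establishing continuity of $\sL$ and of $f\mapsto g_f$ along the $L_2$-convergent sequence. The stopping region is handled by the Cauchy assumption, so what remains is showing that the descent gives $\|g^{(b)}\|_*\to0$ without a lower bound on $\sL$. I would add ``$\sL$ bounded below on $f^{(0)}+\sF$'' as a mild implicit assumption. I would also verify that the score being $2c_L$-Lipschitz makes it a genuine (sub)gradient for the Bregman identity, in the nonsmooth pinball case via the chosen representative subgradient.
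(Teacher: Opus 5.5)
Your argument is correct. The descent step is identical to the paper's, but from there you take a more direct route.

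\textbf{Stationarity.} The paper argues by contradiction: a nonzero correlation $\langle s(Y,f^{(\infty)}),\wt h\rangle$ would allow a further descent step from $f^{(\infty)}$, ``contradicting minimality within $f^{(0)}+\sF$''. That minimality is only obtained afterwards, so the paper's step tacitly needs what you actually supply. That is $\|g^{(b)}\|_*\to 0$ combined with $L_2$-continuity of $f\mapsto g_f$, which you get from the Lipschitz score.

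\textbf{Convergence of the risk.} For $\sL(f^{(b)})\to\sL(f^{(\infty)})$, the paper uses Lemma~\ref{lem:limit-fun}: growth condition A1, uniform integrability via de la Vall\'ee-Poussin, and Vitali's theorem. That route works for any continuous loss with controlled growth. Your two-sided sandwich (convexity below, Lemma~\ref{lem:Lipshitz} above) is shorter but relies on smoothness.

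\textbf{Projection.} The paper goes through the Bregman variational inequality and the generalized Pythagorean inequality. You instead observe that $D_{\sL}(f,f^*;0)=\sL(f)-\min\sL$, and combine this with the subgradient inequality at $f^{(\infty)}$ and orthogonality on the linear space $\sF$. This is more elementary and makes the invariance in $f^*$ immediate.

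\textbf{The lower bound you propose to add is unnecessary.} Cauchy partial sums force the individual increments to vanish, and with the prescribed step
\[
\|\eta^{(b)}h^{(b)}\|_{L_2}=\frac{\langle g^{(b)},h^{(b)}\rangle}{2c_L\|h^{(b)}\|_{L_2}}\ \ge\ \frac{\kappa}{2c_L}\,\|g^{(b)}\|_* .
\]
So $\|g^{(b)}\|_*\to0$ follows directly from the stated hypotheses, with no lower bound on $\sL$.

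\textbf{The pinball case.} The pinball score is not pointwise Lipschitz in the sense Definition~\ref{def:Lipschitz-deriv} requires. If you want to cover it, note that your descent inequality and your continuity step only use Lipschitz continuity of $u\mapsto\Expect[s(Y,u)\mid X]$. That holds when $Y\mid X$ has a bounded density.
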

Theorem~\ref{thm:stationarity} uncovers that $f^{(\infty)}$ can be viewed as projection onto the set $f^{(0)} + \sF$. The initial predictor $f^{(0)}$ need not lie in $\sF$; it may arise from a different training procedure
(e.g., linear regression or neural network) before refinement through tree or group-based boosting, and it may be well-trained or just inadvertently chosen. Nonetheless, the resulting limit is with respect to $\sF$, 
$$
\sup_{h \in \sF} | \Expect  s(Y, f^{(\infty)}) h(X)  | =\sup_{h \in \sF} | \langle g^{(\infty)}, h\rangle |  = 0.
$$
Thus, the learned multicalibrated class for $f^{(\infty)}$ is exactly  the closed convex symmetric $\sF$.

It would be instructive to investigate the gain from boosting under various form of $\sF$ and examine how the structure of the auditing family $\sHa$ influences the expansion of $\sF$.

\begin{enumerate}
    \item \textbf{Finite basis.} If $\sHa$ is a finite-dimensional linear subspace, e.g., $\sHa=\mathrm{span}\{\phi_1,\ldots,\phi_K\}$ or a fixed set of group indicators $\{ \sum_G \beta_G \v1\{X\in G\}:G\in\mathcal G\}$, then $\sF=\overline{\sHa}=\sHa$.   
To expand calibration beyond this,
the audit class itself has to be enlarged. 
\item \textbf{Rich dictionary.}  
If $\sHa$ is an infinite dictionary of simple atoms, such as all depth-$d$ trees with arbitrary thresholds or all decision stumps $\mathrm{sgn}(X-t),\,t\in\mathbb R$~\citep{bickel2006some}, then $\sF_k$ grows genuinely with $k$.  
For example, $k$ decision stumps on $\mathbb R$ yield piecewise-constant functions with up to $k+1$ intervals; sums of $T$ depth-$d$ trees are constant on the common refinement of $T$ partitions, with at most $(2^d)^T$ cells.  
In this regime $\sF$ is infinite-dimensional, and its growth with $k$ precisely captures the ``weak-to-strong'' expressivity gain attributed to boosting.  
As $k$ increases, the multicalibration guarantee over $\sF$ quantifies this expressive power rigorously. 
\end{enumerate}

\subsection{Projection operator: existence and limit characterization}

When an optional projection onto a closed convex set $O \subset L_2(\bbP_X)$ is used each round (e.g., $O := \{f: \mathcal{X} \to [0,1] \}$ ), the update becomes
$$
f^{(b+1)} = \mathcal{P}_O(f^{(b)} - \eta^{(b)} h^{(b)}). 
$$
Since $O$ need not be symmetric or affine, we adopt a mild operational hypothesis to control the projection step. 
\begin{ass}[Working Hypothesis]\label{ass:projection}
    Let $O \in L_2(\bbP_X)$ be a closed convex set. 
    \begin{itemize} 
        \item \emph{Inactive projection.}
        The projection becomes inactive beyond some finite iteration $B'$, i.e., 
        $$
        f^{(b+1)} = \mathcal{P}_O(f^{(b)} - \eta^{(b)} h^{(b)}) = f^{(b)} - \eta^{(b)} h^{(b)}, \text{ for all } b > B'. 
        $$
        \item \emph{Interior-at-limit.} The limit $f^{(\infty)} \in \text{Int}(O)$, where $\text{Int}(O)$ is the interior of $O$: there is a positive margin to the boundary. Meanwhile, $f^{(\infty)} \pm th \in O$ for all $h \in \sF$ with sufficiently small $t$. 
    \end{itemize}
\end{ass}
This condition is weaker and more realistic than requiring $\sL(\mathcal{P}_O(\cdot) \le \sL( \cdot ). $ at every step~\citep{deng2023happymap,zhang2024fair}, which generally fails for arbitrary convex $\sL$ and $O$. It is safer and standard to propose Condition~\ref{ass:projection} to handle projection steps in convex optimization. Then, we can reach the conclusion similar to~\ref{thm:stationarity}.
\begin{theorem}\label{thm:stationarity-proj}
    Under Conditions~\ref{ass:l2}-\ref{ass:projection}, with $c_L$-smooth $L$ and step size in Theorem~\ref{thm:stationarity}, the sequence $\sL(f^{(b)})$ is eventually nonincreasing (for all $b > B'$), and $f^{(b)} \to f^{(\infty)} \in \text{Int}(O)$ in $L_2$. Moreover, 
    $$
    \langle g^{(\infty)},h\rangle\ =\ 0\quad\text{for all }h\in\mathcal F,\qquad g^{(\infty)}(X):=\mathbb E[s(Y,f^{(\infty)}(X))\mid X],  
    $$ 
    and $f^{(\infty)}$ is a $\sF$-stationary in the sense that $\sL(f^{(\infty)})$ is optimal along directions in $\sF$, i.e., 
    $$
    \sL(f) \ge \sL(f^{(\infty)}), \quad \forall f \in O, \, f - f^{(\infty)} \in \sF.
    $$
\end{theorem}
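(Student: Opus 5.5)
The plan is to reduce to Theorem~\ref{thm:stationarity}, using the \emph{inactive projection} part of Condition~\ref{ass:projection}. For $b > B'$ the iteration is exactly the unprojected update $f^{(b+1)} = f^{(b)} - \eta^{(b)} h^{(b)}$. We restart the analysis at $\tilde f^{(0)} := f^{(B'+1)}$. Since $\tilde f^{(0)}$ is obtained from finitely many projections, we only know $\tilde f^{(0)} \in O$. It need not lie in $f^{(0)}+\sF$, and it does not need to, because Theorem~\ref{thm:stationarity} allows an arbitrary starting point.

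\textbf{Step 1: monotone decrease.} For $b > B'$, apply Lemma~\ref{lem:Lipshitz} with $\wt f = f^{(b+1)} = f^{(b)} - \eta^{(b)}h^{(b)}$ and take expectations. Conditioning on $X$ replaces $s$ by $g^{(b)}$, which gives
\[
\sL(f^{(b)}) - \sL(f^{(b+1)}) \ge \eta^{(b)}\langle g^{(b)},h^{(b)}\rangle - c_L (\eta^{(b)})^2\|h^{(b)}\|_{L_2}^2 = \frac{\langle g^{(b)},h^{(b)}\rangle^2}{4c_L\|h^{(b)}\|_{L_2}^2}.
\]
The equality uses the prescribed step size. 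By Condition~\ref{ass:weak-edge}, the right-hand side is at least $\kappa^2\|g^{(b)}\|_*^2/(4c_L)$. Hence $\sL(f^{(b)})$ is nonincreasing for $b>B'$.

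\textbf{Step 2: convergence and stationarity.} The Cauchy hypothesis on the partial sums $\sum_b \eta^{(b)}h^{(b)}$ gives $f^{(b)} \to f^{(\infty)}$ in $L_2$. Closedness of $O$ gives $f^{(\infty)}\in O$, and the interior-at-limit hypothesis places it in $\mathrm{Int}(O)$. Assuming $\sL$ is bounded below, we can telescope the Step 1 bound. This yields $\sum_{b>B'}\|g^{(b)}\|_*^2<\infty$, so $\|g^{(b)}\|_*\to 0$.

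To pass to the limit, fix $h\in\sF$. The $2c_L$-Lipschitz property of $s$ gives
\[
|\langle g^{(b)} - g^{(\infty)},h\rangle| \le 2c_L\|f^{(b)}-f^{(\infty)}\|_{L_2}\|h\|_{L_2} \to 0.
\]
Combined with $|\langle g^{(b)},h\rangle|\le \|g^{(b)}\|_*\|h\|_{L_2}$, this gives $\langle g^{(\infty)},h\rangle=0$. The case $\sF$ symmetric is handled by applying the same bound to $-h$.

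\textbf{Step 3: $\sF$-optimality.} Let $f\in O$ with $d := f-f^{(\infty)}\in\sF$. The function $\phi(t)=\sL(f^{(\infty)}+td)$ is convex on $[0,1]$, since $L$ is convex in $u$. Its one-sided derivative at $0$ is $\langle g^{(\infty)}, d\rangle = 0$; for pinball loss one uses a subgradient with the chosen representative $s$, and the interior condition guarantees the segment $f^{(\infty)}\pm td$ stays in $O$ for small $t$. By the convexity (subgradient) inequality, $\sL(f)\ge \sL(f^{(\infty)}) + \langle g^{(\infty)},d\rangle = \sL(f^{(\infty)})$.

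\textbf{Main obstacle.} I expect the subgradient step for nonsmooth losses to be the delicate point. The set-valued $\partial L$ must admit the chosen selection $s$ as a valid subgradient for the convexity inequality. I would handle this by invoking the generalized Bregman framework of the appendix, so that $D_{\sL}(f,f^{(\infty)};g^{(\infty)})\ge0$. I would also confirm that the finitely many projected steps before $B'$ do not affect the limit argument.
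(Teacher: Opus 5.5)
Your proof is correct and follows the paper's skeleton: inactive projection $\Rightarrow$ unprojected descent after $B'$ $\Rightarrow$ stationarity $\Rightarrow$ restricted optimality by convexity. Where you depart from the paper is the stationarity step.

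\textbf{The paper's route.} It says only ``by the same argument as before'', meaning the proof of Theorem~\ref{thm:stationarity}. That proof first identifies $\lim_b\sL(f^{(b)})=\sL(f^{(\infty)})$ via Lemma~\ref{lem:limit-fun} (growth condition A1, uniform integrability, Vitali). It then argues by contradiction: a direction $\tilde h$ with $\langle g^{(\infty)},\tilde h\rangle>\epsilon$ would allow a further descent step from $f^{(\infty)}$, ``contradicting minimality within $f^{(0)}+\sF$''. That minimality has not been established at that point. Moreover, in the projected setting $f^{(\infty)}$ need not lie in $f^{(0)}+\sF$ at all, only in $f^{(B'+1)}+\sF$, as you correctly note.

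\textbf{Your route.} You telescope the descent bound to get $\|g^{(b)}\|_*\to 0$ and pass to the limit with the Lipschitz score. This needs neither A1 nor a limit interchange, and it supplies exactly the link the paper's contradiction leaves implicit. The paper's route does additionally identify $\sL(f^{(\infty)})$ as the limit of the risks. That matters for the excess risks $\Delta_b$ elsewhere, but not for this theorem.

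\textbf{Refinements.}
\begin{itemize}
\item The bounded-below assumption is unnecessary. The Cauchy property forces $\eta^{(b)}h^{(b)}\to 0$ in $L_2$, and $\|\eta^{(b)}h^{(b)}\|_{L_2}\ge \kappa\|g^{(b)}\|_*/(2c_L)$, so $\|g^{(b)}\|_*\to 0$ directly.
\item The obstacle you flag in Step 3 is not one. The function $\mathbf{1}\{Y\le u\}-\tau$ is always a subgradient of the convex pinball loss. Hence $\sL(f)\ge\sL(f^{(\infty)})+\langle g^{(\infty)},f-f^{(\infty)}\rangle$ holds for every $f$, without using $f\in O$ or the interior condition.
\item The genuinely delicate point for pinball loss is that the pointwise Lipschitz property in Definition~\ref{def:Lipschitz-deriv} fails. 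Steps 1--2 survive if you use it conditionally on $X$, i.e., require $u\mapsto\Expect[s(Y,u)\mid X]$ to be Lipschitz, which holds for example under a bounded conditional density.
\end{itemize}
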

The Bregman projection identity generally does not hold because our search space is $\sF$ rather than the entire space. Therefore, $f^{(\infty)}$ may be differ from the global Bregman projection $ \mathcal{P}_{O}^{\sL} (f^*; 0) = \argmin_{f \in O} D_{\sL}(f, f^*; 0)$; however, the two coincides when no projection is presented. 
Indeed, the limit satisfies a \emph{restricted variational inequality}~\citep{rockafellar1998variational}
\begin{equation*}
    \langle g^{(\infty)}, f - f^{(\infty)} \rangle \ge 0, \quad \forall f \in O \text{ with } f - f^{(\infty)} \in \sF.     
\end{equation*}
This can be expressed in a general subdifferential form  as 
$
0 \in \mathcal{P}_{\sF}\left( \partial \sL (f^{(\infty)} )  \right), 
$
which ensures
$$
\sL(f^{(\infty)}) \le \sL(\wt{f}), \quad \forall \wt{f} \in O, \wt{f} -f^{(\infty)} \in \sF.   
$$
This directional stationary is weaker than the first-order optimality $\mathcal{P}_O^{\sL} (f^*; 0)$, and $f^{(\infty)}$ have a higher objective value if $\sF$ is not rich enough. More detailed discussion can be found in Section~\ref{apxsec:projection}. 
In other words, the stagewise limit $f^{(\infty)}$ need only be directionally stationary along $\sF$, in contrast to $\mathcal{P}_O^{\sL} f^*$ which satisfies 
$$ 
\langle - \partial \sL(\mathcal{P}_O^{\sL} f^*), f -\mathcal{P}_O^{\sL} f^* \rangle \le 0, \quad \forall f \in O, 
$$
which is a stronger condition than that required by $f^{(\infty)}$, reflecting the possible difference between these two -- $f^{(\infty)}$ may have a higher objective value if $\sF$ is not rich enough.

\subsection{Convergence rates}\label{apxsub:convergence}
Having established convergence of the boosting sequence, we next quantify the rate at which the excess risk $\Delta_b = \sL(f^{(b)}) - \sL(f^{(\infty)})$ vanishes. To this end, we posit an mild boundedness condition on the boosting path: 
\begin{ass}[Finite radius]\label{ass:finite-radius}
There exists $R<\infty$ such that $\|f^{(b)}-f^{(\infty)}\|_{L_2}\le R$ for all $b\ge0$.
\end{ass}
This condition says that all iterates remain in a $L_2$-ball around the limit, which is slightly stronger than the Cauchy condition  $\|\sum_{j = m}^{n} \eta^{(j)} h^{(j)}\|_{L_2} \to 0, m, n \to \infty$; indeed,  
$\|f^{(b)}-f^{(\infty)}\|_{L_2} \le \left\| \sum_{j = b}^{B'} \eta^{(j)} h^{(j)} \right\| + \epsilon $ for any $\epsilon > 0$ and sufficient large $B'$.

\begin{theorem}\label{thm:smooth-rate}
Suppose $L$ is $c_L$-smooth and use the step
\(
\eta^{(b)}=\dfrac{\langle g^{(b)},h^{(b)}\rangle}{2c_L\|h^{(b)}\|_{L_2}^2}.
\)
Under Conditions~\ref{ass:l2}--
\ref{ass:finite-radius}, the excess risk satisfies
$$
\ \Delta_B\ \le\ \frac{4c_L R^2}{\kappa^2}\cdot \frac{1}{B}.
$$
\end{theorem}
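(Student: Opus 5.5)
We will follow the standard functional-gradient-descent argument for smooth convex objectives: establish a per-step sufficient decrease, lower-bound the decrease by the excess risk via convexity and the restricted dual norm, and solve the resulting recursion.

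\textbf{Step 1 (sufficient decrease).} We apply Lemma~\ref{lem:Lipshitz} with the roles arranged as an upper bound. By $c_L$-smoothness, the Lipschitz-score condition of Definition~\ref{def:Lipschitz-deriv} gives the descent-lemma form $L(Y,u-\eta h)\le L(Y,u)-\eta\, s(Y,u)h+c_L\eta^2h^2$. Taking expectations and conditioning on $X$ yields
\[
\mathcal L(f^{(b+1)})\le \mathcal L(f^{(b)})-\eta^{(b)}\langle g^{(b)},h^{(b)}\rangle+c_L(\eta^{(b)})^2\|h^{(b)}\|_{L_2}^2 .
\]
Plugging in the prescribed $\eta^{(b)}$, which minimizes this quadratic, gives
\[
\Delta_b-\Delta_{b+1}\ge \frac{\langle g^{(b)},h^{(b)}\rangle^2}{4c_L\|h^{(b)}\|_{L_2}^2}\ge \frac{\kappa^2\|g^{(b)}\|_*^2}{4c_L},
\]
where the last inequality uses Condition~\ref{ass:weak-edge}. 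This holds once the projection is inactive; before $B'$ we would either take $B'=0$ or note that the argument restarts at $B'$.

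\textbf{Step 2 (linking dual norm to excess risk).} By convexity of $L$ in $u$ (implicit in the Bregman setup), $\mathcal L(f^{(b)})-\mathcal L(f^{(\infty)})\le \langle g^{(b)},f^{(b)}-f^{(\infty)}\rangle$. By Theorem~\ref{thm:stationarity} (or Theorem~\ref{thm:stationarity-proj}), both iterates lie in $f^{(0)}+\mathcal F$, so $f^{(b)}-f^{(\infty)}\in\mathcal F$, using that $\mathcal F$ is a closed linear subspace (the closed span; we may need to note that closure of sums with arbitrary-sign steps is symmetric and convex). Hence $\langle g^{(b)},f^{(b)}-f^{(\infty)}\rangle\le \|g^{(b)}\|_*\|f^{(b)}-f^{(\infty)}\|_{L_2}\le R\|g^{(b)}\|_*$ by Condition~\ref{ass:finite-radius}. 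Therefore $\|g^{(b)}\|_*\ge \Delta_b/R$.

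\textbf{Step 3 (recursion).} Combining the two steps gives $\Delta_{b+1}\le \Delta_b-\frac{\kappa^2}{4c_LR^2}\Delta_b^2$, with $\Delta_b\ge0$ and nonincreasing. Dividing by $\Delta_b\Delta_{b+1}$ and using $\Delta_{b+1}\le\Delta_b$ yields $1/\Delta_{b+1}-1/\Delta_b\ge \kappa^2/(4c_LR^2)$. Telescoping over $b=0,\dots,B-1$ gives $1/\Delta_B\ge B\kappa^2/(4c_LR^2)$, which is the claim.

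\textbf{Main obstacle.} The delicate point is Step 2. It requires that $f^{(b)}-f^{(\infty)}$ lie in $\mathcal F$, so that the restricted dual norm applies, and it requires convexity of $L$ (beyond smoothness) for the first-order inequality. If $\mathcal F$ is only a closed symmetric convex set rather than a subspace, we would instead normalize: $(f^{(b)}-f^{(\infty)})/\|f^{(b)}-f^{(\infty)}\|_{L_2}$ must belong to $\mathcal F$. We would try to show that this holds because $\bigcup_k\mathcal F_k$ is a cone closed under addition, since the step sizes are unrestricted reals.
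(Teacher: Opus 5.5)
Your proposal is correct and follows the paper's proof essentially step for step. It uses the descent inequality from Lemma~\ref{lem:Lipshitz} with the quadratic-optimal step and the weak-edge condition, then the convexity bound $\Delta_b\le R\|g^{(b)}\|_*$, then the harmonic recursion $1/\Delta_{b+1}\ge 1/\Delta_b+\kappa^2/(4c_LR^2)$, and it handles the projection case by restarting at $B'$, just as the paper does. The obstacle you flag in Step 2 resolves as you suspect: with real step sizes, $\bigcup_k\mathcal F_k$ is the linear span of $\sHa$, so $\mathcal F$ is a closed subspace, and $f^{(b)}-f^{(\infty)}$ lies in it because it is the $L_2$-limit of the tail sums $\sum_{j=b}^{n-1}\eta^{(j)}h^{(j)}$.
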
  
Theorem~\ref{thm:smooth-rate} elucidates an $O(1/B)$ convergence rate for excess risk under $c_L$-smooth condition. Stronger curvature assumptions can further accelerate convergence, such as $L_2$ loss. In particular, if the loss satisfies the Polyak–Łojasiewicz (PL) condition~\citep{karimi2016linear}, which is weaker than the strong convexity, the excess risk decays geometrically. Notably, our analysis only requires the PL condition with respect to the restricted dual norm on $f^{(0)} + \sF$, which is weaker than the usual global PL condition and aligns naturally with the structure induced by the boosting updates.
For more general convex losses for which the $c_L$-smooth condition fails, the faster $O(1/B)$ and linear rates need not hold. In that case, the usual subgradient argument gives an $O(B^{-1/2})$ guarantee for the averaged iterate; see Appendix~\ref{apxsubsec:nonsmooth-convex}.

To show that the PL condition yields linear rate, we define the $\mu$-PL condition as  follows. 
\begin{defn}[Polyak-Łojasiewicz on restricted dual norm]\label{def:PL}
We say $\mathcal L$ satisfies a $\mu$-PL condition (w.r.t.\ $\|\cdot\|_*$) on $f^{(0)}+\mathcal F$ if
\[
\frac12\|g\|_*^2\ \ge\ \mu\,\big(\mathcal L(f)-\mathcal L(f^{(\infty)})\big)
\quad\text{for all }f\in f^{(0)}+\mathcal F,\ g=\mathbb E[s(Y,f(X))\mid X].
\]
\end{defn} 

\begin{theorem}
\label{thm:PL-linear}
Under $c_L$-smoothness,  Condition~\ref{ass:weak-edge}, and the $\mu$-PL condition~\ref{def:PL},
the same step as in Theorem~\ref{thm:smooth-rate} yields
$$
\ \Delta_{b+1}\ \le\ \Big(1-\frac{\mu\kappa^2}{2c_L}\Big)\Delta_b \quad \Longrightarrow\quad
\Delta_B\ \le\ \Big(1-\frac{\mu\kappa^2}{2c_L}\Big)^B\Delta_0.
$$
\end{theorem}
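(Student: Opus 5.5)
The argument is a one-step descent inequality of the kind used for Theorem~\ref{thm:smooth-rate}, combined with the restricted PL inequality in place of the convexity/radius argument. Once we have a per-step decrease proportional to $\|g^{(b)}\|_*^2$, PL converts it into a contraction of $\Delta_b$.

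\emph{Step 1: descent bound.} Apply the quadratic upper bound implied by $c_L$-smoothness (Lemma~\ref{lem:Lipshitz}, read as the standard upper estimate for a loss whose score is $2c_L$-Lipschitz) pointwise, with $f = f^{(b)}$ and $\wt f = f^{(b+1)} = f^{(b)} - \eta^{(b)} h^{(b)}$. Taking expectations and conditioning on $X$ replaces $s$ by $g^{(b)}$, which gives
\[
\sL(f^{(b+1)}) \le \sL(f^{(b)}) - \eta^{(b)}\langle g^{(b)},h^{(b)}\rangle + c_L (\eta^{(b)})^2 \|h^{(b)}\|_{L_2}^2 .
\]
Substituting the step $\eta^{(b)} = \langle g^{(b)},h^{(b)}\rangle/(2c_L\|h^{(b)}\|_{L_2}^2)$, which minimizes the right-hand side, yields
\[
\sL(f^{(b+1)}) \le \sL(f^{(b)}) - \frac{\langle g^{(b)},h^{(b)}\rangle^2}{4c_L\|h^{(b)}\|_{L_2}^2}.
\]

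\emph{Step 2: weak edge.} Condition~\ref{ass:weak-edge} gives $\langle g^{(b)},h^{(b)}\rangle/\|h^{(b)}\|_{L_2} \ge \kappa\|g^{(b)}\|_*$. The left side is nonnegative after choosing the sign of $h^{(b)}$, since $\sHa$ is symmetric or the sign is absorbed into $\eta^{(b)}$. Squaring, $\Delta_{b+1} \le \Delta_b - \frac{\kappa^2}{4c_L}\|g^{(b)}\|_*^2$.

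\emph{Step 3: PL.} We have $f^{(b)} \in f^{(0)}+\sF$ by Theorem~\ref{thm:stationarity}, or trivially since each update adds an element of $\sHa\subset\sF$. Definition~\ref{def:PL} therefore applies and gives $\|g^{(b)}\|_*^2 \ge 2\mu\Delta_b$. Hence
\[
\Delta_{b+1} \le \Big(1-\frac{\mu\kappa^2}{2c_L}\Big)\Delta_b,
\]
and induction over $b=0,\dots,B-1$ gives the geometric bound.

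\emph{Main obstacle.} The delicate step is Step 1 for non-differentiable losses such as pinball, where $s$ is only a representative subgradient. Definition~\ref{def:Lipschitz-deriv} requires $s$ to be Lipschitz, which forces differentiability. So we should simply work under that assumption and use the integral form $L(Y,u+t)-L(Y,u)-s(Y,u)t = \int_0^t (s(Y,u+r)-s(Y,u))\,dr \le c_L t^2$. This gives the upper bound with constant $c_L$, consistent with the stated factor $2c_L$ in the step size.

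We also need the contraction factor to lie in $[0,1)$. This follows because $\Delta_{b+1}\ge 0$ forces $\mu\kappa^2/(2c_L)\le 1$, unless $\Delta_b=0$, in which case the claim is trivial.
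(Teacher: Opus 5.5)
Your proof is correct and follows the paper's argument exactly. Both use the one-step descent bound \eqref{eq:smooth-progress} with the quadratic-optimal step, then the weak-edge condition to get $\Delta_b-\Delta_{b+1}\ge \frac{\kappa^2}{4c_L}\|g^{(b)}\|_*^2$, then the restricted PL inequality to turn this into a contraction, and finally induction. Your extra checks fill in details the paper leaves implicit: that $f^{(b)}\in f^{(0)}+\sF$ so PL applies, and that the contraction factor is nonnegative so the induction is valid. Your remark that a $2c_L$-Lipschitz score excludes the pinball loss is also a fair observation about the paper's smoothness assumption.
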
 
For $\sL = \Expect(Y-f)^2/2$, the PL conditon holds automatically. Indeed, $\sL$ is $1$-strongly convex: 
$$
\sL(f_1) \ge \sL(f_2) + \langle \nabla \sL(f_2), f_1 - f_2 \rangle + \frac{1}{2} \|f_1 - f_2 \|_{L_2}^2. 
$$
Rearranging the above yields 
$
\sL(f^{(\infty)}) -  \sL(f)  \ge \langle \nabla \sL(f), f^{(\infty)} - f \rangle + \frac{1}{2}  \|f^{(\infty)} - f \|_{L_2}^2, 
$
implying 
$$
\|g\|_* \| f - f^{\infty} \| \ge  \langle \nabla \sL(f),  f  -  f^{(\infty)} \rangle \ge \frac{\mu_1}{2}  \|f^{(\infty)} - f \|_{L_2}^2, \forall f \in f^{(0)} + \sF, 
$$
where $\mu_1 = 1$ for squared loss. This gives the Error Bound~(EB) condition 
$$
\|g\|_* \ge \frac{\mu_1}{2}  \|f^{(\infty)} - f \|_{L_2}.  
$$
Combing EB and $c_L$-smoothness ($c_L = 1/2$ here)
$$
\sL(f) \le \sL(f^{(\infty)}) + \langle \nabla \sL(f^{(\infty)}), f - f^{(\infty)} \rangle + c_L \|f^{(\infty)} - f\|_{L_2}^2, \forall f \in f^{(0)} + \sF,   
$$ 
yields 
$$
\sL(f) - \sL(f^{(\infty)}) \le  c_L \|f^{(\infty)} - f\|_{L_2}^2 \le c_L \frac{4}{\mu_1^2} \|g\|_{*}^2,
$$
Thus, 
$$
\frac{1}{2} \|g\|_{*}^2 \ge \frac{\mu_1^2}{ 8 c_L } (\sL(f) - \sL(f^{(\infty)})). 
$$ 
So the squared loss satisfies the $\mu$-PL condition with $\mu=1/4$.
\begin{corollary}
\label{cor:PL-B}
Under $c_L$-smoothness, Conditions~\ref{ass:l2}-\ref{ass:weak-edge}, and the $\mu$-PL condition~\ref{def:PL}, $\Delta_B = O(\alpha^2)$ is guaranteed by 
\begin{align*}
    B \gtrsim \frac{1}{\log\!\big((1-\mu\kappa^2/(2c_L))^{-1}\big)}\,
\log\!\Big(\frac{C_{\mathcal F}\sqrt{2 \Delta_0}}{\alpha^2} \Big),  
\end{align*}
provided $\mu\kappa/(2c_L) < 1$.
\end{corollary}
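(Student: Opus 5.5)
The plan is to derive Corollary~\ref{cor:PL-B} directly from the geometric contraction in Theorem~\ref{thm:PL-linear} by solving for the smallest $B$ that pushes the bound below the target level. Set $\rho := 1-\mu\kappa^2/(2c_L)$. First I will check that $\rho\in(0,1)$, so that $\log(\rho^{-1})>0$ and the displayed bound on $B$ makes sense. The proviso $\mu\kappa/(2c_L)<1$ together with $\kappa\le 1$ gives $\mu\kappa^2/(2c_L)\le \mu\kappa/(2c_L)<1$, hence $\rho<1$. Positivity of $\mu,\kappa,c_L$ gives $\rho>0$. Theorem~\ref{thm:PL-linear} then yields $\Delta_B\le \rho^B\Delta_0$ for every $B$.

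Next I will fix the target $\Delta_B\le c\,\alpha^2$ and solve $\rho^B\Delta_0\le c\,\alpha^2$. This gives $B\ge \log(\Delta_0/(c\alpha^2))/\log(\rho^{-1})$. To match the stated form, I will not solve for $\Delta_B$ itself. Instead I will target the quantity $\sqrt{2\Delta_B}\,C_{\mathcal F}$, which is the scale appearing in the multicalibration violation. The reason is that the PL inequality links excess risk to the dual norm of the score, and $C_{\mathcal F}$ is the $L_2$-radius of the audited subset of $\mathcal F$. Requiring $\rho^{B}\, C_{\mathcal F}\sqrt{2\Delta_0}\le \alpha^2$ is then a sufficient condition. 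Taking logarithms gives exactly
\[
B\ \ge\ \frac{1}{\log(\rho^{-1})}\log\!\Big(\frac{C_{\mathcal F}\sqrt{2\Delta_0}}{\alpha^2}\Big).
\]

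Finally, I will verify that this sufficient condition implies $\Delta_B=O(\alpha^2)$, and this is the step I expect to need the most care. Under the condition, $\rho^B\le \alpha^2/(C_{\mathcal F}\sqrt{2\Delta_0})$, so
\[
\Delta_B\le \rho^B\Delta_0\le \alpha^2\sqrt{\Delta_0/2}/C_{\mathcal F}.
\]
For fixed initialization and fixed audit radius, this is $O(\alpha^2)$. In the regime $\alpha^2\lesssim C_{\mathcal F}\sqrt{\Delta_0}$, where the logarithm is positive, one can do better: bound $\rho^B\le \rho^{B/2}$ and use the squared requirement, giving $\Delta_B\le \alpha^4/(2C_{\mathcal F}^2)$, which is again $O(\alpha^2)$ since $\alpha$ is bounded. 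The constants in the ``$\gtrsim$'' absorb the factor from this choice.

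I will also note that the same $B$ controls the dual norm, since the conversion of $\Delta_B$ into violations over $\mathcal F$ rests on it. Combining $c_L$-smoothness with the weak-edge decrease $\Delta_b-\Delta_{b+1}\ge \kappa^2\|g^{(b)}\|_*^2/(4c_L)$ gives $\|g^{(B)}\|_*^2\le 4c_L\Delta_B/\kappa^2$. Hence $\sup_{h\in\mathcal F,\|h\|\le C_{\mathcal F}}|\langle g^{(B)},h\rangle|\lesssim C_{\mathcal F}\sqrt{\Delta_B}$, and the logarithmic iteration count transfers to the multicalibration guarantee as well.
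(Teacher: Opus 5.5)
Your proposal is correct and follows the paper's route: the paper gives no separate proof, since the corollary is just the inversion of the contraction $\Delta_B\le\rho^B\Delta_0$ from Theorem~\ref{thm:PL-linear}, which you carry out to get $\Delta_B\le \alpha^2\sqrt{\Delta_0/2}/C_{\mathcal{F}}$; your closing bound $\|g^{(B)}\|_*^2\le 4c_L\Delta_B/\kappa^2$, obtained from the one-step descent and $\Delta_{B+1}\ge 0$, is also a nice general-loss analogue of the squared-loss conversion in Remark~\ref{rem:square-loss}. One small slip: you swapped the two justifications for $\rho\in(0,1)$, because $\mu\kappa^2/(2c_L)<1$ is what gives $\rho>0$, while positivity of $\mu,\kappa,c_L$ gives $\rho<1$.
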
 

\begin{rem}\label{rem:square-loss}
    {\rm 
    The PL framework connects excess risk directly to calibration error. Indeed, 
    $$
    \mathrm{MC}_{\mathcal F}(f^{(B)}):= \sup_{h \in \sF, \|h\|_{L_2} \le C_{\sF} } \left|\Expect h \cdot s(Y, f^{(B)}(X)) \right| \le C_{\sF} \|g^{(B)}\|_*,  
    $$ 
    then $\Delta_B$ is small if $\alpha > C_{\sF} \|g^{(B)}\|_* > C_{\sF} \sqrt{ 2\mu \Delta_B } $. 
    For squared loss, more directly,  
    \begin{align*}
    & \left|\Expect h \cdot s(Y, f^{(B)}(X)) \right| =   \left|\Expect h \cdot (f^{(\infty)}(X) - f^{(B)}(X)) \right| \\ 
    & \le C_{\sF} \|f^{(B)} - f^{(\infty)}\|_{L_2} \le   C_{\sF} \sqrt{\frac{2}{\mu_1} \Delta_B } =  C_{\sF} \sqrt{2 \Delta_B }.    
    \end{align*} 
    Small excess risk $\Delta_B$ immediately yields small multicalibration error. Compared with the $O(1/B)$ decay under smoothness alone, the $\mu$-PL condition provides a linear rate, showing that significantly fewer steps suffice to achieve a target excess risk, $L_2$-risk, and multicalibration level.
    } 
\end{rem}

\begin{rem}
    {\rm 
     With a projection, the inactive projection condition (Condition~\ref{ass:projection}) allows the algorithm to maintain the same convergence rates for sufficiently large $B$ (see Appendix~\ref{apxsec:convergence-rate})  and to retain the same multicalibrated class $\sH$. 
    }
\end{rem}

\subsection{Finite-step guarantees for practical multicalibration} 

The preceding results established the asymptotic convergence of MCBoost. In practice, however, the algorithm operates with finite samples and halts after finitely many iterations according to the stopping criterion. This raises two natural questions: (I) Does the procedure necessarily terminate in finite time~(given sufficient data)? (II) Does the stopping rule ensure that the resulting predictor is multicalibrated? We answer both affirmatively under the weak-learner edge condition. 

\subsubsection{Guaranteed finite termination.}

In what follows, we regard both the calibration and validation sets as non-negligible fractions of the total sample $N = N_1 + N_2$ in Algorithm~\ref{alg:MCboost-general}, i.e., $\# \Xi = \Theta(N), \# V = \Theta(N)$, so that both the auditing and the violation evaluation steps are effective. 
\begin{ass}\label{ass:VC}  
Along the boosting trajectory, the condition mean score 
$\|g^{(b)}\|_{\infty} \le \wt{C}$, and there exists an envelope function $H(\cdot)$ such that $| h(X) | \le H(X) $ with $\|H^2\|_{L_2} < \infty$.   Both classes $\sHa$ and $\sHa^2$ are VC classes with VC dimensions $V(\sHa) = O(V(\sHa^2))$. Moreover, a \emph{variance floor} holds uniformly:  $\sum_{i=1}^N (h^{(b)}(X_i))^2/N \ge c_{\sHa}^2 > 0$.   
\end{ass}
Classes satisfying the VC condition include: (1) Two-layer neural networks $$\sHa = \{ \text{Acitvate}(w^\top X + w_0), w \in \bbR^d, w_0 \in \bbR\}.$$ (2) Tree-basis functions $\sHa = \{\v1\{ (-\infty, a_1] \times \cdots \times (-\infty, a_d]: a_1, \dots,a_d \in \bbR \} \}$. (3) Finite-dimensional linear classes: $\sHa = \{ w^\top \phi(X): \|w\|_2 \le C_w, X \in \bbR^d \} $ with $\phi: \mathcal{X} \to \bbR^d$, $\lambda_{\max}(\Sigma) < \infty, \Sigma = \Expect \phi(X)\phi^\top (X)$. Both the tree-based and neural-network classes have finite VC-dimensions, and their linear spans are dense in $C(U)$ for any compact subset $U \in \bbR^d$~\citep{zhang2005boosting}. 

The above condition is weaker than the uniform boundedness condition $\|h\|_{\infty} \le C_{\sHa}, h \in \sHa$ used in earlier work. 
For instance, one may have $h = w^\top X, \|w\|_{L_2} \le 1$ and $H = \|X\|$, where $X$ itself need not be bounded. Classical concentration inequalities such as McDiarmid's or Hoeffding's apply in the bounded case; here, we rely on Dudley's entropy integral with VC complexity to handle more general settings.

\begin{proposition}\label{prop:finite}
    Suppose Conditions~\ref{ass:l2},~\ref{ass:VC} hold and 
    $L$ is $c_L$-smooth. Then,  with probability at least $1 - \delta$,  
    the MCBoost algorithm stops in $B = O\left( \frac{\sL(f^{(0)}) - \sL(f^{(\infty)})}{\alpha^2}\right)$ with the step size in Theorem~\ref{thm:smooth-rate} and $N = \Omega\left( \frac{ \log(B/\delta) + V(\sHa)}{\alpha^2} \right)$ samples. 
    If, in addition, the loss satisfies $\mu$-PL condition, then the number of iteration reduces to $B = O\left(\log(\tfrac{\sL(f^{(0)}) - \sL(f^{(\infty)} }{\alpha^2} ) \right)$ under the same sample complexity.
\end{proposition}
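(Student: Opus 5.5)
The plan is a standard ``progress-per-step'' potential argument. The continuation condition of Algorithm~\ref{alg:MCboost-general} is turned into a lower bound on the population correlation $\langle g^{(b)},h^{(b)}\rangle/\|h^{(b)}\|_{L_2}$. That bound, combined with the quadratic bound of Lemma~\ref{lem:Lipshitz} and the step size of Theorem~\ref{thm:smooth-rate}, gives a guaranteed decrease of $\sL$ at every iteration that does not halt. Since $\sL(f^{(b)})\ge \sL(f^{(\infty)})$, the number of non-halting iterations is bounded.

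\textbf{Step 1 (uniform concentration).} Consider the empirical processes indexed by $h\in\sHa$:
\[
\tfrac{1}{|V|}\sum_{i\in V} h(X_i)\, s(Y_i,f^{(b)}(X_i)) - \langle g^{(b)},h\rangle
\quad\text{and}\quad
\tfrac{1}{|V|}\sum_{i\in V} h(X_i)^2 - \|h\|_{L_2}^2 .
\]
Bound their suprema by Dudley's entropy integral, using the VC property of $\sHa$ and $\sHa^2$, the envelope $H$, and $\|g^{(b)}\|_\infty\le\wt C$. Add a Talagrand/Bernstein tail with failure probability $\delta/B$, and union bound over the iterations. Each deviation is then $\lesssim\sqrt{(V(\sHa)+\log(B/\delta))/N}$, which is at most $c\,\alpha$ for a small constant $c$ when $N=\Omega((\log(B/\delta)+V(\sHa))/\alpha^2)$.

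One subtlety is that $f^{(b)}$ depends on earlier data. Step 1 therefore treats the validation fold as independent of the fitted trajectory, conditioning on $\Xi$, or equivalently takes a supremum over the composite class $\{(x,y)\mapsto h(x)s(y,u(x))\}$. This is where the VC assumptions do the work.

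\textbf{Step 2 (empirical to population).} Use the variance floor $c_{\sHa}^2$ to control the denominator of $\Delta$: the empirical and population norms of $h^{(b)}$ are within a constant factor. Then $\Delta>\alpha$ implies $\langle g^{(b)},h^{(b)}\rangle/\|h^{(b)}\|_{L_2}\ge \alpha/2$ on the good event.

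\textbf{Step 3 (descent).} Apply Lemma~\ref{lem:Lipshitz} with the step $\eta^{(b)}=\langle g^{(b)},h^{(b)}\rangle/(2c_L\|h^{(b)}\|^2)$. This gives
\[
\sL(f^{(b)})-\sL(f^{(b+1)})\ \ge\ \frac{\langle g^{(b)},h^{(b)}\rangle^2}{4c_L\|h^{(b)}\|_{L_2}^2}\ \ge\ \frac{\alpha^2}{16 c_L}.
\]
Summing over non-halting steps, and using the monotonicity from Theorem~\ref{thm:stationarity} so that $\sL(f^{(b)})\ge\sL(f^{(\infty)})$, yields $B\le 16c_L(\sL(f^{(0)})-\sL(f^{(\infty)}))/\alpha^2$.

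\textbf{Step 4 (PL case).} While the algorithm runs, $\|g^{(b)}\|_*\ge\alpha/2$. Before stopping, Theorem~\ref{thm:PL-linear} gives $\Delta_b\le(1-\mu\kappa^2/2c_L)^b\Delta_0$. On the other hand, $\Delta_b$ cannot be too small while the algorithm continues. The upper bound $\sL(f)-\sL(f^{(\infty)})\le c_L\|f-f^{(\infty)}\|^2$ together with the descent lower bound gives $\Delta_b\gtrsim\alpha^2$, since the per-step decrease $\alpha^2/(16c_L)$ cannot exceed $\Delta_b$. Hence $b\lesssim\log(\Delta_0/\alpha^2)$, matching the claimed bound with constants depending on $\mu,\kappa,c_L$.

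The main obstacle is Step 1, handling the data dependence of $f^{(b)}$ and unbounded $h$ through the envelope. I would settle this by conditioning on $\Xi$ and absorbing the trajectory into a union bound over the at most $B$ realized score functions, accepting the $\log(B/\delta)$ price.
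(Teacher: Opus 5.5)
Your proposal follows essentially the same route as the paper's proof. Concentration plus the variance floor (the content of Lemma~\ref{lem:emp-approx-pop}) turns $\Delta>\alpha$ into a population edge of at least $\alpha/2$, the step of Theorem~\ref{thm:smooth-rate} then gives a decrease of $\alpha^2/(16c_L)$ per non-halting iteration summed against $\sL(f^{(0)})-\sL(f^{(\infty)})$, and in the PL case $\Delta_b>\alpha^2/(16c_L)$ is played off against the geometric rate of Theorem~\ref{thm:PL-linear}. Two caveats you share with the paper: that geometric rate needs Condition~\ref{ass:weak-edge}, which is not among the stated hypotheses, and the $\log(B/\delta)$ union bound treats each $f^{(b)}$ as fixed, whereas once the worst-violated candidate is selected on $V$ the trajectory depends on $V$, so conditioning on $\Xi$ alone does not justify it.
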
 
Earlier studies~\citep{kim2019multiaccuracy, zhang2024fair} considered 
settings with abundant data, drawing a new validation batch at each iteration; in that regime, the total sample requirement scales as $N = \Omega\left(B \cdot \tfrac{\log(1/\delta) + V(\sHa)}{\alpha^2}\right)$ 
in Proposition~\ref{prop:finite}. 
In contrast, when data are limited, one may take $V = \Xi$, a common practice in boosting and conformal prediction. This result complements Theorem~\ref{thm:smooth-rate} and~\ref{thm:PL-linear}, the earlier results quantify the asymptotic rate of risk reduction, whereas Proposition~\ref{prop:finite} provides a finite-sample halting guarantee, linking the number of steps, the tolerance level $\alpha$, and the sample size. 

\subsubsection{Validity of stopping rule.}
\begin{theorem}\label{thm:stopping}
Under Conditions~\ref{ass:l2}, \ref{ass:weak-edge} and~\ref{ass:VC}, the algorithm stops at iteration $B$ with sample size $N$, where $B$ and $N$ satisfy Proposition~\ref{prop:finite}; then we have 
\[
\frac{|\langle g^{(B)},h^{(B)}\rangle|}{\|h^{(B)}\|_{L_2}}\ < 2 \alpha 
\]
Then $\|g^{(B)}\|_* < 2 \alpha/\kappa $ and
$$
\left|\Expect h \cdot s(Y, f^{(B)}(X)) \right| \le \|h\|_{L_2} \frac{2\alpha}{\kappa}.
$$
Hence, 
\[
\sup_{h \in \sF, \|h\|_{L_2} \le C_{\sF} } \left|\Expect h \cdot s(Y, f^{(B)}(X)) \right| \le C_{\sF} \frac{2 \alpha}{\kappa}. 
\]
\end{theorem}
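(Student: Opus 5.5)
The plan is to pass from the empirical stopping statistic $\Delta\le\alpha$ to the population quantity $|\langle g^{(B)},h^{(B)}\rangle|/\|h^{(B)}\|_{L_2}$ by a uniform concentration argument. The weak-learner edge (Condition~\ref{ass:weak-edge}) then gives the dual-norm bound, and Cauchy--Schwarz over the norm-bounded part of $\sF$ finishes. The sample-size requirement of Proposition~\ref{prop:finite} is chosen so that each deviation term is at most $\alpha/2$ (or a constant fraction of $\alpha$), uniformly over the $B$ iterations, with probability at least $1-\delta$.

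\textbf{Step 1 (uniform concentration).} I would bound two suprema separately, uniformly over $h\in\sHa$ and over the (at most $B$) realized predictors $f^{(b)}$:
\[
\sup_{h\in\sHa}\Big|\tfrac{1}{|V|}\textstyle\sum_{i\in V} h(X_i)s(Y_i,f^{(b)}(X_i))-\langle g^{(b)},h\rangle\Big|
\]
and
\[
\sup_{h\in\sHa}\Big|\tfrac{1}{|V|}\textstyle\sum_{i\in V} h(X_i)^2-\|h\|_{L_2}^2\Big|.
\]
I would use the envelope $H$, the bound $\|g^{(b)}\|_\infty\le\wt C$, and the VC property of $\sHa$ and $\sHa^2$ in Condition~\ref{ass:VC}. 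Symmetrization plus Dudley's entropy integral should then give a rate $O(\sqrt{(V(\sHa)+\log(B/\delta))/N})$ for each, and a union bound over $b\le B$ produces the $\log(B/\delta)$ term.

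The main obstacle is the first supremum. The score $s(Y,f^{(b)})$ need not be bounded (only its conditional mean is), and $f^{(b)}$ is data-dependent when $V=\Xi$. The cleanest route is to treat the product class $\{h\cdot s(\cdot,f^{(b)})\}$ for each fixed $b$ in a sample-split or conditional sense, and to control its covering numbers by those of $\sHa$ using a moment condition on $s$. For the pinball loss $s$ is bounded, so this case is easy. Failing a clean route, I would assume $V$ is independent of the path, as in the fresh-batch regime.

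\textbf{Step 2 (numerator and denominator).} With $N$ as in Proposition~\ref{prop:finite}, the numerator deviation is at most $\epsilon\,c_{\sHa}$ for a small constant $\epsilon$. By the variance floor, the empirical norm satisfies $\widehat{\|h^{(B)}\|}\ge c_{\sHa}$. The population norm is then within a factor $(1\pm\epsilon)$ of it, because the squared-norm deviation is small relative to $c_{\sHa}^2$. Therefore
\[
\frac{|\langle g^{(B)},h^{(B)}\rangle|}{\|h^{(B)}\|_{L_2}}\le\frac{\Delta\,\widehat{\|h^{(B)}\|}+\epsilon c_{\sHa}}{\|h^{(B)}\|_{L_2}}\le (1+\epsilon)\alpha+\tfrac{\epsilon}{1-\epsilon}\alpha'
\]
for a suitably scaled deviation $\alpha'$. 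Choosing constants makes this strictly less than $2\alpha$.

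\textbf{Step 3 (edge and conclusion).} Because $\sF$ is symmetric, the supremum in $\|\cdot\|_*$ equals its absolute-value version. Condition~\ref{ass:weak-edge} gives $\kappa\|g^{(B)}\|_*\le|\langle g^{(B)},h^{(B)}\rangle|/\|h^{(B)}\|_{L_2}<2\alpha$. For any $h\in\sF$, scaling gives $|\Expect h\,s(Y,f^{(B)})|=|\langle g^{(B)},h\rangle|\le\|h\|_{L_2}\|g^{(B)}\|_*$, using the tower property with $g^{(B)}=\Expect[s\mid X]$. Taking the supremum over $\|h\|_{L_2}\le C_{\sF}$ yields the final bound $2C_{\sF}\alpha/\kappa$.
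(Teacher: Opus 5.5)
Your proposal is correct and follows essentially the same route as the paper. The paper invokes Lemma~\ref{lem:emp-approx-pop} (VC/Dudley concentration of the numerator and of the squared norm, plus the variance floor) to turn $\Delta\le\alpha$ into a population normalized violation below $3\alpha/2<2\alpha$, then applies the weak-learner edge and the dual-norm Cauchy--Schwarz bound over the $C_{\sF}$-ball, as in your Steps 1--3. In Step 2 you should state the numerator deviation as $\epsilon\, c_{\sHa}\alpha$ rather than $\epsilon\, c_{\sHa}$, which is what the $\alpha^{-2}$ scaling of $N$ gives; your concern about the unbounded score and the data-dependence of $f^{(b)}$ is a point the paper's argument passes over silently.
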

Theorem~\ref{thm:stopping} shows that once the empirical violation statistic falls below the threshold $\alpha$, the correlation between the score $s(Y, f^{(B)})$ based on $f^{(B)}$ and any $h \in \sF$ is uniformly bounded by a constant proportional to its norm, the tolerance level $\alpha$, and the weak-learner constant $\kappa$. The bound generally applies to any function $h$ representable within $\sF$. In particular, when $\sF$ arises from an auditing family $\sHa$ consisting of indicator-type functions, such as group indicators, value-based partitions, or their finite combinations, their inherent boundedness ensures that the scaling factor remains controlled, so the multicalibration guarantee holds without concern for unbounded magnitudes of $h$. 

Taken together, Propositions~\ref{prop:finite} and Theorem~\ref{thm:stopping} 
establish the practical soundness of the MCBoost procedure: it halts after finitely many steps under realistic sample sizes, and the stopping criterion itself certifies multicalibration of the learned predictor.

\subsection{Empirical illustration: early stopping and overfitting} 
Theorem~\ref{thm:stationarity} relies on an ideal, adaptive (line-search-like) step size 
$\eta^{(b)} = \tfrac{\langle h^{(b)}, s(Y, f^{(b)}) \rangle }{ 2c_L \|h^{(b)}\|_{L_2}^2}$ and assumes a Cauchy-boosting path,
which is guaranteed when step sizes decay fast such that $ \sum_{b=1}^{\infty} \eta^{(b)} < \infty$ (Appendix~\ref{apxsubsec:stationarity}), i.e., when the total step budget is bounded. Conversely, the Cauchy-increment condition on $\{\eta^{(b)} h^{(b)}\}$ along does not imply $\sum_{b=1}^{\infty} \eta^{(b)} < \infty$. These assumptions differ from those in~\cite{zhang2005boosting}, which require $\sum_{b=1}^{\infty} \eta^{(b)} = \infty$ and $\sum_{b=1}^{\infty} \eta^{(b)2} < \infty$. 
Nevertheless, these considerations highlight the practical necessity of  early-stopping as a regularization device. This is particularly relevant when $c_L$ is unknown and one resorts to a small fixed step size for implementation. 
For least-squares boosting,~\citet{buhlmann2003boosting} shows that using a fix step with a powerful base learner can overtrack noise as $b \to \infty$, which eventually interpolates the data and causes overfitting.\citet{zhang2005boosting} further formalizes early stopping through the cumulative step-size budget and emphasizes its role as a key regularization mechanism. Our experiments demonstrate that the same phenomenon carries over to MCBoost. 

We revisit the simulation setup in Section~\ref{subsec:motivation}. Throughout, MCBoost partitions the calibration data into $|\mathcal{G}| \times L$ bins based on categorical group membership and prediction levels to promote groupwise calibration. We consider $L \in \{1,2,4\}$ and either group by $X^{(d)} = (X_6, X_7) \in \{0,1\}^2$ or not, which yields $|\mathcal{G}|=1$ (no grouping) or $|\mathcal{G}|=4$ (four demographic groups).
\begin{figure}[ht!]
    \centering
    \begin{subfigure}[b]{0.48\textwidth}
        \includegraphics[width=\linewidth]{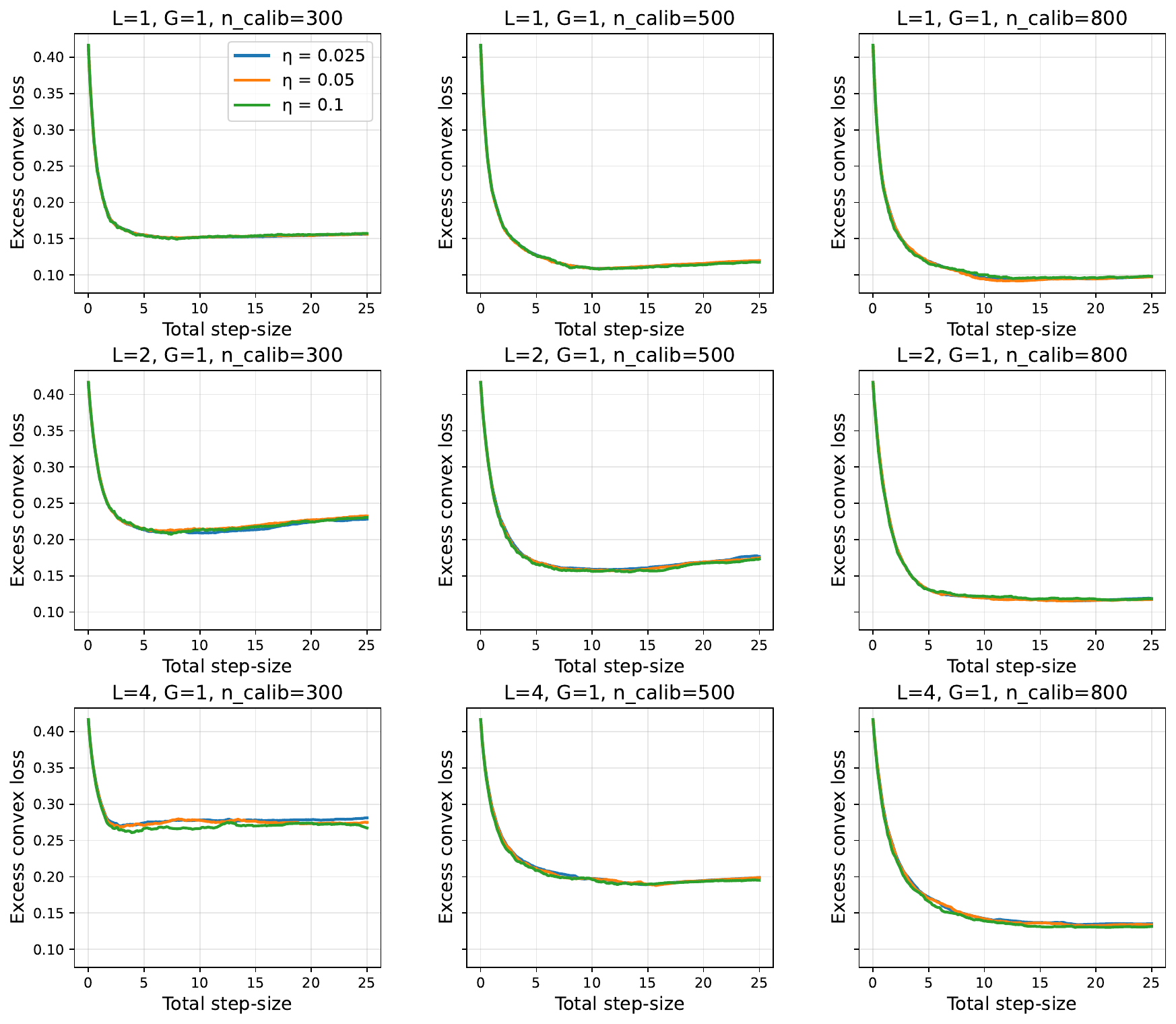}
        \caption*{(a)}
    \end{subfigure}
    \begin{subfigure}[b]{0.48\textwidth}
         \includegraphics[width=\linewidth]{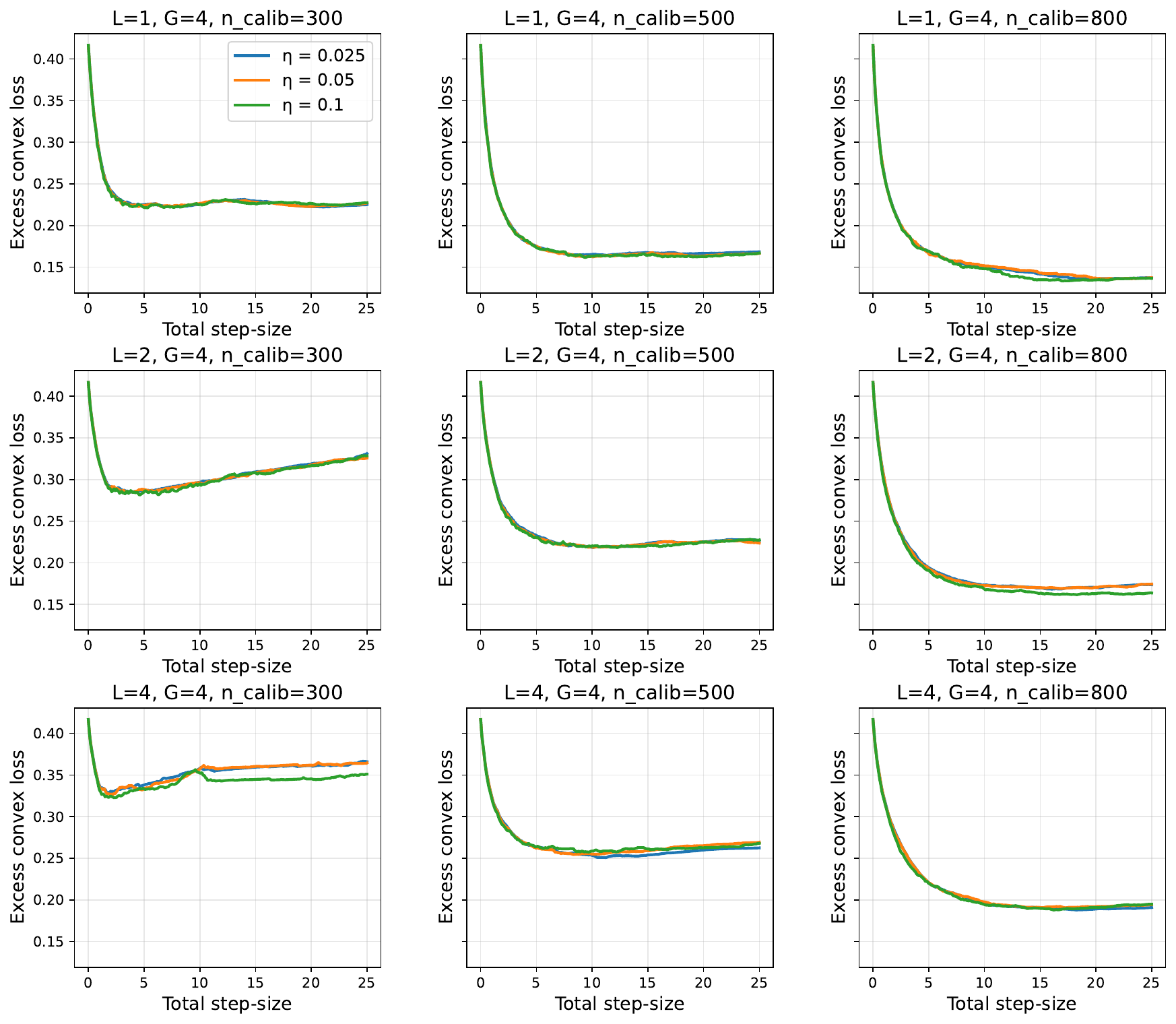}
         \caption*{(b)}
    \end{subfigure}
    \caption{Excess convex loss versus cumulative step size. The initial predictor is linear and the auditor is tree-based. The panels vary the number of groups, the number of buckets, and the calibration sample size.}
    \label{fig:excess_vs_stepsize-tree}
\end{figure}
\begin{figure}[ht!]
    \centering
     \begin{subfigure}[b]{0.48\textwidth}
         \includegraphics[width=\linewidth]{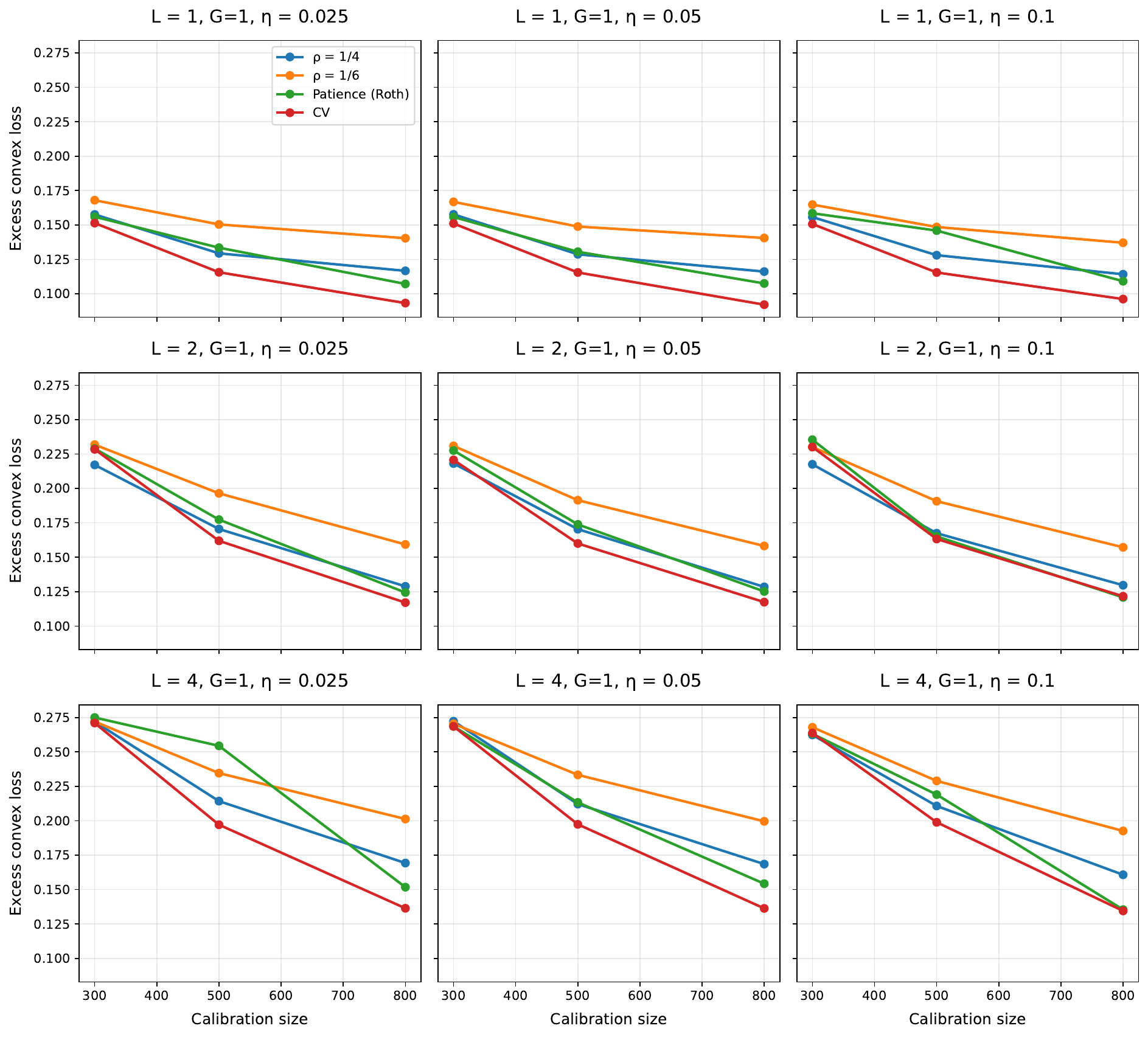}
     \end{subfigure}
     \begin{subfigure}[b]{0.48\textwidth}
          \includegraphics[width=\linewidth]{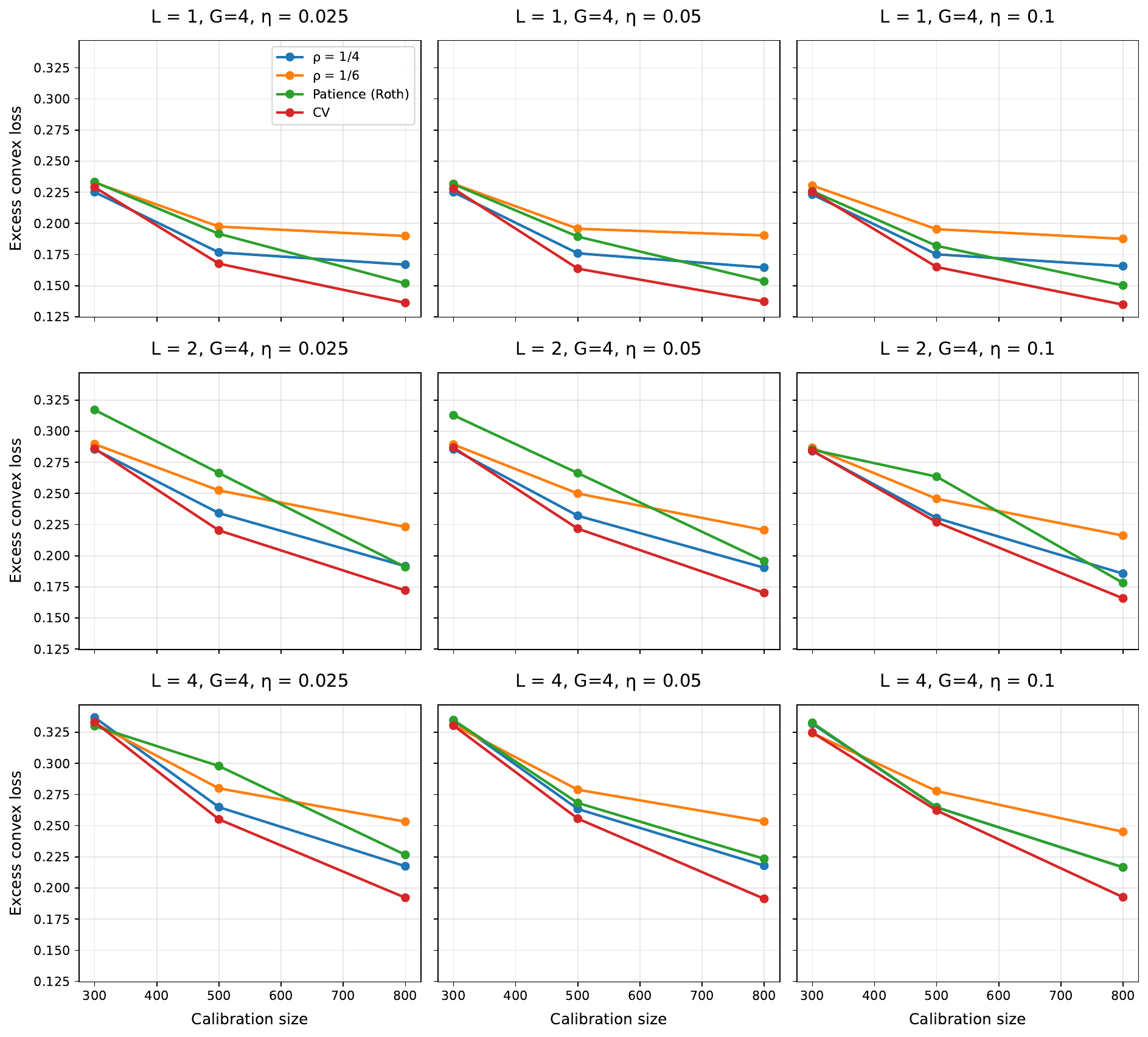}
     \end{subfigure}
    \caption{Excess convex loss versus calibration sample size. The initial predictor is linear and the auditor is tree-based. The panels vary the number of groups, buckets, and stopping rules.}
    \label{fig:excess_vs_ncalib-tree}
\end{figure}
We evaluate the global excess convex risk
$
\big[ \Expect(Y - f^{(b)}(X))^2 - \Expect(Y - f^{*}(X))^2 \big]/2
$
as a function of the cumulative step size. Across a range of nominal step sizes, the optimal stopping point occurs in a nearly identical total step budget (Figure~\ref{fig:excess_vs_stepsize-tree}). Moreover, as the sample size increases, the optimal stopping point shifts later and the admissible stopping range widens. This pattern is consistent with the classical boosting view of early stopping as regularization~\citep{zhang2005boosting}.

Notably, the same figure also reveal a trade-off in MCBoost between calibration and predictive accuracy. Incorporating more groups and finer bucket structures improves calibration (Section~\ref{sec:numerics}), but can slightly increase excess risk, particularly when the data are partitioned into many small cells. While traditional boosting directly optimizes global MSE, multicalibration imposes additional structural constraints. Consequently, the empirical ordering
\[
\textrm{Naive} \;\ge\; \textrm{MCBoost} \;\ge\; \textrm{Traditional boosting}
\]
in optimal MSE (Figure~\ref{fig:excess_vs_stepsize-tree}) reflects the cost of enforcing calibration.

Figure~\ref{fig:excess_vs_ncalib-tree} compares several stopping strategies: fixed total-step budgets of order $N^{\rho}$ with $\rho \in \{1/4,1/6\}$, three-fold cross-validation, and patience-based rules~\citep{detommaso2404multicalibration}. Cross-validation performs robustly across settings, while patience-based stopping tends to slightly underperform, as it allocates additional data for monitoring.

\subsection{Target-independent learning under covariate shift}
The multicalibration framework can be extended beyond the source distribution to ensure valid guarantees under covariate shift. Let $\mathcal D_\sS$ and $\mathcal D_\sT$ denote the joint distributions of $(X, Y)$ in the source and target domains, respectively, with density ratio 
$$
w(X) = \frac{ p(X|D = \sT) }{p(X| D = \sS)} = \frac{\bbP(D = \sT | X)}{\bbP(D = \sS | X)} \frac{\bbP(D = \sS)}{\bbP(D = \sT)}. 
$$
Here $D$ denotes domain membership, $\sS$ and $\sT$ represent the source and target domains, respectively.
The key question is if a predictor $\wt{f}$ that is multicalibrated on the source domain could implicitly indicate 
universal adaptability such that 
$
\big| \mathbb E_{\mathcal D_\sT} s(Y, \wt{f}(X) )] \big| 
$
is small, and remain multicalibrated to some extent, i.e., 
$$
\big| \mathbb E_{\mathcal D_\sT}\big[ c \cdot s(Y, \wt{f}(X)) \big] \big|  < \alpha, \quad \forall c(\cdot) \in \mathcal{C}.  
$$
for some function class. 
\begin{theorem}[Target-independent multicalibration under covariate shift]\label{thm:universal} 
Let $(X,Y)\sim\mathcal D_\sS$ denote the source distribution and $(X,Y)\sim\mathcal D_\sT$ the target distribution, with density ratio $w(X)$. 
Suppose the score function admits a decomposition: 
$$
s(Y, f(X)) = s_1(Y, f(X)) + s_2(Y), 
$$
and define the error of a propensity-score weighted estimator using weights $\wt{w}$ as 
$$
\Err(\tau^{(ps)}(\wt{w})) = \left| \Expect_{\sD_\sS} [ (\wt{w}(X) - w(X) )  s_2(Y)] \right|,
$$
Assume further that $\wt{f}$ is  $(\mathcal H, \alpha)$-multicalibrated on $\mathcal D_\sS$ and  
$$
\| \Expect[s_1(Y, f(X)) | X] \|_{L_2} \le \wt{C}, \quad \| \Expect[s_2(Y) | X] \|_{L_2} \le \wt{C}. 
$$
Define
\[ \mathcal H(\Sigma) := \Big\{
\wt\sigma_\sT(X) = \bbP(D=\sT\mid X)
\Big\}, 
\]
the class of shifts induced by the propensity scores. Then the following hold. 
\begin{enumerate}[label=(\roman*), leftmargin = *]
\item \textbf{Universal adaptability.} 
The target error satisfies
\begin{align*}
  &  \Err_{\sT}(\wt f)
:= \Big|\mathbb E_{\mathcal D_\sT} s(Y, \wt{f}) \Big|  \le\ \wt{C} \inf_{h\in\mathcal H}\{ \| w -  h/C \|_{L_2} + \| h/C - \wt{w} \|_{L_2} \}+ \Err(\tau^{(ps)}(\wt{w})) + \tfrac{\alpha}{C}, 
\end{align*}
for any constant $C>0$.
In particular, if $\wt w\in\mathcal H$, 
$$
\Err_{\sT}(\wt f)\le \Err(\tau^{ps}(\wt{w})) + \wt{C} \, \| w - \wt w\|_{L_2} +\alpha.
$$
Furthermore, if $ w^* := \inf_{\wt{w}\in \mathcal H(\Sigma)} \|w-\wt{w}\|_{L_2} $ and $w^* = h^* / C^* $ for some $h^* \in \sH$, the predictor $\wt{f}$ achieves \textit{Universal Adaptability}:
\begin{align*}
\Err_{\sT}(\wt f) \le \Err(\tau^{(ps)}( w^* )) +
\wt C\ \| w - w^*\|_{L_2}
+ \alpha/C^*.
\end{align*}  
\item \textbf{Transfer of multicalibration.} Let $\mathcal{C}$ be a reference function class with $\|c(\cdot)\|_{L_2} \le \wt{C}$,   
if $\wt f$ is $(\mathcal H(\Sigma)\otimes\mathcal C,\alpha)$-multicalibrated on $\mathcal D_\sS$,
then it is $(\mathcal C,\alpha+ \wt{C}^2 \inf_{\wt w\in\mathcal H(\Sigma)} \|w - \wt{w}\|_{L_2})$-multicalibrated on $\mathcal D_\sT$.
\item \textbf{Multiple source domains.} 
Suppose $\sS$ aggregates $M$ source domains 
$\sS_1, \ldots, \sS_M$ with weights $p_m = \bbP(D = \sS_m | D \in \sS )$. If 
$$
\sup_{h \in \sH} \big| \Expect_{\sD_\sS} h s(Y, \wt{f}) \big| < \alpha, 
$$
then 
\begin{align*}
    &\Err_{\sT}(\wt{f}) \le \sum_{m=1}^M p_m \wt{C} \inf_{h \in \sH} \left\{ \| w_m -  h/C_m \|_{L_2} + \| h/C_m - \wt{w}_m \|_{L_2} \right\} \\
    & \quad + (\max_m \tfrac{1}{C_m} ) \alpha + \Err(\tau^{(ps)}( \wt{\vw} )),  
\end{align*}
where $ w_m(X): = p(X|\sD = \sT)/p(X|\sD = \sS_m)$ and 
$$
\Err(\tau^{(ps)}(\wt{\vw})) = \left| \sum_{m=1}^M p_m \Expect_{\sU_{\sS_m}} (\wt{w}_m(X) - w_m(X) ) g_2  \right|, \quad  g_2 = \Expect[s_2(Y)|X].   
$$ 
\end{enumerate}  
\end{theorem}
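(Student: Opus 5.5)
The whole argument rests on the change-of-measure identity $\mathbb E_{\mathcal D_\sT}[\phi(X,Y)]=\mathbb E_{\mathcal D_\sS}[w(X)\phi(X,Y)]$, valid under covariate shift because $Y\mid X$ is the same in both domains. Applying it with $\phi=s(Y,\wt f(X))$ and conditioning on $X$ gives
$$
\mathbb E_{\mathcal D_\sT}s(Y,\wt f)=\mathbb E_{\mathcal D_\sS}[w\,g_1]+\mathbb E_{\mathcal D_\sS}[w\,g_2],
$$
where $g_1=\mathbb E[s_1(Y,\wt f)\mid X]$ and $g_2=\mathbb E[s_2(Y)\mid X]$.

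\emph{Part (i).} I would insert the auxiliary weight $\wt w$ and an arbitrary $h\in\mathcal H$ scaled by $1/C$, and split into four terms:
$$
\mathbb E_{\mathcal D_\sS}[w s]
=\mathbb E[(w-h/C)g]+\mathbb E[(h/C-\wt w)g]+\mathbb E[\wt w\,g]-\dots
$$
Doing this literally is awkward, since the $s_2$ part is meant to be handled by the propensity estimator $\tau^{(ps)}(\wt w)$. The cleaner route is to write
$$
\mathbb E[w s]=\mathbb E[(w-\wt w)g_1]+\mathbb E[(w-\wt w)g_2]+\mathbb E[\wt w\,s].
$$
The middle term equals $\Err(\tau^{(ps)}(\wt w))$ up to sign. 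For the last term, write $\wt w=(\wt w-h/C)+h/C$. The multicalibration hypothesis bounds $|\mathbb E[(h/C)s]|$ by $\alpha/C$. Cauchy–Schwarz with $\|g\|_{L_2}\le\wt C$ bounds the residual pieces by $\wt C$ times $\|w-h/C\|_{L_2}+\|h/C-\wt w\|_{L_2}$ after regrouping $w-\wt w=(w-h/C)+(h/C-\wt w)$. The constant bookkeeping ($\wt C$ versus $2\wt C$, and whether $g_1$ alone or the full $g$ appears) is where I expect friction. I would align it with the stated bound by absorbing constants, i.e.\ redefining $\wt C$ as a bound on $\|g_1\|_{L_2}$. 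Taking the infimum over $h$ completes the argument. The special cases follow directly: $h=\wt w$ with $C=1$, and then $\wt w=w^*=h^*/C^*$.

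\emph{Part (ii).} For $c\in\mathcal C$, change measure to get
$$
\mathbb E_\sT[c\,s]=\mathbb E_\sS[w\,c\,s]=\mathbb E_\sS[\wt w\,c\,s]+\mathbb E_\sS[(w-\wt w)\,c\,g],
$$
for any $\wt w\in\mathcal H(\Sigma)$. The first term is at most $\alpha$ because $\wt w\,c\in\mathcal H(\Sigma)\otimes\mathcal C$. For the second, I would bound $\|cg\|_{L_2}$ by $\wt C^2$. This needs a sup-norm bound on one factor, for instance $\|g\|_\infty\le\wt C$ as in Condition~\ref{ass:VC}, or reading $\|c\|\le\wt C$ in sup norm. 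This is the step I would flag as the main gap to patch by stating the interpretation explicitly. Finally, take the infimum over $\wt w$.

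\emph{Part (iii).} Decompose $p(x\mid\sS)=\sum_m p_m\,p(x\mid\sS_m)$ and write
$$
\mathbb E_\sT s=\sum_m p_m\,\mathbb E_{\sS_m}[w_m s].
$$
Run the part (i) split inside each $m$ with its own $(h_m,C_m)$. The $s_2$ pieces combine into $\Err(\tau^{(ps)}(\wt{\vw}))$ as a single weighted sum, and the Cauchy–Schwarz pieces give the weighted infimum terms. The multicalibration pieces are the delicate part, because the hypothesis is on the pooled source rather than on each $\sS_m$. The plan is to use a common $h$ across $m$, so that $\sum_m p_m\,\mathbb E_{\sS_m}[(h/C_m)s]$ is controlled by $\max_m C_m^{-1}$ times the pooled correlation $|\mathbb E_{\mathcal D_\sS}[hs]|<\alpha$. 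Making this rigorous without assuming equal $C_m$ is the main obstacle. I would first try taking the infimum over a single $h$ and the ratio $C_m$ uniform, then relax.
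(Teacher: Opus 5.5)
Your overall route (change of measure, then inserting $h/C$ and $\wt w$, Cauchy--Schwarz, multicalibration) is the paper's. The genuine gap is in part (i): your reordered split does not give the stated bound. From $\mathbb E[ws]=\mathbb E[(w-\wt w)g_1]+\mathbb E[(w-\wt w)g_2]+\mathbb E[(\wt w-h/C)s]+\mathbb E[(h/C)s]$, Cauchy--Schwarz yields $\|g_1\|_{L_2}\|w-\wt w\|_{L_2}+\|g_1+g_2\|_{L_2}\|\wt w-h/C\|_{L_2}$. After the triangle inequality this is only $\wt C\|w-h/C\|_{L_2}+3\wt C\|h/C-\wt w\|_{L_2}$, because the hypotheses give $\|g\|_{L_2}\le 2\wt C$, not $\wt C$. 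Redefining $\wt C$ as a bound on $\|g_1\|_{L_2}$ does not fix this: the piece $\mathbb E[(\wt w-h/C)s]$ carries the full conditional score $g_1+g_2$. The two special cases survive only because there $h/C=\wt w$.

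Your first instinct was right, and it is the paper's argument. Peel off $h/C$ first, $\mathbb E[ws]=\mathbb E[(w-h/C)s]+\mathbb E[(h/C)s]$, and bound the second piece by $\alpha/C$. Split the first piece into $\mathbb E[(w-h/C)g_1]+\mathbb E[(w-h/C)g_2]$. Insert $\wt w$ only in the $g_2$ part, via $w-h/C=(w-\wt w)+(\wt w-h/C)$. Each Cauchy--Schwarz step then pairs exactly one of $\|g_1\|_{L_2},\|g_2\|_{L_2}\le\wt C$ with one distance. This gives $\wt C\{\|w-h/C\|_{L_2}+\|h/C-\wt w\|_{L_2}\}+\Err(\tau^{(ps)}(\wt w))+\alpha/C$ with nothing to absorb.

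In parts (ii) and (iii) the difficulties you flag are real, and the paper's proof does not resolve them.

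In (ii) the paper simply asserts $|\mathbb E_{\mathcal D_{\mathcal S}}[(w-\wt w)cs]|\le\wt C^2\|w-\wt w\|_{L_2}$. Your patch, $\|g\|_\infty\le\wt C$ for the full conditional score as in Condition~\ref{ass:VC}, gives exactly this. Reading $\|c\|_\infty\le\wt C$ alone only gives $\wt C\|g_1+g_2\|_{L_2}\le 2\wt C^2$.

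In (iii) the paper takes a common $h$ and asserts $|\sum_m p_m\mathbb E_{\mathcal S_m}[(h/C_m)s]|\le(\max_m C_m^{-1})\alpha$ from the pooled hypothesis. This does not follow when the $C_m$ differ, since $|\sum_m b_m/C_m|\le(\max_m C_m^{-1})|\sum_m b_m|$ fails for reals $b_m=p_m\mathbb E_{\mathcal S_m}[hs]$ of mixed sign. The final bound also places $\inf_h$ inside $\sum_m$, letting each source use its own $h$, which a pooled condition cannot control.

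Your fallback (one $h$, a common $C$) is already rigorous. In that case $\sum_m p_m\mathbb E_{\mathcal S_m}[(h/C)s]=\mathbb E_{\mathcal D_{\mathcal S}}[hs]/C$, and you get the stated bound with the infimum taken outside the sum. For unequal $C_m$, the missing idea is that $\mathbb E_{\mathcal S_m}[w_mg]=\mathbb E_{\mathcal D_{\mathcal T}}[g]$ holds for every $m$ separately, so that
\[
\mathbb E_{\mathcal D_{\mathcal S}}[hs]=\bar C\,\mathbb E_{\mathcal D_{\mathcal T}}[g]+\sum_{m=1}^M p_mC_m\,\mathbb E_{\mathcal S_m}\big[(h/C_m-w_m)g\big],\qquad \bar C:=\sum_{m=1}^M p_mC_m .
\]
Solve for $\mathbb E_{\mathcal D_{\mathcal T}}[g]$ and split each summand as in part (i). This gives $\alpha/\bar C\le(\max_m C_m^{-1})\alpha$, plus the stated approximation and propensity terms with $p_m$ replaced by $p_mC_m/\bar C$. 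To get the display verbatim, with a separate $h$ per source and weights $p_m$, you need per-source multicalibration: $\sup_{h\in\mathcal H}|\mathbb E_{\mathcal S_m}[hs]|<\alpha$ for every $m$.
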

Theorem~\ref{thm:universal} establishes that  multicalibration can be extended naturally to covariate shift settings.  If the density ratio $w$ can be well-approximated within $\sH$, the same boosting scheme yields predictors that adapt universally to new target distributions while preserving conditional calibration guarantees. 

For concreteness, in regression with squared loss, $s_1(Y, f(X)) = f(X)$, $s_2 = -Y$; in quantile regression ($\wt{C} = 1$), 
$s_1(Y, q_{\tau}(X)) = \v1\{Y \le q_{\tau}(X) \}, s_2 = -\tau$, 
the target coverage error at level $\tau$ obeys
$$
\left| \mathbb E_{\sD_\sT} \left[ \big(\v1\{Y \le \wt q_\tau(X)\}-\tau\big)\right] \right|
    <  \inf_{h\in\mathcal H}\{\| w - h/C \|_{L_2} + \| h/C-  \wt{w})\|_{L_2} \}  + \tfrac{\alpha}{C}.
$$ 
In particular, if $\| \Expect[s_1(Y, f(X)) | X] \|_{\infty}, \| \Expect[s_2(Y) | X] \|_{\infty} \le \wt{C}$, then the $L_2$-norms in the above bounds can be replaced by their $L_1$ counterparts. 

Empirical studies further corroborate these theoretical insights. Recent work~\citep{kim2022universal, ye2024multicalibration} demonstrates that multicalibrated predictors under covariate shift perform on par with, and often surpass, standard propensity-score reweighting estimators, both in overall and in subgroup-level performance. These results underscore the strength of multicalibration as a unified principle for achieving reliable generalization and fairness under distributional heterogeneity. 

\begin{rem}
    {\rm 
    Viewed from another angle, Theorem~\ref{thm:universal} show that multicalibration on the source domain automatically transfer to a broad family of covariate shift. Specifically, if we learn a class $\sH$, then for any nonnegative function $h(X) \in \sH$, the induced shift 
    $$
        p(X | \sT) = \frac{h(X)}{\Expect_{\sU_\sS}h(X)}p(X|\sS). 
    $$
    is covered by the guarantee. In this case, the theorem implies 
    $$
    \left|\; \mathbb E_{\sD_\sS}\!\left[ \frac{h(X)}{\mathbb E_{\sU_\sS}[h(X)]}\,\big(\v1\{Y \le \wt q_\tau(X)\}-\tau\big)\right] \right|
    < \frac{\alpha}{\mathbb E_{\sU_\sS}[h(X)]},
    $$
    by choosing $w=\wt w=h(X)/\mathbb E[h(X)]$ and $C=\mathbb E[h(X)] > 0$. 
    In multi-source settings, the same reasoning implies
    $$
    \left| \mathbb E_{\sD_\sT} \left[ \big(\v1\{Y \le \wt q_\tau(X)\}-\tau\big)\right] \right| < \frac{\alpha}{\min \Expect_{\sU_{\sS_m}} h_m(X)}, 
    $$
    whenever the $m$-th source-to-target shift is $h_m \in \mathcal{H}$. This is obtained by by choosing $w_m = \wt{w}_m = h_m(X)/\Expect_{\sU_{\sS_m}} h_m(X)$.  
    The perspective underscores the universality of multicalibration: it extends to a family of target distribution once established on the source.
    }
\end{rem}

\section{Numerical studies of auditor mechanisms and covariate shift}\label{sec:numerics} 
We now return to the simulation design introduced in Section~\ref{subsec:motivation} and use it to study the empirical mechanisms suggested by the theory. The focus is not on literal empirical verification of the exact rates or constants in the theory, but on understanding how auditor richness, group information, and shift structure shape the practical behavior of MCBoost. 

\subsection{Auditor expressivity, group information, and calibration error}
We study how different auditing strategies, grouping structures, and prediction partitions modify the practical gains from MCBoost.
Cross-validation is used for early stopping so that calibration gains are not driven by uncontrolled overfitting.
\begin{figure}[ht!]
    \begin{subfigure}[b]{\textwidth}
        \includegraphics[width=\textwidth]{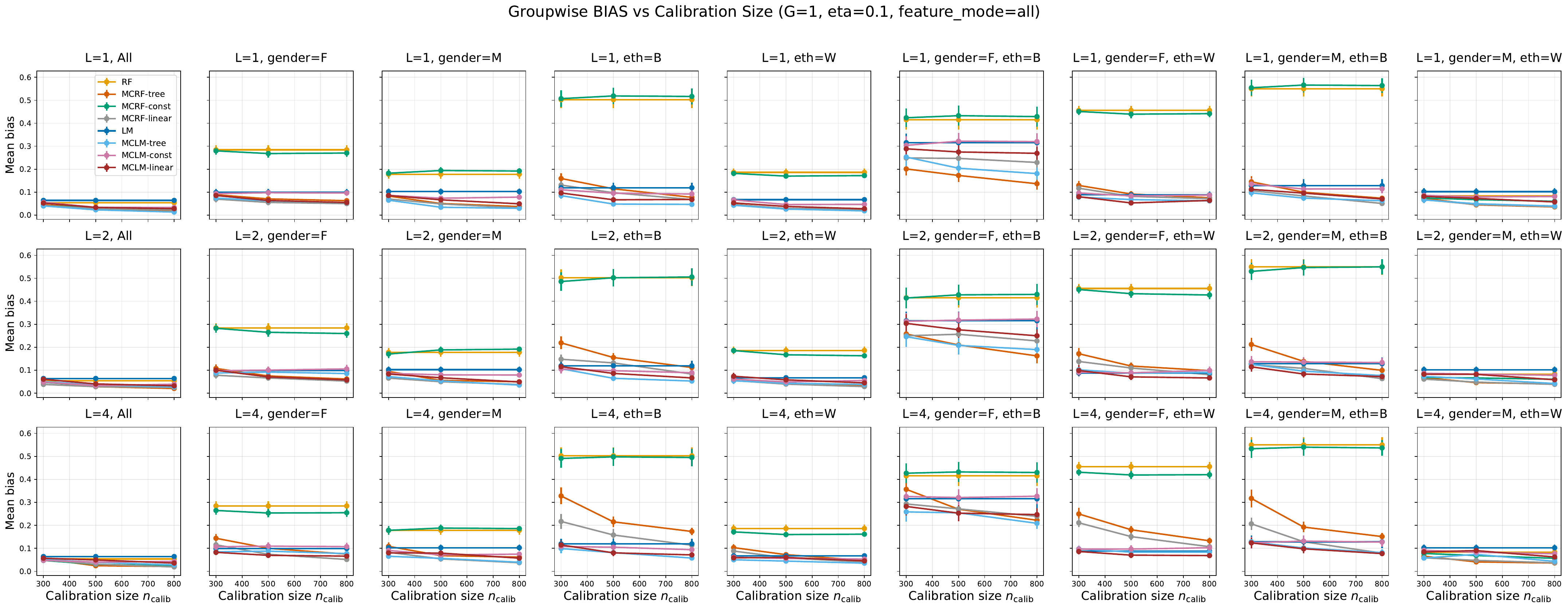}
    \end{subfigure}
    \begin{subfigure}[b]{\textwidth}
        \includegraphics[width=\textwidth]{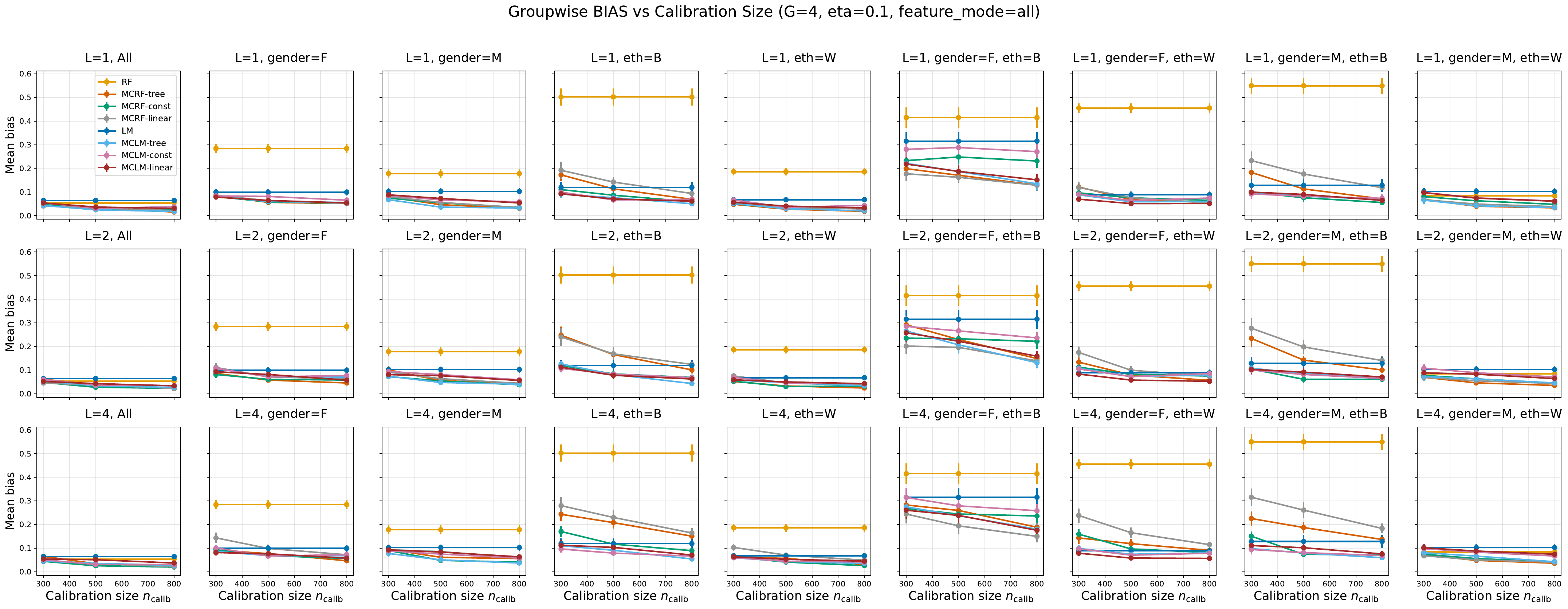}
    \end{subfigure}
    \begin{subfigure}[b]{\textwidth}
        \includegraphics[width=\textwidth]{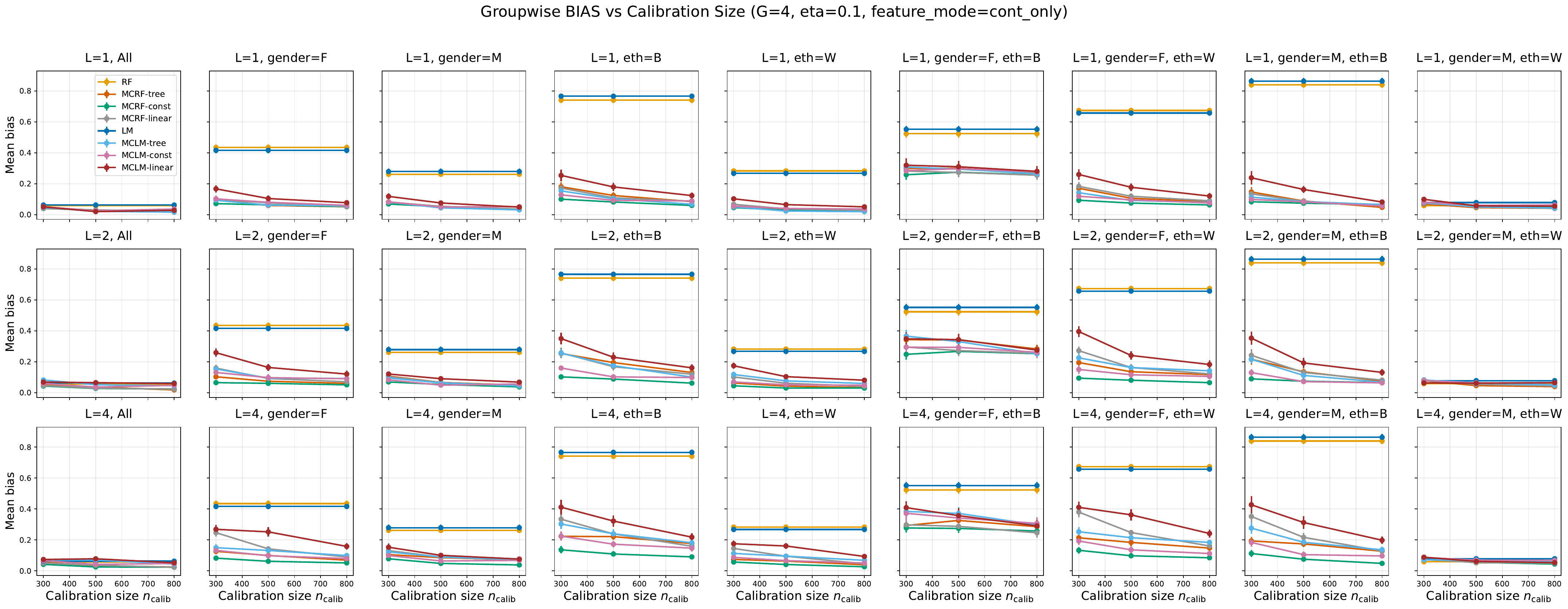}
    \end{subfigure}
    \caption{Mean groupwise biases across subpopulations  
    vs. calibration size for different initial predictors (linear model and random forest) and auditors (decision tree, constant, and linear). Top:
    bucket-based calibration without group partition ($|\mathcal{G}| = 1$). 
    Middle: calibration with both bucket and group partition ($|\mathcal{G}| = 4$).  
    Bottom: no group partition and categorical features excluded from model fitting. 
    }
    \label{fig:bias_vs_calib}
\end{figure}
\begin{figure}[ht!]
    \begin{subfigure}[b]{\textwidth}
        \includegraphics[width=\textwidth]{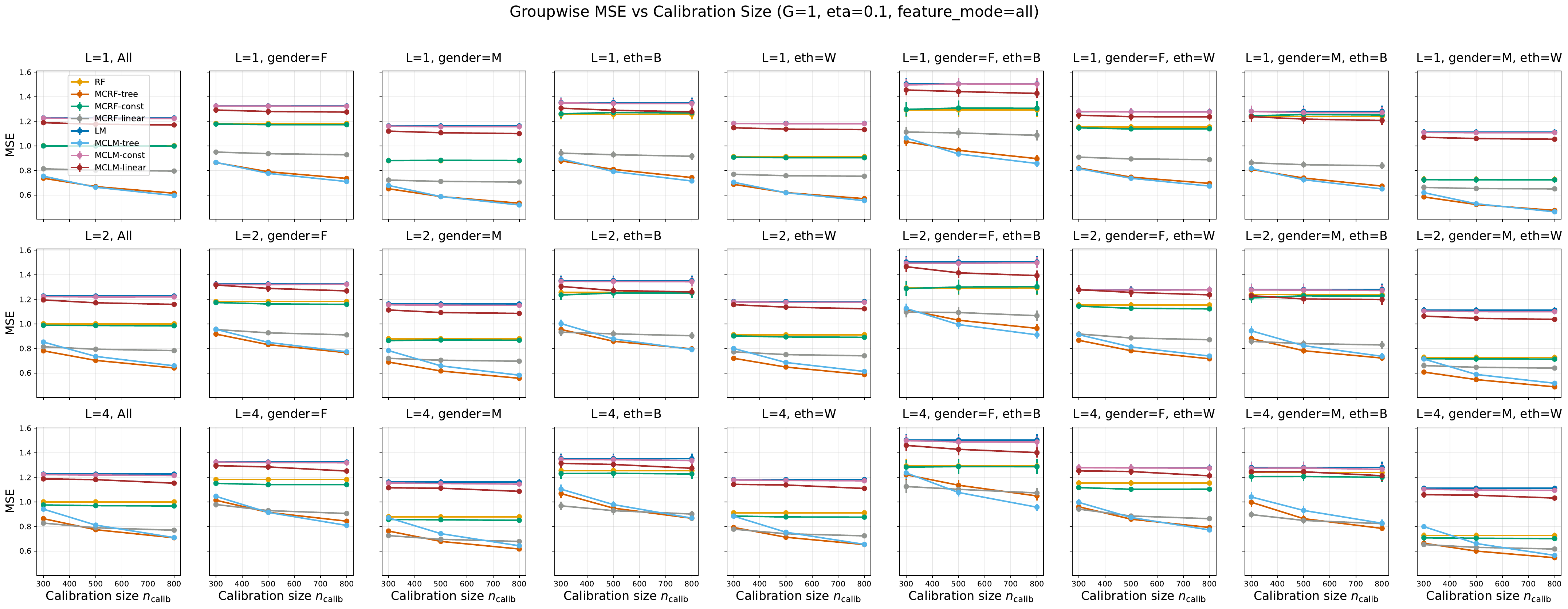}
    \end{subfigure}
    \begin{subfigure}[b]{\textwidth}
        \includegraphics[width=\textwidth]{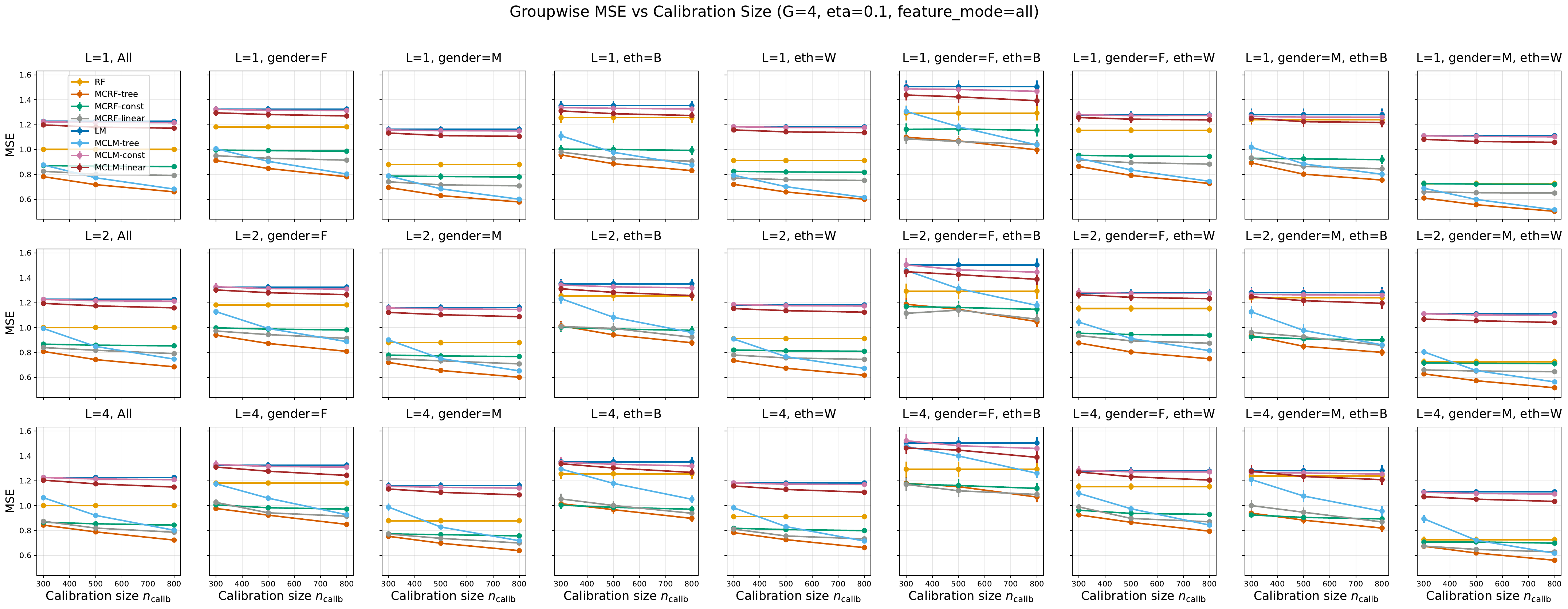}
    \end{subfigure}
    \begin{subfigure}[b]{\textwidth}
        \includegraphics[width=\textwidth]{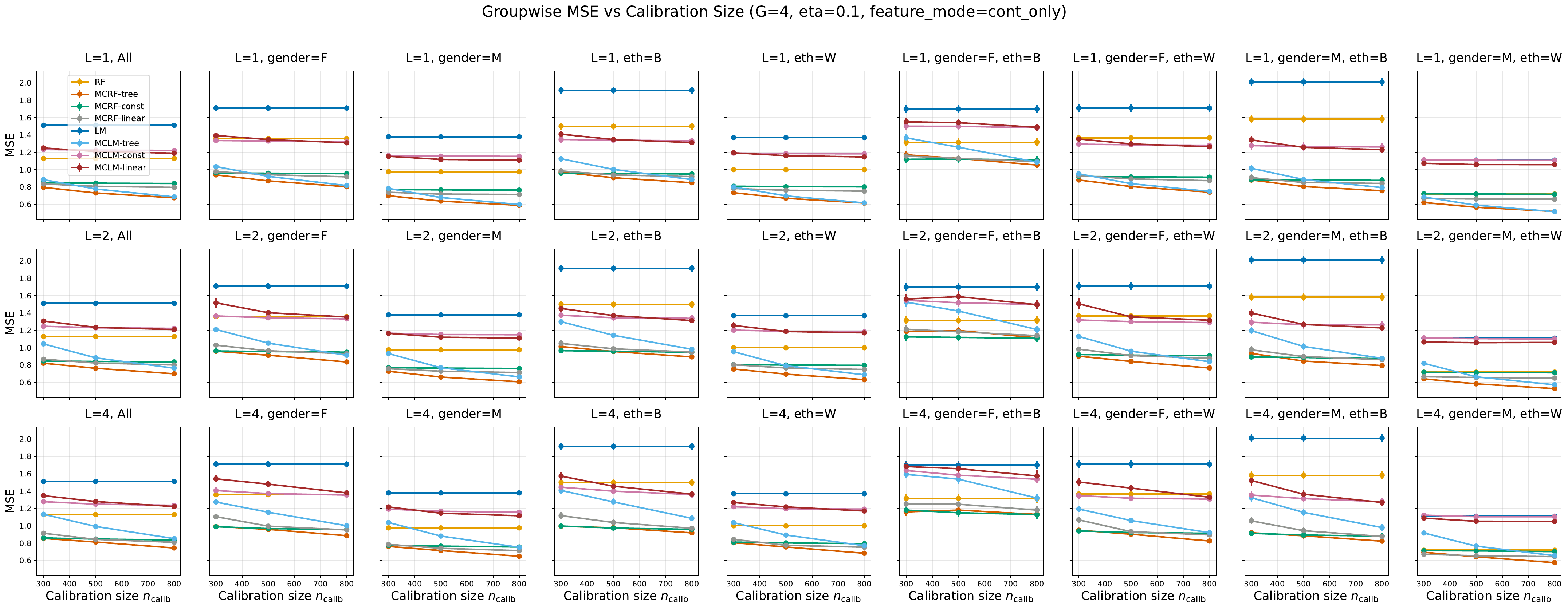}
    \end{subfigure}
    \caption{Mean squared errors across subpopulations  
    vs. calibration size for different initial predictors (linear model and random forest) and auditors (decision tree, constant, and linear). 
    Top:
    bucket-based calibration without group partition ($|\mathcal{G}| = 1$). 
    Middle: calibration with both bucket and group partition ($|\mathcal{G}| = 4$).  
    Bottom: no group partition and categorical features excluded from model fitting. 
    }
    \label{fig:mse_vs_calib}
\end{figure}
The first message is that the auditor's expressivity is decisive. Figure~\ref{fig:bias_vs_calib}, top picture, shows that without explicit group splitting ($|\mathcal{G}|=1$), constant auditors only deliver piecewise-constant bucket corrections. Such updates merely shift the predictor within each bucket and cannot capture residual structure in the feature space, so the gains in both calibration and MSE are small. By contrast, tree-based and linear auditors exploit covariate information to approximate conditional residuals, leading to much larger reductions in both bias and prediction error. 
This pattern is broadly consistent with the theory: more expressive auditors provide richer correction directions, although the practical gains still depend on how well those directions align with the residual structure. In particular, linear refinement, while still finite-dimensional, can be substantially more effective than constant correction.

Figure~\ref{fig:bias_vs_calib}, middle picture, shows that introducing group partition ($|\mathcal{G}|=4$) makes even simple constant auditors more effective because group--bucket interactions directly target subgroup offsets. The resulting gains in calibration are noticeable, but MSE improvements remain limited (middle picture of Figure~\ref{fig:mse_vs_calib}) because constant adjustments can only perform coarse mean shifts. More flexible auditors, such as trees, continue to provide great improvements by capturing heterogeneous structures beyond group-level offsets.
These findings are further supported by the excess-risk trajectories in Appendix Figures~\ref{fig:excess_vs_stepsize-linear-constant},~\ref{fig:excess_vs_stepsize-rf-constant},~\ref{fig:excess_vs_stepsize-lm-tree}, and~\ref{fig:excess_vs_stepsize-rf-tree}. When $|\mathcal{G}|=1$, constant auditors tend to increase the global excess risk after only a few iterations for both linear and random forest initial learners, whereas tree-based auditors or explicit group partitioning ($|\mathcal{G}|=4$) generally lead to stable or decreasing excess risk.

Increasing the number of prediction buckets $L$ alone has a relatively small effect on groupwise performance. The mean bias and MSE curves are nearly unchanged across bucket choices, which suggests that finer prediction partitions are not by themselves the main source of improvement. The dominant factors are the expressive power of the auditor and whether the relevant group information is made available to it.

Another phenomenon concerns access to group information during audit. When there is no explicit group split ($|\mathcal{G}| = 1$) and categorical variables are excluded from the residual fit (Figure~\ref{fig:bias_mse_vs_calib_nogroup}), all methods show negligible improvement in mean calibration bias and only limited MSE gains. Auditors simply cannot distinguish latent subpopulations; without group-related features, the correction term is effectively homogeneous and cannot repair systematic group-specific discrepancies. Even flexible tree auditors fail in this regime because their partitions are built only from the observed covariates.  In terms of MSE, the linear model shows worst error for all settings. Tree-based auditors likely reduce MSE the most, and constant correction does not improve MSE under $|\mathcal{G}| = 1$. 

By contrast, when samples are explicitly partitioned by group ($|\mathcal{G}|=4$) and $X^{(d)}$ are included in the audit step (middle pictures of Figures~\ref{fig:bias_vs_calib} and~\ref{fig:mse_vs_calib}), most methods reduce both MSE and bias. The main exception is constant refinement applied to an already well-specified linear model: if the initial predictor already satisfies the relevant linear moment conditions with the group features included, there is little left for a constant correction to do. However, when $X^{(d)}$ is omitted from the initial fit, the same groupwise constant corrections become effective, which highlights the practical importance of injecting group structure either into the initial predictor or into the audit stage.

If group partitions are imposed but $X^{(d)}$ is still excluded during residual fitting (Figure~\ref{fig:bias_vs_calib}(c)), performance deteriorates again, especially for the calibration bias of the linear initial predictor. In that case, the correction can only enforce calibration along the observed feature space, for example through relations such as $\Expect[X^{(c)\top}(Y-\wh f)] = 0$, and cannot fully resolve hidden group-level misspecification.
\begin{figure}[ht!]
    \centering
    \begin{subfigure}[b]{0.49\textwidth}
    \includegraphics[width=\linewidth]{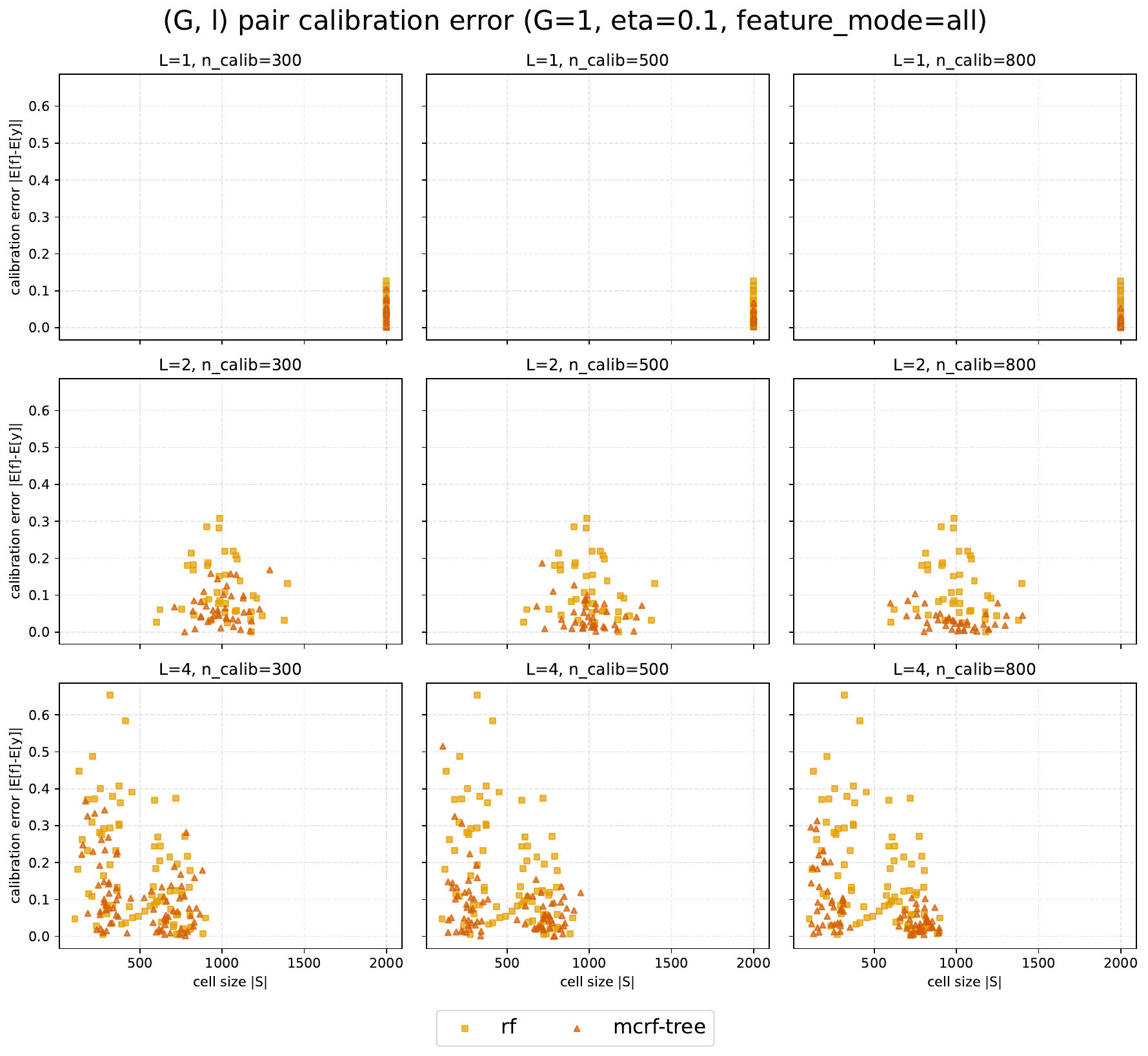}  
    \caption*{(a)}
    \end{subfigure}
    \begin{subfigure}[b]{0.49\textwidth}
    \includegraphics[width = \linewidth]{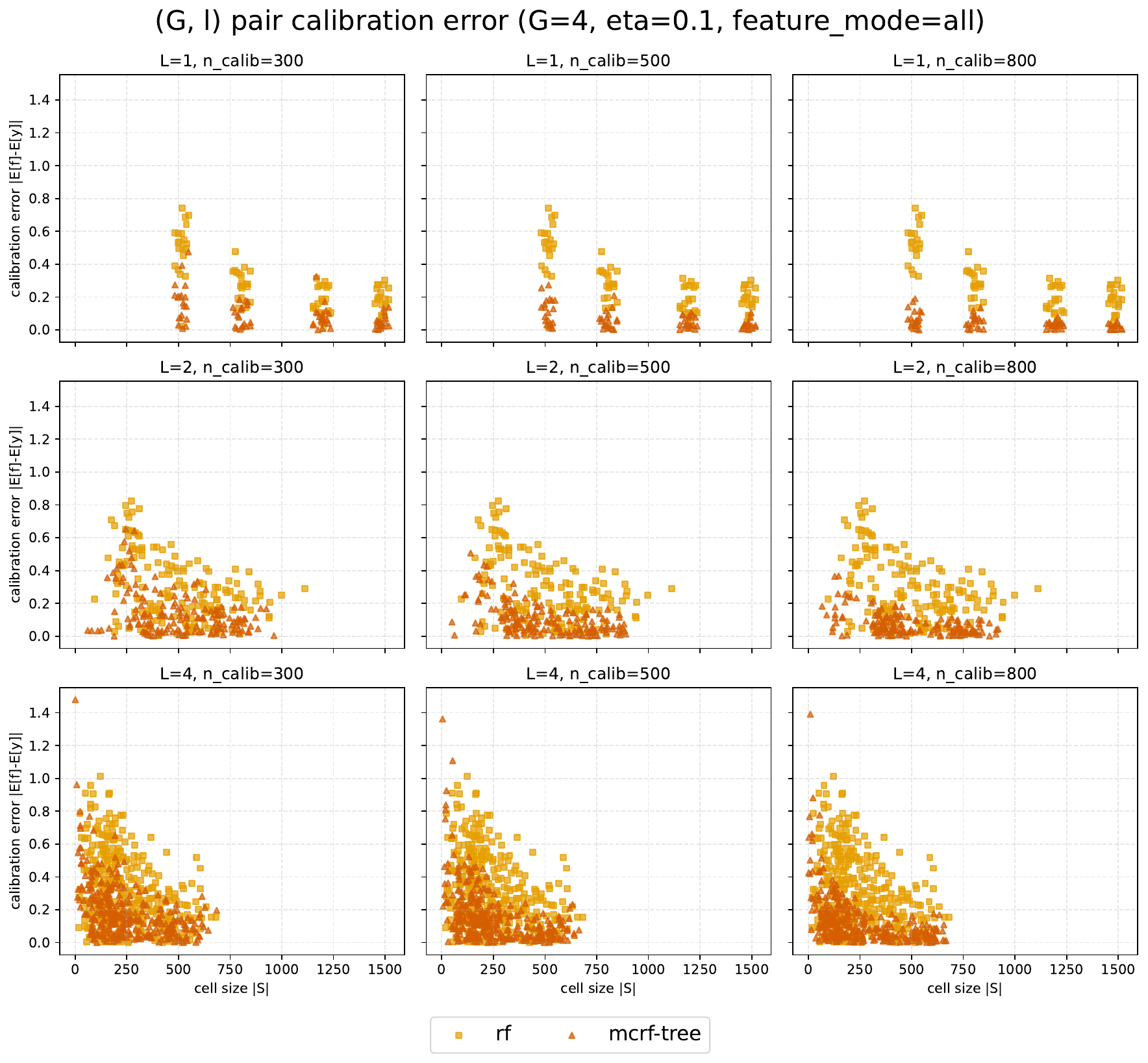}    
    \caption*{(b)}
    \end{subfigure}
    \caption{Scatter plot for empirical calibration error of $(G, l)$ pairs. Left: MCBoost without group partition. Right: MCboost with group partition. }
    \label{fig:calib-error-scatter}
\end{figure}
Beyond the group level, we also examine calibration at a finer resolution by considering group--prediction-level pairs. We apply MCBoost to an initial random forest predictor in Figure~\ref{fig:calib-error-scatter}, using a decision tree regressor as the auditor for iterative refinement. For each 
$(G, l)$, we evaluate the conditional calibration error  
$ \Expect[Y - f(X) | f(X)\in I_l , X \in G],  $
estimated empirically within each bin,
where the prediction bins $I_l$ partition the range of $f(X)$ into $L$ equal-length intervals. Because bucket boundaries depend on the fitted predictor and therefore vary across replications, the calibration errors are aggregated over repeated runs. In both settings ($|\mathcal{G}|=1$ and $|\mathcal{G}|=4$), MCBoost substantially reduces these conditional calibration errors, confirming that the method effectively corrects the systematic bias within the slices defined jointly by the group membership and the prediction level.

We also evaluate MCBoost for quantile regression under pinball loss, again with cross-validation used to determine the effective step-size budget. The qualitative picture is similar to the mean-regression case; Appendix Figures~\ref{fig:qr-excess_vs_stepsize-rf-tree} and~\ref{fig:qr-excess_vs_stepsize-rf-constant} plot the excess pinball los against the cumulative step size. In particular, improvements are more pronounced in underrepresented subpopulations (Figure~\ref{fig:qr-coverage_vs_calib}), and the coverage of multicalibrated predictors is more tightly concentrated near the nominal level $0.9$ (Figure~\ref{fig:qr-coverage-scatter}). This suggests that the calibration mechanism extends beyond the squared loss and remains effective for distributional targets such as conditional quantiles.

\subsection{MCBoost under distribution shift}\label{subsec:simu-shift} 
\begin{figure}[ht!]
    \centering
    \includegraphics[width=\linewidth]{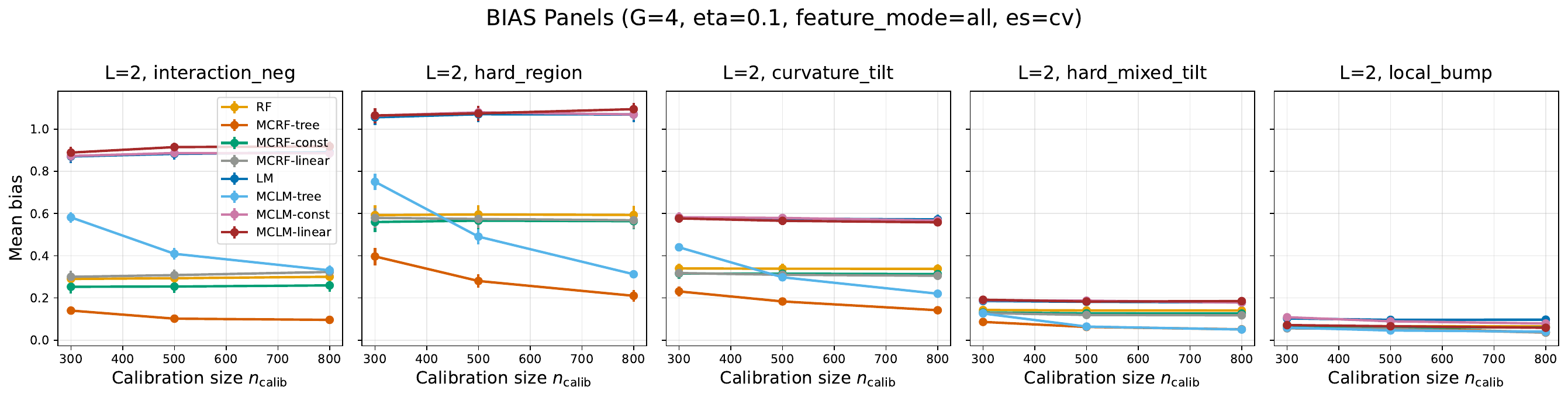}
    \includegraphics[width=\linewidth]{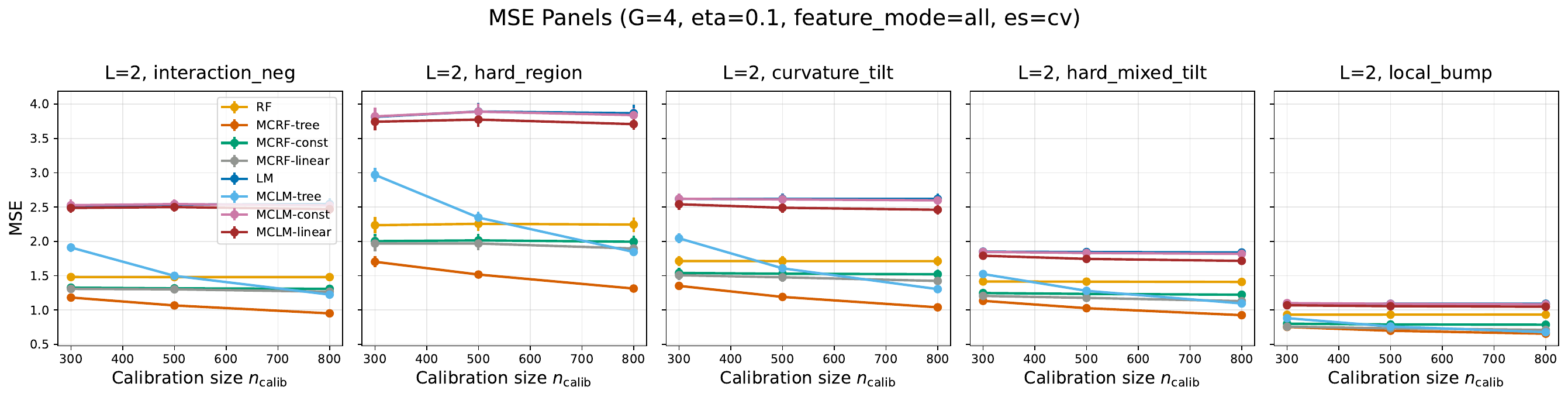}
    \caption{ Mean bias (top) and mean squared error (bottom) on selected structural subgroups and weighted shifts versus calibration sample size, for different initial predictors (linear model and random forest) and auditors (decision tree, constant, and linear).
    }
    \label{fig:calib-structure}
\end{figure}
We further investigate MCBoost under covariate shift. This experiment complements Theorem~\ref{thm:universal}: the theory predicts gains when the target shift is well approximated by the learned multicalibration class. The demographic groups considered above can already be interpreted as special cases of covariate shift through localized weight functions of the form
$ w(X)=\v1\{X\in G\}/\bbP(G). $ Our goal here is to move beyond prespecified demographic groups and examine whether MCBoost remains effective on nonlinear structured subgroups and weighted shifts that are not explicitly hard-coded in advance. We evaluate mean bias and MSE in a collection of nonlinear structural subgroups and weighted shifts constructed from standardized covariates $Z=(Z_1,Z_2)$, corresponding to $(X^{(c)}_1, X^{(c)}_2)$; full constructions are given in the Supplementary Section~\ref{apxsubsec:shift-construct}.

These targets are intentionally more structured than the demographic groups used earlier. They are not aligned with simple linear partitions and therefore provide a sharper test of the effectiveness of the chosen auditor. Figure~\ref{fig:calib-structure} shows that the benefits of MCBoost persist on these structured targets but depend on auditor's expressivity. Constant and linear auditors show limited improvement in mean bias and MSE, yet tree-based auditors improve both metrics much more substantially. 
This is broadly consistent with the theoretical picture: more expressive auditors induce a richer cumulative boosting span, enabling better approximation of nonlinear subgroup structure and density shifts.

The practical implication is similar to the work on conditional guarantee and helps to answer the question of how to choose the auditor in practice. When target groups or probable forms of shift are known in advance, the audit class $\sHa$ should reflect that structure as closely as possible. However, when little is known \emph{a priori}, more expressive auditors, such as trees, may provide a safer default because they are better able to adapt to nonlinear and interaction-driven discrepancies.

\section{Discussion}
This work provides a thorough investigation of the mechanisms through which MCBoost achieves multicalibration. Viewed as an extension of classical boosting, MCBoost inherits many familiar principles: the weak learner determines the approximation power, while early stopping helps control overfitting. 
Our theory clarifies the functional class that can ultimately be calibrated, establishes convergence rates under different smoothness conditions, and gives a finite-sample guarantee together with a valid termination rule, under relatively mild assumptions.  

Our numerical results illustrate the motivation for multicalibration and the trade-off between optimal global predictive accuracy and improved calibration. We further evaluate the performance of MCBoost across different auditors, group partitions, and feature sets, considering hard-coded demographic groups, nonlinear data-driven subpopulations, and structured weighted shifts. 
Taken together, these results provide a clearer understanding  of the MCBoost framework. 
We hope that this article can serve as a useful guide for practitioners seeking predictive models that are both reliable and transferable in practice.

More broadly, Algorithm~\ref{alg:MCboost-general} is not limited to the $|\mathcal{G}| \times L$ partitions studied here. The auditing step can be based on richer, data-adaptive partitions of the covariate-prediction space, potentially targeting fairness across more complex regions. We leave an investigation of such extensions to future work. 
Our framework currently treats univariate predictors $f: \mathcal{X} \to \bbR$. Extending the analysis to multivariate prediction problems with $f(X) \in \bbR^d$, or to scores that depend on additional objects such as the initial predictor $f^{(0)}$ or a threshold function $\lambda(X)$~\citep{zhang2024fair}, is a natural next step. These extensions are conceptually straightforward and can be handled with minor modifications.



\newpage

\appendix

\section{Extended Bregman divergence with subgradient}\label{sec:general-Bregman}
The standard distance generating function~(expected loss function $\sL$ in our article), which induces the Bregman divergence, needs to be differentiable and strictly (or strongly) convex functions, such as the squared loss. However, many objectives of interest, such as the pinball loss, are merely convex, not strictly convex, and are non-differentiable globally. To extend our analysis, we introduce a generalized Bregman divergence based on subgradients. 

\medskip 
Recall that the inner product $\langle \cdot, \cdot \rangle$ denotes the expectation over $L_2(\bbP_X)$, sometimes equivalently viewed as the expectation under the joint distribution of $(Y, X)$. 

The pinball loss $L_{\tau}(Y, q_{\tau}(X))$ admits a Lipschitz subgradient, so by Lemma~\ref{lem:Lipshitz} it satisfies 
\begin{equation}\label{eqn:qt-sub}
      L_{\tau}(Y, q_{\tau}(X)  ) - L_{\tau}(Y, \wt{q}_{\tau}(X) )
     \ge \zeta(Y, X) (q_{\tau}(X) - \wt{q}_{\tau}(X)),  
\end{equation}  
where 
$$
  \zeta(Y, X)  = \begin{cases}
   1 - \tau,  & Y < \wt{q}_{\tau}(X),  \\
   [-\tau, 1-\tau], & Y = \wt{q}(X), \\
   -\tau, &  Y > \wt{q}(X).
    \end{cases}
$$
In other words, 
$$
\sL(q_{\tau}) - \sL(\wt{q}_{\tau}) \ge \langle \zeta(Y, X; \wt{q}_{\tau}), q_{\tau}(X) - \wt{q}_{\tau}(X)\rangle. 
$$ 

Since $\zeta$ depends on both $X$ and $Y$, we formally define the subderivatives of $\sL$ as 
$$
\partial \sL(f) := \{\zeta(f): \mathcal{X} \rightarrow \bbR\}, 
$$
such that for any pairs of functions $f, \wt{f}$, 
$$
\sL(f) - \sL(\wt{f}) \ge \langle \zeta(\wt{f}), f - \wt{f}\rangle, \quad \zeta(\wt{f}) \in \partial \sL(\wt{f}).  
$$
For the pinball loss, the conditional subgradient is explicitly 
$$
\zeta(\wt{q}_{\tau}) \in \left\{ (1-\tau)\bbP(Y < \wt{q}_{\tau}(X) | X) - \tau \bbP(Y > \wt{q}_{\tau}(X) | X) + [-\tau, 1-\tau] \cdot \bbP(Y = \wt{q}_{\tau}(X) | X) \right\}.
$$
When $Y|X$ is continuous almost surely, the last term vanishes and $\partial \sL(f)$ reduces to a singleton, recovering the usual differentiable case. 

If $\sL$ is smooth, then $\partial \sL(f) = \{\nabla \sL(f)\}$, where $\nabla \sL(f)$ corresponds to the Gateaux derivative~\citep{zhang2024fair} satisfying 
$$
\langle \nabla \sL(f), h \rangle = \frac{\partial}{\partial \epsilon } \sL(f + \epsilon h) |_{\epsilon = 0},  \quad \forall h.
$$ 
For quantile regression, $s_{\tau}(Y, q_{\tau} ) = \v1\{Y \le q_{\tau}\} - \tau $ is indeed one of the subderivatives we choose for $L_{\tau}$, it could also be $\v1\{Y < q_{\tau}\} - \tau $. These two do not make distinction when $Y | X$ is continuous and $\partial \sL$ reduces to a singleton. 

\paragraph{Generalized Bregman divergence.} 
Regarding all these, we define the generalized Bregman divergence as 
$$
D_{\sL}(f_1, f_2; \zeta) = \sL(f_1) - \sL(f_2) - \langle \zeta(f_2), f_1 - f_2\rangle, \quad \zeta(f_2) \in \partial \sL(f_2). 
$$ 
The divergence depends on the specific choice of subgradient $\zeta$. 

Algorithmically, given Algorithmically,  given the form of $s_{\tau}(Y, q_{\tau})$ in Subsection~\ref{subsubsec:quantile}, we actually fit the ``residual" $s_{\tau}(Y, q_{\tau}^{(b)})$ to approximate the subgradient
\begin{align*}
    g^{(b)} & := \Expect[s_{\tau}(Y, q_{\tau}^{(b)}(X))|X] \\
    & =  (1-\tau)\bbP(Y \le \wt{q}_{\tau}(X) | X) - \tau \bbP(Y > \wt{q}_{\tau}(X) | X)  \\
    & =: \zeta(q_{\tau}^{(b)})(X) 
\end{align*} 
at iteration $b$. In the boosting procedure, we usually fix a consistent subgradient form during updates and do not very the subgradient form across iterations. When $Y|X$ is continuous a.s., both the population minimizer $q_{\tau}^*(X)$ and the boosting limit $q_{\tau}(X)$ are unique; otherwise, multiple minimizers (e.g., any median between two discrete points) may exist. 

The generalized Bregman divergence retains key properties of its classical counterpart: 
\begin{itemize}
    \item Non-negativity. $D_{\sL}(f_1, f_2; \zeta(f_2)) \ge 0$ for any $\zeta \in \partial \sL$, a direct consequence of convexity and definition of subgradient. 
    \item Generalized Pythagorean inequality. If 
    $$
    p \in \argmin_{f \in C} D_{\sL} (f, f_1; \zeta(f_1) ), \quad \zeta(f_1) \in \partial \sL(f_1) 
    $$
    for a closed convex $C$, then there exists $\chi(p) \in \partial \sL(p) $ such that 
    $$
    D_{\sL} (f, f_1; \zeta(f_1) ) \ge D_{\sL} (p, f_1; \zeta(f_1) ) + D_{\sL}(f, p; \chi(p) ), \quad \forall f \in C. 
    $$
    This follows from the optimality condition
    \begin{align*}
        & 0 \in \partial (\sL + \iota_C) (p) - \zeta(f_1), \quad  \iota_{C}(f) = \begin{cases}
        0, & \text{ if } f \in C, \\
        \infty, & \text{ if } f \notin C, 
    \end{cases}  
    \end{align*} 
    and the relation 
    $$
     \partial ( \sL + \iota_C) (p) = \partial \sL(p) + N_C(p), 
    $$
    where 
    $ N_C (p) = \{ \chi(p): \langle \chi(p), f - p \rangle \le 0, \forall f \in C \} $ denotes the normal cone of $C$. 
    Then, there exists $\chi_p$ such that  
    $ \zeta_1 - \chi_p \in \partial \iota_C (p)$.
    Therefore, 
    \begin{align*}
        D_{\sL} (f, f_1; \zeta(f_1) ) - D_{\sL} (p, f_1; \zeta(f_1) ) & = \sL(f) - \sL(p) - \langle \zeta(f_1), f - p\rangle \\
        & \ge  \sL(f) - \sL(p) - \langle \chi(p), f - p\rangle, \forall f \in C, 
    \end{align*}
    since $\langle \zeta(f_1) - \chi(p), f - p\rangle \le 0, \forall f \in C $. 
\end{itemize}
Note that $D_{\sL}(f_1, f_2; \zeta(f_2)) = 0$ does not necessarily imply $f_1 = f_2$. Thus, the minimizer of the generalized divergence may not be unique unless $\sL$ is strictly convex. 

Following the same reasoning, we can show that the boosting limit satisfies 
$$
f^{(\infty)} \in \argmin_{f \in C} \sL(f), \quad C = f^{(0)} + \sF,  
$$
and $\|g^{(\infty)}\|_* = 0$, where 
$$
g^{(\infty)} = \Expect[s(Y, f^{(\infty)}(X)) | X]. 
$$
For the pinball loss,
$$
g^{(\infty)} =  (1-\tau)\bbP(Y \le q_{\tau}^{(\infty)}(X) | X) - \tau \bbP(Y > q_{\tau}^{(\infty)}(X) | X).
$$ 
Equivalently, the minimization problem can be written as 
\begin{align*}
    f^{(\infty)} \in \argmin_{f \in C} \sL(f) = \{ \argmin_{f} L(f) + \iota_{C}(f)\}. 
\end{align*} 
From the perspective of generalized Bregman divergence, this is the same as 
$$
f^{(\infty)} \in \argmin_{f \in C} D_{\sL}(f, f^*; \zeta ) |_{\zeta = 0} = \argmin_{f \in C}  \{ \sL(f) - \langle \zeta, f\rangle \} |_{\zeta = 0}.  
$$ 
For any minimizer $\wt{f}^{(\infty)} \in \argmin_{f \in C} \sL(f)$, the first-order optimality condition for convex functions implies 
$$
0 \in \partial \sL( \wt{f}^{(\infty)}) + N_C ( \wt{f}^{(\infty)}), 
$$
or equivalently, there exits $ \zeta(\wt{f}^{(\infty)}) \in \partial \sL(\wt{f}^{(\infty)})$ such that 
$$
\langle -\zeta(\wt{f}^{(\infty)}) , f - \wt{f}^{(\infty)} \rangle \le 0, f \in C. 
$$ 
In particular, for our solution $f^{(\infty)}$, 
$$
\langle -g^{(\infty)}, f - f^{(\infty)} \rangle \le 0, f \in C.  
$$ 
When $C = f^{(0)} + \sF$, we have $\langle g^{(\infty)}, h\rangle = 0, \forall h \in \sF$.   
Having this is no stronger than the general optimality condition. The uniqueness is by strictly convexity, no-atom at the $\tau$-th quantile for pinball loss, or the use of regularizing like minimizing $\sL(f) + \lambda \|f\|_{L_2}^2, \lambda > 0$. Consequently, it will select the one with minimal norm as $\lambda \to 0$.

\section{Proof for multicalibration} 

\subsection{Proof of Lemma~\ref{lem:Lipshitz} }
\begin{proof}
\begin{align*}
     L(f, Y) - L(\wt{f}, Y) & = \int_{\wt{f}}^f s(u, Y) du = s(f, Y )(f - \wt{f}) + \int_{\wt{f}}^f  s( u, Y) - s(f, Y) du  \\
    & \ge s(f, Y )(f - \wt{f}) - \int_{\wt{f}}^f 2c_{L} (f - u ) du \\  
    & = s(f, Y )(f - \wt{f}) - c_{L}(f - \wt{f})^2, 
\end{align*}
Both the squared loss and pinball loss satisfies this property. 
\end{proof} 

\subsection{Proof of Theorem~\ref{thm:stationarity}} \label{apxsubsec:stationarity}

\begin{proof}
We have
\begin{equation}\label{eqn:boost-obj}
    \begin{aligned}
    & \mathcal{L}^{(b)} - \mathcal{L}^{(b+1)}  \\
    & \stackrel{(i)}{\ge} \Expect[L(Y, f^{(b)}(X))] - \Expect[L(Y, f^{(b)}(X) - \eta^{(b)} h^{(b)}(X)) ] \\
    &  \ge  \Expect[s(Y, f^{(b)}(X)) \eta^{(b)} h^{(b)}(X)  ] - c_{L} \Expect \eta^{(b) 2} h^{(b) 2}(X). 
    \end{aligned}
\end{equation} 
Here, we consider adaptive step size and let 
$$
\eta^{(b)} = \frac{\Expect[s(Y, f^{(b)}(X)) h^{(b)}(X) ] }{2c_{L} \|h^{(b)}\|_{L_2}^2 }.
$$
We then have
$$
\eqref{eqn:boost-obj}  \ge \frac{\Expect[s(Y, f^{(b)}(X)) h^{(b)}(X) ]^2 }{4 c_{L} \Expect [h^{(b)2}(X)] } \ge \frac{\kappa^2}{4 c_L}\|g^{(b)}\|_*^2.  
$$
Therefore, $\{\mathcal{L}^{(b)}\}$ is a monotone bounded sequence that converges to a limit $\mathcal{L}^{(\infty)}$. Under regularity condition (Lemma~\ref{lem:limit-fun} below), 
$$
\mathcal{L}^{(\infty)} = \lim_{b \rightarrow \infty} \Expect[L(Y, f^{(b)}(X))] { = }  \Expect[ \lim_{b \rightarrow \infty} L(Y, f^{(b)}(X))] = \Expect[  L(Y, f^{(\infty)}(X))],
$$
for some $f^{(\infty)}$ such that 
$\sL^{(\infty)} = \Expect[L(Y, f^{(\infty)}(X))]$, we claim  
$$
\Expect [s(Y, f^{(\infty)}(X)) h(X)] =  \langle s(Y, f^{(\infty)}), h \rangle = 0
$$
for all $h \in \sHa$ (thus $h \in \sF$). If not, we have 
$\wt{h} \neq 0 $ such that $\langle s(Y, f^{(\infty)}), \wt{h} \rangle > \epsilon > 0$ and
\begin{align*}
    & \Expect[L(Y, f^{(\infty)}(X))] - \Expect[L(Y, f^{(\infty)}(X) - \wt{\eta} \wt{h}(X)) ] \\
    & \ge \frac{\Expect[s(Y, f^{(\infty)}(X)) \wt{h} (X) ]^2 }{2c_{L} \Expect [ \wt{h}^2 (X)] } \\
    & > \frac{\epsilon^2 }{2c_{L} \Expect [ \wt{h}^2 (X)] } > 0,  
\end{align*}
contradicting minimality within the space $ f^{(0)} + \sF $ 
Therefore, the limit satisfies an ``orthogonality'' condition:
$$
\langle s(Y,f^{(\infty)}),h\rangle = 0, \forall  h\in\mathcal F 
\quad\Longleftrightarrow\quad
\|g^{(\infty)}\|_{*} := \sup_{h \in \sF, \|h\|_{L_2} \le 1}\langle g^{(\infty)},h\rangle = 0
\quad . 
$$
This identity also justifies the basic decomposition
$$
\mathbb E[h(X)\,s(Y,f^{(B)})]
=\mathbb E\!\big[h\{s(Y,f^{(B)})-s(Y,f^{(\infty)})\}\big]
=\langle g^{(B)}-g^{(\infty)},\,h\rangle.
$$

Now suppose $f^{*} = \argmin_{f} \Expect L$, which satisfies global optimality condition
$$
\langle s(Y, f^* ), h \rangle = 0, \quad \forall h \in L_2(\bbP_X).
$$
In the special case of the squared loss, the orthogonality $f^{(\infty)}\in  f^{(0)}+\mathcal F$ implies that $f^{(\infty)}$ is the $L_2$-projection of $f^*$ onto the space $ f^{(0)}+\mathcal F$. In particular, if $f^{(0)}\in\mathcal F$, then 
$$
f^{(\infty)} = \mathcal P_{\mathcal F  } f^*, \qquad 
\mathcal L(f^{(\infty)}) = \inf_{f\in\mathcal F  } \mathcal L(f).
$$
Indeed, by Lemma~\ref{lem:bauschke}, 
$$
f^{(\infty)} - f^{(0)} = \mathcal P_{\mathcal F}(f^*-f^{(0)})
\Rightarrow
f^{(\infty)} = f^{(0)} + \mathcal P_{\mathcal F}(f^*-f^{(0)})
= \mathcal P_{f^{(0)}+\mathcal F} f^*.
$$
For general convex losses, the relevant notion of projection need not be Euclidean. Instead, it is more natural to express it in terms of Bregman divergence. Recall the generalized Bregman projection we mentioned above
\begin{align*}
  &   \mathcal{P}_{  f^{(0)} + \sF }^{\sL} (f_1; \zeta ) = \argmin_{ f \in  f^{(0)} + \sF } D_{\sL}(f, f_1; \zeta),   
\end{align*}  
where 
$$
D_{\mathcal L}(f,f_1; \zeta )
= \mathcal L(f)-\mathcal L(f_1)-\langle \zeta, f -f_1 \rangle, \zeta \in \partial \sL(f_1)
$$
Following~\citep{bauschke1997legendre} and our derivations, the Bregman projection satisfies the variational inequality 
$$
\langle \zeta - \chi(p),
\, f - p \rangle \le 0,
\qquad \forall f\in  f^{(0)}+\mathcal F,
$$ 
for some $\chi(p) \in \partial \sL(p)$ and any projection point $p \in \mathcal{P}_{  f^{(0)} + \sF }^{\sL} (f_1; \zeta )$.
Combined with the Bregman Pythagorean theorem, the inequality ensures the minimal distance: 
$$
D_{\mathcal L}( p, f_1; \zeta )
\;\le\; D_{\mathcal L}(f,f_1; \zeta) - D_{\mathcal L}(f, p; \chi(p)),
\quad \forall f\in  f^{(0)}+\mathcal F,
$$ 
for some $\chi(p) \in \partial \sL(p)$. Applying this with $f_1 = f^*$ and $\zeta = 0$ implies that the limit $f^{(\infty)}$ minimizes $\sL(f)$ within $f^{(0)} + \sF$ and satisfies 
\begin{equation}\label{eqn:variational-ineq}
    \langle -\chi(f^{(\infty)}),\, f-f^{(\infty)}\rangle \le 0,
\qquad \forall f\in f^{(0)}+\mathcal F,
\end{equation}
for some $\chi(f^{(\infty)})$.  Our setting ensures that $\chi(f^{(\infty)}) := g^{(\infty)}$ satisfies~\eqref{eqn:variational-ineq}. 
Since $f^{(0)}+\mathcal F$ is an affine space, we may take $f=f^{(\infty)}+th$ with $h\in\mathcal F$ and $t\in\mathbb R$. Varying $t$ in both directions forces 
$
\langle g^{(\infty)},\, h\rangle = 0, \forall h\in\mathcal F,
$
which recovers the stationary condition in the general convex (possibly non-smooth) case.
\end{proof}

\begin{lemma}[Proposition 3.19 of~\cite{bauschke2020correction}]\label{lem:bauschke}
        Let $\sF$ be a nonempty closed convex subset of $L_2(\bbP_X)$, then $\mathcal{P}_{f^{(0)} + \sF}(f^*) = f^{(0)} + \mathcal{P}_{\sF}(f^* - f^{(0)}) $. 
\end{lemma}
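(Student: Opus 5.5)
The plan is to reduce the affine-translate projection to a projection onto $\sF$ itself by a change of variables, and then use the translation invariance of the $L_2$ norm. Recall that $\mathcal{P}_{C}(u)$ denotes the metric projection $\argmin_{v \in C}\|u - v\|_{L_2}$. First, I will check that $f^{(0)}+\sF$ is a nonempty closed convex subset of $L_2(\bbP_X)$, so that its projection is well defined and single-valued by the Hilbert projection theorem. Closedness holds because translation $v\mapsto f^{(0)}+v$ is an isometric homeomorphism of $L_2(\bbP_X)$. Convexity holds because translation is affine. Nonemptiness is inherited from $\sF$. Here I assume $f^{(0)}\in L_2(\bbP_X)$; otherwise nothing is defined, and this assumption is implicit in the paper's setting.

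Next, write any $f\in f^{(0)}+\sF$ uniquely as $f=f^{(0)}+v$ with $v\in\sF$. Then
\begin{equation*}
\|f^*-f\|_{L_2}=\|(f^*-f^{(0)})-v\|_{L_2}.
\end{equation*}
Minimizing the left side over $f\in f^{(0)}+\sF$ is therefore the same problem as minimizing the right side over $v\in\sF$. The unique minimizer of the latter is $\mathcal P_{\sF}(f^*-f^{(0)})$. Hence the unique minimizer of the former is $f^{(0)}+\mathcal P_{\sF}(f^*-f^{(0)})$, which is the claim.

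As an alternative certificate, I would verify the variational characterization directly. Set $p=f^{(0)}+\mathcal P_{\sF}(f^*-f^{(0)})$. For every $f=f^{(0)}+v$ with $v\in\sF$,
\begin{equation*}
\langle f^*-p,\ f-p\rangle=\langle (f^*-f^{(0)})-\mathcal P_{\sF}(f^*-f^{(0)}),\ v-\mathcal P_{\sF}(f^*-f^{(0)})\rangle\le 0,
\end{equation*}
by the obtuse-angle characterization of projection onto the closed convex set $\sF$. Since $p\in f^{(0)}+\sF$, this characterizes $p$ as the projection onto $f^{(0)}+\sF$.

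There is no real obstacle here. The only point needing care is that the projection is understood in the Euclidean $L_2$ sense rather than the Bregman sense, and that uniqueness comes from Hilbert-space strict convexity of the norm.
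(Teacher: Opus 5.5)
Your proof is correct. The paper gives no proof of its own and simply cites Proposition 3.19 of \cite{bauschke2020correction}; your change of variables under the translation isometry, and your variational-inequality certificate, are exactly the standard argument behind that result. The only other implicit hypothesis you need, besides $f^{(0)}\in L_2(\bbP_X)$, is $f^*\in L_2(\bbP_X)$, and this is equally implicit in the statement.
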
   

To show the existence of the limit $f^{(\infty)}$ and the dominated/uniform integrability condition to swap the limit and expectation, we assume 
\begin{enumerate}
    \item[A1] $L(Y, \cdot)$ is continuous for a.e $Y$, and there exist $p\ge1$, $\epsilon>0$, $c>0$, and $a(Y)\in L_{1+\epsilon}$ such that for all $u \in \mathbb R$, 
    \begin{equation}\label{eqn:poly-growth}
        | L(Y, u) | \le a(Y) + c|u|^p, a(Y) \in L_{1+\epsilon}(\bbP_X), \,\, \sup_b \|f^{(b)}\|_{p(1+\epsilon)} < \infty.   
    \end{equation} 
\end{enumerate}
The polynomial growth and continuity conditions are satisfied by common convex losses. For squared loss, 
$ (Y - u)^2/2 \le Y^2 + u^2, $ corresponding to  $c = 1, p =2 $. 
For pinball loss, 
$
|(Y-u)(\tau - \v1\{Y \le u\})| \le |Y| + |u|,  
$
corresponding to $c = 1, p = 1$. 

In both cases, to guarantee the $L_2$-bounded and $L_2$-convergence of the boosting path, it suffices that the increments $\{ \eta^{(b)} h^{(b)}\}_{b \ge 1}$ 
form a Cauchy series 
\begin{equation}
    \Big\| \sum_{b=m}^n \eta^{(b)} h^{(b)} \Big\|_{L_2} \rightarrow 0, 
\quad \text{as } m, n \to \infty.     
\end{equation}
This condition holds whenever $\sum_{j=1}^{\infty} \eta_j  < \infty$ if $\sup_{h \in \sHa}\|h\|_{L_2} \le C_{\sHa}$, which results in 
$$
\Big\| \sum_{b=m}^n \eta^{(b)} h^{(b)} \Big\|_{L_2} \le  \sum_{b=m}^n \eta^{(b)} \|h^{(b)}\|_{L_2} \le C_{\sHa} \sum_{b=m}^n \eta^{(b)} \rightarrow 0, 
$$   
Particularly, under the step size of Theorem~\ref{thm:smooth-rate},
$$
\frac{\kappa}{2c_L C_{\sH}} \sum_{b=1}^{\infty} \|g^{(b)}\|_* \le  \kappa \sum_{b=1}^{\infty} \frac{\|g^{(b)}\|_* }{2c_L \|h^{(b)}\|_{L_2}} < \sum_{b = 1}^{\infty} \frac{\langle g^{(b)}, h^{(b)} \rangle }{2c_L \|h^{(b)}\|_{L_2}^2} < \infty, 
$$ 
which forces the sequence $\{g^{(b)}\}$ of most-violated directions also forms a Cauchy series. Intuitively, the dual norm $\|g^{(b)}\|_*$ must diminish--either because the predictor itself approaches the optimum, or because no single $h \in \sHa$ can remain well aligned with $g^{(b)}$ as the algorithm progresses.
Moreover, we have 
$$
\sup_b \|f^{(b)}\|_{L_2} \le \|f^{(0)}\|_{L_2} + \sup_b \|\sum_{j=0}^b \eta^{(j)} h^{(j)}\|_{L_2} < \infty,
$$
providing the uniform integrability required in Lemma~\ref{lem:limit-fun}. 

\begin{lemma}\label{lem:limit-fun} 
    Under A1, there exists $f^{(\infty)} \in f^{(0)} + \mathcal{F}$ with $f^{(b)} \to f^{(\infty)}$ in $L_2$ (hence in  probability), and 
    $$
      \lim_{b \rightarrow \infty} \Expect[L(Y, f^{(b)}(X))]  =   \Expect[ L(Y, f^{(\infty)}(X))],   
    $$    
\end{lemma}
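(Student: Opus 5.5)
We obtain the limit $f^{(\infty)}$ from the Cauchy property of the partial sums, and the convergence of risks from a Vitali-type argument using A1.

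\emph{Existence of the limit.} Write $f^{(b)} = f^{(0)} + S_b$ with $S_b := -\sum_{j<b}\eta^{(j)}h^{(j)} \in \mathcal F_b$; if a projection is present, we are beyond $B'$ where it is inactive. By the standing hypothesis of Theorem~\ref{thm:stationarity}, $\{S_b\}$ is Cauchy in $L_2(\bbP_X)$. Completeness of $L_2(\bbP_X)$ gives a limit $S_\infty$. Since $S_b \in \bigcup_k \mathcal F_k$ and $\mathcal F$ is the $L_2$-closure of this union, $S_\infty \in \mathcal F$. Set $f^{(\infty)} := f^{(0)} + S_\infty \in f^{(0)}+\mathcal F$. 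Then $f^{(b)}\to f^{(\infty)}$ in $L_2$, hence in probability, and along a subsequence almost surely.

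\emph{Convergence of risks.} By continuity of $L(Y,\cdot)$ in A1, $L(Y,f^{(b)}(X)) \to L(Y,f^{(\infty)}(X))$ in probability. To upgrade this to convergence of expectations, we show uniform integrability. The growth bound gives
\[
|L(Y,f^{(b)}(X))|^{1+\epsilon} \lesssim a(Y)^{1+\epsilon} + c^{1+\epsilon}|f^{(b)}(X)|^{p(1+\epsilon)},
\]
and the expectation of the right-hand side is bounded uniformly in $b$ by $a\in L_{1+\epsilon}$ and $\sup_b\|f^{(b)}\|_{p(1+\epsilon)}<\infty$. A uniformly bounded $(1+\epsilon)$-moment implies uniform integrability (de la Vallée-Poussin). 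Vitali's theorem, which needs only convergence in probability plus uniform integrability, then yields $\Expect L(Y,f^{(b)}(X))\to \Expect L(Y,f^{(\infty)}(X))$. This is consistent with the monotone limit $\mathcal L^{(\infty)}$ from the proof of Theorem~\ref{thm:stationarity}. Note also that $f^{(\infty)}$ itself has a finite $p(1+\epsilon)$ moment by Fatou, so $\mathcal L(f^{(\infty)})$ is finite.

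\emph{Main obstacle.} The delicate step is the uniform moment bound $\sup_b\|f^{(b)}\|_{p(1+\epsilon)}<\infty$ when $p(1+\epsilon)>2$, because $L_2$-Cauchy control of the path does not by itself give higher moments. We take it as part of A1, as stated. For the pinball loss ($p=1$) one can choose $\epsilon\le1$, so the $L_2$ boundedness of the path, $\sup_b\|f^{(b)}\|_{L_2}\le \|f^{(0)}\|_{L_2}+\sup_b\|S_b\|_{L_2}<\infty$, suffices. For the squared loss ($p=2$) no extra moment is needed at all: $L$ is a continuous quadratic functional on $L_2$ when $Y\in L_2$, so $L_2$ convergence of $f^{(b)}$ gives $\mathcal L(f^{(b)})\to\mathcal L(f^{(\infty)})$ directly.
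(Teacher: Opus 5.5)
Your proposal is correct and follows the paper's proof. The steps are the same: the $L_2$ limit comes from the Cauchy path (you additionally spell out the completeness and closure step that places $f^{(\infty)}$ in $f^{(0)}+\mathcal F$), continuity gives convergence in probability, the growth bound gives uniform integrability via de la Vall\'ee-Poussin with $\phi(t)=t^{1+\epsilon}$, and Vitali finishes. Your side remark also slightly sharpens the paper's parenthetical claim: at $p=2$ the bound $\sup_b\|f^{(b)}\|_{2(1+\epsilon)}<\infty$ does not follow from $L_2$-boundedness, but, as you note, the squared loss does not need it because $\mathcal L$ is $L_2$-continuous when $Y\in L_2$.
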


\begin{proof}
    Given the step size, we have 
    $f^{(b)} {\to} f^{(\infty)}$ in $L_2$, hence in probability. The continuity condition leads to $L(Y,f^{(b)}(X))\to L(Y,f^{(\infty)}(X))$ in probability. Uniform integrability of $\{L^{(b)}\}$ follows from the de la Vallée–Poussin criterion using \eqref{eqn:poly-growth} and $\sup_b \|f^{(b)}\|_{L_{p(1+\epsilon)}} < \infty$ (implied by $L_2$-boundedness when $p\le 2$, or ensured by construction when $p>2$): with a super linear map $\phi(t) = t^{1+\epsilon}$ such that $$
    \lim_{x \to +\infty} \frac{\phi(t)}{t} = + \infty, $$
    we have 
    $$
    \sup_b \Expect [ \phi(|L(Y, f^{(b)}(X))| ) ] = \sup_b C' \Expect[ |a(Y)|^{1+\epsilon} + |f^{(b)}(X)|^{p(1+\epsilon)}] < \infty. 
    $$
    Vitali's theorem~\ref{lem:Vitali} then gives the claimed exchange of limit and expectation. 
\end{proof} 

\begin{lemma}[Vitali convergence theorem~\citep{bogachev2007measure}]\label{lem:Vitali}
    For a finite measurable space, the following are equivalent: 
    \begin{enumerate}
        \item The sequence $\{L^{(b)}\} \in L_p$ convergence to $L^{(\infty)}$ in measure and $\{L^{(b)}\}$ is uniformly integrable. 
        \item $L \in L_p$, and $L^{(b)}$ converges to $L$ in $L_p$.
    \end{enumerate}
\end{lemma}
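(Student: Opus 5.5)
The plan is to prove the two implications separately, in a finite measure space $(\Omega,\mu)$ with $\mu(\Omega)<\infty$. Throughout, ``uniformly integrable'' will mean that the family $\{|L^{(b)}|^p\}$ is uniformly integrable. Concretely, $\sup_b \int |L^{(b)}|^p\,d\mu<\infty$, and for every $\epsilon>0$ there is $\delta>0$ such that $\mu(A)<\delta$ implies $\sup_b\int_A |L^{(b)}|^p\,d\mu<\epsilon$. On a finite measure space this is equivalent to the tail formulation $\lim_{M\to\infty}\sup_b\int_{\{|L^{(b)}|>M\}}|L^{(b)}|^p\,d\mu=0$. That tail form is exactly what the de la Vallée--Poussin criterion delivers in Lemma~\ref{lem:limit-fun}, where the lemma is applied with $p=1$ to the losses.

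\textbf{(1)$\Rightarrow$(2).} First I show that the limit lies in $L_p$. Convergence in measure yields a subsequence $L^{(b_k)}\to L^{(\infty)}$ almost everywhere. Fatou's lemma then gives $\int |L^{(\infty)}|^p \le \liminf_k \int |L^{(b_k)}|^p \le \sup_b \|L^{(b)}\|_p^p<\infty$.

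Next, fix $\epsilon>0$ and split the integral of $|L^{(b)}-L^{(\infty)}|^p$ over $A_b=\{|L^{(b)}-L^{(\infty)}|>\epsilon\}$ and its complement. On the complement, the integral is at most $\epsilon^p\mu(\Omega)$; this is where finiteness of the measure is used. On $A_b$, I use $|x-y|^p\le 2^{p-1}(|x|^p+|y|^p)$ to bound the integral by $2^{p-1}\big(\int_{A_b}|L^{(b)}|^p+\int_{A_b}|L^{(\infty)}|^p\big)$. Since $\mu(A_b)\to0$, uniform integrability of $\{|L^{(b)}|^p\}$ makes the first term small. Absolute continuity of the integral of the single function $|L^{(\infty)}|^p\in L_1$ makes the second term small. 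Letting $b\to\infty$ and then $\epsilon\to0$ gives $L_p$ convergence.

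\textbf{(2)$\Rightarrow$(1).} Convergence in measure follows from Chebyshev's inequality: $\mu(|L^{(b)}-L|>\epsilon)\le \epsilon^{-p}\|L^{(b)}-L\|_p^p\to0$. For uniform integrability, Minkowski's inequality restricted to a set $A$ gives
\[
\Big(\int_A|L^{(b)}|^p\Big)^{1/p}\le \|L^{(b)}-L\|_p+\Big(\int_A|L|^p\Big)^{1/p}.
\]
Given $\epsilon$, choose $b_0$ such that the first term is below $\epsilon$ for $b>b_0$. The finitely many functions $L^{(1)},\dots,L^{(b_0)}$ together with $L$ are each integrable in $p$-th power, so a common $\delta$ controls $\int_A$ for all of them whenever $\mu(A)<\delta$. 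Uniform boundedness of $\|L^{(b)}\|_p$ follows from convergence in $L_p$.

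I expect the main obstacle to be bookkeeping rather than depth. The key is to keep the notion of uniform integrability consistent between the lemma (for $|L^{(b)}|^p$) and its use in Lemma~\ref{lem:limit-fun} (for $p=1$). The equivalence of the $\delta$-form and the tail form uses $\mu(\Omega)<\infty$, and I would record it as a preliminary remark. The other point needing care is extracting an a.e.\ convergent subsequence to justify $L^{(\infty)}\in L_p$ via Fatou.
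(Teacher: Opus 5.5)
Your proof is correct and is the standard textbook proof of the Vitali theorem. For one direction you apply Fatou along an a.e.-convergent subsequence and then split the integral over $\{|L^{(b)}-L^{(\infty)}|>\epsilon\}$ and its complement; for the other you use Chebyshev and Minkowski. The paper gives no proof of its own and simply cites \citet{bogachev2007measure}, whose argument is the same.

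You are right to read ``uniformly integrable'' as uniform integrability of $\{|L^{(b)}|^p\}$. Taken literally for $p>1$, the statement fails: $n^{1/p}\mathbf{1}_{[0,1/n]}$ on $[0,1]$ converges to $0$ in measure and is uniformly integrable, yet its $L_p$ norm stays equal to $1$. For the $p=1$ use in Lemma~\ref{lem:limit-fun} the distinction is immaterial.
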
 

\subsection{Proof of Theorem~\ref{thm:stationarity-proj}}\label{apxsec:projection}

We now discuss the  scenario in which the optional projection $\mathcal{P}_O$ exists.
For instance, $O$ represents the space of bounded functions taking values in $[0, 1]$: 
$$
O := \{f: \mathcal{X} \to [0,1] \}, 
$$ 
which is closed and convex but not symmetric. With respect to the $L_2$ norm, the projection takes the explicit form $\mathcal{P}_O f: X \mapsto \max\{ \min\{f(X) , 1\}, 0 \}$.  
\begin{proof}[Proof of Theorem~\ref{thm:stationarity-proj}]
    Under Condition~\ref{ass:projection} and the weak learner condition~\ref{ass:weak-edge}, the sequence $\{f^{(b)}\}$ remains in $O$, and $\{\sL(f^{(b)})\}$ forms monotonically nonincreasing for $b > B'$, forsufficiently large $B'$, since $\mathcal{P}_O(f^{(b)} - \eta^{(b)} h^{(b)}) = f^{(b)} - \eta^{(b)} h^{(b)} $ by the inactive projection condition; and updating with small steps stay within $O$. Consequently, by the same argument as before, we obtain 
$$
\langle g^{(\infty)}, h\rangle = 0, \quad \forall h \in \sF. 
$$

\paragraph{Restricted optimality.}
A subtle but important observation is that no feasible $\sF$-direction can reduce the loss since. Indeed,
$$
\sL(f^{(\infty)}) - \sL(f) \le \langle g^{(\infty)}, f - f^{(\infty)}\rangle \le 0, \quad \forall f = f^{(\infty)} + th \in O, h \in \sF.
$$
If $\|g^{(\infty)}\|_* \neq 0$, there would be some $h$ and $t > 0$ such that $\sL(f^{(\infty)} + t h )  \le \sL(f^{(\infty)} )$, contradicting stationary. 

Note, however, that $f^{(\infty)}$, though lies in $O$, may differ from 
$$
\mathcal{P}_{O}^{\sL}(f^*) := \argmin_{f \in O} D_{\sL}(f, f^*; 0),
$$
because our search space of our algorithm is $f^{(0)} + \sF$ rather than the entire space. 
Suppose $p \in \mathcal{P}_{O}^{\sL}(f^*)$, then by convex optimality,  
$$
    \langle -\zeta(p), f - p \rangle \le 0, \quad \exists \zeta(p) \in \partial \sL(p), \forall f \in O, 
    $$ 
which represents a different variational inequality from that satisfied by $f^{(\infty)}$.
The two coincide when no projection is present: $O = f^{(0)} + \sF$. 

\paragraph{Restricted variational inequality.}
The limit $f^{(\infty)}$ satisfies a \emph{restricted variational inequality}: 
\begin{equation}\label{eqn:VI}
    \langle g^{(\infty)}, f - f^{(\infty)} \rangle \ge 0, \forall f \in O \text{ with } f - f^{(\infty)} \in \sF,      
\end{equation}
where $g^{(\infty)} = \Expect[s(Y, f(X)) | X]$. 

More generally,~\eqref{eqn:VI} can be expressed in subdifferential form as 
$$
0 \in \mathcal{P}_{\sF}\left( \partial \sL (f^{(\infty)} )  \right), 
$$ 
which ensures
$$
\sL(f) \le \sL(\wt{f}), \quad \forall \wt{f} \in O, \wt{f} -f \in \sF.   
$$
Our predictor guarantees $\langle g^{(\infty)}, h \rangle = 0$ for specific $g^{(\infty)}$. In words, the stagewise limit $f^{(\infty)}$ need only be directionally stationary along $\sF$, in contrast to $\mathcal{P}_O f^*$ which satisfies $ \langle - \partial \sL(\mathcal{P}_O f^*), f -\mathcal{P}_O f^* \rangle \le 0 $. 
This condition is stronger than the variational inequality~(VI) required by $f^{(\infty)}$, reflecting the possible difference between these two -- $f^{(\infty)}$ may have a higher objective value if $\sF$ is not rich enough. 
\end{proof}

\begin{example}
    A concrete example to separate the two estimator would be: 
    $
    \sL(f) = \tfrac{1}{2}\|f - f^* \|_{2}^2
    $
    with $f^* = (1, 0)$, $\sF = \text{span}\{e_2\}$, and $O = [-10, 10]^2$. The minimizer is $\mathcal{P}_{O} f^* = (1, 0)$. For the restricted problem: $g^{(\infty)} = f^{(\infty)} - f^*$; projecting onto $\sF$ yields $(f^{(\infty)})_2 = 0$. This shows any $(f_1, 0)\in O$ is $\sF$-stationary, whereas only $f_1 = 1$ matches the global minimum. 
\end{example}

\begin{rem}
    MCBoost with optional projection can be interpretedas a proximal gradient descent (mirror descent) method with the $L_2$-distance: 
    $$
     f^{(b+1)} =  \mathcal{P}_{O}(f^{(b)} - \eta^{(b)} h^{(b)}) = \argmin_{f \in O} \left\{ \langle \eta^{(b)}h^{(b)}, f\rangle + \frac{1}{2} \|f - f^{(b)}\|_{L_2}^2\right\}.
    $$
    The three-point inequality ensures that 
    $$
    \langle -\eta^{(b)} h^{(b)}, f - f^{(b+1)} \rangle \le -D_{L_2}(f, f^{(b+1)}) + D_{L_2}(f, f^{(b)}) - D_{L_2}(f^{(b+1)}, f^{(b)})
    $$
    Unlike standard projected gradient descent, boosting uses $h^{(b)} \in \sHa$ as an approximation of the true gradient or subgradient $g^{(b)}$. Consequently,  
    \begin{align*}
        \sL(f^{(b+1)}) - \sL(f^{(b)}) & \le  \langle g^{(b)}, f^{(b+1)} - f^{(b)} \rangle  + c_{L} \|f^{(b+1)} - f^{(b)}\|_{L_2}^2   \\
        & \le \langle h^{(b)}, f^{(b+1)} - f^{(b)} \rangle + \langle g^{(b)} - h^{(b)}, f^{(b+1)} - f^{(b)} \rangle + c_{L} \|f^{(b+1)} - f^{(b)}\|_{L_2}^2  \\
        & \le \left(c_L - \frac{1}{\eta^{(b)}} \right) \|f^{(b+1)} - f^{(b)}\|_{L_2}^2 + \langle g^{(b)} - h^{(b)}, f^{(b+1)} - f^{(b)} \rangle. 
    \end{align*}
    We can obtain the same results presented in the proof of the main theorem if there is no $\mathcal{P}_O$ -- each iteration ensures sufficient descent. 
\end{rem}

\section{Proof of Convergence rates}\label{apxsec:convergence-rate} 

\subsection{Proof of Theorem~\ref{thm:smooth-rate}}

\begin{proof}
Define $\Delta_b = \sL(f^{(b)}) - \sL(f^{(\infty)})$, if the score is \(2c_L\)-Lipschitz in its first argument, i.e.
\(|s(Y, u)-s(Y, v)|\le 2c_L|u-v|\).
Then for all $f, \tilde f$ (Definition~\ref{def:Lipschitz-deriv}),
\[
L(Y,f)-L(Y,\tilde f)\ \ge\ s(Y,f)\,(f-\tilde f)\ -\ c_L\,(f-\tilde f)^2.
\]
Consequently, with update \(f^{(b+1)}=f^{(b)}-\eta^{(b)}h^{(b)}\),
\begin{equation}\label{eq:smooth-progress}
\Delta_b-\Delta_{b+1}\ \ge\ \eta^{(b)}\langle g^{(b)},h^{(b)}\rangle\ -\ c_L(\eta^{(b)})^2\|h^{(b)}\|_2^2.
\end{equation}
Choosing the “quadratic-optimal” step
\(
\displaystyle
\eta^{(b)}=\frac{\langle g^{(b)},h^{(b)}\rangle}{2c_L\|h^{(b)}\|_2^2}
\)
gives
\[
\Delta_b-\Delta_{b+1}
\ \ge\ \frac{\langle g^{(b)},h^{(b)}\rangle^2}{4c_L\|h^{(b)}\|_2^2}
\ \stackrel{\text{Ass.~\ref{ass:weak-edge}}}{\ge}\ \frac{\kappa^2}{4c_L}\,\|g^{(b)}\|_*^2.
\]
On the other hand, by convexity and Assumption~\ref{ass:finite-radius},
\[
\Delta_b
\ \le\ \langle g^{(b)},\,f^{(b)}-f^{(\infty)}\rangle
\ \le\ \|g^{(b)}\|_*\,\|f^{(b)}-f^{(\infty)}\|_{L_2}
\ \le\ R\,\|g^{(b)}\|_*.
\]
Therefore,
\[
\Delta_b-\Delta_{b+1}\ \ge\ \frac{\kappa^2}{4c_LR^2}\,\Delta_b^2,
\]
By harmonic descent lemma: 
$$
\Delta_{b+1} \le \Delta_{b} - c \Delta_b^2 
$$ indicates 
$
1/\Delta_{b+1} \ge 1/\Delta_{b} + c.
$
We have 
$$
1/\Delta_{b+1} \ge 1/\Delta_{b} + \tfrac{\kappa^2}{4 c_L R^2}, 
$$
indicating that 
$$
1/\Delta_{B} \ge 1/\Delta_{0} + \tfrac{\kappa^2 B}{4 c_L R^2}. 
$$
which yields the standard harmonic descent:
$$
\Delta_{B}  \le \left[ \frac{1}{\Delta_{0}} + \frac{\kappa B}{4 c_L R^2} \right]^{-1} \le \frac{4c_L R^2}{\kappa^2 } \cdot \frac{1}{B}. 
$$ 

Under the presence of a projection $O$ and the inactive-projection condition~\ref{ass:projection}, the convergence analysis proceeds almost identically -- 
we have 
$$
\frac{1}{\Delta_B} \ge \frac{1}{\Delta_B'} + \frac{\kappa^2 (B - B')}{4 c_L R^2}. 
$$
For large enough $B > B'/2$, 
$$
\Delta_{B}  \le \left[ \frac{1}{\Delta_{B'}} + \frac{\kappa (B - B')}{4 c_L R^2} \right]^{-1} \le \frac{8 c_L R^2}{\kappa^2 } \cdot \frac{1}{B}.  
$$
Hence, the same $O(1/B)$ sublinear convergence rate holds asymptotically.
\end{proof} 

\begin{example}[Square loss specialization]
    Consider the squared loss $ \sL(f)=\tfrac12\,\mathbb E(Y-f(X))^2$. Here $c_L = 1/2$ and the step choice coincides with the exact line search. The population minimizer is $f^*(x)=\Expect[Y\mid X=x]$, and the boosting limit satisfies $f^{(\infty)}=\mathcal P_{f^{(0)}+\mathcal F}f^*$. A direct calculation shows
    \begin{align*}
    \Delta_b & = \frac{1}{2}\langle f^{(\infty)} - f^{(b)}, 2Y - f^{(b)} -  f^{(\infty)}\rangle  \\
    & = \frac{1}{2} \langle f^{(\infty)} - f^{(b)}, 2 \mathcal{P}_{f_0+\sF}Y - f^{(b)} -  f^{(\infty)}\rangle  \\
    & = \frac{1}{2}\| f^{(\infty)} - f^{(b)}\|_{L_2}^2. 
    \end{align*}
    Multicalibration violations can be bounded in terms of $\Delta_B$: 
    \begin{align*}
    \sup_{h\in\mathcal F}\Big|\Expect\!\big[h\{f^{(B)}-f^{(\infty)}\}\big]\Big| 
    & \; \le \; C_{\sF} \|f^{(B)} - f^{(\infty)}\|_* \\
    &\;\le\; C_{\mathcal F}\,\|f^{(B)}-f^{(\infty)}\|_{L_2} \\
    &\;=\; C_{\mathcal F}\sqrt{2\Delta_B} \le C_{\mathcal F}\sqrt{\tfrac{8c_LR^2}{\kappa^2B}}.
    \end{align*}
    Thus choosing 
    $$ B\ \ge\ \dfrac{8 c_L C_{\mathcal F}^2 R^2}{\kappa^2\,\alpha^2} $$
    guarantees regression multicalibration at tolerence $\alpha$. In Remark~\ref{rem:square-loss} of the main context, we show that the excess risk admits faster convergence rate. 
\end{example}

\subsection{Proof of Theorem~\ref{thm:PL-linear}}
\begin{proof}
Under 
\[
\frac12\|g^{(b)}\|_*^2\ \ge\ \mu\,\Delta_b
\qquad(\text{PL on the restricted dual norm}), 
\]
combining with \eqref{eq:smooth-progress} (and the optimal step) gives
\[
\Delta_b-\Delta_{b+1}\ \ge\ \frac{\kappa^2}{4c_L}\|g^{(b)}\|_*^2\ \ge\ \frac{\mu\kappa^2}{2c_L}\,\Delta_b,
\]
hence
\[
\ \Delta_{b+1}\ \le\ \Big(1-\frac{\mu\kappa^2}{2c_L}\Big)\Delta_b,
\]
indicating 
$$
\Delta_{B} \le \left( 1 - \frac{\mu \kappa }{2 c_L} \right)^B \Delta_0, 
$$
i.e., the loss has exponential decay.

With projection $\mathcal{P}_O$, an analogous argument yields linear convergence after projection becomes inactive: 
$$
\Delta_B \le \left( 1 - \frac{\mu \kappa}{2 c_L}\right)^{B - B'} \Delta_{B'} 
$$ 
for some $B' > 0$.
\end{proof} 

\subsection{Finite sample concentration}
Before present finite sample and step results of the algorithm in practice, 
we provide a concentration lemma that controls the deviation of the empirical mean from its population counterpart. Throughout, write $P_N f = N^{-1} \sum_{i=1}^N f(X_i)$ and $Pf = \Expect f(X)$. 
\begin{lemma}\label{lem:concentration}
    Let $\sHa \subset L_2(\bbP_X)$ a function class and assume $\|g^{(b)}\|_{\infty} \le \wt{C}$.
    and there exists an envelope function $H$ such that 
    $|h(\cdot)| \le H(\cdot)$ with $H^2 \in L_2(\bbP_X)$. Then, with probability at least $1 - \delta$,
    \begin{align*}
        & \sup_{h \in \sHa}  |  (P_N - P )h(X) s(Y, f^{(b)}(X)) | \le  C\left(  \frac{ \sup_Q J_2(\sHa; Q) }{\sqrt{N}}  +   \sqrt{ \frac{ \log(1/\delta)}{N}   }\right),  \\
        & \sup_{h \in \sHa}  | (P_N - P )h^2 | \le C \left( \frac{ \sup_Q J_2(\sHa^2; Q)}{\sqrt{N}} +  \sqrt{\frac{ \log(1/\delta)}{N}}\right),   
    \end{align*}
    where $J_2(\sHa; Q) = \int_0^{\infty} \sqrt{\log \mathcal{N}(\epsilon\|H\|_{L_2(Q)}, \sHa, L_2(Q))} d\epsilon $ denotes the Dudley entropy integral with some measure $Q$.  
\end{lemma}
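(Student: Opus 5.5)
The plan is the standard empirical-process route: symmetrize, bound the Rademacher complexity by Dudley's entropy integral, and add a tail term. The two displays are handled the same way, first for the function class $\mathcal{G}_b:=\{(x,y)\mapsto h(x)s(y,f^{(b)}(x)) : h\in\sHa\}$ and then for $\sHa^2=\{h^2: h\in\sHa\}$. Throughout, $f^{(b)}$ is treated as fixed, i.e.\ independent of the sample on which $P_N$ is evaluated. This holds for the validation split $V$, or conditionally on $\Xi$. When $V=\Xi$ the bound is applied for each fixed $b$ and a union bound over $b\le B$ is taken later, which is where the $\log(B/\delta)$ of Proposition~\ref{prop:finite} comes from.

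\textbf{Step 1 (symmetrization).} Symmetrization gives
\[
\Expect\sup_{h\in\sHa}|(P_N-P)h\,s^{(b)}|\le 2\,\Expect\,\mathcal R_N(\mathcal G_b),
\]
with $\mathcal R_N$ the empirical Rademacher average.

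\textbf{Step 2 (chaining).} Conditionally on the data, Dudley's chaining bound gives
\[
\mathcal R_N(\mathcal G_b)\lesssim N^{-1/2}\int_0^\infty\sqrt{\log\mathcal N(\epsilon,\mathcal G_b,L_2(P_N))}\,d\epsilon .
\]
To pass from covering numbers of $\mathcal G_b$ to those of $\sHa$, I would use $\|(h-h')s\|_{L_2(P_N)}=\|h-h'\|_{L_2(Q)}$ with $dQ\propto s^2\,dP_N$. Since the paper only assumes $\|g^{(b)}\|_\infty\le\wt C$, not a bound on $s$ itself, I would add the mild assumption $|s|\le\wt C$ (true for the pinball score, and for bounded $Y$ under squared loss). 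Then $\mathcal N(\epsilon\wt C\|H\|_{L_2(Q)},\mathcal G_b,L_2(P_N))\le \mathcal N(\epsilon\|H\|_{L_2(Q)},\sHa,L_2(Q))$, and rescaling produces $\wt C\,\|H\|_{L_2(P_N)}\sup_Q J_2(\sHa;Q)$. Taking expectations and using $\Expect\|H\|_{L_2(P_N)}\le\|H\|_{L_2}$ gives the first term. For the VC classes of Condition~\ref{ass:VC}, $\sup_QJ_2\lesssim\sqrt{V(\sHa)}$ by the uniform entropy bound. For $\sHa^2$ the same argument applies with envelope $H^2$, which is square integrable by assumption.

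\textbf{Step 3 (tail term).} The main obstacle is the deviation from the expectation, because the envelope $H$ is unbounded, so McDiarmid's inequality is not directly available. I would first try Talagrand/Bousquet-type inequalities for unbounded envelopes (Adamczak's version), which give an extra term $\|\max_i H(X_i)\|_{\psi_1}/N$. That term is of lower order under square integrability of $H^2$, up to constants absorbed into $C$. If this proves awkward, the fallback is to truncate at $H\le M_N$, apply Bousquet's inequality to the bounded part (weak variance $\sigma^2\le\wt C^2\|H\|_{L_2}^2$, giving $\sqrt{\log(1/\delta)/N}$), and control the remainder by Markov's inequality using $\Expect H^4<\infty$ with $M_N\asymp N^{1/4}$. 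Either route gives, with probability $1-\delta$, the stated bound with a constant $C$ depending on $\wt C$, $\|H\|_{L_2}$ and $\|H^2\|_{L_2}$.
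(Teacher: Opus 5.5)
\textbf{Steps 1--2.} These follow the paper's route: symmetrization, Dudley chaining in $L_2(P_N)$, and rescaling by the envelope. Your extra hypothesis $|s|\le\wt C$ is a fair repair, not a weakness. The paper avoids it only by replacing $s$ with $g^{(b)}$ inside the empirical process. That substitution is legitimate only in expectation, so the paper's argument really concerns $\sup_h|(P_N-P)hg^{(b)}|$, which is the quantity later used as $\varepsilon_1$. (Your covering identity should read $\|(h-h')s\|_{L_2(P_N)}=\|s\|_{L_2(P_N)}\|h-h'\|_{L_2(Q)}$; this is harmless.)

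\textbf{The gap: Step 3 does not give probability $1-\delta$.}
\begin{itemize}
\item \emph{Adamczak route.} It needs a finite $\psi_1$ (or $\psi_\alpha$) Orlicz norm of $\max_{i\le N}H(X_i)$. The hypothesis $H^2\in L_2$ does not provide one: an envelope with four moments can have $\Expect e^{H/c}=\infty$ for every $c$. So that term is not lower order; it can be infinite.
\item \emph{Truncation route, first display.} This is valid, but Markov on the remainder leaves a failure probability that does not shrink with $\delta$. With $M_N\asymp N^{1/4}$ it is of order $N^{-1/4}$. Tuning $M_N\asymp\sqrt{N/\log(1/\delta)}$ gives probability $1-\delta-O(\log(1/\delta)/N)$, essentially the paper's guarantee.
\item \emph{Truncation route, $\sHa^2$.} Here the envelope $H^2$ has only two moments. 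The bound $\Expect[H^2\mathbf 1\{H^2>M_N\}]\le\Expect H^4/M_N$ is no better than the target rate at any level $M_N$ that keeps Bousquet's $M_N\log(1/\delta)/N$ term under control.
\end{itemize}

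\textbf{This cannot be fixed by better bookkeeping.} Take $\sHa=\{H\}$ with $\bbP(H^2>t)\asymp t^{-2-\gamma}$ for some $\gamma>0$. Then $\bbP\{(P_N-P)H^2>u\}\ge\bbP\{H^2(X_1)>N(u+PH^2)\}$, which decays only polynomially in $u$. Hence $C\sqrt{\log(1/\delta)/N}$ fails once $\delta$ is small. Under moment-only envelopes, ``probability at least $1-\delta$'' is too strong. The paper's own proof ends with $1-\delta-O(N^{-1})$, and that is the realistic target.

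\textbf{The idea to take from the paper.} Do not concentrate the supremum around its mean; work conditionally on the sample.
\begin{itemize}
\item Given $X_1,\dots,X_N$, the Rademacher process $h\mapsto N^{-1/2}\sum_i\epsilon_ih(X_i)$ is sub-Gaussian with respect to $L_2(P_N)$, however heavy the tails of $H$.
\item The high-probability form of Dudley's bound (which the paper takes from Vershynin) then gives an $e^{-u^2}$ tail at the random scale $\|H\|_{L_2(P_N)}$. This scale also bounds the diameter.
\item The tails of $H$ enter only when $\|H\|_{L_2(P_N)}$ is replaced by $\|H\|_{L_2}$. The paper does this by Chebyshev: $P_NH^2\le 2PH^2$ with probability $1-PH^4/\{N(PH^2)^2\}$. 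This is exactly where $H^2\in L_2$ is used and where the $O(N^{-1})$ comes from.
\end{itemize}

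\textbf{Two caveats if you adopt this route.} First, you need a symmetrization inequality for tail probabilities (Section 2.3 of van der Vaart and Wellner) to pass from the Rademacher process back to $P_N-P$; the paper omits this step. Second, for $\sHa^2$ the analogous Chebyshev step would need $PH^8<\infty$. Under the stated hypothesis, the second display therefore holds only with an extra failure probability that is $o(1)$ but has no explicit rate.
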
 

\begin{proof}[Proof of Lemma~\ref{lem:concentration}]
    Since $s(Y, f^{(b)})$ is measurable and $g^{(b)} = \Expect[s(Y, f^{(b)}(X)) | X]$, 
    \begin{align*}
      \sup_{h \in \sHa}  | \Expect (P_N - P )h(X) s(Y, f^{(b)}(X)) | & = \sup_{h \in \sHa}  | \Expect (P_N - P) h(X) g^{(b)}(X) |  \\
    & \le \Expect \sup_{h \in \sHa} | (P_N - P) h(X) g^{(b)}(X) | \\
    & \le \wt{C} \Expect \sup_{h \in \sHa} | (P_N - P) h(X) | 
    \end{align*} 
    Applying the symmetrization inequality~\citep{wellner2013weak,vershynin2018high, wainwright2019high} with Radamacher variables $\{\epsilon_i \}_{i=1}^N$, 
    $$
    \Expect \sup_{h \in \sHa} |(P_N - P) h| \le 2 \Expect \sup_{h \in \sHa} \left|\frac{1}{N} \sum_{i=1}^N \epsilon_i h(X_i) \right|. 
    $$
    Let $Z_h = N^{-1/2} \sum_{i=1}^N \epsilon_i h(X_i)$ and conditional on $X_1, \ldots, X_N$, $Z_h$ is a sub-Gaussian process with 
    $
    \|Z_{h_1} - Z_{h_2}\|_{\psi_2} \lesssim \|h_1 - h_2 \|_{L_2(\bbP_N)}, 
    $
    where $\bbP_N$ refers to empirical measure. By Dudley's integral bound for sub-Gaussian processes,
    $$
    \Expect \sup_{h \in \sHa} |Z_h| \lesssim \int_0^{\text{diam}(\sHa; \bbP_N)} \sqrt{\log \mathcal{N}(\epsilon, \sHa, L_2(\bbP_N )) } d\epsilon , 
    $$
    which leads to 
    $$
    \Expect \sup_{h \in \sHa} |(P_N - P) h| \lesssim \frac{1}{\sqrt{N}} \int_0^{\text{diam}(\sHa; \bbP_N)} \sqrt{\log \mathcal{N}(\epsilon, \sHa, L_2(\bbP_N )) } d\epsilon .
    $$
    For high-probability control, we have 
    $$
    \sup_{h \in \sHa} |(P_N - P)h| \le C\left( \frac{ \int_0^{\text{diam}(\sHa; \bbP_N)} \sqrt{\log \mathcal{N}(\epsilon, \sHa, L_2(\bbP_N )) } d\epsilon }{\sqrt{N}} + \frac{\text{diam}(\sHa; \bbP_N)u}{\sqrt{N}}\right)
    $$
    with probability at least $1 - 2\exp(-u^2)$, according to~\cite{vershynin2018high}. Substituting $u = \sqrt{\log(2/\delta)}$ yields 
    \begin{equation}
     \begin{aligned}
    & \sup_{h \in \sHa} |(P_N - P )h  |  \\ 
    & \le \frac{C}{\sqrt{N}} \left( \int_0^{\text{diam}(\sHa; \bbP_N)} \sqrt{\log \mathcal{N}(\epsilon, \sHa, L_2(\bbP_N )) } d\epsilon   +    \text{diam} (\sHa; \bbP_N ) \sqrt{ \log(\frac{2}{\delta}) } \right). 
    \end{aligned}  
    \end{equation}
To express the bound in population terms, let $H$ be an envelope satisfying $|h(\cdot)| \le H(\cdot)$ for all $h \in \sHa$ and $H \in L_4(\bbP_X)$, then 
\begin{align*}
    & \sup_{h \in \sHa} |(P_N - P )h  | \\
    &\lesssim \frac{1}{\sqrt{N}} \left( 
     \|H\|_{L_2(\bbP_N)} \int_{0}^{\text{diam}(\sHa; \bbP_N )/\|H\|_{L_2(\bbP_N)}} \sqrt{ \log N(\epsilon \|H\|_{L_2(\bbP_N)}, \sHa, L_2(\bbP_N ))} d\epsilon  \right. \\
    & \quad + \left. \text{diam}(\sHa; \bbP_N) \sqrt{\log(2/\delta)}
    \right) \\
    & = \frac{1}{\sqrt{N}} \left(   \|H\|_{L_2(\bbP_N)}  J_2(\sHa; \bbP_X) + \text{diam}(\sHa; \bbP_N) \sqrt{\log(2/\delta)} \right),
\end{align*} 
provided $\|H\|_{L_2(\bbP_N)}$. In above, 
$$
J_2(\sHa; \bbP_N ) := \int_{0}^{\text{diam}(\sHa; \bbP_N )/\|H\|_{L_2(\bbP_N)}} \sqrt{ \log N(\epsilon \|H\|_{L_2(\bbP_N)}, \sHa, L_2(\bbP_N ))} d\epsilon,  
$$
which represents Dudley's entropy integral. 
Using Chebyshev's inequality, 
$$
P_N H^2 \le 2 PH^2, 
$$
with probability at least $ 1 - (PH^4)/( N (PH^2)^2 )$. 
On this event, the empirical and population $L_2$-metrics are comparable. This implies 
$$
\sup_{h \in \sHa } |(P_N - P) h | \lesssim \frac{\|H\|_{L_2(\bbP_X)}}{\sqrt{N}} \left( \sup_Q J_2(\sHa; Q) + \sqrt{ \log \tfrac{2}{\delta}}\right), 
$$
with probability at least $ 1- \delta - O(N^{-1})$. Repeating the same arguments for the class $\sHa^2 = \{h^2: h \in \sHa \}$ yields 
$$
\sup_{h \in \sHa} |(P_N - P)h^2 | \lesssim \frac{\|H^2\|_{L_2(\bbP_X)}}{\sqrt{N}} \left( \sup_Q J_2(\sHa; Q) + \sqrt{ \log(2/\delta)}\right). 
$$
This completes the proof. 
\end{proof}

\paragraph{Consequences for VC-type classes.}
A lemma from~\cite{wellner2013weak} connects the entropy with VC dimensions:
\begin{lemma}[Theorem 2.6.7 of~\cite{wellner2013weak}]
    For a VC-class of functions $\sHa$ with finite VC dimension
    $V(\sHa)$, envelope function $H$ and $p \ge 1$, one has for any probability measure $Q$ with $\|H\|_{L_p(Q)} > 0$, 
    $$
    \mathcal{N}(\epsilon \|H\|_{L_p(Q)}, \sHa, L_p(Q) ) \le C \left(\frac{1}{\epsilon} \right)^{p \cdot V(\sHa)}, \quad 0 < \epsilon < 1. 
    $$
\end{lemma}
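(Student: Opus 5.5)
The statement is Theorem 2.6.7 of van der Vaart and Wellner, a classical covering-number bound for VC classes. I would prove it in two steps: first for sets via subgraphs, then for functions by a probabilistic reduction to an $L_1$-type bound.

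\emph{Step 1: normalize.} Fix $Q$ with $\|H\|_{L_p(Q)}>0$. Pass to the measure $R$ with $dR = H^p\,dQ/Q(H^p)$. For $f,g\in\sHa$, $|f-g|^p \le |f-g|\,(2H)^{p-1}$. Hence an $L_1$-type covering relative to $R$ yields an $L_p(Q)$ covering at radius $\epsilon\|H\|_{L_p(Q)}$, after replacing $\epsilon$ by $\epsilon^p$ up to constants. This is where the exponent $p\cdot V(\sHa)$ comes from. It therefore suffices to prove the bound $\mathcal N(\epsilon\|H\|_{L_1(Q)},\sHa,L_1(Q))\le C\epsilon^{-V}$ for every $Q$.

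\emph{Step 2: reduce to sets.} Identify each $f$ with its subgraph $C_f=\{(x,t): t< f(x)\}$. The subgraphs form a VC class of sets with index $V(\sHa)$. By Fubini, $\|f-g\|_{L_1(Q)} = (Q\times\lambda)(C_f\triangle C_g)$, where $\lambda$ is Lebesgue measure restricted to $[-H(x),H(x)]$. Normalizing by $2Q(H)$ gives a probability measure $P$. An $\epsilon\|H\|_{L_1(Q)}$-separated set of functions thus becomes an $\epsilon/2$-separated family of sets in $L_1(P)$.

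\emph{Step 3: packing bound for VC sets.} This is the main obstacle. Take $m$ sets pairwise at $P$-distance greater than $\epsilon$. Sample $n\asymp \epsilon^{-1}\log m$ points i.i.d.\ from $P$. For a fixed pair, the probability that both sets pick out the same subset is at most $(1-\epsilon)^n$. A union bound over the $\binom m2$ pairs shows that with positive probability all $m$ sets induce distinct traces on the sample. Sauer's lemma bounds the number of traces by $(en/V)^V$, so $m\le (C\epsilon^{-1}\log m)^V$, which gives $m\lesssim (\epsilon^{-1}\log\epsilon^{-1})^V$.

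Removing the logarithm to get exactly $\epsilon^{-V}$, with a universal constant, is Haussler's sharper argument. I would either cite it or accept an $\epsilon^{-V-\delta}$ exponent, which suffices for the entropy integral used afterwards. Covering numbers are bounded by packing numbers, and this completes the proof.
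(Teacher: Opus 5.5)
Your three-step outline is the standard proof: reduce $L_p$ to a weighted $L_1$, pass to subgraphs via Fubini, then bound packing numbers of VC sets by sampling and Sauer's lemma. This is how van der Vaart and Wellner derive Theorem 2.6.7 from their bound for VC classes of sets. The paper itself only cites that result. Steps 2 and 3 are correct, but Step 1 uses the wrong weight and fails as written.

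\textbf{Step 1: the change of measure.} The inequality $|f-g|^p\le|f-g|(2H)^{p-1}$ produces the weight $H^{p-1}$, not $H^p$. With your choice $dR\propto H^p\,dQ$, a relative $L_1(R)$ bound only gives $Q(|f-g|H^p)\le\eta\,Q(H^{p+1})$. That does not control $Q(|f-g|H^{p-1})$. For example, let $H=1$ except on a set of $Q$-mass $\delta=M^{-p-1/2}$ where $H=M$, and let $f=-g=1$ off that set and $f=g=0$ on it. As $M\to\infty$, $\|f-g\|_{L_p(Q)}/\|H\|_{L_p(Q)}\to2$, while $\|f-g\|_{L_1(R)}/\|H\|_{L_1(R)}\to0$.

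The fix is to take $dR=H^{p-1}\,dQ/Q(H^{p-1})$, so $R=Q$ when $p=1$. Then $\|f-g\|_{L_p(Q)}^p\le 2^{p-1}Q(H^{p-1})\|f-g\|_{L_1(R)}$ and $\|H\|_{L_1(R)}=Q(H^p)/Q(H^{p-1})$, which gives
\[
\mathcal N\bigl(\epsilon\|H\|_{L_p(Q)},\sHa,L_p(Q)\bigr)\le\mathcal N\bigl(2^{1-p}\epsilon^p\|H\|_{L_1(R)},\sHa,L_1(R)\bigr).
\]
Applying your $L_1$ bound under $R$ then finishes the reduction. If you keep the weight $H^p$, you are really covering $\{f/H:f\in\sHa\}$ in $L_1(R)$. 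You would then have to add the fact that this class is VC-subgraph of the same dimension. That fact holds via the bijection $(x,t)\mapsto(x,tH(x))$ on $\{H>0\}\times\mathbb R$, but you do not state it.

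\textbf{Step 3: the logarithm.} You read $V$ as the VC dimension, as shown by writing Sauer's bound as $(en/V)^V$. Under that reading Step 3 gives $m\lesssim(\epsilon^{-1}\log(e/\epsilon))^V$, hence only $C(\epsilon^{-p}\log(e/\epsilon))^V$ at the end. Settling for $\epsilon^{-V-\delta}$ proves a weaker statement than the lemma. The log-free exponent needs Haussler's packing theorem, $D(\epsilon,\mathcal C,L_1(P))\le e(V+1)(2e/\epsilon)^V$, so under this reading the key step is cited rather than proved.

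There is a convention point in your favour. In van der Vaart and Wellner, $V(\cdot)$ is the VC \emph{index} (dimension plus one), and their exponent is $p(V-1)$. The paper's exponent $pV$ is one unit weaker. Under the index reading your elementary argument already suffices, because $(\epsilon^{-p}\log(e/\epsilon))^{V-1}\le C_{V,p}\,\epsilon^{-pV}$ for $0<\epsilon<1$. So you can close this point by fixing the convention explicitly instead of invoking Haussler. As you note, either version is enough for the bound $J_2\lesssim\sqrt{V}$ used later in the paper.
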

Consequently, if both $\sHa$ and $\sHa^2$ are VC classes with finite VC dimensions $V(\sHa)$ and $V(\sHa^2)$, then 
$$
J_2(\sHa; Q) \lesssim \sqrt{V(\sHa)}, \quad J_2(\sHa; Q) \lesssim \sqrt{V(\sHa^2)}, 
$$
and Lemma~\ref{lem:concentration} yields the simplified rates
\begin{align*}
    & \sup_{h \in \sHa}  | (P_N - P )h(X) g^{(b)}(X) | \lesssim \frac{\sqrt{V(\sHa)} + \sqrt{\log(1/\delta)}}{\sqrt{N}}, \\
    & \sup_{h \in \sHa}  | (P_N - P )h^2 | \lesssim \frac{\sqrt{V(\sHa^2)} + \sqrt{\log(1/\delta)}}{\sqrt{N}}. 
\end{align*}
Some examples of classes satisfying the condition include: (1) Two-layer neural networks $\sHa = \{ \text{Acitvate}(w^\top X + w_0), w \in \bbR^d, w_0 \in \bbR\}$. (2) Tree-basis functions $\sHa = \{\v1\{ (-\infty, a_1] \times \cdots \times (-\infty, a_d]: a_1, \dots,a_d \in \bbR \} \}$. The tree-basis and neural network classes all have finite VC-dimensions, and $\text{span}(\sHa)$ is dense in $C(U)$ for any compact subset $U \in \bbR^d$~\citep{zhang2005boosting}. (3) Finite-dimensional linear classes: $\sHa = \{ w^\top \phi(X): \|w\|_2 \le C_w, X \in \bbR^d \} $ with $\phi: \mathcal{X} \to \bbR^d$, $\Sigma = \Expect \phi(X)\phi^\top (X)$. It guarantees  
$\|h_w\|_{L_2} \le C_w \sqrt{\lambda_{\max}(\Sigma)}$. Covering a $d$-dimensional ball of radius $R$ requires at most $(CR/\epsilon)^{d}$ points, thus $J_2 \lesssim \sqrt{d} \sqrt{\lambda_{\max}(\Sigma)}$. 

Thus the uniform deviation decays as $O_p(N^{-1/2})$. To ensure the empirical weak-learner edge approximates its population counterpart, we assume a variance floor, i.e., $P_N (h^{(b)})^2 > c_{\sHa}^2 > 0$, to make passage from empirical to population more clear.  Then we have 
\begin{lemma}\label{lem:emp-approx-pop}
    Suppose $\sHa$ and $\sHa^2$ are VC-classes with finite VC dimensions $V(\sHa)$ and $V(\sHa^2)$, and there exists a variance floor for $h^{(b)}$ along the boosting path, i.e., $\|h^{(b)}\|_{L_2(\bbP_N)} > c_{\sHa}^2 > 0$, then with probability at least $1-\delta$, if the sample size satisfies 
    $$ N = \Omega\left(\frac{V(\sHa) + \log(1/\delta)}{ \alpha^2} \right), 
    $$ 
    the empirical and population weak-learner edges are comparable:
    $$
    \frac{| \langle h^{(b)}, g^{(b)} \rangle |  }{ \|h^{(b)}\|_{L_2}  } \ge \frac{\alpha}{2}, \text{ if }  \frac{|P_N h^{(b)} g^{(b)}|}{\|h^{(b)}\|_{L_2(\bbP_N)}} \ge \alpha, 
    $$
    and 
    $$
    \frac{| \langle h^{(b)}, g^{(b)} \rangle |  }{ \|h^{(b)}\|_{L_2}  } < \frac{3 \alpha}{2}, \text{ if } \frac{|P_N h^{(b)} g^{(b)} |}{\|h^{(b)}\|_{L_2(\bbP_N)}} < \alpha.  
    $$  
\end{lemma}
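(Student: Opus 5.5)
The argument is a ratio comparison: show that both the numerator and the denominator of the empirical edge are uniformly close to their population counterparts, and that the denominator cannot be too small.

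\textbf{Uniform deviations.} We first apply Lemma~\ref{lem:concentration}, in its VC-class form, on a single high-probability event. Since $h^{(b)}$ is data-dependent, the bounds must hold uniformly over $\sHa$, which is why the lemma is stated with a supremum. On an event $E$ of probability at least $1-\delta$ we obtain
\[
\sup_{h\in\sHa}|(P_N-P)h\,g^{(b)}|\le \epsilon_N,\qquad \sup_{h\in\sHa}|(P_N-P)h^2|\le \epsilon_N,
\]
where $\epsilon_N\asymp\{\sqrt{V(\sHa)}+\sqrt{V(\sHa^2)}+\sqrt{\log(1/\delta)}\}/\sqrt N$. The assumption $V(\sHa)=O(V(\sHa^2))$ merges the two complexity terms. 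The sample-size hypothesis $N=\Omega\big((V(\sHa)+\log(1/\delta))/\alpha^2\big)$, with a large enough constant, then gives $\epsilon_N\le c_0\alpha\,c_{\sHa}^2$ for any prescribed small constant $c_0$. The target of interest is $\langle h,g^{(b)}\rangle=\Expect[h\,s(Y,f^{(b)})]$, so the numerator we control is really $P_N h\,s$. This is handled exactly as in Lemma~\ref{lem:concentration} through conditioning on $X$, so the numerator deviation is also at most $\epsilon_N$.

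\textbf{Denominator comparison.} Write $a_N=\|h\|_{L_2(\bbP_N)}$ and $a=\|h\|_{L_2}$. Then $|a_N^2-a^2|\le\epsilon_N$. The variance floor gives $a_N^2\ge c_{\sHa}^2$, so $|a_N-a|=|a_N^2-a^2|/(a_N+a)\le \epsilon_N/c_{\sHa}$. Consequently $a/a_N\in[1-\epsilon',1+\epsilon']$ with $\epsilon'=\epsilon_N/c_{\sHa}^2$ small. This step is where the floor is essential: without it the normalized edge could blow up when $h^{(b)}$ is nearly zero. The same bounds give $a\ge c_{\sHa}/2$ as well.

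\textbf{Ratio comparison.} Decompose
\[
\frac{|\langle h,g\rangle|}{a}=\frac{|P_N h s|}{a_N}\cdot\frac{a_N}{a}\pm\frac{|(P_N-P)hs|}{a}.
\]
In the first case, where the empirical edge is at least $\alpha$, this yields
\[
\frac{|\langle h,g\rangle|}{a}\ \ge\ \alpha(1+\epsilon')^{-1}-\frac{2\epsilon_N}{c_{\sHa}}\ \ge\ \frac{\alpha}{2}
\]
once $c_0$ is small. In the second case, where the empirical edge is below $\alpha$, it yields
\[
\frac{|\langle h,g\rangle|}{a}\ <\ \alpha(1-\epsilon')^{-1}+\frac{2\epsilon_N}{c_{\sHa}}\ \le\ \frac{3\alpha}{2}.
\]

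\textbf{Main obstacle.} The main technical point is making the deviation bounds uniform over the data-selected $h^{(b)}$ and over iterations. If the same validation data are reused across the $B$ iterations, $g^{(b)}$ changes with $b$, so one either takes a union bound over $b\le B$ (replacing $\log(1/\delta)$ by $\log(B/\delta)$, matching Proposition~\ref{prop:finite}) or absorbs the $f^{(b)}$-dependence into the class. For the statement as given, with a fixed iteration, the single event $E$ suffices. Heavy tails are handled by the envelope condition $H^2\in L_2$ through the Chebyshev step of Lemma~\ref{lem:concentration}.
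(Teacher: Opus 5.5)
Your argument is essentially the paper's proof: bound the numerator deviation $\varepsilon_1$ and the second-moment deviation $\varepsilon_2$ uniformly over $\sHa$ via Lemma~\ref{lem:concentration}, use the variance floor to keep $\|h^{(b)}\|_{L_2(\bbP_N)}/\|h^{(b)}\|_{L_2}$ within constant factors, and then read off the two one-sided ratio bounds. One bookkeeping slip: $V(\sHa)=O(V(\sHa^2))$ lets you absorb the $V(\sHa)$ term into the $V(\sHa^2)$ term, not the reverse, so requiring the $h^2$ deviation to be $O(\alpha)$ is not implied by $N=\Omega\big((V(\sHa)+\log(1/\delta))/\alpha^2\big)$; the fix is what the paper does, namely demand only constant precision $\varepsilon_2\le c_{\sHa}^2/4$ (costing $N\gtrsim (V(\sHa^2)+\log(1/\delta))/c_{\sHa}^2$, which the paper treats as dominated for small $\alpha$) and reserve the $\alpha$-level tolerance for $\varepsilon_1$.
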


\begin{proof}[Proof of Lemma~\ref{lem:emp-approx-pop}]
    \begin{equation}\label{eqn:emp-approx-pop}
    \begin{aligned}
         \frac{| \langle h^{(b)}, g^{(b)} \rangle |  }{ \|h^{(b)}\|_{L_2}  } & \ge  \frac{ | P_N h^{(b)} g^{(b)} | - |(P_N - P) h^{(b)} g^{(b)}| }{\|h^{(b)}\|_{L_2(\bbP_N)} } \frac{ \|h^{(b)} \|_{L_2(\bbP_N) } }{ \|h^{(b)}\|_{L_2} } \\
    & \ge \left( \frac{ | P_N h^{(b)} g^{(b)} | }{\|h^{(b)}\|_{L_2(\bbP_N)} }  - \frac{\varepsilon_1}{ c_{\sHa}} \right) \frac{ \|h^{(b)} \|_{L_2(\bbP_N) } }{ \sqrt{ \|h^{(b)}\|_{L_2(\bbP_n)} + \varepsilon_2  } }    \\
    & > \left( \frac{ | P_N h^{(b)} g^{(b)} | }{\|h^{(b)}\|_{L_2(\bbP_N)} } - \frac{\alpha}{4} \right) \cdot \frac{2}{\sqrt{5}},
    \end{aligned}
\end{equation}
provided that
\begin{align*}
& \varepsilon_1 := \sup_{h \in \sHa}  | (P_N - P )h(X) g^{(b)}(X) | \le \tfrac{1}{4}  c_{\sHa} \alpha ,  \\
& \varepsilon_2 := \sup_{h \in \sHa}  | (P_N - P )h^2 | \le \tfrac{1}{4} c_{\sHa}^2 < \tfrac{1}{4}\|h^{(b)}\|_{L_2(\bbP_N)}.   
\end{align*}
The uniform deviation $\varepsilon_1, \varepsilon_2$ vanish at rate $O_p(N^{-1/2})$, and these inequalities hold with probability at least $1-\delta$ whenever
$$
N = \Omega \left( \frac{V(\sHa^2) + \log(1/\delta)}{c_{\sHa}^2} \right), \quad N = \Omega\left(\frac{V(\sHa) + \log(1/\delta)}{ \alpha^2 c_{\sHa}^2 } \right), 
$$
the latter dominating for a small $\alpha$. Under these,  
$ |\langle h^{(b)}, g^{(b)} \rangle|/\|h^{(b)}\|_{L_2} > \alpha/2 $ if 
$ |P_N h^{(b)} g^{(b)}|/\|h^{(b)}\| > \alpha$. 

Conversely,  
\begin{align*}
           \frac{| \langle h^{(b)}, g^{(b)} \rangle |  }{ \|h^{(b)}\|_{L_2}  } & \le  \frac{ | P_N h^{(b)} g^{(b)} | + |(P_N - P) h^{(b)} g^{(b)}| }{\|h^{(b)}\|_{L_2(\bbP_N)} } \frac{ \|h^{(b)} \|_{L_2(\bbP_N) } }{ \|h^{(b)}\|_{L_2} } \\
    & \le \left( \frac{ | P_N h^{(b)} g^{(b)} | }{\|h^{(b)}\|_{L_2(\bbP_N)} }   \right) \frac{ \|h^{(b)} \|_{L_2(\bbP_N) } }{ \sqrt{ \|h^{(b)}\|_{L_2(\bbP_N)}^2 - \varepsilon_2  } }  +  \frac{\varepsilon_1}{ C_{\sHa}} \\
    & < \frac{2}{\sqrt{3}}  \frac{ | P_N h^{(b)} g^{(b)} | }{\|h^{(b)}\|_{L_2(\bbP_N)} }  + \frac{\alpha}{4},
\end{align*}
provided that 
\begin{align*}
  & \varepsilon_1 \le \tfrac{1}{4}  C_{\sHa} \alpha , \quad
     \varepsilon_2  \le \tfrac{1}{4} c_{\sHa}^2.
\end{align*}
Hence, $ |\langle h^{(b)}, g^{(b)} \rangle|/\|h^{(b)}\|_{L_2} < \tfrac{3\alpha}{2} $ if 
$ |P_N h^{(b)} g^{(b)}|/\|h^{(b)}\| < \alpha$, completing the proof.  
\end{proof}

Therefore, this result recovers the conclusions of earlier work~\cite{kim2019multiaccuracy, kim2022universal,deng2023happymap} while removing the restrictive uniform boundedness assumption. The variance floor condition is not overly stringent: we do not expect a degenerate case where $h^{(b)}(X_i) = 0, i = 1, \dots, N$, since this would imply $P_N h^{(b)} g = 0$ and by convention we treat $0/0 = 0$. The variance floor can always be ensured by normalization $\wt{h}^{(b)} = h^{(b)}/\|h^{(b)}\|_{L_2(\bbP_N)}$, or by incoporating a small stabilizing constant $c_{\sHa}$ in the denominator of the violation statistic $\wt{\Delta} = P_N g^{(b)} h/(\|h\|_{L_2(\bbP_N) } \wedge c_{\sHa} )$. In the latter case, one obtains 
$$
\frac{| \langle h^{(b)}, g^{(b)} \rangle |  }{ \|h^{(b)}\|_{L_2}  } \ge   \frac{| \langle h^{(b)}, g^{(b)} \rangle |  }{ \|h^{(b)}\|_{L_2} \wedge c_{\sHa}   } > \alpha - \frac{|(P_N - P) g^{(b)} h^{(b)}|}{c_{\sHa}}. 
$$ 
The above argument subsumes the uniformly bounded settings frequently assumed in prior analyses, where $\|h\|_{\infty} \le C_{\sHa}$; such a condition allows one to obtain bounds using classical concentration techniques like McDiarmid's inequality or Hoeffding-type arguments for simpler concentration bounds. Our analysis, by contrast, accommodates more general auditor classes. 

When data are limited, it is acceptable to evaluate the violation statistics~\eqref{eqn:violation} directly on the calibration set--an approach analogous to that in conformal inference~\citep{romano2020malice,romano2020classification,candes2023conformalized}. Furthermore, if algorithm halts after a finite number of steps $B$, and one desires joint guarantee across all iterations with probability at least $1 - \delta$, 
the per-iteration probability should be set to $1 - \delta/B$. This adjustment leads to sample size requirement 
$N = \Omega(\tfrac{V(\sHa) + \log(B/\delta)}{\alpha^2})$.

\subsection{Proof of Proposition~\ref{prop:finite} }

\begin{proof}
     Suppose the procedure either (i) finds $h^{(b)}$ with 
     $$P_N g^{(b)} h^{(b)} > \alpha\, \sqrt{ P_N (h^{(b)})^2}, $$ or (ii) certifies no such \(h\) exists (stop).
Consider case (i), with the smooth-loss step choice
$$
\eta^{(b)} = \langle s^{(b)}, h^{(b)}\rangle/(2 c_L \Expect [h^{(b)}(X)]^2 ),
$$
the descent of the excess risk can be characterized as
\begin{align*}
\Delta_b-\Delta_{b+1} =  & L(f^{(b)})-\mathcal L(f^{(b+1)}) \\
    & \ge \Expect[ (f^{(b)} - f^{(b+1)} )  s( Y, f^{(b)}) ] - c_{L} \Expect (f^{(b)} - f^{(b+1)} )^2  \\
    & = \eta^{(b)} \Expect[ h^{(b)} s(f^{(b)}, Y)] - c_{L} \Expect (f^{(b)} - f^{(b+1)} )^2  \\
    &  \stackrel{(Lem~\ref{lem:emp-approx-pop})}{>} \frac{\alpha^2}{ 16 c_L }. 
\end{align*}
Summing over iterations gives 
$$
B \cdot \frac{\alpha^2}{16c_L} < \sL(f^{(0)}) - \sL(f^{(B)}),
$$
which implies that 
$$
B \le  \tfrac{16 c_L}{\alpha^2}\,(\mathcal L(f^{(0)})-\mathcal L(f^{(\infty)})) 
$$
is sufficient for termination.  

If the loss satisfies the $\mu$-PL condition, the above inequality refines to 
$$
\Delta_b  > \frac{\alpha^2}{16 c_L} + \Delta_{b+1}.  
$$
Before halting, we must have 
$$
\frac{\alpha^2}{16 c_L} < \left(1 - \frac{\mu \kappa}{2c_L} \right)^{B} \Delta_0, 
$$
Hence, 
$$
B = O\left(\log \tfrac{\sL(f^{0}) - \sL(f^{(\infty)})}{\alpha^2} \right),
$$
which establishes a logarithmic iteration complexity and completes the proof.
\end{proof} 

\subsection{Proof of Theorem~\ref{thm:stopping} } 

\begin{proof}
The stopping rule of Algorithm~\ref{alg:MCboost} ensures 
$$
2 \alpha \stackrel{(Lem~\ref{lem:emp-approx-pop})}{>} \frac{ \langle s^{(B)}, h^{(B)}\rangle }{ \|h^{(B)} \|_{L_2}} > \kappa \| g^{(B)}\|_{*}.  
$$
Thus, for any $h \in \sF$, 
\begin{align*}
    & \sup_{h \in \sF} |\Expect h(X) s(Y, f^{(B)}(X))| = \sup_{h \in \sF} \langle h, g^{(B)} \rangle \\
    & \le \sup_{h \in \mathcal{F}} \| h \|_{L_2} \|g^{(B)}\|_{*} \\
    & \le C_{\sF}\|g^{(B)}\|_{*} <  C_{\sF} \frac{2 \alpha}{\kappa},
\end{align*}
which proves the claim.  
\end{proof}

\subsection{Proof of Theorem~\ref{thm:universal}}
\begin{proof}
In a broader view, 
define the target error
$$
\Err(\wt{f} ) = \left| \Expect_{\sD_\sT} [s(Y, \wt{f}(X))]  \right|,   
$$
For squared loss this reduces to $|  \Expect_{\sD_\sT}(\wt{f}(X) - Y) |$ for some post-processing $\wt{f}(X)$, while for pinball loss at level $\tau$ it coincides with 
$
\left| \Expect_{\sD_\sT} [\v1\{ Y \le \wt{q}_{\tau}(X)\} - \tau]  \right|, 
$
with $\wt{f} := \wt{q}_{\tau}$.

\medskip
\paragraph{(i) Universal adaptability.}  
By a standard decomposition, for any weight function $\wt w$ and any $h\in\mathcal H$,  
\begin{align*}
    & \Err_{\sT}( \wt{f} ) = | \Expect_{\sD_{\sT}} [  s(Y, \wt{f}(X)) ] | \\
    & = | \Expect_{\sD_{\sS}}[w(X)s(Y, \wt{f}(X)) ]  | \\
    & \le | \Expect_{\sD_{\sS}}[(w(X) - h(X)/ C) s(Y, \wt{f}(X))] |  + | \Expect_{\sD_{\sS}}[ \tfrac{h(X)}{C} s(Y, \wt{f}(X)) ]  |  \\ 
    & \le | \Expect_{\sD_{\sS}}[(w(X) - h(X)/ C) s_1(Y, \wt{f}(X)) ] | + | \Expect_{\sD_{\sS}}[(w(X) - h(X)/ C) s_2(Y) ] | + \frac{\alpha}{C} \\ 
    & \le | \Expect_{\sD_{\sS}}[(w(X) - h(X)/ C) s_1(Y, \wt{f}(X)) ] | + | \Expect_{\sD_{\sS}}[(\wt{w}(X) - h(X)/ C) s_2(Y) ] | \\
    & \quad +  | \Expect_{\sD_{\sS}}[(w(X) - \wt{w}(X) ) s_2(Y) ] |  + \frac{\alpha}{C} \\ 
    & \le \wt{C} \{ \|w - h/C\|_{L_2} + \|h/C - \wt{w}\|_{L_2} \} + \Err(\tau^{(ps)}(\wt{w})) + \alpha/C,
\end{align*}
where $ \Err(\tau^{(ps)}(\wt{w})) = | \Expect_{\sD_{\sS}}[(w(X) - \wt{w}(X) ) s_2(Y) ] | $ is the error of inverse propensity score reweighting. Optimizing over $h \in \sH$ yields
$$
\Err_{\sT}( \wt{f} ) \le \wt{C} \inf_{h \in \sH} \{ \|w - h/C\|_{L_2} + \|h/C - \wt{w}\|_{L_2} \}  + \Err(\tau^{(ps)}(\wt{w})) + \tfrac{\alpha}{C}. 
$$  
In particular, if $\wt{w} \in \sH$, we obtain 
$$
\Err_{\sT}( \wt{f} )  \le \Err(\tau^{(ps)}(\wt{w})) +  \|w - \wt{w}\|_{L_2} + \alpha. 
$$  
If under the stronger condition that $\|\Expect_{\sD_\sS}[s_1 | X]\|_{\infty}, \|\Expect_{\sD_\sS}[s_2 | X]\|_{\infty} \le \wt{C}$, we can use $L_1$-distance $\|\cdot\|_{L_1}$ instead of $L_2$-distance in the above formulation. 

\medskip
\paragraph{(ii) Transfer of multicalibration.} 
Suppose $\wt{f}(X)$ is $(\mathcal{H}(\Sigma) \otimes \mathcal{C}, \alpha)$-multicalibrated on the source domain, 
where 
$ \sH(\Sigma) $
is the set of shifts induced by the propensity scores $\Sigma = \{ \wt{\sigma}_{\sT}(X) := \bbP(D=\sT \mid X)  \}$, and  
$\mathcal{C}$ is a class of functions with $\|c(\cdot)\|_{L_2} \le \wt{C}$; 
then for any $c \in \mathcal{C}$ and $\wt{w} \in \mathcal{H}(\Sigma)$,  
\begin{align*}
	& | \Expect_{\sD_\sT} [ c(X) s(Y, \wt{f}(X))  ] |  = | \Expect_{\sD_\sS} [ w(X)c(X)s(Y, \wt{f}(X))   ] |  \\
	&  \le | \Expect_{\sD_\sS} [ \wt{w}(X)c(X)s(Y, \wt{f}(X))   ] | \\
	& \quad + | \Expect_{\sU_\sS} [ ( w(X) -\wt{w}(X) ) c(X)s(Y, \wt{f}(X))  ] | \\
	& \le \alpha + \wt{C}^2 \|w - \wt{w}\|_{L_2}. 
\end{align*}
Hence, $\wt{f}$ is $(\mathcal{C}, \alpha' + \wt{C}^2 \inf_{\wt{w} \in \mathcal{H}(\Sigma)}  \|w - \wt{w}\|_{L_2} )$-multicalibrated on the $\sD_\sT$ whenever it is $( \mathcal{H}(\Sigma) \otimes \mathcal{C}, \alpha )$-multicalibrated on $\sD_\sS$. 

\medskip 
\paragraph{(iii) Multiple source domains.}
Let $\sS = \{\sS_1, \dots, \sS_M\}$ with mixing weights $p_m=\bbP(D=\sS_m\mid D\in\sS)$.  
Write $w_m(X)=r_m\sigma_\sT(X)/\sigma_{\sS_m}(X)$ with $r_m=\bbP(D=\sS_m)/\bbP(D=\sT)$ and $\sigma_{\sS_m}(X)=\bbP(D=\sS_m\mid X)$.
Suppose 
\begin{align*}
        \sup_{h \in \sH} \left| \Expect_{\sD_{\sS}} h(X)s(Y, \wt{f})  \right| = \sup_{h \in \sH} \left| \sum_{m=1}^M p_m \Expect_{\sU_{\sS_m} }\left\{ h(X)  s(Y, \wt{f}(X))  \right\} \right| < \alpha,
\end{align*}  
We have 
\begin{align*}
    & \Expect_{\sD_\sT }[s(Y, \wt{f}(X)) ] = \int p(X|\sT) s(Y, \wt{f}(X)) \mu(dY\times dX) \\
    & = \sum_{m=1}^M p_m \int \frac{p(X|\sT)}{p(X|\sS_m )} \wt{g}(X) p(X|\sS_m ) \mu(dX)     \\
        & =  \sum_{m = 1}^M p_m  \Expect_{\sU_{\sS_m} } \left\{ r_m \frac{\sigma_{\sT }(X)}{\sigma_{\sS_m }(X) } \wt{g}(X) \right\} \\
        & = \sum_{m = 1}^M p_m  \Expect_{\sU_{\sS_m} } \left\{ w_m(X) \wt{g}(X) \right\}.  
\end{align*}  
Define the vector of weight functions
$$
\vw(X) = (w_1(X), w_2(X), \ldots, w_M(X)), 
$$ and by the same decomposition above, 
\begin{align*}
   & \Err_{\sT}(\wt{f} ) = \left| \Expect_{\sD_\sT} s(Y, \wt{f})\right| \\
   & =  \left| \sum_{m=1}^M p_m \Expect_{\sU_{\sS_m}} w_m \wt{g} \right| \\
   & = \inf_{h \in \sH} \left\{  \left| \sum_{m=1}^M p_m \Expect_{\sU_{\sS_m}}(w_m(X) - \frac{h(X)}{C_m} ) \wt{g}  \right| + \left| \sum_{m=1}^M p_m \Expect_{\sU_{\sS_m}} \frac{h(X)}{C_m} \wt{g}  \right|  \right\} \\
   & \le \inf_{h \in \sH} \left\{ \left| \sum_{m=1}^M p_m \Expect_{\sU_{\sS_m}}(w_m(X) - \frac{h(X)}{C_m} ) \wt{g}_1  \right| + \left| \sum_{m=1}^M p_m \Expect_{\sU_{\sS_m}}(w_m(X) - \wt{w}_m(X) + \wt{w}_m(X) - \frac{h(X)}{C_m} ) g_2  \right|  \right\} \\
   & \quad + (\max_m \frac{1}{C_m}) \alpha  \\ 
   &  \le \sum_{m=1}^M p_m \wt{C} \inf_{h \in \sH} \left\{ \| w_m - h/C_m \|_{L_2} + \| h/C_m - \wt{w}_m \|_{L_2} \right\} + (\max_m \tfrac{1}{C_m} ) \alpha + \Err(\tau^{ps}( \wt{\vw} )).  
\end{align*}
Here $\wt{g}_1 = \Expect[s_1(Y, \wt{f}) | X]$ and $g_2 = \Expect[s_2(Y)|X]$. If in addition each shift $w_m$ can be represented as $h_m(X)/\mathbb E_{\sU_{\sS_m}}h_m(X)$ with $h_m\in\mathcal H$, then choosing $C_m=\mathbb E_{\sU_{\sS_m}}h_m(X)$ and $\wt w_m=w_m$ yields
$$
\Err_{\sT}(\wt{f} ) \le \frac{\alpha}{\min_m \Expect_{\sU_{\sS_m}} h_m(X)}. 
$$ 
\end{proof}

\section{Special cases of multicalibration boosting}\label{apxsec:instances}

\subsection{BatchGCP}
Algorithm 1 of~\cite{jung2022batch}, referred to as BatchGCP, aims to enforce group-wise coverage without calibration. 
\begin{align*}
    h^{(0)}(X) & = \mathcal{A}(\Xi; (f^{(0)}(X_i), Y_i): X_i \in \Xi) \\
    & \approx \argmin_{h \in \sHa} \Expect [L(Y, f^{(0)}(X) - h(X) )],   
\end{align*}
where there is a single global bucket $\Xi$ and
$
\sHa = \left\{ \sum_{G \in \mathcal{G}} \beta_G \v1\{X \in G\} : G \in \mathcal{G} \right\}.
$ 
The updated predictor is then $\wt{f}(X) = f^{(0)}(X) -  h^{(0)}(X)$, which corresponds to a one-shot boosting refinement. 

\subsection{BatchMVP}
We next show that Algorithm 2 of~\citep{jung2022batch}, MultiMVP, is a concrete instantiation of the generalized multicalibration boosting framework (Algorithm~\ref{alg:MCboost-general}). Let $f_{\tau}^{(0)}(X)$ be a pre-trined conditional quantile predictor at level $\tau$. Each prediction is first discretized onto an equally spaced grid, 
$$
q_{\tau}^{(0)}(X) = \argmin_{v \in [1/L]} |v - f_{\tau}^{(0)}(X)|, \quad [\tfrac{1}{L}] = \left \{0, \tfrac{1}{L}, \tfrac{2}{L}, \ldots, 1 \right\}, 
$$
producing a snapped quantile estimate taking one of $L+1$ discrete values. Define covariate-prediction buckets
$$
S_{G, l} = \{X \in G, q_{\tau}^{(b)}(X) = v_l \}, \quad  G \in \mathcal{G}, v_l = \frac{l}{L}, l = 0, 1, \ldots, L.
$$
At iteration $b$, the auditing step learns piecewise-constant function within 
$$
\sHa = \{\beta_{G, l} \v1\{X \in G\}\v1\{q_{\tau}^{(b)} = v_l \} : G \in \mathcal{G}, v_l = [1/L] \}, 
$$
which correspond to group-wise constant corrections over bucket $S_{G,l}$. That is, for each pair $(G, v_l)$, the auditor fits 
$$
\wh{\beta}_{G, l} \v1\{X \in G\}\v1\{q_{\tau}^{(b)} = v_l \} \leftarrow \mathcal{A}(\Xi; \{(Y_i, q_{\tau}^{(b)}(X_i)): X_i \in G, q_{\tau}^{(b)}(X_i) = v_l \}), 
$$
which amounts to regressing the subgradient 
$$
\v1\{Y_i \le q_{\tau}^{(b)} \} -\tau, \quad X_i \in G, q_{\tau}^{(b)}(X_i) = v_l. 
$$
The constant fit for each bucket therefore equals 
$$
\wh{\beta}_{G, l} = \frac{ \sum_{S_{G,l}} \v1\{Y_i \le v_l\} }{ \# S_{G, l} } - \tau, \quad \text{ since } q_{\tau}^{(b)}(X_i) = v_l.
$$
As in~\cite{jung2022batch}, the same data are used for both calibration and validation $(\Xi = V)$, a standard practice in boosting. We do not make a distinction here. 

The evaluation step then chooses 
$$
h^{(b)}(X) = \wh{\beta}_{G^*, v_l^*} \v1\{X \in G^* \} \v1\{q_{\tau}^{(b)}(X) = v_{l^*}\}, 
$$
where $(G^*, l^*)$ is the most violated bucket:
\begin{align*}
    S_{G^*, l^*} & = \argmax_{G, l} \frac{\# S_{G, l}}{ \# V } \left( \frac{\sum_{S_{G, l}} \v1\{Y_i \le v_l \}}{\# S_{G,l}}   - \tau \right)^2  \\
    & \approx \argmax_{G, l}  \bbP(S_{G, l}) \left(\tau - \bbP(Y \le v_{l} | q_{\tau}^{(b)}(X) = v_l, X \in G) \right)^2. 
\end{align*} 
The selection rule exactly matches that of Algorithm 2 in~\cite{jung2022batch}. 

In their implementation,~\cite{jung2022batch} update the predictor by adding a small shift $\delta^{(b)}$ that re-aligns the empirical coverage:
$$
\delta^{(b)} = \argmin_{\delta \in [1/L]} \left| \frac{ \sum_{S_{G^*, l^*}} \v1 \{Y_i \le q_{\tau}^{(b)}(X_i) + \delta \}}{\# S_{G^*,l^*}}  - \tau \right|.
$$
The quantile predictor is then updated as 
$$
q_{\tau}^{(b+1)}(X) = q_{\tau}^{(b)}(X) + \delta^{(b)}\v1\{X \in S_{G^*, l^*}\}. 
$$
This is equivalent to the boosting-style update 
$$
q_{\tau}^{(b+1)} = q_{\tau}^{(b)}(X) - \eta^{(b)} \left( \tfrac{ \sum_{S_{G^*, l^*}} \v1 \{Y_i \le q_{\tau}^{(b)}(X_i)  \}}{\# S_{G^*,l^*}} - \tau \right) \v1\{X \in S_{G^*, l^*}\},  
$$
with step size $\eta^{(b)}$ chosen so that $\eta^{(b)} \wh{\beta}_{G^*, l^*} = \delta^{(b)}$. 

Under this formulation, the learned predictor belongs to $\sF$, which is the closure of class 
$$
\left\{ \sum_{j=0}^k \eta_j \beta_{G_j, v_{l_j}} \v1\{X \in G_j, q_{\tau}^{(b)}(X) \in v_{l_j} \} : G_j \in \mathcal{G}, v_{l_j} \in [\tfrac{1}{L}], k \in \mathbb{N} \right\}. 
$$
To guarantee finite-sample multicalibration for a predictor obtained after finitely many updates, one may restrict attention to functions in $\sF$ with bounded $L_2$ norm. 

\section{Miscellaneuous}

\subsection{\texorpdfstring{General convex but non-$c_L$-smooth loss}{General convex but non-cL-smooth loss}}\label{apxsubsec:nonsmooth-convex}
When $\mathcal{L}$ is convex but not $c_L$-smooth, the faster $O(B^{-1})$ and geometric rates of Theorem~\ref{thm:smooth-rate} and Theorem~\ref{thm:PL-linear} are generally unavailable. In that case, we only guarantee the rate $O(B^{-1/2})$. 

\begin{theorem}[Convex, non-smooth loss]
\label{thm:nonsmooth-rate}
Assume Conditions~\ref{ass:l2}-\ref{ass:finite-radius}, and let $O$ be closed and convex. Suppose the MCBoost iterates satisfy
$ f^{(b+1)} = \mathcal P_O\bigl(f^{(b)}-\eta^{(b)} h^{(b)}\bigr),
$ 
and for each $b$, 
$ \|h^{(b)} - g^{(b)}\|_* \le \varepsilon^{(b)}. $
Let $\Delta_b := \mathcal L(f^{(b)}) - \mathcal L(f^{(\infty)})$. Then, for any $B \ge 1$,
\begin{equation}\label{eqn:convex-nonsmooth}
2 \sum_{b=0}^{B-1} \eta^{(b)} \Delta_b
\le
R^2 + C_{\sHa}^2 \sum_{b=0}^{B-1} \eta^{(b)2} + 2R \sum_{b=0}^{B-1} \eta^{(b)} \varepsilon^{(b)}.
\end{equation}
In particular, if $\eta^{(b)} \equiv \eta$, then
$$
\min_{0 \le b < B} \Delta_b
\le
\frac{1}{B}\sum_{b=0}^{B-1}\Delta_b
\le
\frac{R^2}{2\eta B} + \frac{\eta C_{\sHa}^2}{2} + \frac{R}{B}\sum_{b=0}^{B-1}\varepsilon^{(b)}.
$$
Choosing $\eta = R/(C_{\sHa}\sqrt{B})$ gives
$$
\min_{0 \le b < B} \Delta_b
\le
\frac{1}{B}\sum_{b=0}^{B-1}\Delta_b
\le
\frac{RC_{\sHa}}{\sqrt{B}} + \frac{R}{B}\sum_{b=0}^{B-1}\varepsilon^{(b)}.
$$
Moreover, for the averaged iterate $\bar f_B = B^{-1}\sum_{b=0}^{B-1} f^{(b)}$,
$$
\mathcal L(\bar f_B) - \mathcal L(f^{(\infty)})
\le
\frac{RC_{\sHa}}{\sqrt{B}} + \frac{R}{B}\sum_{b=0}^{B-1}\varepsilon^{(b)} = O(1/\sqrt{B}),
$$
provided that $\sum_{b=0}^{B-1}\varepsilon^{(b)} = o(\sqrt{B})$. 
\end{theorem}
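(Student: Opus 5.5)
We follow the standard projected (sub)gradient analysis, with the population subgradient replaced by the audit direction $h^{(b)}$ and the mismatch charged to $\varepsilon^{(b)}$. Write $f^\star := f^{(\infty)}$ for the comparator and assume $f^\star\in O$; this holds by Condition~\ref{ass:projection} in the projected case and trivially when no projection is used. Nonexpansiveness of the $L_2$ projection onto the closed convex set $O$ gives
\[
\|f^{(b+1)}-f^\star\|_{L_2}^2 \le \|f^{(b)}-\eta^{(b)}h^{(b)}-f^\star\|_{L_2}^2 = \|f^{(b)}-f^\star\|_{L_2}^2 - 2\eta^{(b)}\langle h^{(b)}, f^{(b)}-f^\star\rangle + \eta^{(b)2}\|h^{(b)}\|_{L_2}^2 .
\]
By Condition~\ref{ass:l2}, the last term is at most $\eta^{(b)2}C_{\sHa}^2$.

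We split the cross term as $\langle h^{(b)},f^{(b)}-f^\star\rangle = \langle g^{(b)},f^{(b)}-f^\star\rangle + \langle h^{(b)}-g^{(b)},f^{(b)}-f^\star\rangle$. For the first piece, $g^{(b)}=\Expect[s(Y,f^{(b)})\mid X]$ is a (conditional) subgradient of $\sL$ at $f^{(b)}$, in the generalized sense of Appendix~\ref{sec:general-Bregman}. Convexity therefore gives $\langle g^{(b)},f^{(b)}-f^\star\rangle\ge \sL(f^{(b)})-\sL(f^\star)=\Delta_b$.

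For the second piece we need $f^{(b)}-f^\star\in\sF$. This holds because every iterate lies in $f^{(0)}+\sF$, as in Theorem~\ref{thm:stationarity}. If the projection is active, we rely on Condition~\ref{ass:projection} so that the difference stays in $\sF$ up to the inactive regime. Granting this, the definition of the restricted dual norm and Condition~\ref{ass:finite-radius} give $|\langle h^{(b)}-g^{(b)},f^{(b)}-f^\star\rangle|\le \|h^{(b)}-g^{(b)}\|_*\,\|f^{(b)}-f^\star\|_{L_2}\le \varepsilon^{(b)}R$.

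Combining these bounds yields the one-step inequality
\[
2\eta^{(b)}\Delta_b \le \|f^{(b)}-f^\star\|_{L_2}^2-\|f^{(b+1)}-f^\star\|_{L_2}^2+\eta^{(b)2}C_{\sHa}^2+2R\eta^{(b)}\varepsilon^{(b)}.
\]
Summing over $b=0,\dots,B-1$, the distance terms telescope. We drop $-\|f^{(B)}-f^\star\|_{L_2}^2\le 0$ and bound $\|f^{(0)}-f^\star\|_{L_2}^2\le R^2$, which gives \eqref{eqn:convex-nonsmooth}.

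The remaining claims are arithmetic. With constant $\eta$, divide by $2\eta B$, and use that the minimum is at most the average, which gives the second display. Substituting $\eta=R/(C_{\sHa}\sqrt B)$ makes $R^2/(2\eta B)+\eta C_{\sHa}^2/2 = RC_{\sHa}/\sqrt B$. For the averaged iterate, Jensen's inequality (convexity of $\sL$) gives $\sL(\bar f_B)-\sL(f^\star)\le B^{-1}\sum_b\Delta_b$. Finally, the condition $\sum_b\varepsilon^{(b)}=o(\sqrt B)$ makes the error term $o(B^{-1/2})$, so the total is $O(B^{-1/2})$.

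The main obstacle is the cross term. The dual norm $\|\cdot\|_*$ only controls inner products against elements of $\sF$, so we must justify that $f^{(b)}-f^\star\in\sF$ even when the projection $\mathcal P_O$ acts. First, I would note that with an inactive projection (Condition~\ref{ass:projection}) the difference lies in $\sF$ for $b>B'$. The finitely many earlier steps can be absorbed by restarting the telescoping at $B'$, with $R$ bounding $\|f^{(B')}-f^\star\|_{L_2}$. A secondary subtlety is that $\Delta_b$ may be negative when $f^{(b)}$ leaves $f^{(0)}+\sF$; this only strengthens the inequality and does not affect the argument.
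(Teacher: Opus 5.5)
Your proposal is correct and follows the paper's proof essentially step for step: nonexpansiveness of $\mathcal{P}_O$ and expansion of the square, the convexity bound $\Delta_b \le \langle g^{(b)}, f^{(b)}-f^{(\infty)}\rangle$, charging $\langle g^{(b)}-h^{(b)}, f^{(b)}-f^{(\infty)}\rangle$ to $R\varepsilon^{(b)}$, telescoping with $\|f^{(0)}-f^{(\infty)}\|_{L_2}\le R$, and Jensen for the averaged iterate. Your checks that $f^{(\infty)}\in O$ and that $f^{(b)}-f^{(\infty)}\in\mathcal{F}$ (needed for the dual-norm bound) cover points the paper passes over silently; note, however, that restarting the telescoping at $B'$ proves the inequality only with sums starting at $b=B'$, so the stated version from $b=0$ in the projected case still needs the cross-term bound $|\langle g^{(b)}-h^{(b)}, f^{(b)}-f^{(\infty)}\rangle|\le R\varepsilon^{(b)}$ at the early steps, which neither you nor the paper justifies.
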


\begin{proof}
Since $\mathcal P_O$ is nonexpansive,
\[
\|f^{(b+1)}-f^{(\infty)}\|_{L_2}^2
\le
\|f^{(b)}-\eta^{(b)} h^{(b)}-f^{(\infty)}\|_{L_2}^2.
\]
Expanding the square yields
\[
\|f^{(b+1)}-f^{(\infty)}\|_{L_2}^2
\le
\|f^{(b)}-f^{(\infty)}\|_{L_2}^2
-2\eta^{(b)}\langle h^{(b)}, f^{(b)}-f^{(\infty)}\rangle
+\eta^{(b)2}\|h^{(b)}\|_{L_2}^2.
\]
By convexity of $\mathcal L$ and $g^{(b)}\in\partial \mathcal L(f^{(b)})$,
\[
\Delta_b \le \langle g^{(b)}, f^{(b)}-f^{(\infty)}\rangle.
\]
Therefore,
\begin{align*}
2\eta^{(b)}\Delta_b
&\le 2\eta^{(b)}\langle h^{(b)}, f^{(b)}-f^{(\infty)}\rangle
+2\eta^{(b)}\langle g^{(b)}-h^{(b)}, f^{(b)}-f^{(\infty)}\rangle \\
&\le \|f^{(b)}-f^{(\infty)}\|_{L_2}^2 - \|f^{(b+1)}-f^{(\infty)}\|_{L_2}^2
+\eta^{(b)2}\|h^{(b)}\|_{L_2}^2 + 2\eta^{(b)} R\varepsilon^{(b)} \\
&\le \|f^{(b)}-f^{(\infty)}\|_{L_2}^2 - \|f^{(b+1)}-f^{(\infty)}\|_{L_2}^2
+\eta^{(b)2} C_{\sHa}^2 + 2\eta^{(b)} R\varepsilon^{(b)},
\end{align*}
where we used $\|f^{(b)}-f^{(\infty)}\|_{L_2}\le R$ and $\|h^{(b)}\|_{L_2}\le C_{\sHa}$. Summing over $b=0,\dots,B-1$ gives~\eqref{eqn:convex-nonsmooth}. The bound for constant step size follows immediately. Finally, by convexity of $\mathcal L$,
\[
\mathcal L(\bar f_B) - \mathcal L(f^{(\infty)})
\le
\frac{1}{B}\sum_{b=0}^{B-1}\Delta_b,
\]
which gives the stated bound for the averaged iterate.
\end{proof}

\begin{rem}[Unknown horizon and diminishing step size]
\label{rem:nonsmooth-stepsize}
If the horizon $B$ is unknown in advance, one may use a diminishing step size $\eta^{(b)} \asymp (b+1)^{-1/2}$. Then
$$
\sum_{b=0}^{B-1} \eta^{(b)} \asymp \sqrt{B},
\qquad
\sum_{b=0}^{B-1} \eta^{(b)2} \asymp \log B,
$$
and~\eqref{eqn:convex-nonsmooth} yields
$$
\frac{\sum_{b=0}^{B-1}\eta^{(b)}\Delta_b}{\sum_{b=0}^{B-1}\eta^{(b)}}
\lesssim
\frac{\log B}{\sqrt{B}}
+
\frac{R\sum_{b=0}^{B-1}\eta^{(b)}\varepsilon^{(b)}}{\sum_{b=0}^{B-1}\eta^{(b)}}.
$$
Hence, if $\sum_{b=0}^{B-1}\eta^{(b)}\varepsilon^{(b)} = o(\log B)$, the weighted average iterate
$$
\bar f_B = \frac{\sum_{b=0}^{B-1} \eta^{(b)} f^{(b)}}{\sum_{b=0}^{B-1}\eta^{(b)}}
$$
achieves
$$
\mathcal L(\bar f_B) - \mathcal L(f^{(\infty)}) = O\!\left(\frac{\log B}{\sqrt{B}}\right).
$$
\end{rem}
Thus, in the non-smooth convex setting, one needs $B = O(\alpha^{-2})$ iterations to reach excess risk of order $\alpha$, up to the auditing approximation term. If this population bound is combined with the same VC-type concentration argument used in Proposition~\ref{prop:finite}, the natural sample-size scale remains
$$
N = \Omega\!\left(\frac{\log(B/\delta)+V(\sHa)}{\alpha^2}\right).
$$

\paragraph{Classification losses.}
In classification, logistic and exponential losses are common examples. It is useful to record their score functions explicitly. 
\begin{itemize}
    \item The logistic loss:
    \begin{align*}
        & L(Y, f) = \log(1 + \exp(-fY)),  \text{ for } Y \in \{-1, 1\}, \\
        & L(Y, f) = \log(1 + \exp(-f(2Y - 1))),  \text{ for } Y \in \{0, 1\}.
    \end{align*}
    The corresponding score functions are
    \begin{align*}
        s(Y, f(X)) &= \frac{-Y}{1 + \exp(f(X) \cdot Y)},
         \,\, Y \in \{-1, 1\}, \\
        s(Y, f(X)) &= (1 - 2Y) \cdot \frac{1}{1 + \exp(f(X)(2Y-1))}, \,\, Y \in \{0, 1\}.
    \end{align*}
    \item Exponential loss: 
    \begin{align*}
        & L(Y, f) = \exp(-f Y),  \text{ for } Y \in \{-1, 1\},  \\ 
        & L(Y, f) = \exp(-f(2Y-1)),   \text{ for } Y \in \{0, 1\}.
    \end{align*}
    The corresponding score functions are
    \begin{align*}
        s(Y, f(X)) &= -Y\exp(-f(X) \cdot Y), \,\,
         Y \in \{-1, 1\}, \\[4pt]
        s(Y, f(X)) &= (1 - 2Y) \cdot \exp(-f(X)(2Y-1)),
         \,\, Y \in \{0, 1\}.
    \end{align*}
\end{itemize}

\section{Additional numerical results}

\subsection{Disregarding the group splitting and group features}

\begin{figure}[ht!]
    \begin{subfigure}[b]{\textwidth}
        \includegraphics[width=\textwidth]{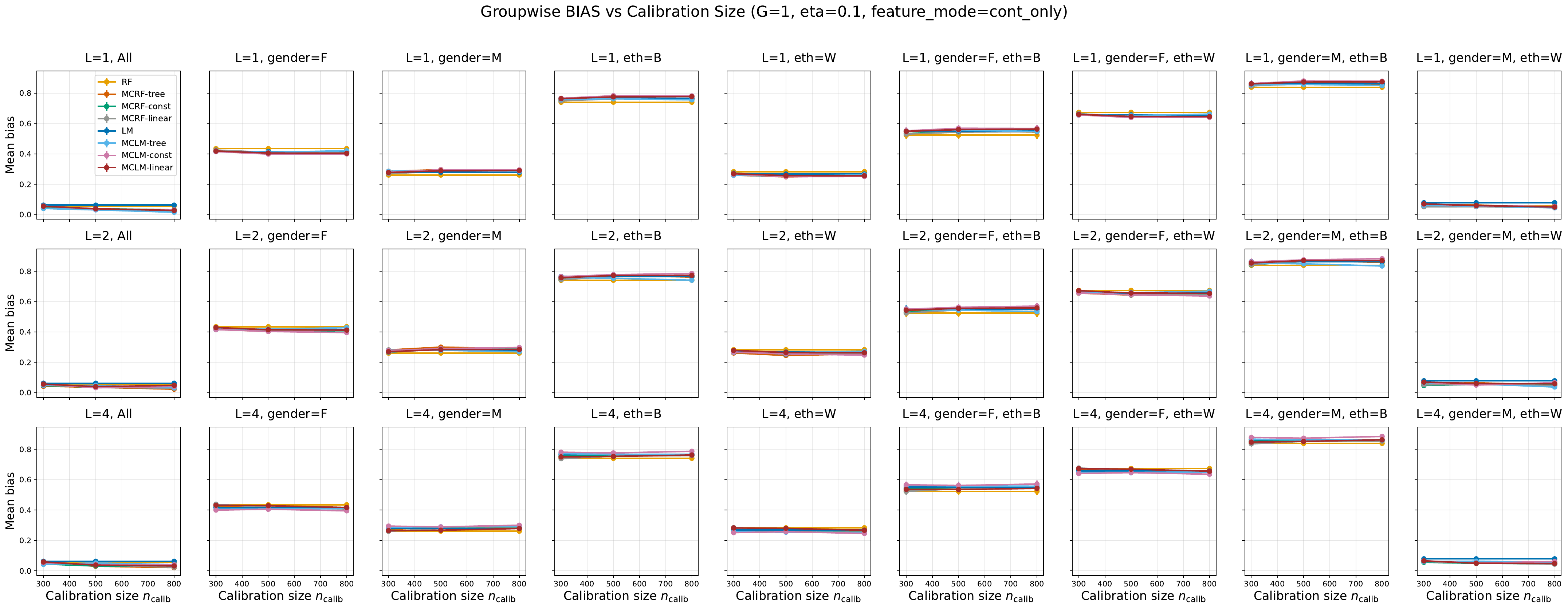}
    \end{subfigure}
    \begin{subfigure}[b]{\textwidth}
        \includegraphics[width=\textwidth]{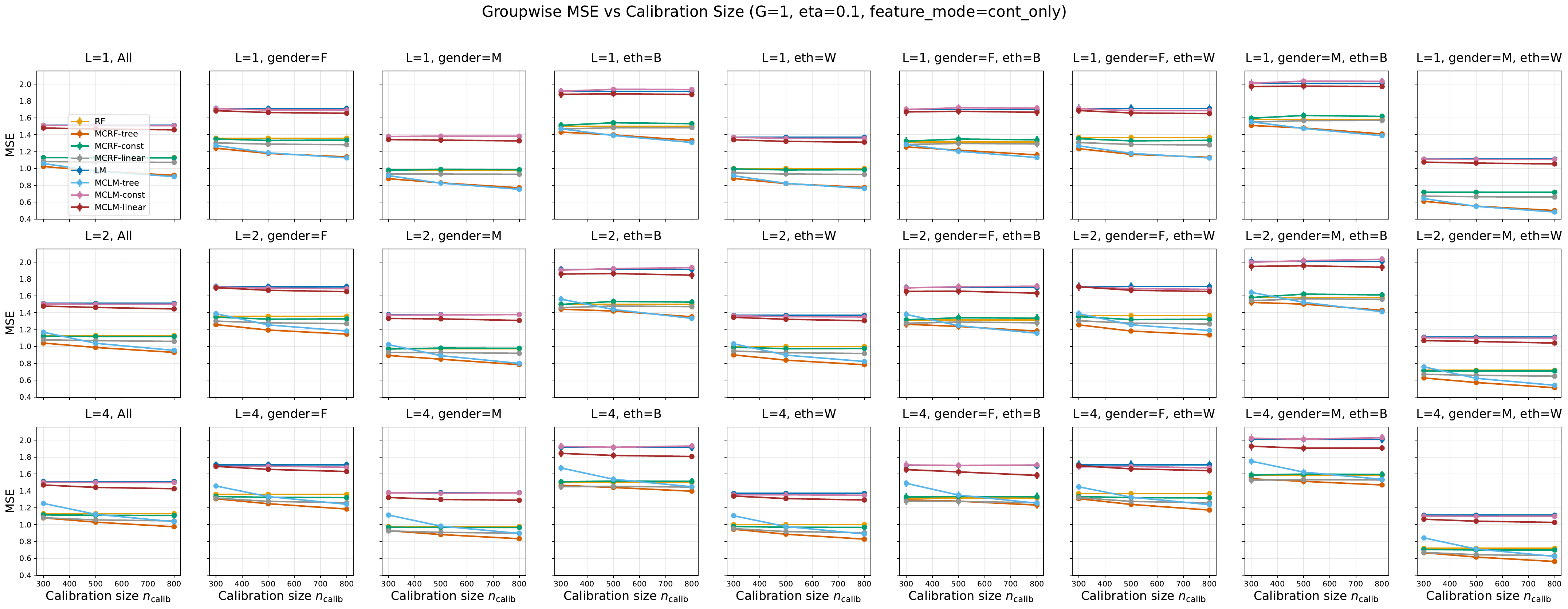}
    \end{subfigure}
    \caption{ 
     Bucket-based calibration without partition ($|\mathcal{G}| = 1$) and categorical features excluded from model fitting.   
     Top:  Mean groupwise biases across subpopulations  
    vs. calibration size for different initial predictors (linear model and random forest) and auditors (decision tree, constant, and linear).
    Bottom: Mean squared error across subpopulations  
    vs. calibration size for different initial predictors (linear model and random forest) and auditors (decision tree, constant, and linear).
    }
    \label{fig:bias_mse_vs_calib_nogroup}
\end{figure}
\FloatBarrier 

\subsection{Excess risk and interaction between initial predictors and auditors}
\begin{figure}[ht!]
    \centering
    \begin{subfigure}[b]{0.48\textwidth}
    \includegraphics[width=\linewidth]{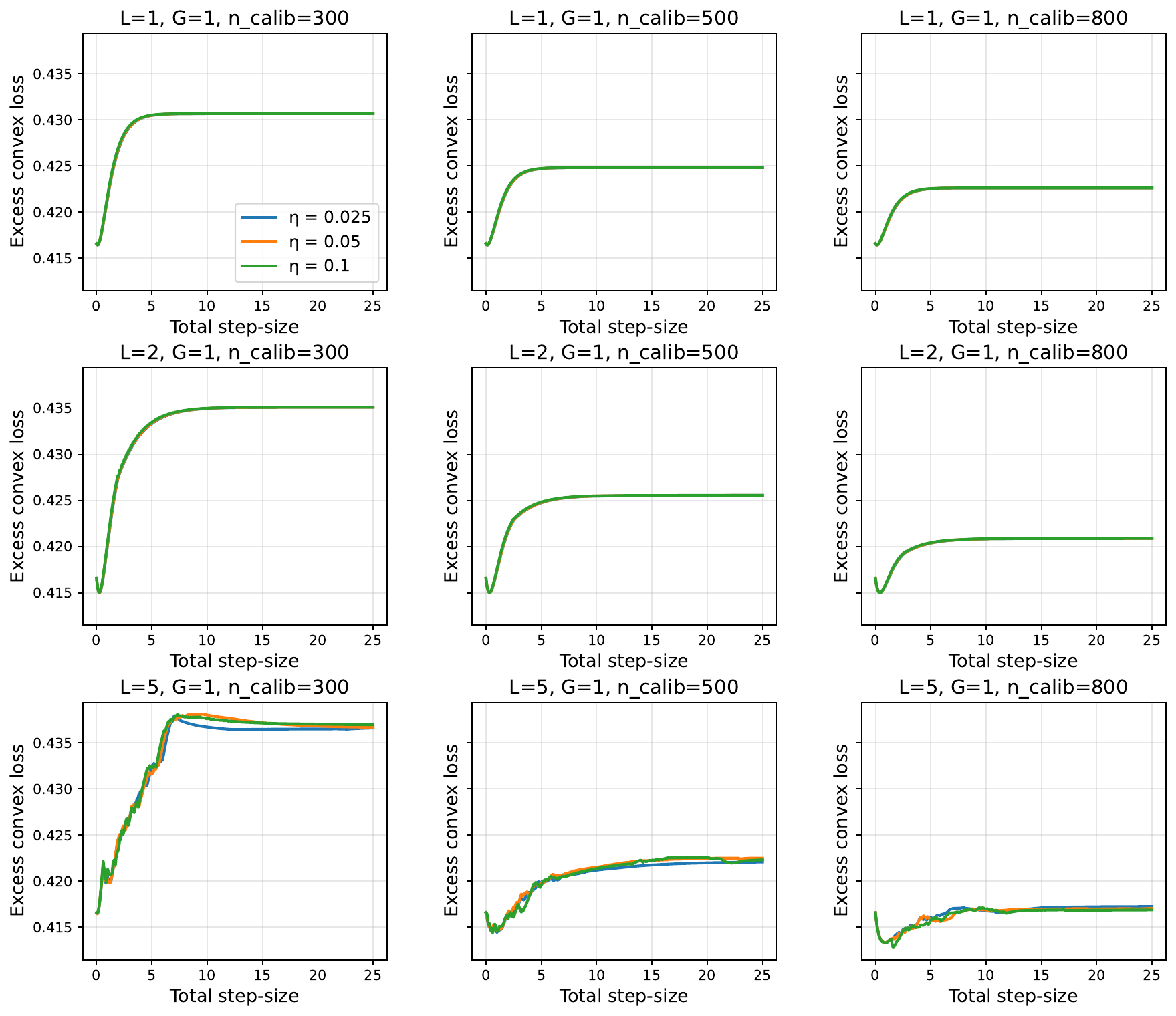}
    \subcaption*{(a)}
    \end{subfigure}
    \begin{subfigure}[b]{0.48\textwidth}
    \includegraphics[width=\linewidth]{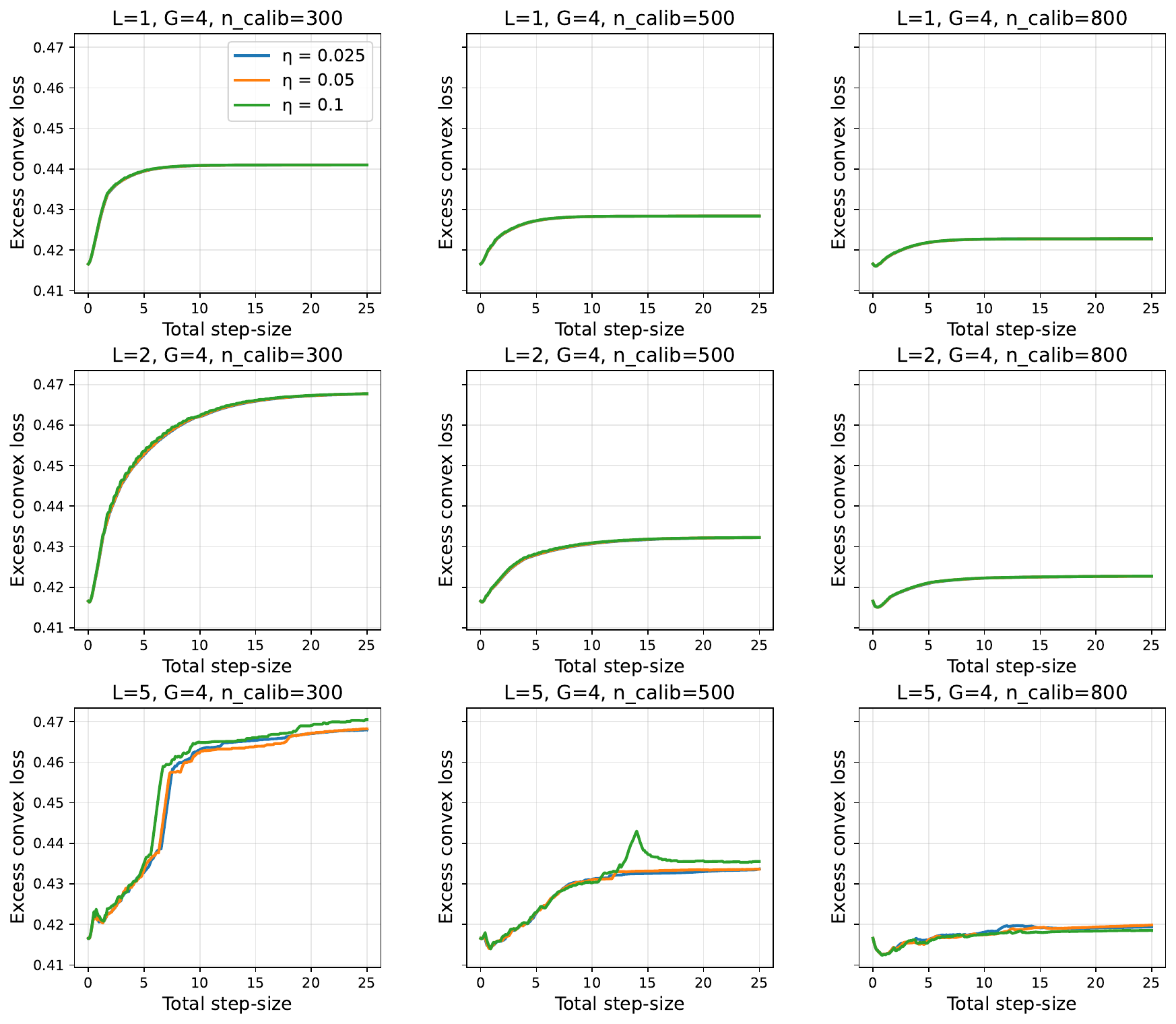}
    \subcaption*{(b)}
    \end{subfigure} 
    \begin{subfigure}[b]{0.48\textwidth}
   \includegraphics[width=\linewidth]{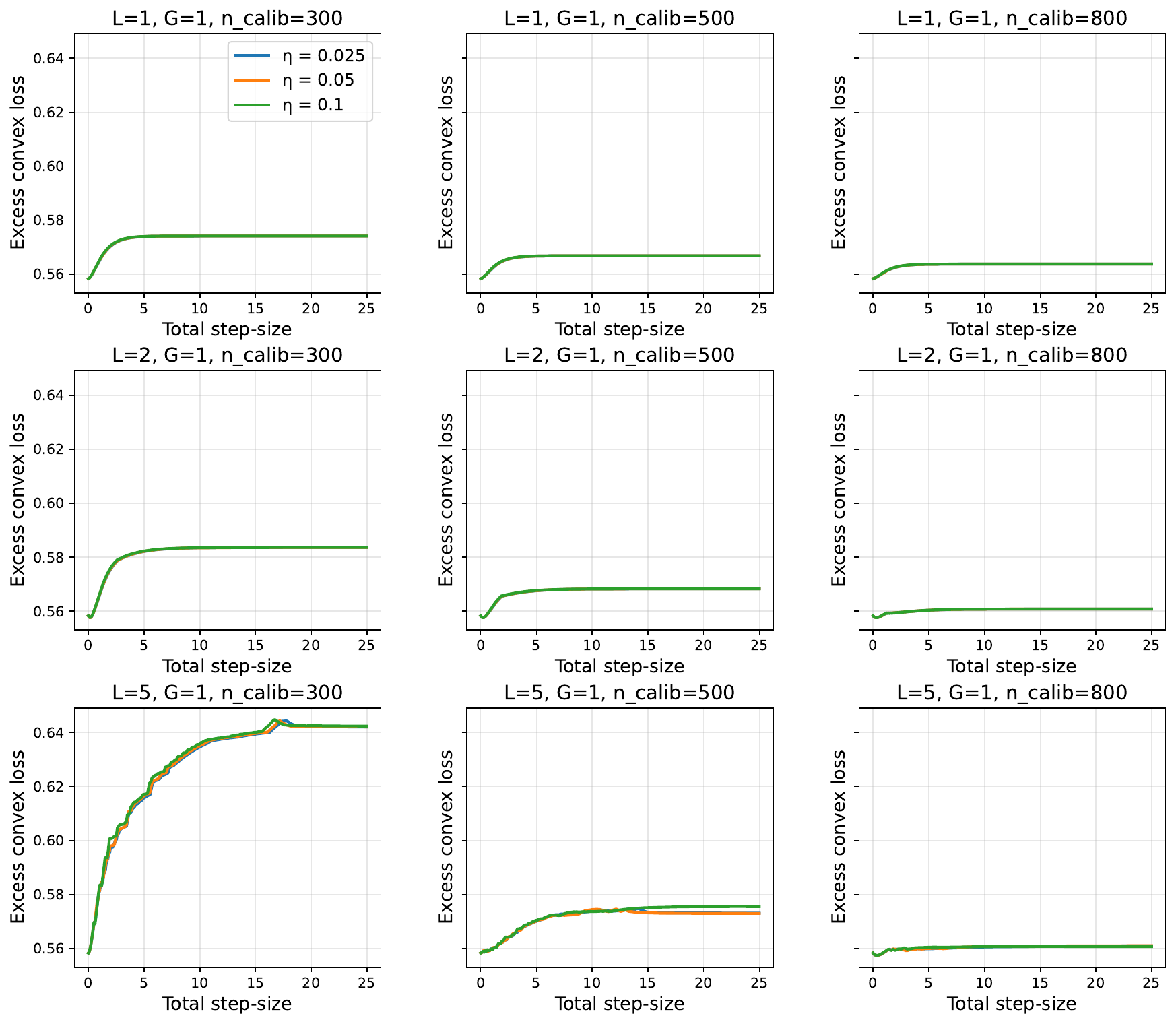}
    \subcaption*{(c)}
    \end{subfigure}
    \begin{subfigure}[b]{0.48\textwidth}
    \includegraphics[width=\linewidth]{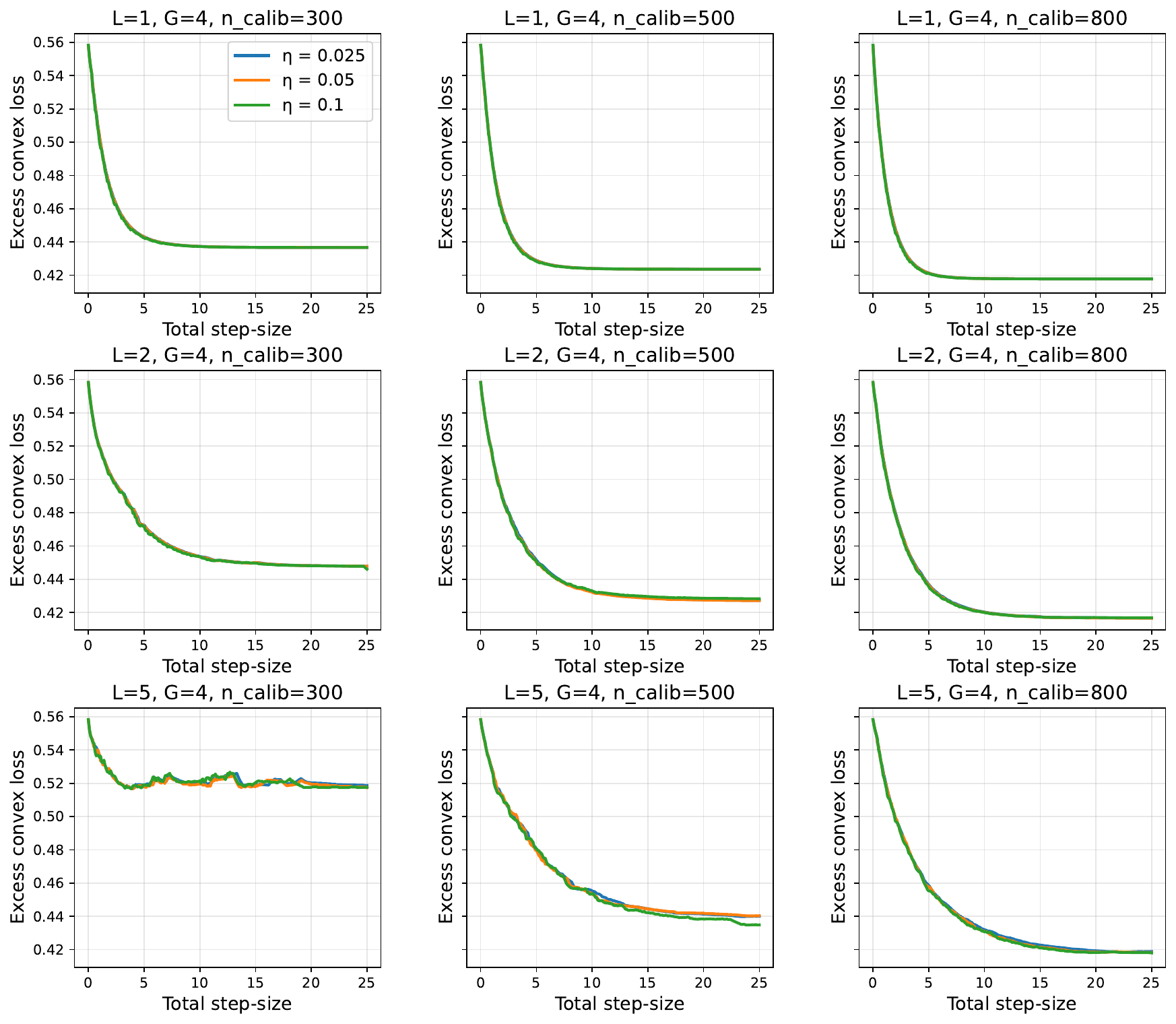}
    \subcaption*{(d)}
    \end{subfigure}
    
    \caption{ Excessive convex loss vs. total step-size. Initial predictor: linear model.  Auditor: constant. Panels correspond to different numbers of groups, buckets, and sample sizes.   
    (a)-(b): $| \mathcal{G}| = 1$ and $ |\mathcal{G}| = 4$: with all covariates. 
    (c)-(d): $|\mathcal{G}| = 1$ and $|\mathcal{G}| = 4$: excluding categorical covariates. 
    }
    \label{fig:excess_vs_stepsize-linear-constant}
\end{figure}

\begin{figure}[ht!]
    \centering
    \begin{subfigure}[b]{0.48\textwidth}
    \includegraphics[width=\linewidth]{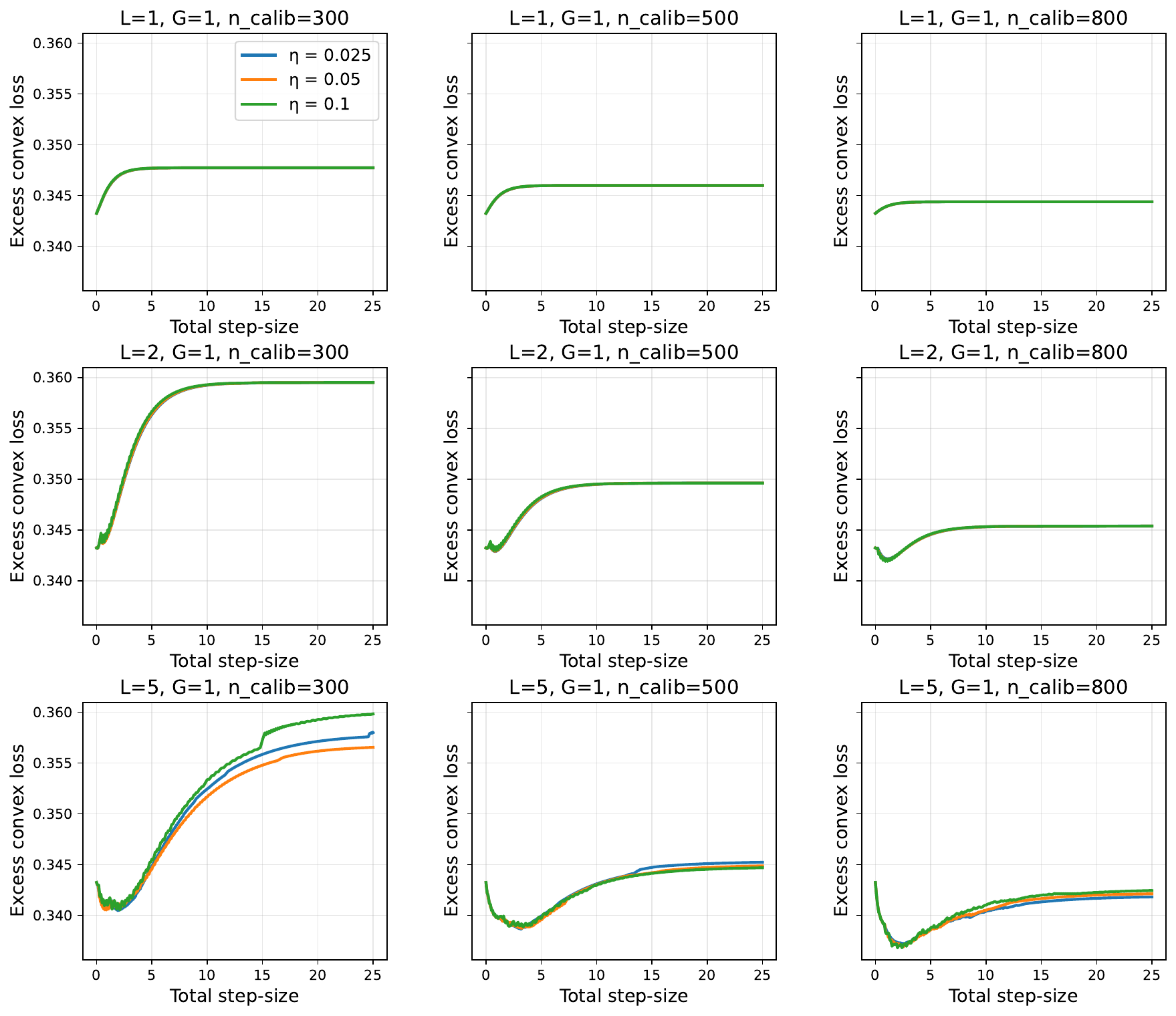}
    \subcaption*{(a)}
    \end{subfigure}
    \begin{subfigure}[b]{0.48\textwidth}
    \includegraphics[width=\linewidth]{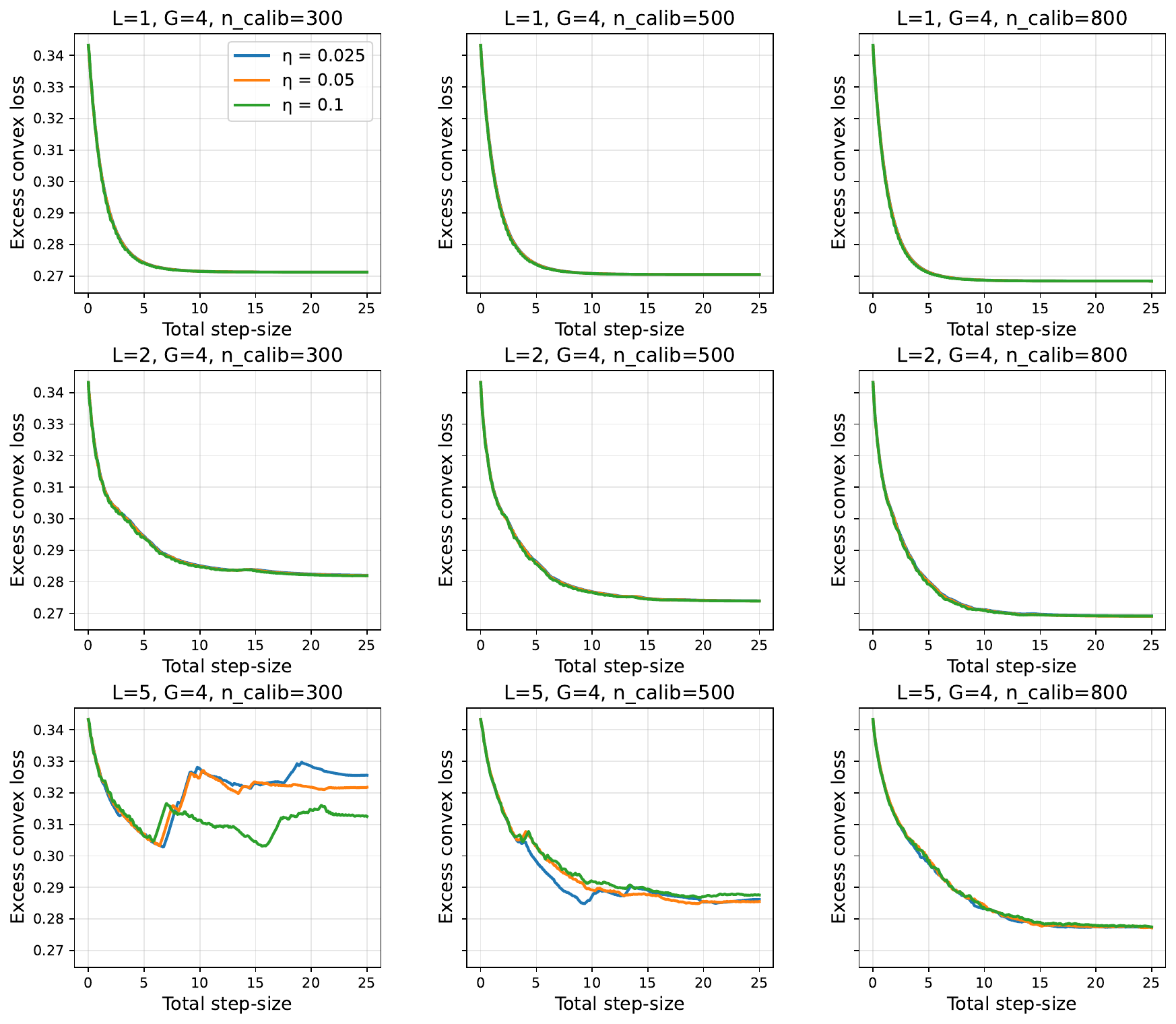}
    \subcaption*{(b)}
    \end{subfigure} 
    \begin{subfigure}[b]{0.48\textwidth}
   \includegraphics[width=\linewidth]{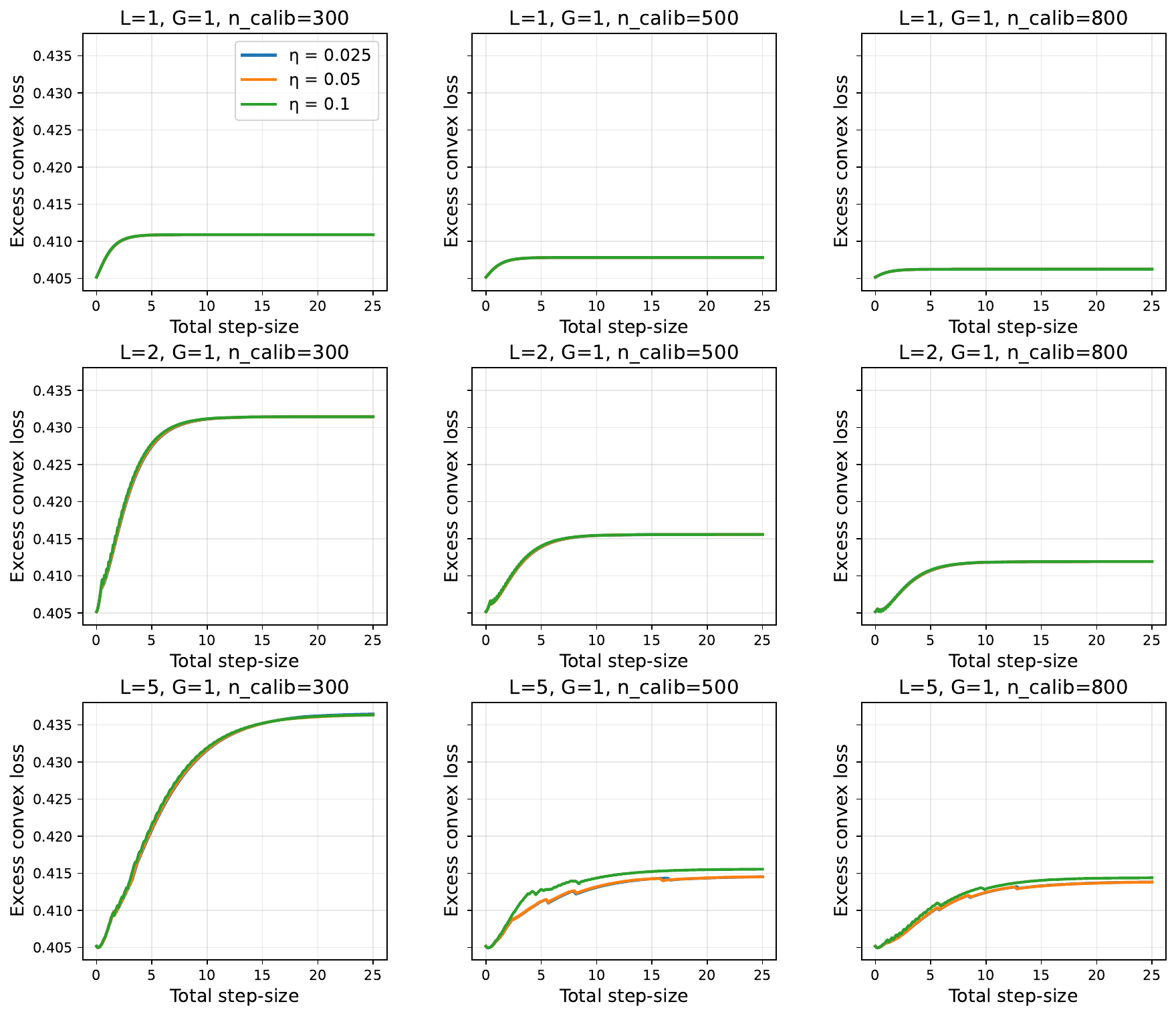}
    \subcaption*{(c)}
    \end{subfigure}
    \begin{subfigure}[b]{0.48\textwidth}
    \includegraphics[width=\linewidth]{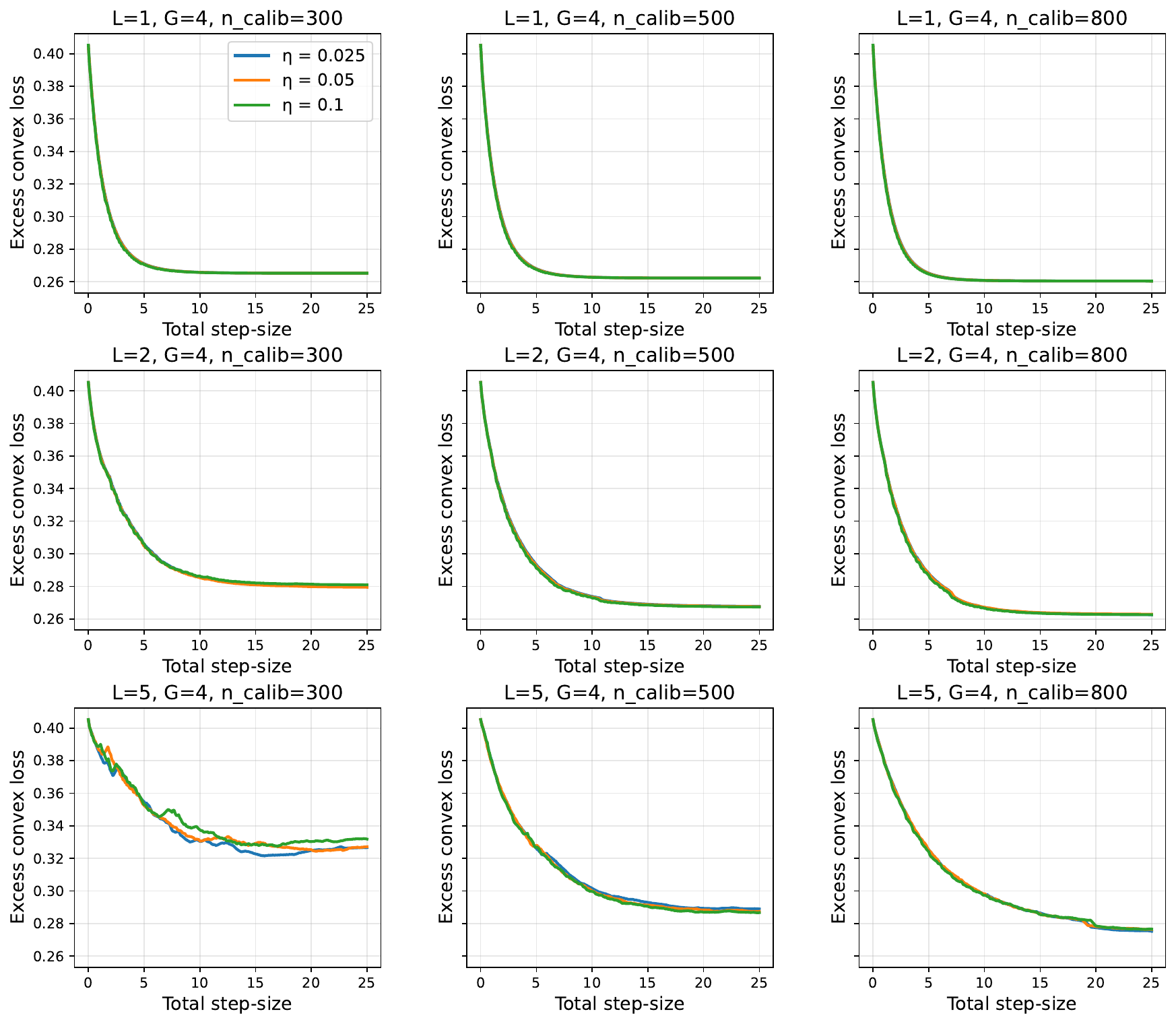}
    \subcaption*{(d)}
    \end{subfigure}
    \caption{ Excessive convex loss vs. total step-size. Initial predictor: random forest learner.  Auditor: constant refinement. Panels correspond to different numbers of groups, buckets, and calibration sample sizes. 
    (a)-(b): $|\mathcal{G}| = 1$ and $ |\mathcal{G}| = 4$: with all covariates. 
    (c)-(d): $|\mathcal{G}| = 1$ and $|\mathcal{G}| = 4$: excluding categorical covariates. 
    }
    \label{fig:excess_vs_stepsize-rf-constant}
\end{figure}

\begin{figure}[ht!]
    \centering
    \begin{subfigure}[b]{0.48\textwidth}
    \includegraphics[width=\linewidth]{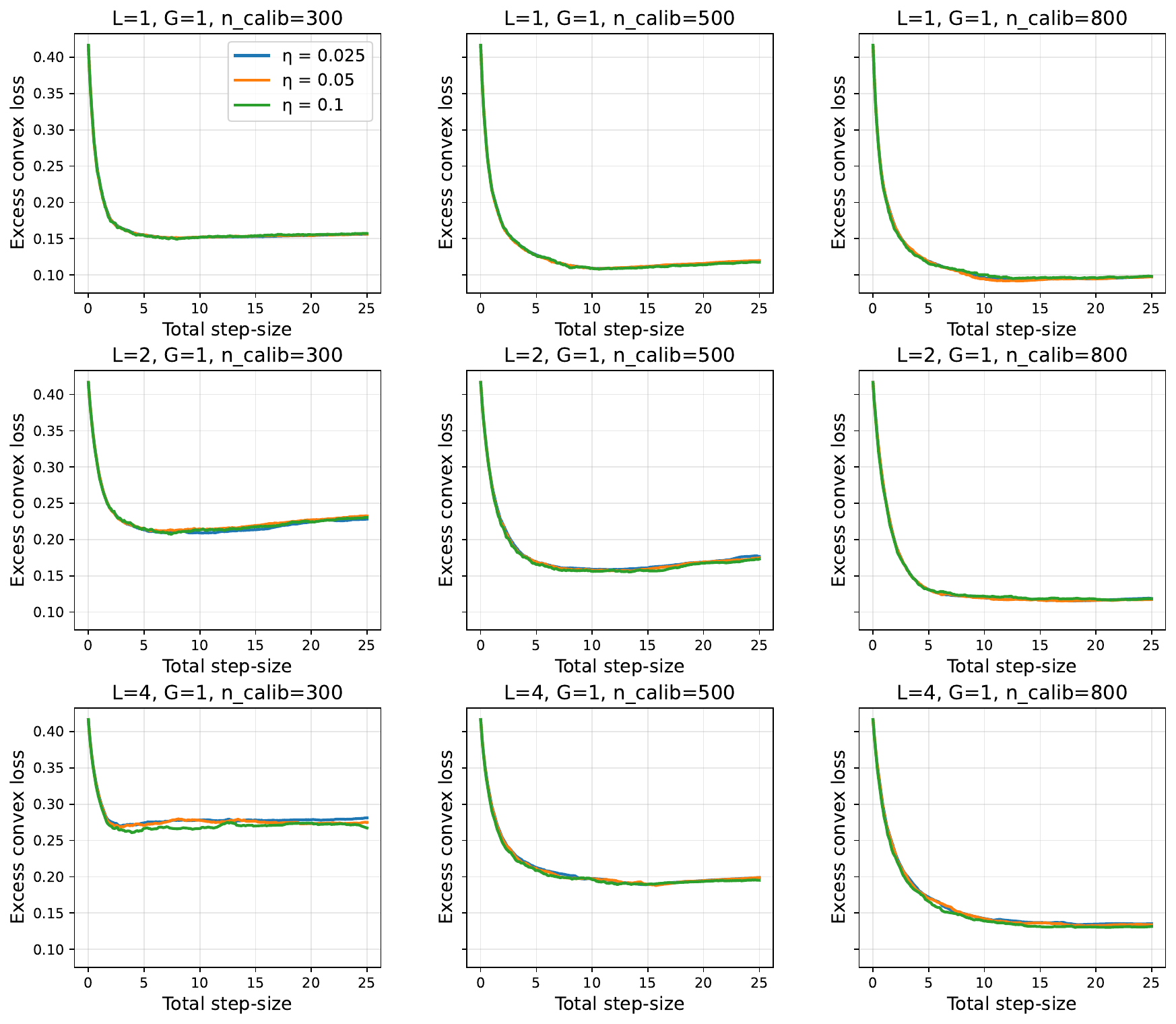}
    \subcaption*{(a)}
    \end{subfigure}
    \begin{subfigure}[b]{0.48\textwidth}
    \includegraphics[width=\linewidth]{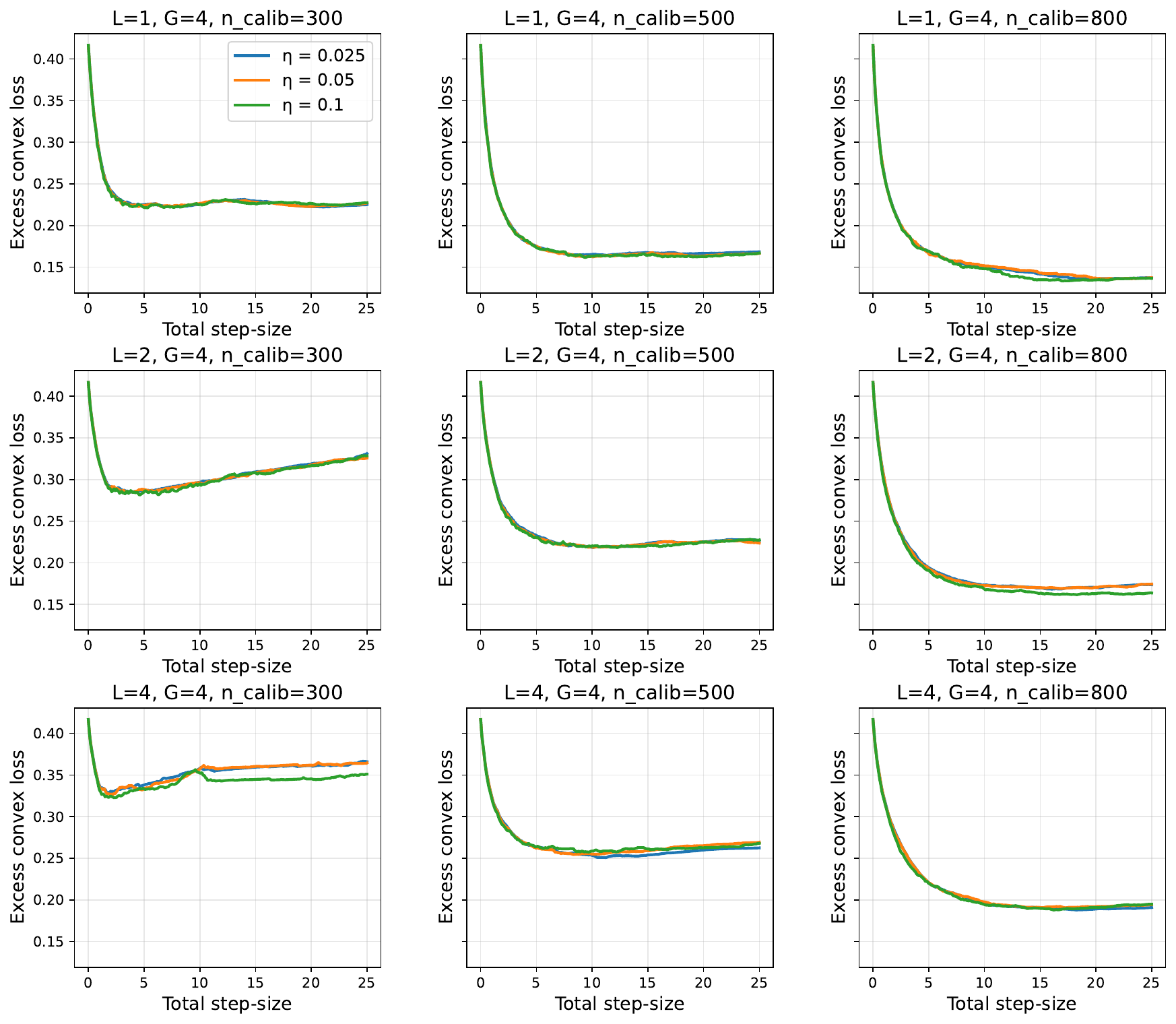}
    \subcaption*{(b)}
    \end{subfigure} 
    \begin{subfigure}[b]{0.48\textwidth}
   \includegraphics[width=\linewidth]{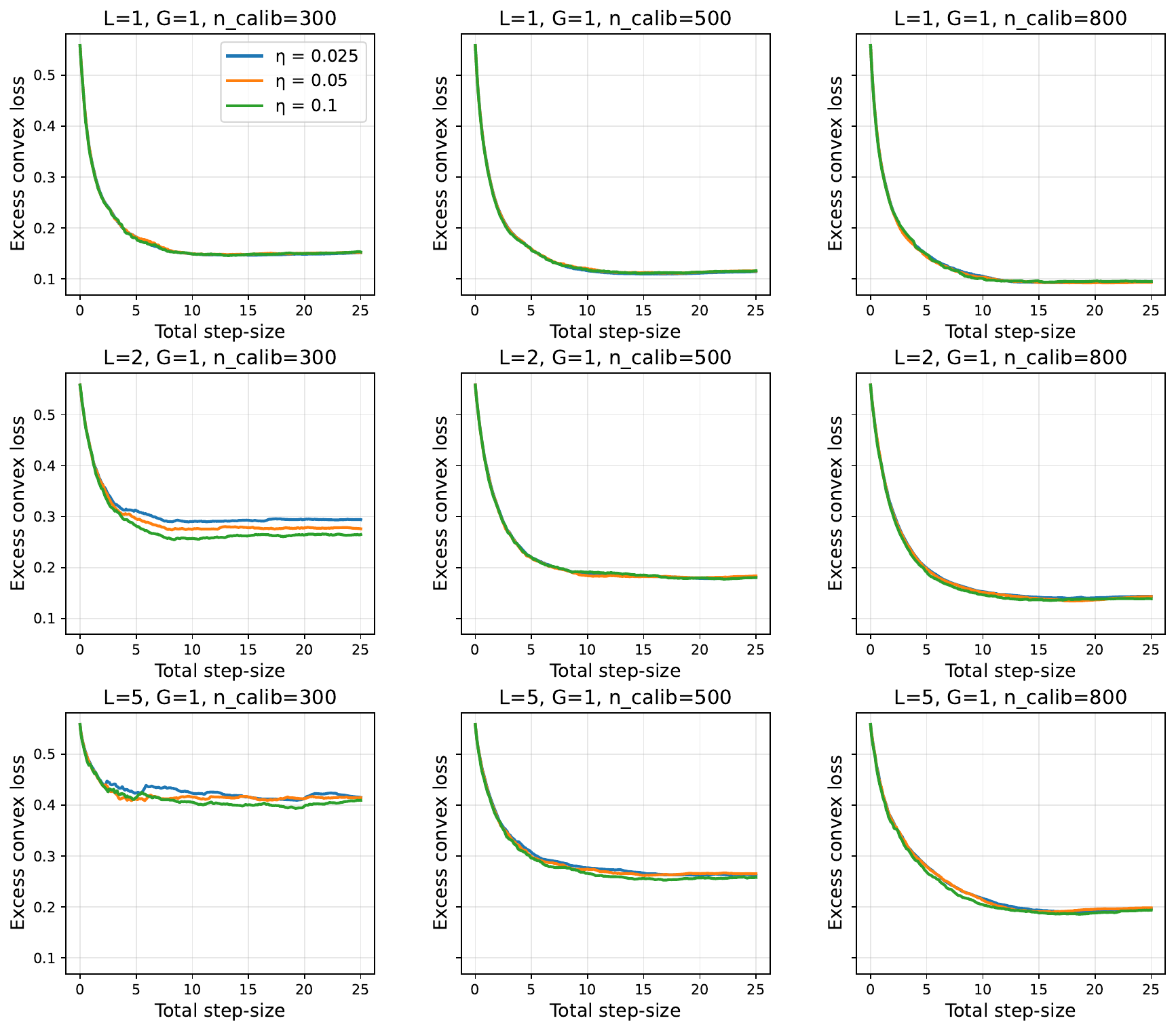}
    \subcaption*{(c)}
    \end{subfigure}
    \begin{subfigure}[b]{0.48\textwidth}
    \includegraphics[width=\linewidth]{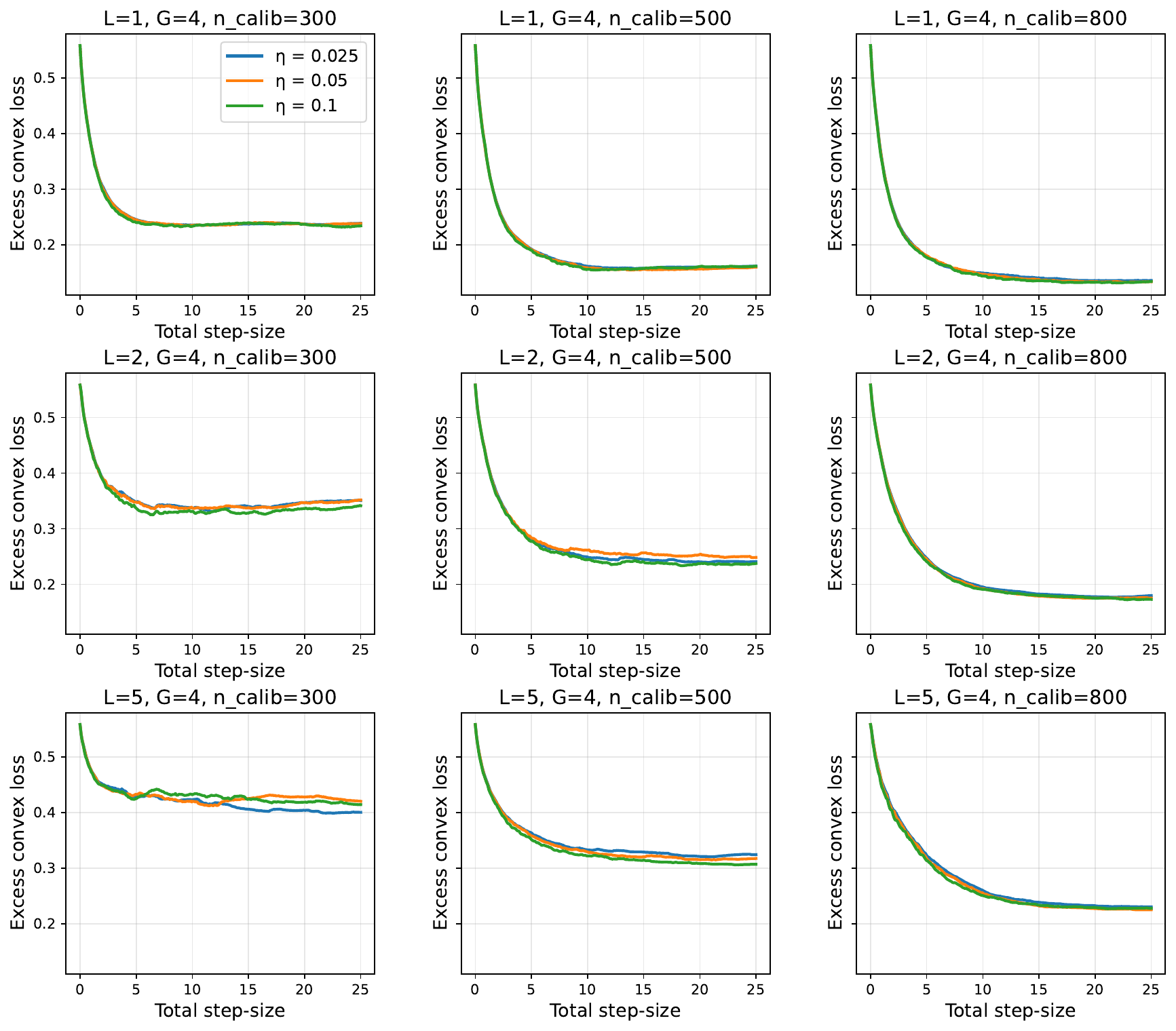}
    \subcaption*{(d)}
    \end{subfigure}
    \caption{ Excessive convex loss vs. total step-size. Initial predictor: linear model.  Auditor: tree-based learner. Panels correspond to different numbers of groups and buckets, as well as calibration sample sizes. (a)-(b): $|\mathcal{G}| = 1$ and $ |\mathcal{G}| = 4$: with all covariates. 
    (c)-(d): $|\mathcal{G}| = 1$ and $|\mathcal{G}| = 4$: excluding categorical covariates.
    }
    \label{fig:excess_vs_stepsize-lm-tree}
\end{figure}

\begin{figure}[ht!]
    \centering
    \begin{subfigure}[b]{0.48\textwidth}
    \includegraphics[width=\linewidth]{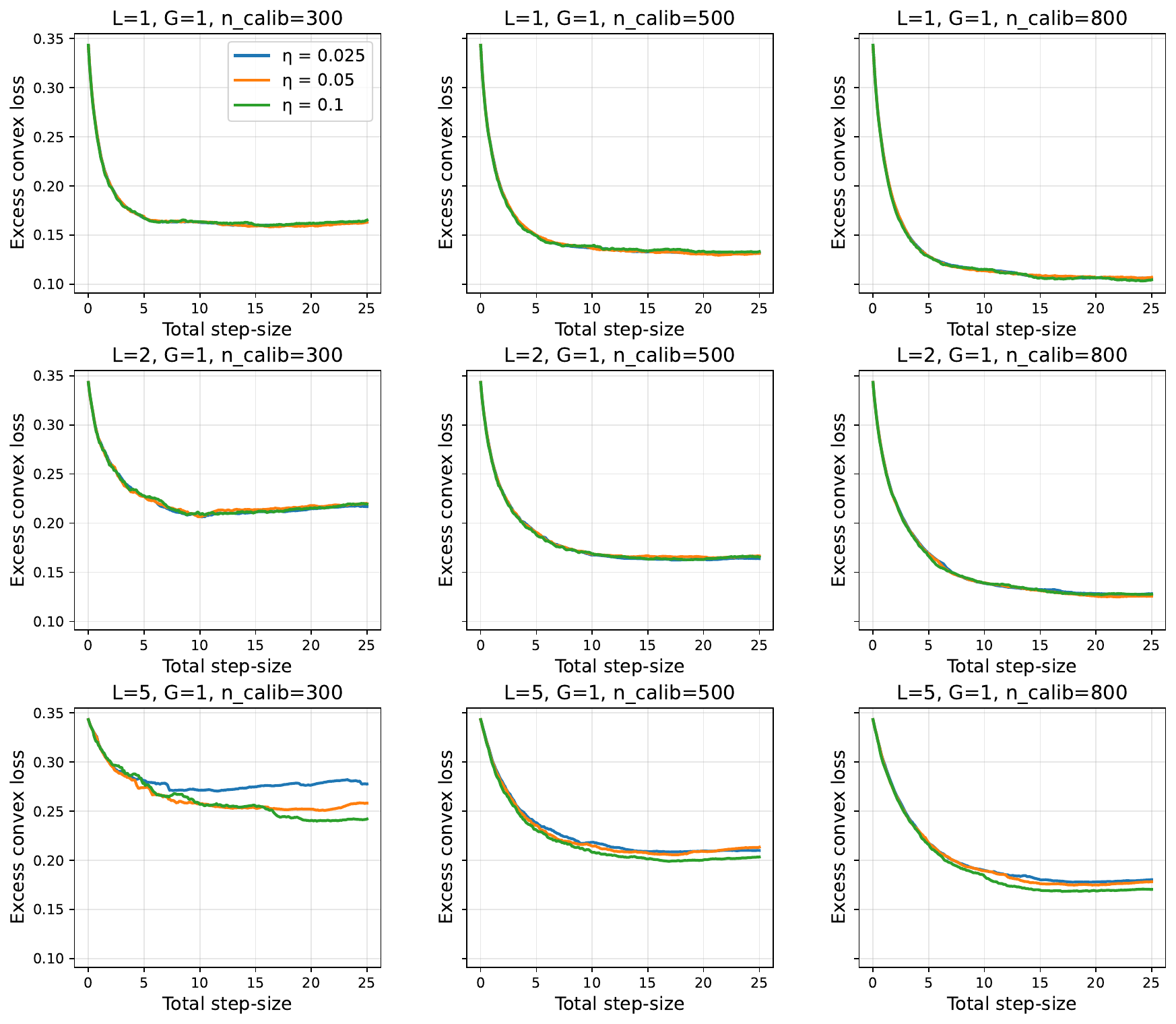}
    \subcaption*{(a)}
    \end{subfigure}
    \begin{subfigure}[b]{0.48\textwidth}
    \includegraphics[width=\linewidth]{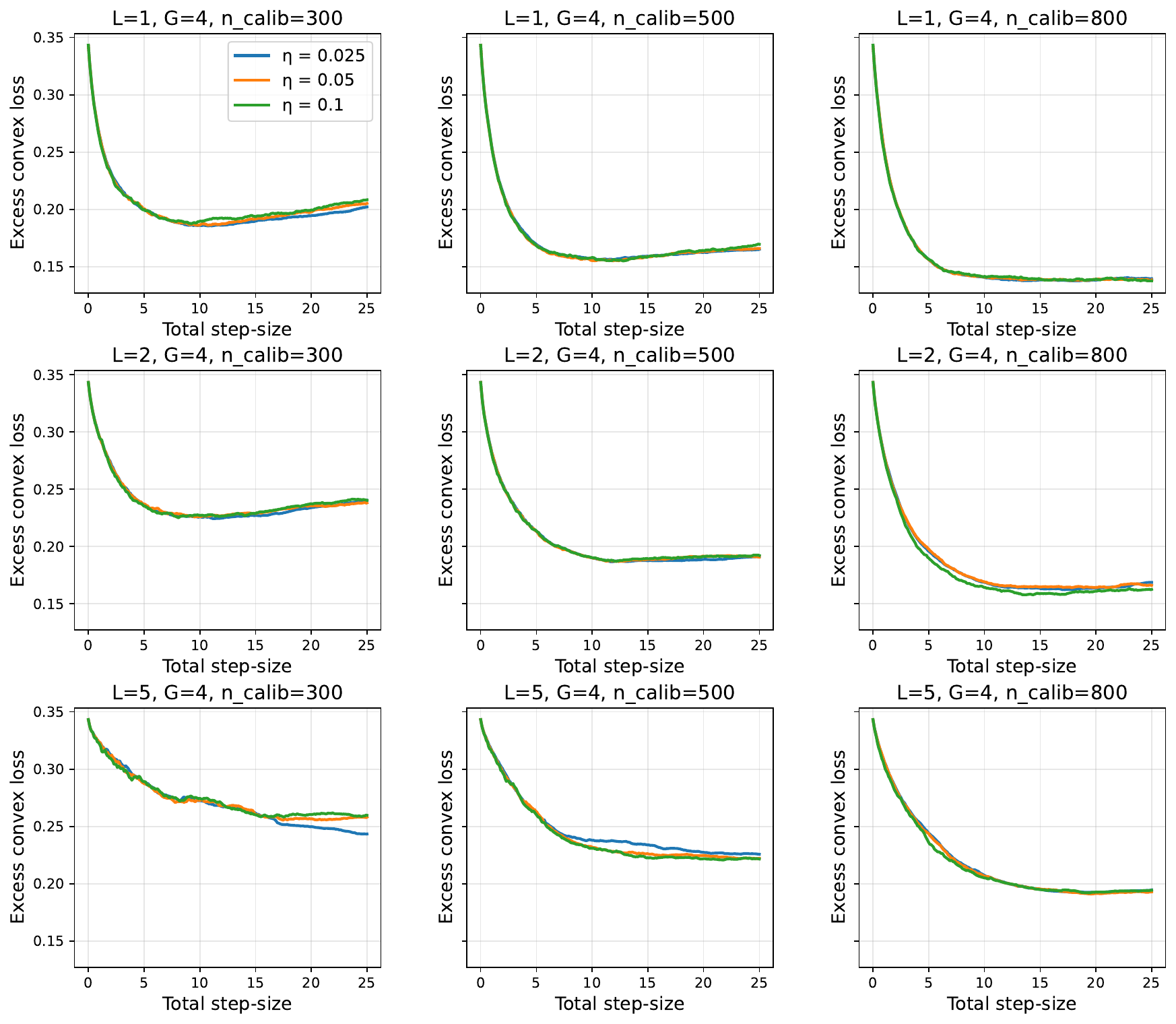}
    \subcaption*{(b)}
    \end{subfigure} 
    \begin{subfigure}[b]{0.48\textwidth}
   \includegraphics[width=\linewidth]{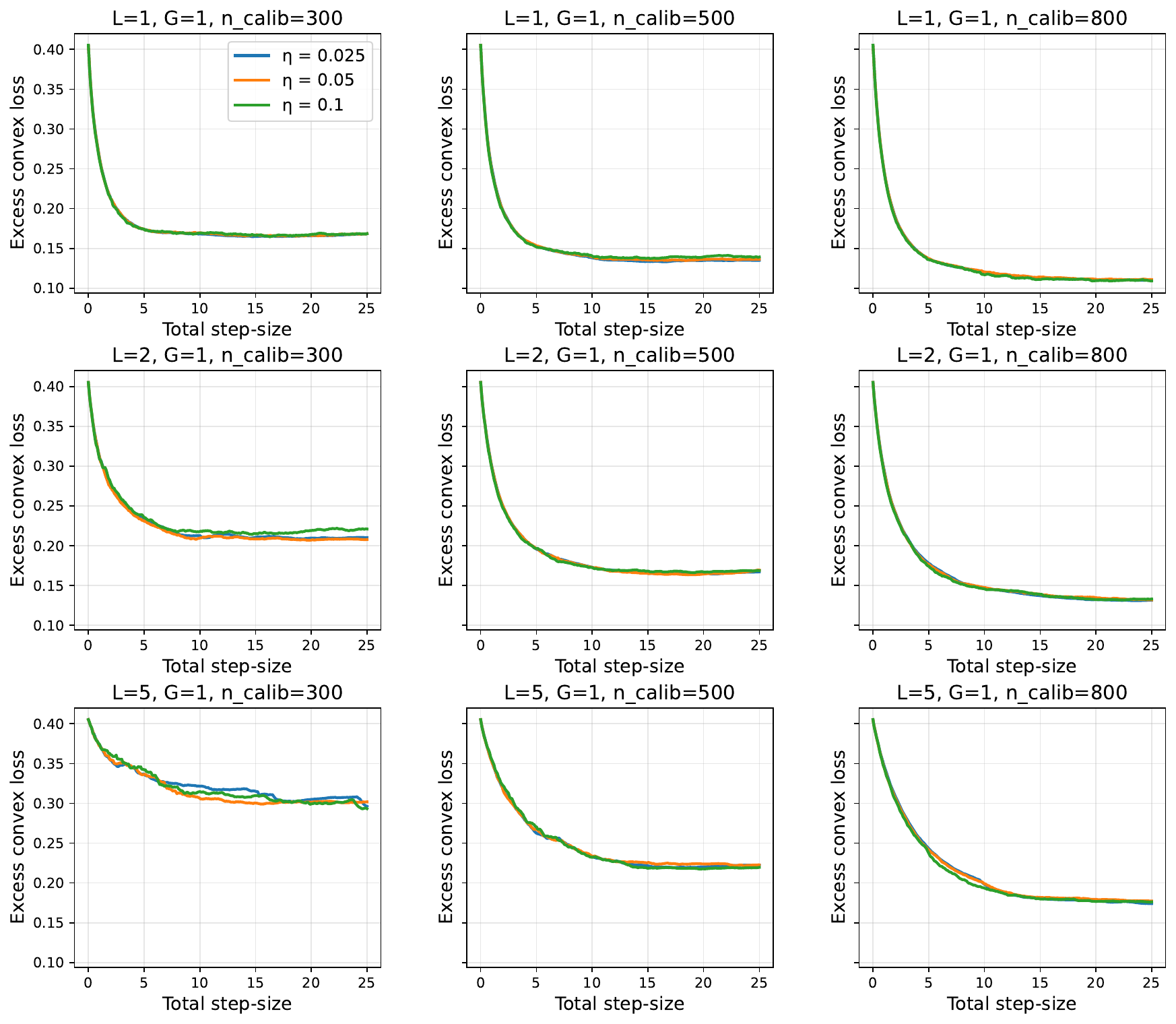}
    \subcaption*{(c)}
    \end{subfigure}
    \begin{subfigure}[b]{0.48\textwidth}
    \includegraphics[width=\linewidth]{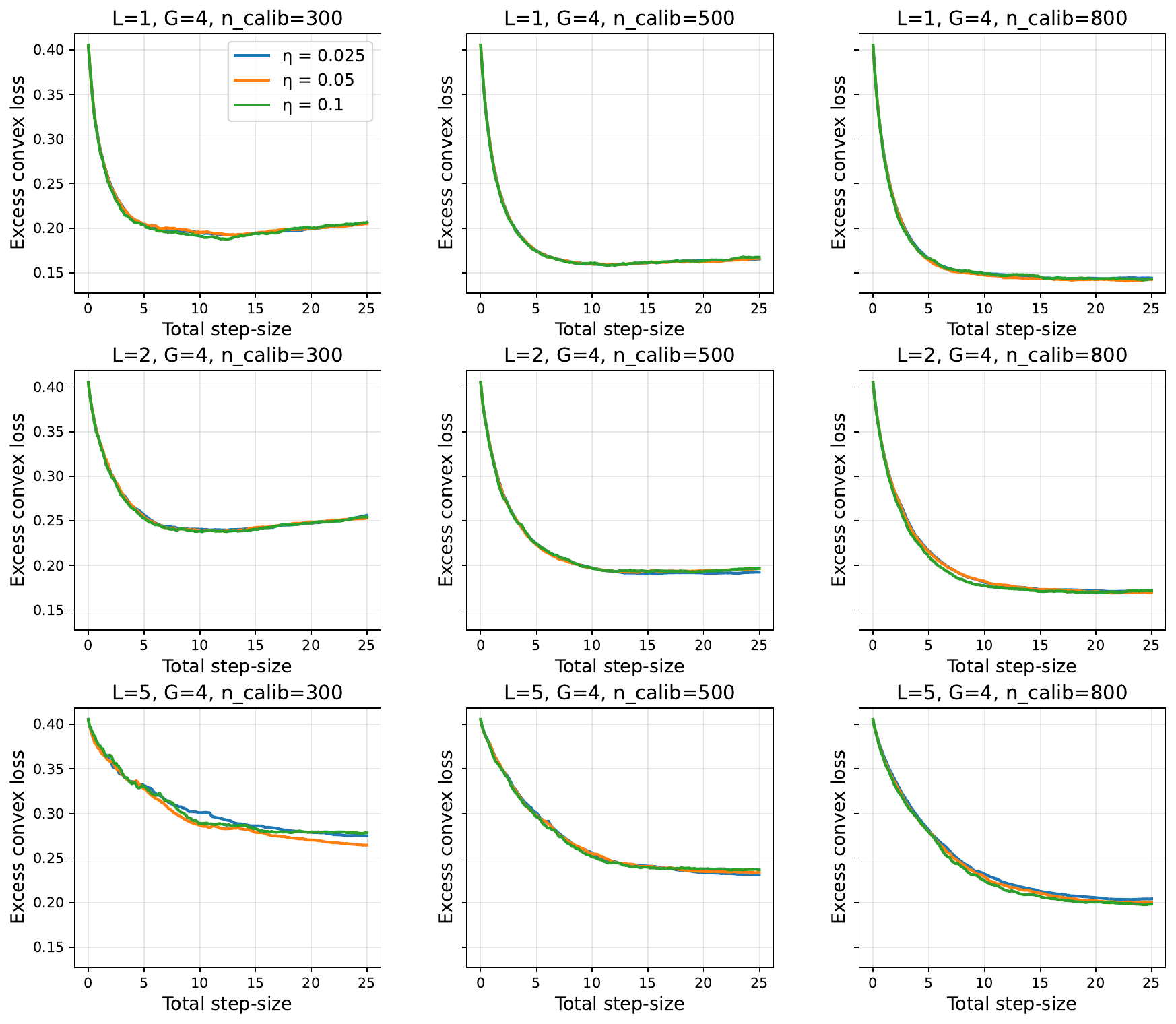}
    \subcaption*{(d)}
    \end{subfigure}
    \caption{ Excessive convex loss vs. total step-size. Initial predictor: random forest.  Auditor: tree-based learner. Panels correspond to different numbers of groups and buckets, as well as calibration sample sizes. (a)-(b): $|\mathcal{G}| = 1$ and $ |\mathcal{G}| = 4$: with all covariates. 
    (c)-(d): $|\mathcal{G}| = 1$ and $|\mathcal{G}| = 4$: excluding categorical covariates. }
    \label{fig:excess_vs_stepsize-rf-tree}
\end{figure}
The phenomena in the main numerical finding are also reflected in the excess risk curves based on MSE. When no group partition is used  ($|\mathcal{G}| = 1$), constant refinement yields negligible improvement in MSE. When group structure is introduced ($|\mathcal{G}| = 4$), constant refinement can reduce the MSE of the initial random forest estimator regardless of whether the categorical features $X^{(d)}$ are explicitly included in MCBoost. 
However, it improves the MSE of the initial linear model only if $X^{(d)}$ covariates are not taken into account.
With these variables included, the linear model already satisfies the corresponding moment conditions, leaving little room for further correction by a constant auditor. 
This highlights the interaction between the initial predictor and the auditor class: the effectiveness of post-processing depends on the extent to which the auditor introduces new directions not already captured by the initial model.   
\FloatBarrier

\subsection{Structural groups and weighted shifts under covariate shift}\label{apxsubsec:shift-construct} 
In Section~\ref{subsec:simu-shift}, we evaluate MCBoost beyond prespecified demographic groups by using structural subgroups and weighted shifts constructed from standardized continuous covariates $Z=(Z_1,Z_2)$, where $Z_j$ is obtained by $z$-scoring the corresponding component of $X^{(c)}$ using the reference sample. These targets are designed to probe interaction-dominated regions, high-curvature regions, and localized departures from the source distribution. Unless otherwise stated, all thresholds are computed from the sample after standardization.

We consider the following representative structural subgroups:
\begin{itemize}
    \item \emph{Interaction region} (\texttt{interaction\_reg}): observations for which the interaction term $Z_1Z_2$ is unusually large in magnitude, corresponding to the upper or lower tail of the interaction score.
    \item \emph{interaction\_neg} (\texttt{interaction\_neg}): observations with $Z_1Z_2 \le q_{0.20}(Z_1Z_2)$.
    \item \emph{Hard region} (\texttt{hard\_region}): observations with a large composite difficulty score
    $$
    d(Z)=0.5|Z_1| + 0.3 Z_2^2 + 0.2|Z_1Z_2|
    $$
    satisfying $d(Z) \ge q_{0.85}(d(Z))$.
\end{itemize}

We also consider the following weighted shifts. In each case, we form raw weights, clip them to the interval $[0.1, 10]$, and renormalize them to have empirical mean $1$: 
\begin{itemize}
    \item \emph{Curvature tilt} (\texttt{curvature\_tilt}): $w(X) \propto \exp(0.4 Z_2^2)$. 
    \item \emph{Hard mixed tilt} (\texttt{hard\_mixed\_tilt}):
    $$
    w(X) \propto \exp\big(0.30|Z_1| + 0.25 Z_2^2 + 0.25 Z_1Z_2 + 0.25 X_6 + 0.25 X_7\big).
    $$
    \item \emph{Local bump} (\texttt{local\_bump}): a Gaussian bump centered at $(1.2,-1.0)$ in the $(Z_1,Z_2)$ plane with bandwidth $0.8$, namely
    $$
    w(X) \propto \exp\left(-\frac{(Z_1-1.2)^2 + (Z_2+1.0)^2}{2\times 0.8^2}\right).
    $$
\end{itemize}

\subsection{Early-stopping with different auditors in quantile regression}

\begin{figure}[ht!]
    \centering
    \begin{subfigure}[b]{0.48\textwidth}
    \includegraphics[width=\linewidth]{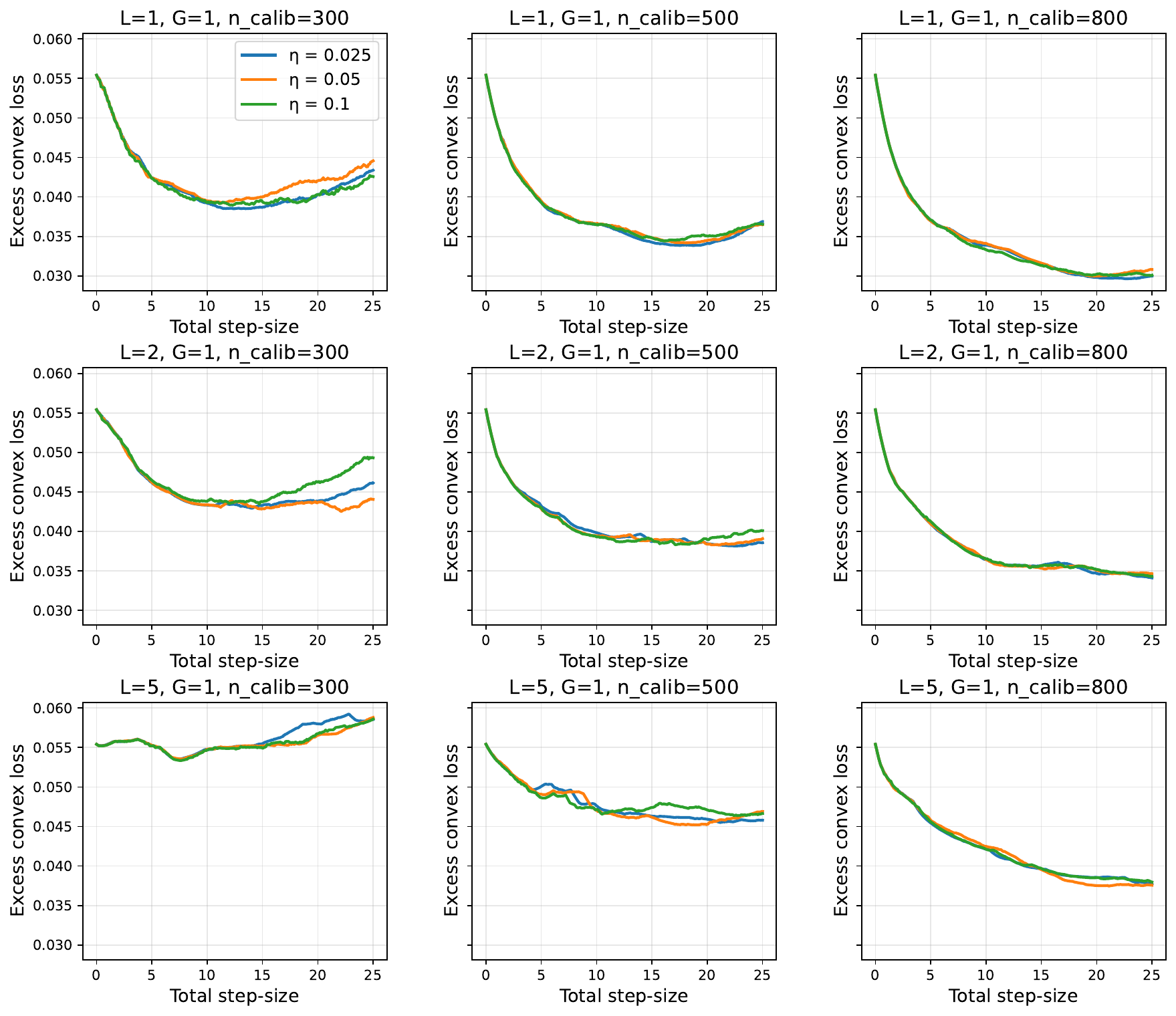}  
    \subcaption*{(a)}
    \end{subfigure}
    \begin{subfigure}[b]{0.48\textwidth}
    \includegraphics[width=\linewidth]{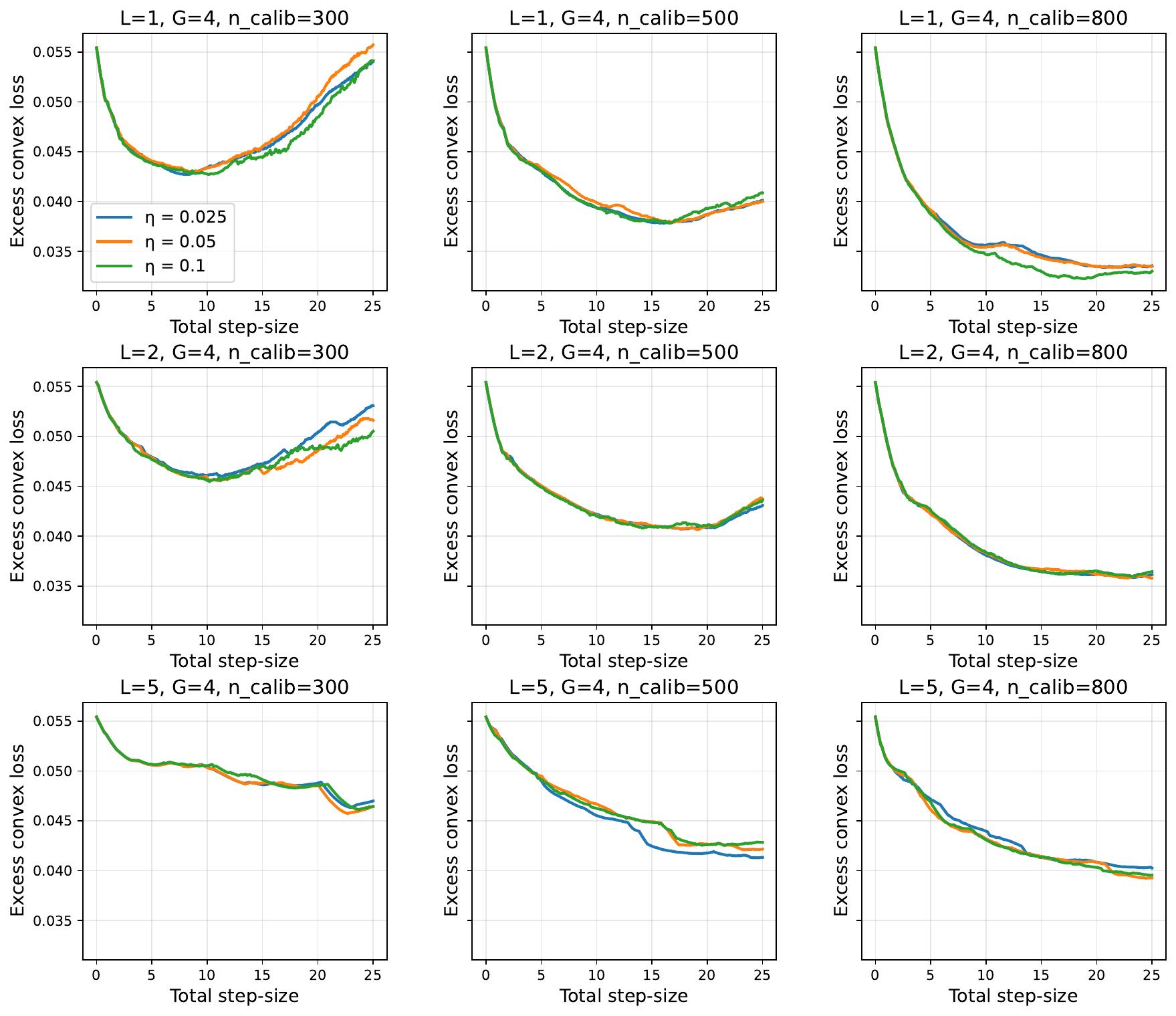}
    \subcaption*{(b)}
    \end{subfigure} 
    \begin{subfigure}[b]{0.48\textwidth}
    \includegraphics[width=\linewidth]{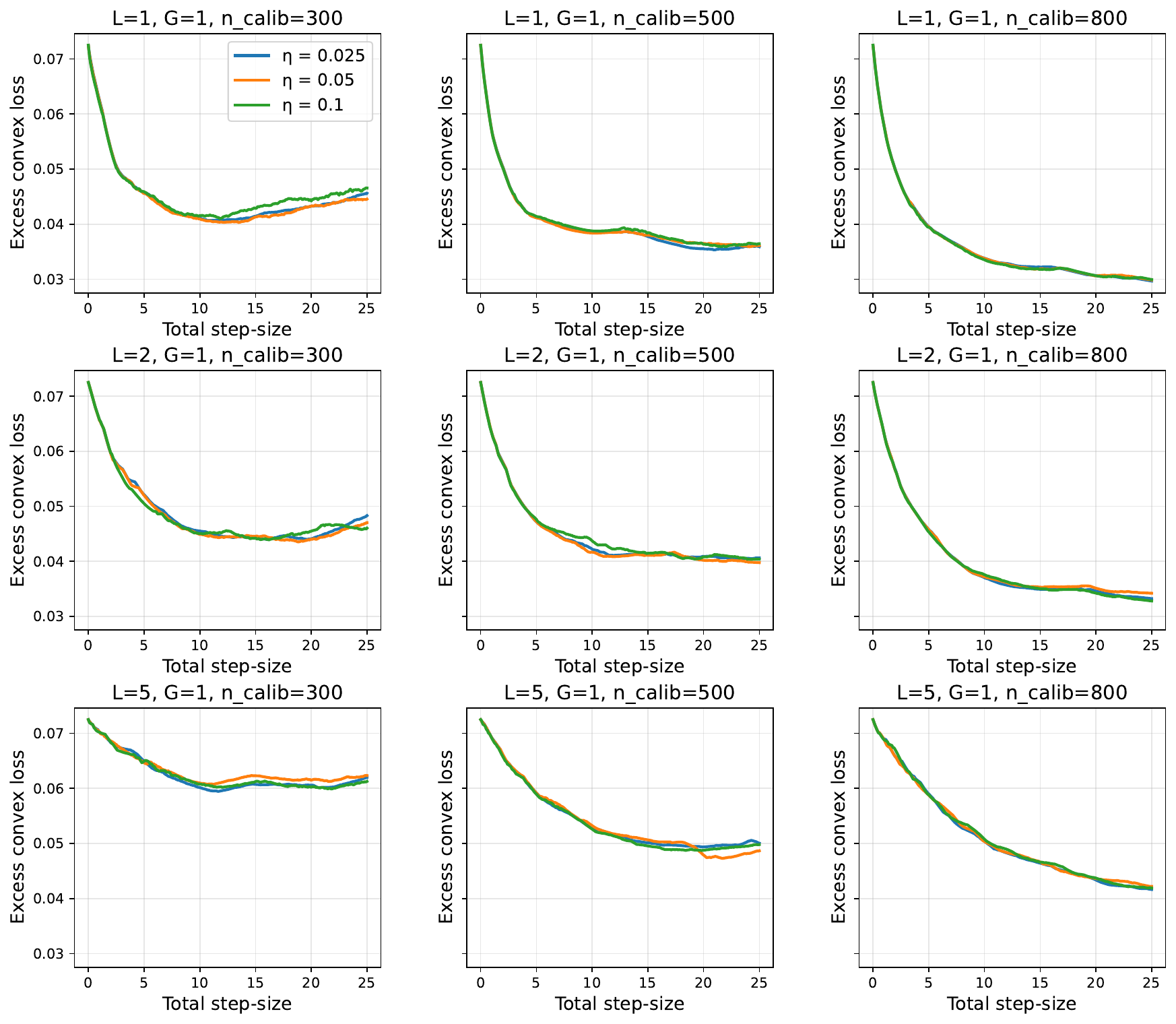} 
    \subcaption*{(c)}
    \end{subfigure}
    \begin{subfigure}[b]{0.48\textwidth}
    \includegraphics[width=\linewidth]{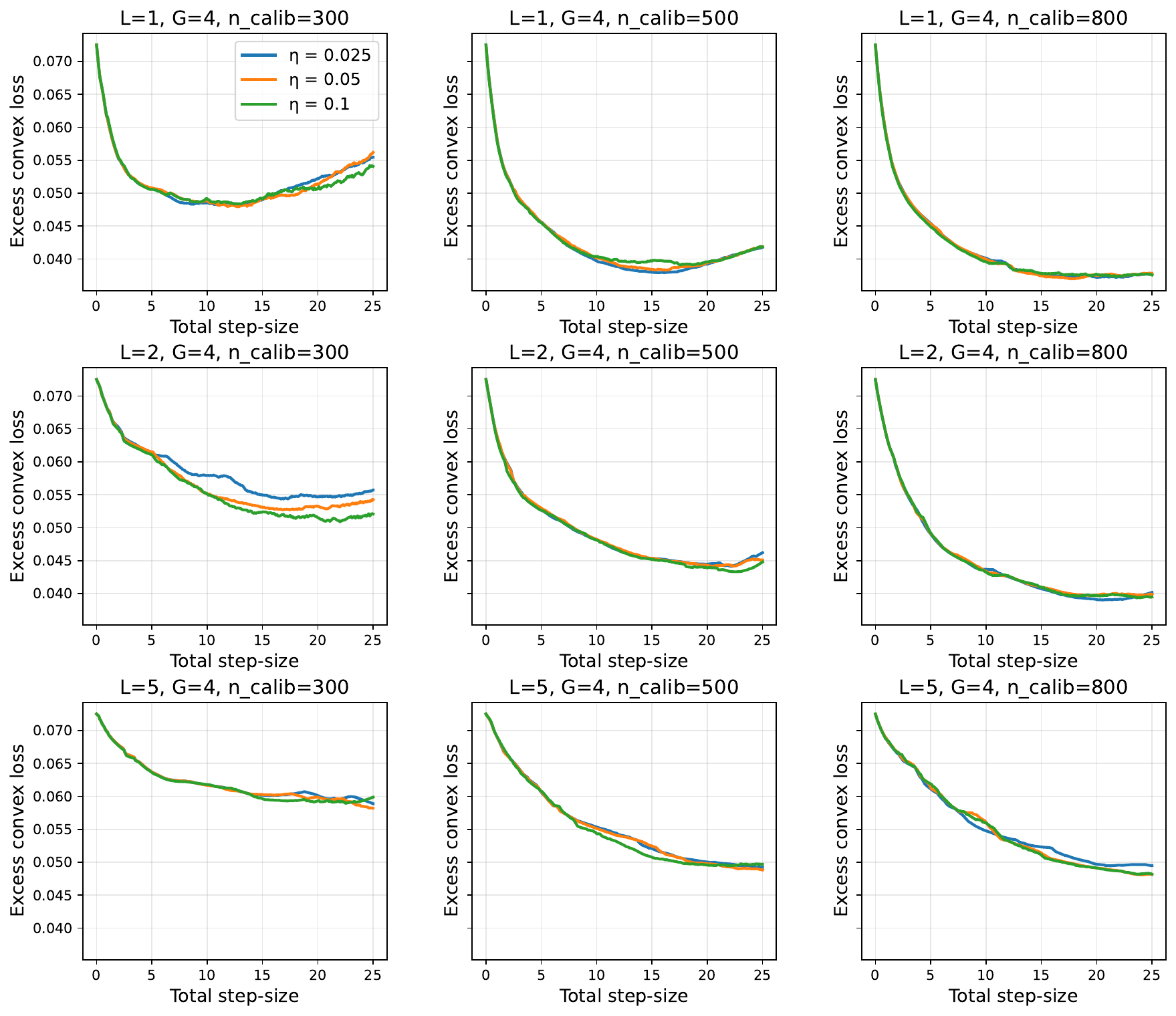} 
    \subcaption*{(d)}
    \end{subfigure}
    
    \caption{ Excessive convex pinball loss vs. total step-size. Initial predictor: quantile random forest model. Auditors: Decision tree learners. Panels correspond to different numbers of groups and buckets, as well as calibration sample sizes. 
    (a)-(b): $|\mathcal{G}| = 1$ and $ |\mathcal{G}| = 4$: with all covariates. 
    (c)-(d): $|\mathcal{G}| = 1$ and $|\mathcal{G}| = 4$: excluding categorical covariates.
    }
    \label{fig:qr-excess_vs_stepsize-rf-tree}
\end{figure}

\begin{figure}[ht!]
    \centering
    \begin{subfigure}[b]{0.48\textwidth}
    \includegraphics[width=\linewidth]{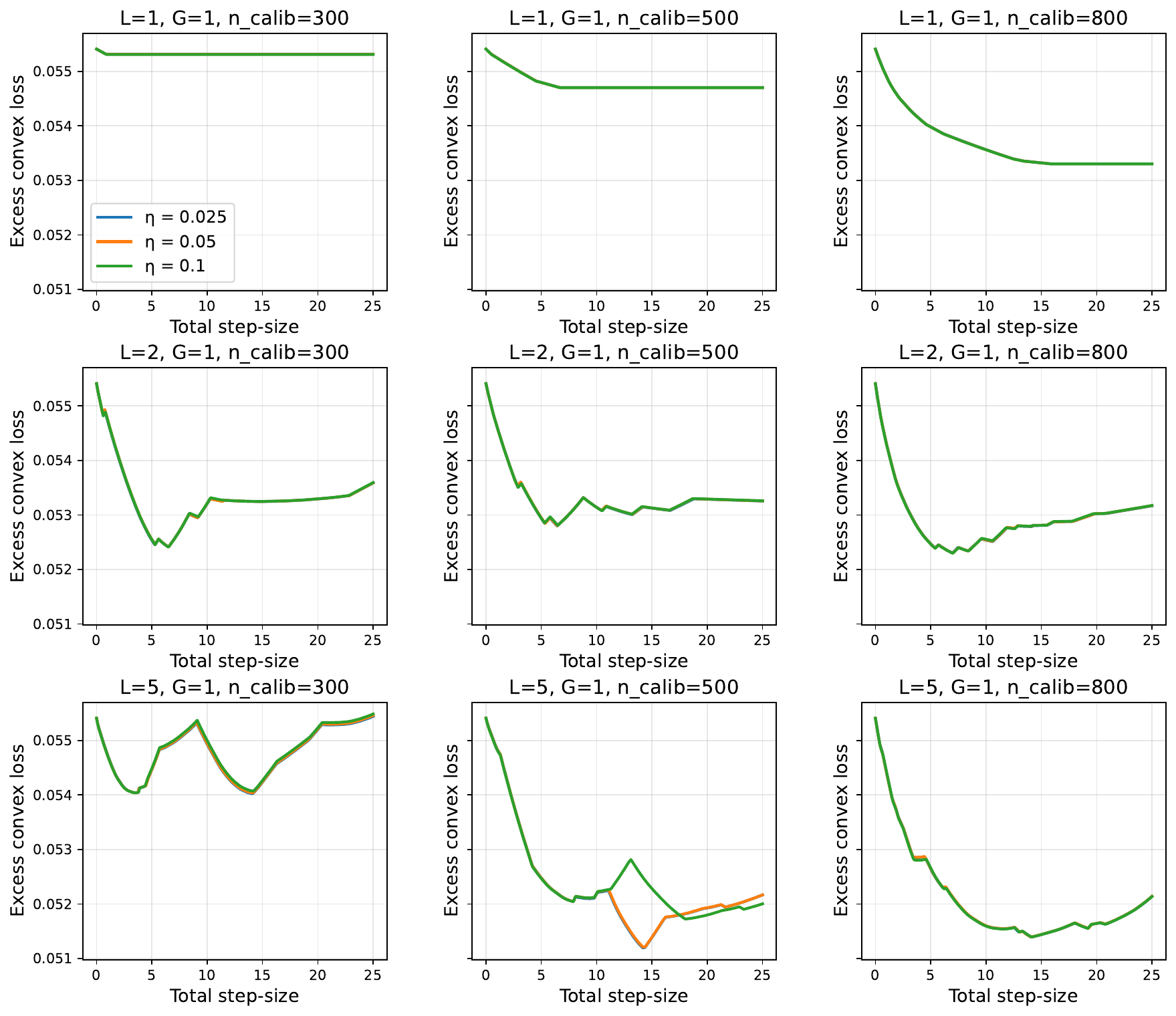}  
    \subcaption*{(a)}
    \end{subfigure} 
    \begin{subfigure}[b]{0.48\textwidth}
    \includegraphics[width=\linewidth]{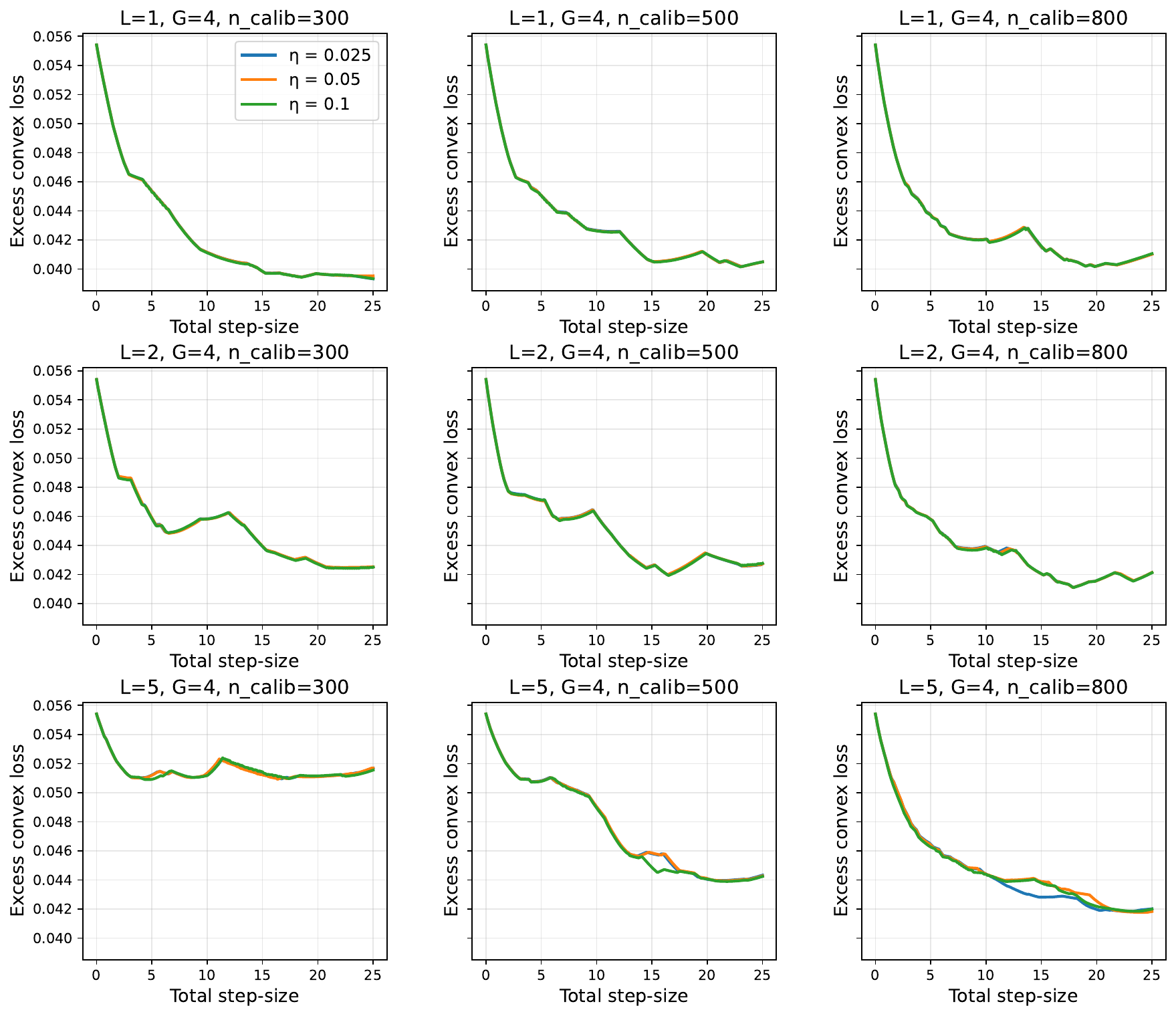}  
    \subcaption*{(b)}
    \end{subfigure} 
    \begin{subfigure}[b]{0.48\textwidth}
    \includegraphics[width=\linewidth]{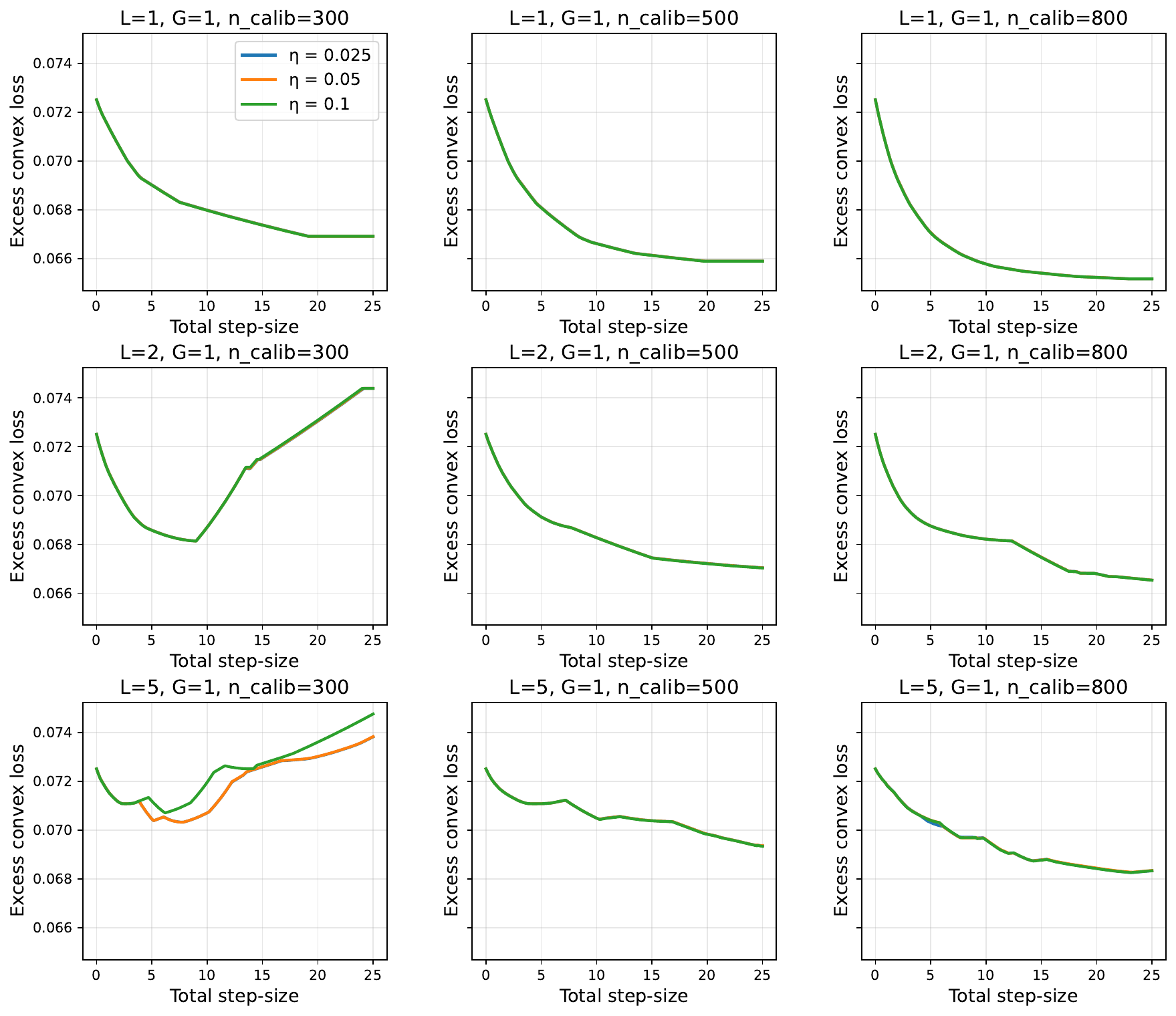}  
    \subcaption*{(c)}
    \end{subfigure} 
    \begin{subfigure}[b]{0.48\textwidth}
    \includegraphics[width=\linewidth]{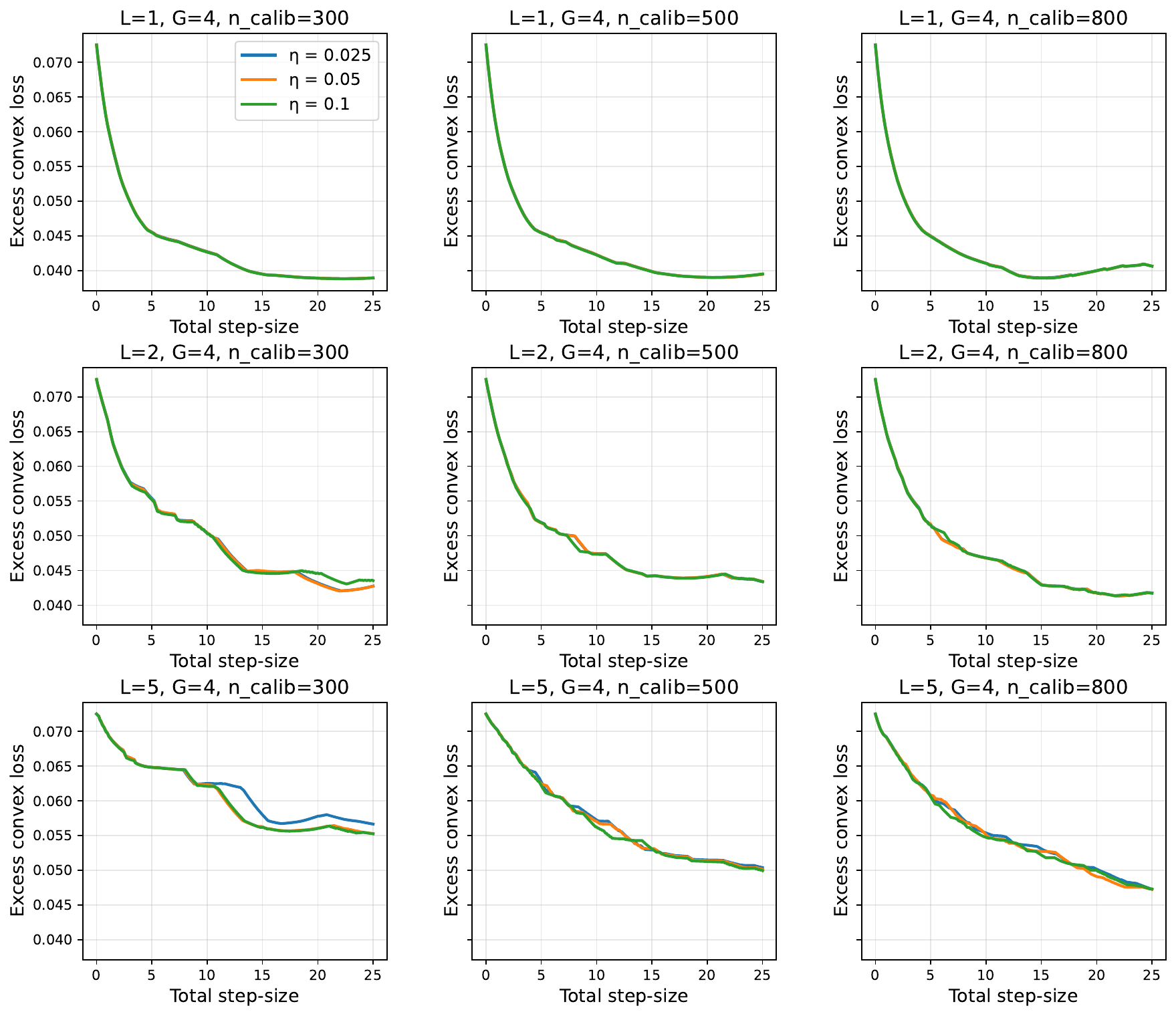}  
    \subcaption*{(c)}
    \end{subfigure} 
    
    \caption{
    Excessive convex pinball loss vs. total step-size. Initial predictor: quantile random forest model. Auditors: constant. Panels correspond to different numbers of groups and buckets, as well as calibration sample sizes.  
    (a)-(b): $|\mathcal{G}| = 1$ and $ |\mathcal{G}| = 4$: with all covariates. 
    (c)-(d): $|\mathcal{G}| = 1$ and $|\mathcal{G}| = 4$: excluding categorical covariates.
    }
    \label{fig:qr-excess_vs_stepsize-rf-constant}
\end{figure}  

\subsection{Quantile coverage} 
\begin{figure}[ht!]
    \begin{subfigure}[b]{\textwidth}
        \includegraphics[width=\textwidth]{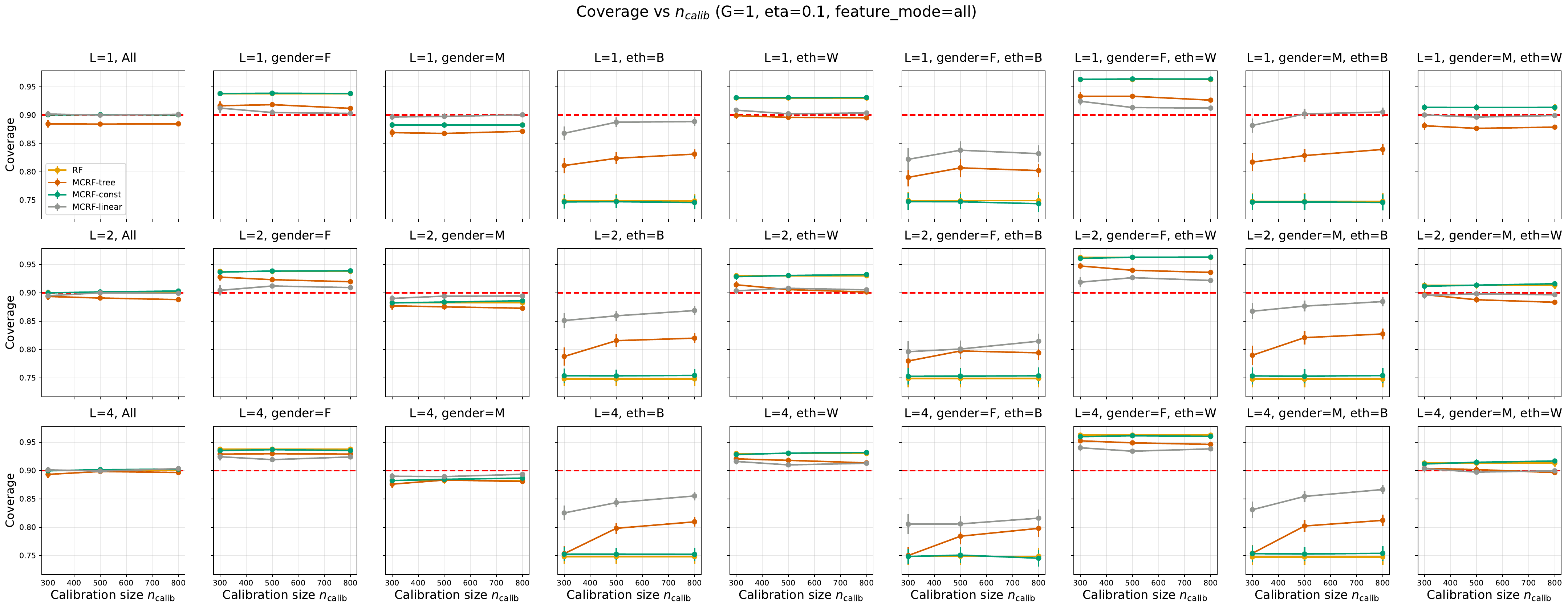}
    \end{subfigure}
    \begin{subfigure}[b]{\textwidth}
        \includegraphics[width=\textwidth]{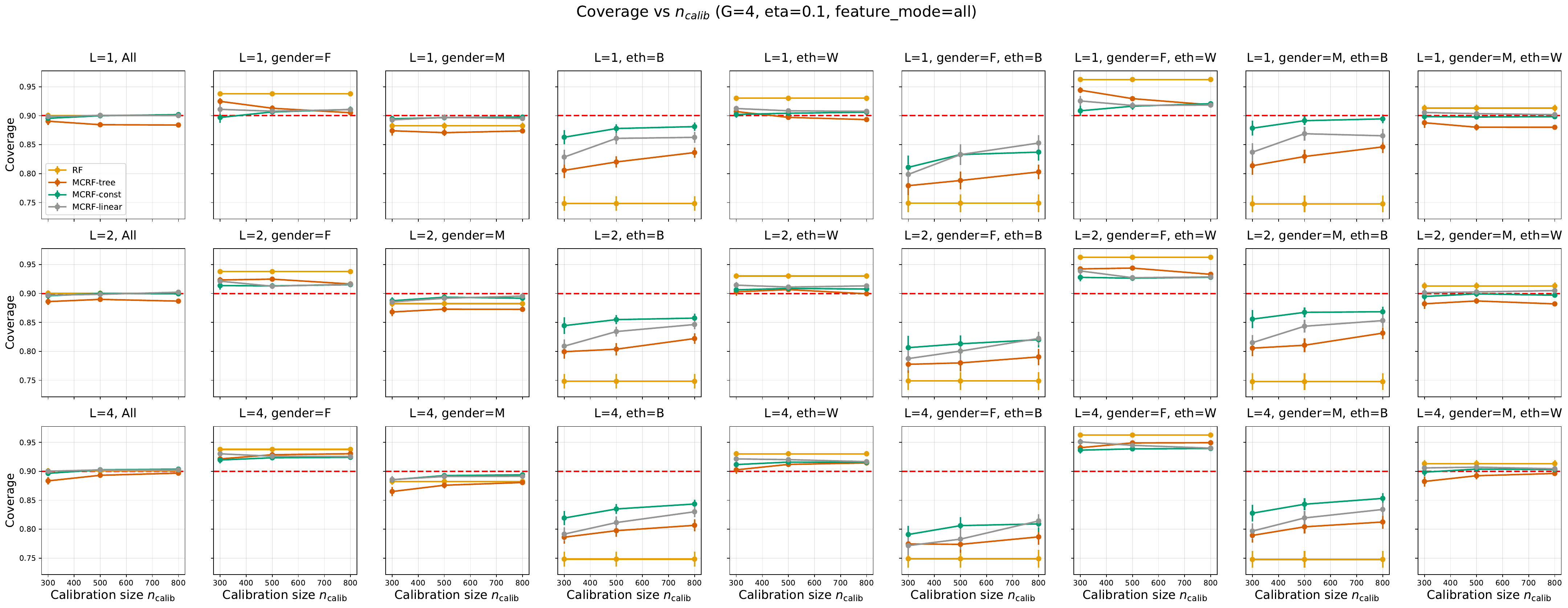}
    \end{subfigure}
    \begin{subfigure}[b]{\textwidth}
        \includegraphics[width=\textwidth]{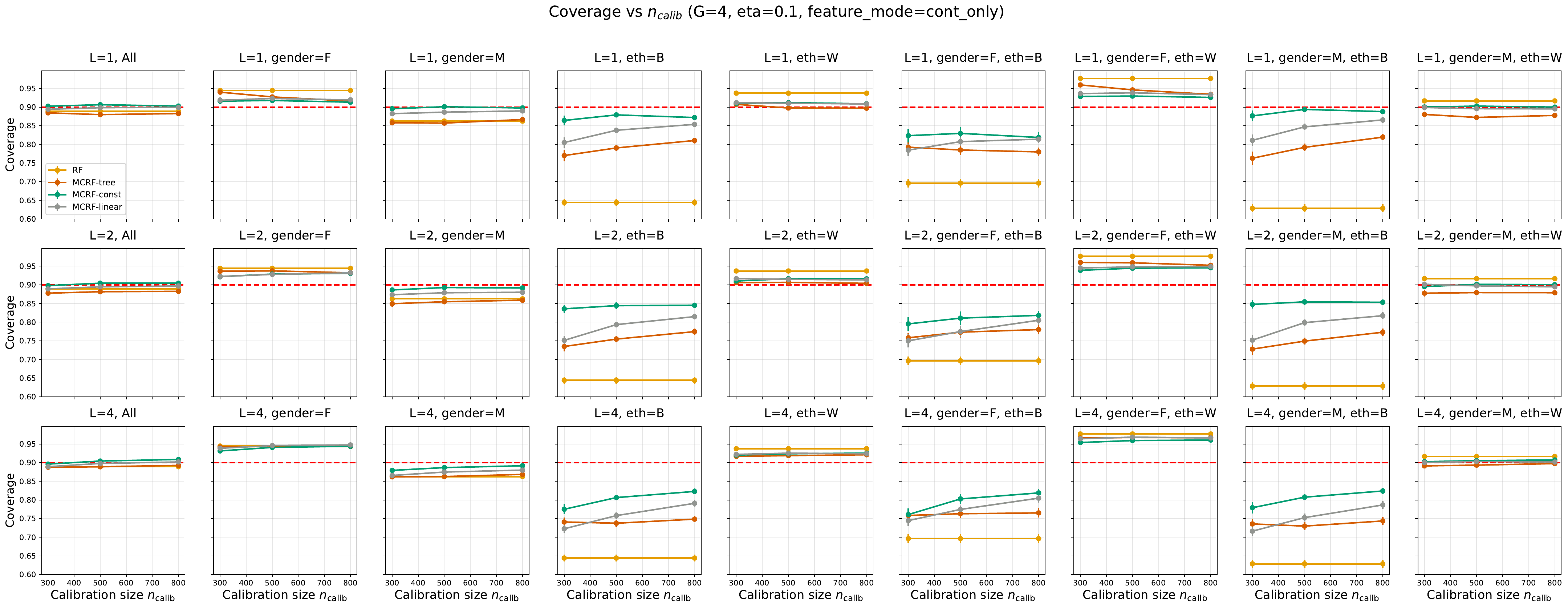}
    \end{subfigure}
    \caption{Coverage across subpopulations  
    vs. calibration size for random forest quantile initial predictors with different auditors (decision tree, constant, linear). 
    Top: bucket-based calibration without group partition.
    Middle: calibration with both bucket and group partition.
    Bottom: no group partition and categorical features excluded.
    }
    \label{fig:qr-coverage_vs_calib}
\end{figure} 

\begin{figure}[ht!]
    \centering
    \begin{subfigure}[b]{0.49\textwidth}
    \includegraphics[width=\linewidth]{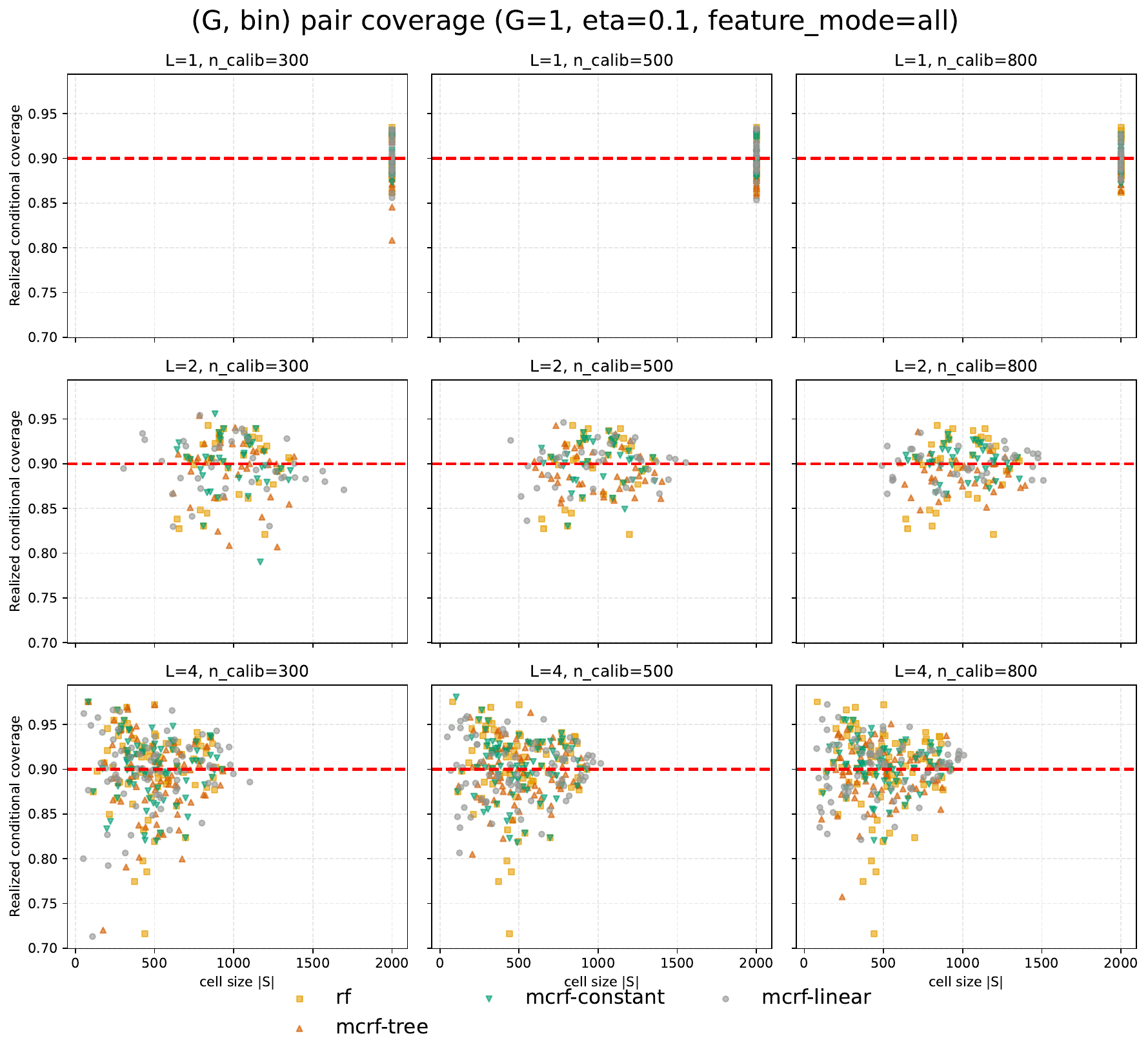}  
    \caption*{(a)}
    \end{subfigure}
    \begin{subfigure}[b]{0.49\textwidth}
    \includegraphics[width = \linewidth]{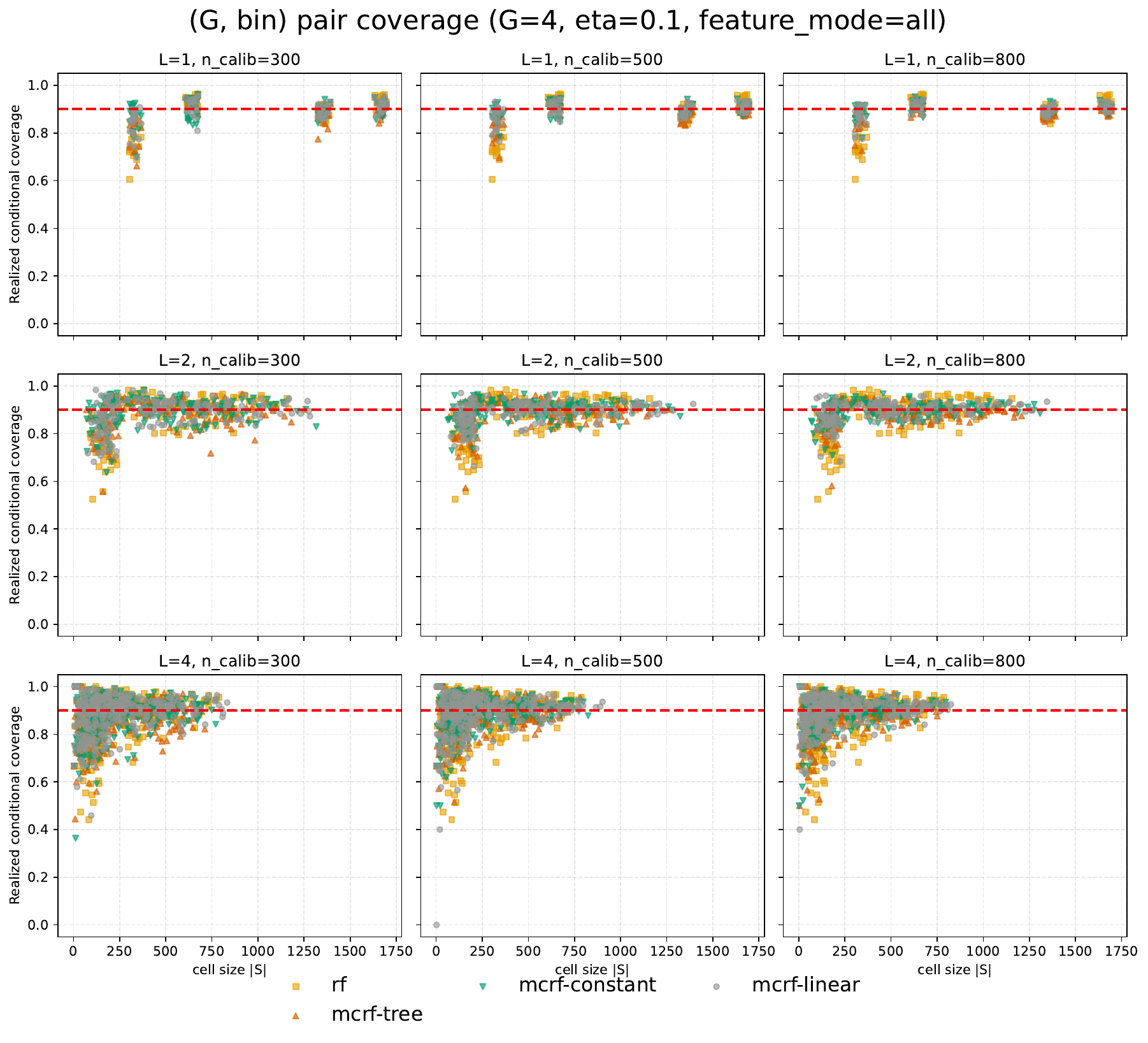}    
    \caption*{(b)}
    \end{subfigure}
    \begin{subfigure}[b]{0.49\textwidth}
    \includegraphics[width=\linewidth]{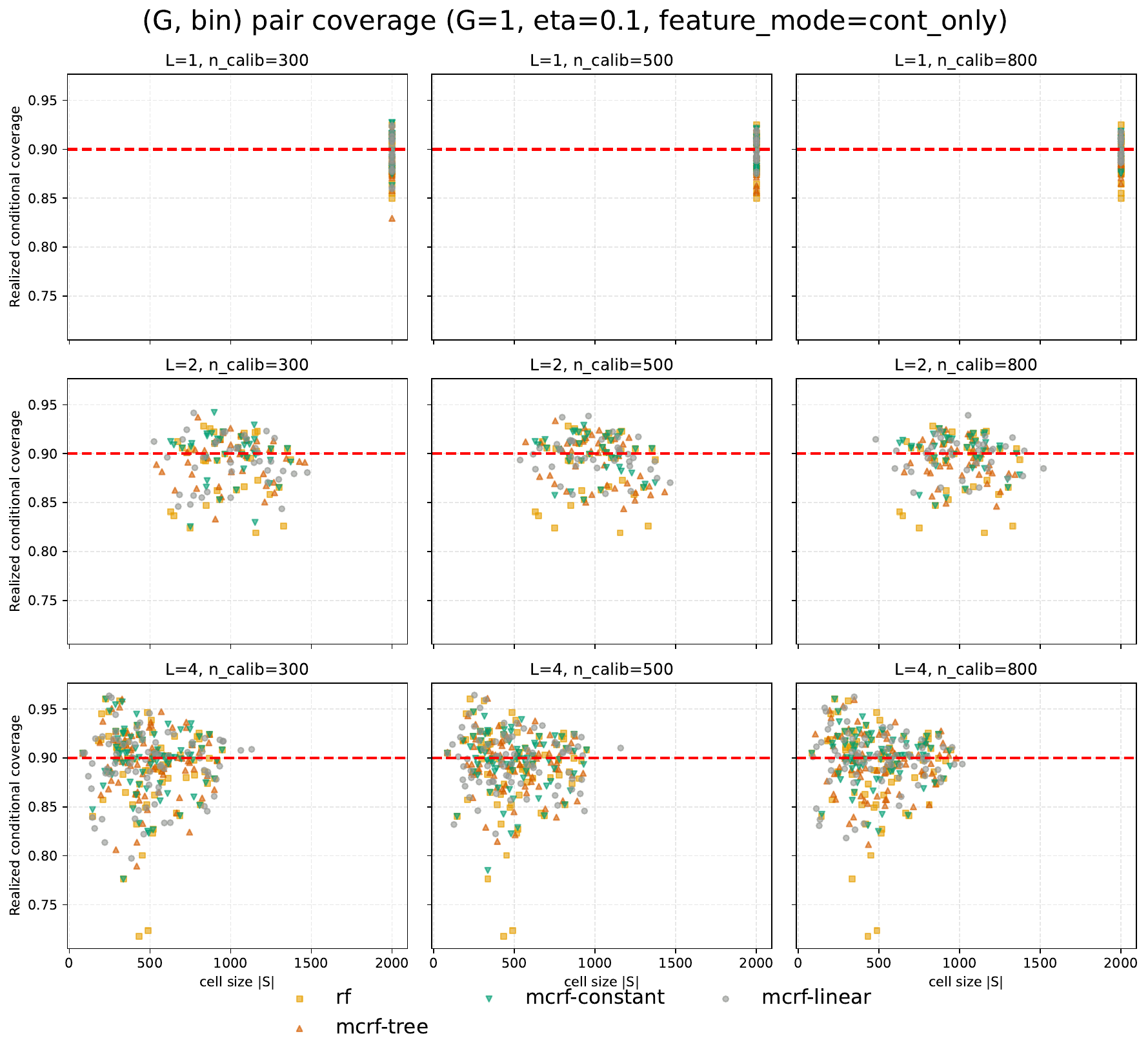}  
    \caption*{(c)}
    \end{subfigure}
    \begin{subfigure}[b]{0.49\textwidth}
    \includegraphics[width=\linewidth]{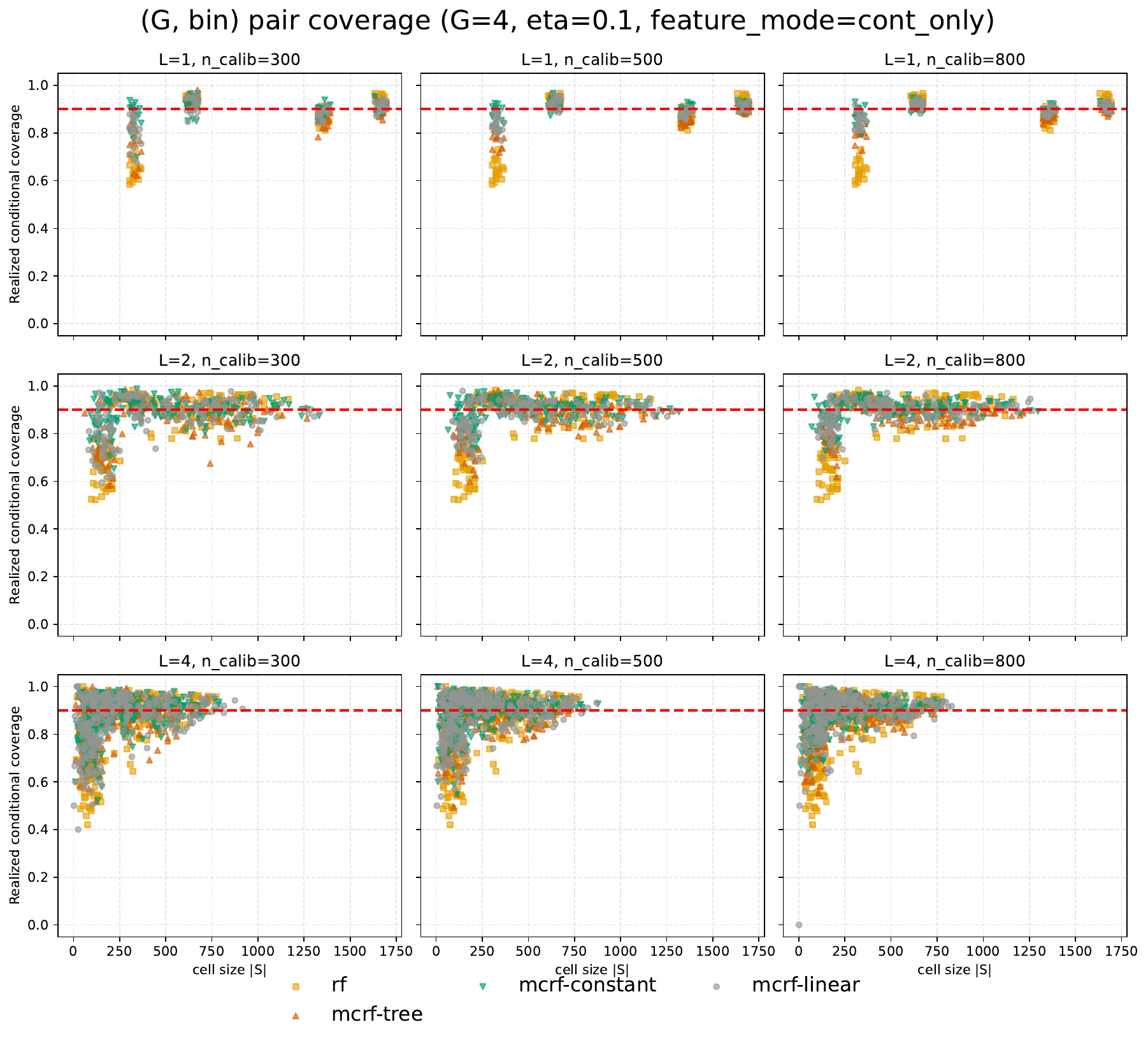}  
    \caption*{(d)}
    \end{subfigure}
    \caption{Scatter plot for empirical calibration error of $(G, l)$ pairs. Left: without group partition. Right: with group partition. }
    \label{fig:qr-coverage-scatter}
\end{figure} 
\FloatBarrier

\vskip 0.2in
\bibliography{reference}

\end{document}